
\documentclass[11pt]{article}

\usepackage[utf8]{inputenc} 

\pdfoutput=1

\usepackage{algorithm}
\usepackage[noend]{algpseudocode}
\usepackage{amsmath}
\usepackage{amssymb}
\usepackage{amsthm,amsfonts}
\usepackage{array}
\usepackage{authblk}
\usepackage{babel}
\usepackage{booktabs}
\usepackage{caption}
\usepackage{colortbl}
\usepackage{enumitem}
\usepackage[T1]{fontenc}    
\usepackage{framed}
\usepackage{fullpage}
\usepackage{graphicx}
\usepackage[colorlinks, linkcolor=red, anchorcolor=blue, citecolor=blue]{hyperref}
\usepackage{indentfirst}
\usepackage[utf8]{inputenc}
\usepackage{lipsum}        
\usepackage{makecell}
\usepackage{mathrsfs}
\usepackage{mathtools}

\usepackage[framemethod=default]{mdframed}
\usepackage{microtype}     
\usepackage{multicol}
\usepackage{multirow}
\usepackage{natbib}
\usepackage{nicefrac}       
\usepackage{tablefootnote}
\usepackage{url}            
\usepackage{wrapfig,lipsum}
\usepackage{xcolor}         
\usepackage{xspace}

\usepackage{doi}
\usepackage{tikz}
\usetikzlibrary{decorations.pathreplacing}
\parindent 15pt


\def \dz {\textcolor{red}}

\usepackage{amsmath,amsfonts,bm}

















\def\floor#1{\lfloor #1 \rfloor}
\def\1{\bm{1}}




\def\rvv{{\mathbf{v}}}

\def\rvx{{\mathbf{x}}}
\def\rvy{{\mathbf{y}}}
\def\rvz{{\mathbf{z}}}




\def\vzero{{\bm{0}}}

\def\va{{\bm{a}}}
\def\vb{{\bm{b}}}

\def\ve{{\bm{e}}}
\def\vf{{\bm{f}}}

\def\vv{{\bm{v}}}

\def\vx{{\bm{x}}}
\def\vy{{\bm{y}}}
\def\vz{{\bm{z}}}



\def\mI{{\bm{I}}}

\DeclareMathAlphabet{\mathsfit}{\encodingdefault}{\sfdefault}{m}{sl}
\SetMathAlphabet{\mathsfit}{bold}{\encodingdefault}{\sfdefault}{bx}{n}




\def\sP{{\mathbb{P}}}

\def\sS{{\mathbb{S}}}



%

\newcommand{\grad}{\ensuremath{\nabla}}






\newcommand{\E}{\mathbb{E}}

\newcommand{\R}{\mathbb{R}}

\newcommand{\KL}[2]{\mathrm{KL}\left(#1 \big\| #2\right)}
\newcommand{\FI}[2]{\mathrm{FI}\left(#1 \| #2\right)}

\newcommand{\Var}{\mathrm{Var}}



\newcommand{\der}{\mathrm{d}}


\makeatletter
\def\thickhline{%
  \noalign{\ifnum0=`}\fi\hrule \@height \thickarrayrulewidth \futurelet
   \reserved@a\@xthickhline}
\def\@xthickhline{\ifx\reserved@a\thickhline
               \vskip\doublerulesep
               \vskip-\thickarrayrulewidth
             \fi
      \ifnum0=`{\fi}}
\makeatother

\newlength{\thickarrayrulewidth}
\setlength{\thickarrayrulewidth}{3\arrayrulewidth}


\newtheorem{lemma}{Lemma}[section]
\newtheorem{theorem}[lemma]{Theorem}
\newtheorem{corollary}[lemma]{Corollary}

\newtheorem{definition}{Definition}

\algnewcommand{\IfThenElse}[3]{
\State \algorithmicif\ #1\ \algorithmicthen\ #2\ \algorithmicelse\ #3}

\newcommand{\rbkwx}{\rvx^{\gets}}
\newcommand{\bkwx}{\vx^{\gets}}
\newcommand{\rbkwv}{\rvv^{\gets}}
\newcommand{\bkwp}{p^{\gets}}
\newcommand{\ourmethod}{\text{RS-DMC}}


\newcommand*\samethanks[1][\value{footnote}]{\footnotemark[#1]}


\title{Faster Sampling without Isoperimetry \\ via Diffusion-based Monte Carlo}


\author[$\dagger$]{\normalsize Xunpeng Huang\thanks{Mail to \href{xhuangck@connect.ust.hk}{xhuangck@connect.ust.hk}, \href{dzou@cs.hku.hk} {dzou@cs.hku.hk}}}
\author[$\S$]{Difan Zou\samethanks}
\author[$\dagger$]{Hanze Dong}
\author[$\P$]{Yian Ma}
\author[$\ddag$]{Tong Zhang}

\affil[$\dagger$]{Hong Kong University of Science and Technology}
\affil[$\S$]{The University of Hong Kong}
\affil[$\P$]{University of California San Diego}
\affil[$\ddag$]{University of Illinois Urbana-Champaign}

\begin{document}

\date{}
\maketitle

\begin{abstract}

To sample from a general target distribution $p_*\propto e^{-f_*}$ beyond the isoperimetric condition, \citet{huang2023monte} proposed to perform sampling through reverse diffusion, giving rise to \textit{Diffusion-based  Monte Carlo} (DMC). Specifically,  DMC follows the reverse SDE of a diffusion process that transforms the target distribution to the standard Gaussian, utilizing a non-parametric score estimation. However, the original DMC algorithm encountered high gradient complexity\footnote{We denote gradient complexity as the required number of gradient calculations to achieve at most $\epsilon$ sampling error.}, resulting in an \textit{exponential dependency} on the error tolerance $\epsilon$ of the obtained samples. In this paper, we demonstrate that 
the high complexity of the original DMC algorithm originates from its redundant design of score estimation, and proposed a  more efficient DMC algorithm, called $\ourmethod$, based on a novel recursive score estimation method. In particular, we first divide the entire diffusion process into multiple segments and then formulate the score estimation step (at any time step) as a series of interconnected mean estimation and sampling subproblems accordingly, which are correlated in a recursive manner. Importantly, we show that with a proper design of the segment decomposition, all sampling subproblems will only need to tackle a strongly log-concave distribution, which can be very efficient to solve using the standard sampler (e.g., Langevin Monte Carlo) with a provably rapid convergence rate. As a result, we prove that the gradient complexity of $\ourmethod$ only has a \textit{quasi-polynomial dependency} on $\epsilon$, which significantly improves exponential gradient complexity in \citet{huang2023monte}. 
Furthermore, under commonly used dissipative conditions, our algorithm is provably much faster than the popular Langevin-based algorithms. Our algorithm design and theoretical framework illuminate a novel direction for addressing sampling problems, which could be of broader applicability in the community.

\end{abstract}

\section{Introduction}

Sampling problems, i.e., generating samples from a given target distribution $p_*\propto \exp(-f_*)$, have received increasing attention in recent years. For resolving this problem, a popular option is to apply gradient-based Markov chain Monte Carlo (MCMC) methods, such as Unadjusted Langevin Algorithms (ULA)~\citep{neal1992bayesian,roberts1996exponential}, Underdamped Langevin Dynamics (ULD)~\citep{cheng2018underdamped, ma2021there,mou2021high}, Metropolis-Adjusted Langevin Algorithm (MALA)~\citep{Roberts2014Langevin,Xifara2014Langevin}, and Hamiltonian Monte Carlo (HMC)~\citep{Duane1987Hybrid,Neal2010MCMC}. In particular, these algorithms can be seen as the discretization of the continuous Langevin dynamics (LD) and its variants~\citep{ma2015complete}, which will converge to a unique stationary distribution that follows $p_*\propto \exp(-f_*)$, under regularity conditions on the energy function $f_*(\vx)$ \citep{roberts1996exponential}.

However, the convergence rate of the Langevin-based algorithms heavily depends on the target distribution $p_*$: guaranteeing the convergence in polynomial time requiring $p_*$ to have some nice properties, e.g., being strongly log-concave, satisfying log-Sobolev or Poincar\'e inequality with a large coefficient. However, for more general non-log-concave distributions, the convergence rate may exponentially
depend on the problem dimension \citep{raginsky2017non,holzmuller2023convergence} (i.e., $\sim \exp(d)$), or even the convergence itself (to $p_*$) cannot be guaranteed (one can only guarantee to converge to some locally stationary distribution~\citep{balasubramanian2022towards}), implying that the Langevin-based algorithms are extremely inefficient for solving such hard sampling problems.
To this end, we are interested in addressing the following question:
\begin{quote}
\emph{Can we develop a new sampling algorithm that enjoys a non-exponential convergence rate for sampling general non-log-concave distributions?  } 
\end{quote}

To address this problem, we are inspired by several recent studies, including~\cite{montanari2023sampling, huang2023monte}, that attempt to design samplers based on diffusion models \citep{sohl2015deep,ho2020denoising,vargas2023denoising}, which we refer to as the \textbf{diffusion-based Monte Carlo} (DMC). In particular, the algorithm developed in \citet{huang2023monte} is based on the reverse process of the Ornstein-Uhlenbeck (OU) process, which starts from the target distribution $p_*$ and converges to a standard Gaussian distribution. The mathematical formula of the OU process and its reverse process are given as follows \citep{anderson1982reverse,song2020score}:
\begin{align}
\der \rvx_t &= -\rvx_t \der t + \sqrt{2}\der B_t,\quad \rvx_0\sim p_0(\vx) = p_*,\tag{OU Process}\\
\der\rbkwx_{t} &= \big[\rbkwx_t + 2\nabla \log p_{T-t}(\rbkwx_t)\big]\der t + \sqrt{2}\der B_t, \quad \rbkwx_0\sim p_T(\vx)\approx \mathcal N(\boldsymbol{0},\mathbf I), \label{eq:reverse_process}\tag{Reverse Process}
\end{align}
where $B_t$ denotes the Brownian term, $p_t(\vx)$ denotes the underlying distribution of the particle at time $t$ along the OU process, $T$ denotes the end time of the OU process,
and $\nabla \log p_t(\vx)$ denotes the score function of the distribution $p_t(\vx)$. In fact, the exponentially slow convergence rate of the Langevin-based algorithms stems from the rather long mixing time of Langevin dynamics to its stationary distribution, while in contrast, the OU process exhibits a much shorter mixing time. Therefore, principally, if the reverse process of the OU process can be perfectly recovered, one can avoid suffering from the issue of slow mixing of Langevin dynamics, and develop more efficient sampling algorithms accordingly.


Then, the key to recovering \eqref{eq:reverse_process} is to obtain a good estimation for the score $\nabla \log p_{t}(\vx)$ for all $t\in[0,T]$. \citet{huang2023monte} proposed a score estimation method called reverse diffusion sampling (RDS) based on an inner-loop ULA.
However, it still suffers from the exponential dependency with respect to the target sampling error, which requires $\exp\big(\mathcal O(1/\epsilon)\big)$ gradient complexity to achieve the $\epsilon$ sampling error in KL divergence.  The reason behind this is that RDS involves many \textit{hard} subproblems that need to sample non-log-concave distributions with bad isoperimetric properties, which incurs huge gradient complexities in the desired Langevin algorithms.






In this work, we argue that the \textit{hard} subproblems in \citet{huang2023monte} are redundant or even unnecessary, and propose a more efficient diffusion-based Monte Carlo method, called recursive score DMC (RS-DMC), that only requires \textbf{quasi-polynomial gradient complexity}  to sampling general non-log-concave distributions. At the core of RS-DMC is a novel non-parametric method for score estimation, which involves a series of interconnected mean estimation
and sampling subproblems that are correlated in a recursive manner.
In particular,  we first divide the entire forward process into several segments starting from $0,S,\dots,(K-1)S$, and estimate the scores $\{\nabla \log p_{kS}(\vx)\}_{k=0,\dots,K-1}$ recursively. Given the segments, the score within each segment $\nabla \log p_{kS+\tau}(\vx)$ will be further estimated according to the reference score $\nabla \log p_{kS}(\vx)$, where $\tau\in[0, S]$ can be arbitrarily chosen. Importantly, given proper configuration of the segment length (i.e., $S$), we can show that all sampling subproblems in the developed score estimation method are \textit{much easier}, as long as the target distribution $p_*$ is log-smooth and has bounded second moment. Then, all intermediate target distributions are guaranteed to be strongly log-concave, which can be sampled very efficiently via standard ULA. Accordingly, based o n the samples generated via ULA, the mean estimation subproblems can be then resolved very efficiently under some mild assumptions on the tail of the posterior distribution (e.g., moment bounds).   We  summarize the main contributions of this paper as follows:

\if
To address the problem mentioned above, we propose a novel non-parametric method for score estimation. 
We divide the entire forward process into several segments and estimate the score at time $t$ using Tweedie’s formula.
In the formula, the conditional probability is the transition kernel of the forward process (i.e., OU process) from the starting point of the segment $t$ located at time $t^\prime$ to $t$, rather than starting from the beginning of the entire process (at time $0$) to $t$, as done in~\cite{huang2023monte}.
Compared with the time $0$, time $t^\prime$ will be much closer to $t$, making the estimation much easier.
For exchange, to use Tweedie's formula, we need to estimate the score at time $t^\prime$ because it does not have an analytical form like the score at time $0$, which is given by the target distribution as $\grad\ln p_*$.
This leads us to introduce a recursive method to estimate the score at time $t$, $t^\prime$, etc.
However, by choosing the segment length appropriately, we can trade off the complexity of the score estimation at time $t$ and its corresponding recursive calls and finally obtain a better gradient complexity.
Surprisingly, when the segment length is some constant, our estimation will lead to a Kullback–Leibler (KL) convergence that only requires the second moment bound and log density smoothness rather than any isoperimetry of the target distribution $p_*$, including weak Poincar\'e inequality.
Furthermore, we prove that the gradient complexity required to achieve the KL convergence does not have exponential dependency for both the dimensionality and approximate error.
Some additional works share nearly the same assumption with ours, e.g.,~\cite{balasubramanian2022towards}, while they can only provide the relative Fisher information convergence. 
To the best of our knowledge, this is the first work for Monte Carlo algorithms to KL converge beyond isoperimetry and even with a better dimensional dependence in gradient complexity compared with ULA in general dissipative settings~\cite{hale2010asymptotic,vempala2019rapid}.
\fi

\begin{itemize}[leftmargin=*]
    \item We propose a new Diffusion Monte Carlo algorithm, called $\ourmethod$, for sampling general non-log-concave distributions. At the core is a novel and efficient recursive score estimation algorithm. In particular, based on a properly designed recursive structure, we show that the hard non-log-concave sampling problem can be divided into a series of benign sampling subproblems that can be solved very efficiently via standard ULA.
    
    \item We establish the convergence guarantee of the proposed $\ourmethod$ algorithm under very mild assumptions, which only require the target distribution to be log-smooth and to have a bounded second moment.
    In contrast, to obtain provable convergence (to the target distribution), the Langevin-based methods typically require additional isoperimetric conditions (e.g., Log-Sobolev inequality, Poincar\'e inequality, etc). This justifies that our algorithm can be applied to a broader class of distributions with rigorous theoretical convergence guarantees. 
    
    \item We prove that the gradient complexity of our algorithm is $\exp\big[\mathcal O(\log^3(d/\epsilon))\big]$ to achieve $\epsilon$ sampling error in KL divergence, which only has a quasi-polynomial dependency on the target error $\epsilon$ and dimension $d$. In contrast, under even stronger conditions in our work, the gradient complexity in prior works either need exponential dependency in $\epsilon$ (i.e., $\exp\big(\mathcal O(1/\epsilon)\big)$) \citep{huang2023monte} or exponential dependency in $d$, (i.e., $\exp\big(\mathcal O(d)\big)$) \citep{raginsky2017non,xu2018global}\footnote{We omit the $d$-dependency in \citet{huang2023monte} and $\epsilon$-dependency in \citet{raginsky2017non,xu2018global} for the ease of presentation.} (which requires the additional dissipative condition). This demonstrate the efficiency of our algorithm.

    
\end{itemize}

\section{Preliminaries}
\label{sec:pre}

In this section, we will first introduce the notations and problem settings that are commonly used in the following sections. 
We will then present some fundamental properties, such as the closed form of the transition kernel and the expectation form of score functions along the OU process.
Finally, we will specify the assumptions that the target distribution is required in our algorithms and analysis.

\paragraph{Notations.} 
We use lower case bold symbol $\mathbf x$ to denote the random vector, we use lower case italicized bold symbol $\vx$ to denote a fixed vector. We use $\|\cdot\|$ to denote the standard Euclidean distance. We say $a_n=\mathrm{poly}(n)$ if $a_n\le O(n^c)$ for some constant $c$. 


\paragraph{The segmented OU process.} We define $\mathbb{N}_{a,b}=[a,b]\cap \mathbb{N}_*$ for brevity. 
Suppose the length of each segment is $S\in \R_+$, and we divide the entire forward process with length $T$ into $K\in \mathbb{N}_+$ segments satisfying $K=T/S$.
In this condition, we can reformulate the previous SDE as
\begin{equation}
    \label{con_eq:rrds_forward}
    \begin{aligned}
        &\rvx_{k,0}\sim p_{0,0} = p_{*}\  \mathrm{when}\ k=0,\ \mathrm{else}\ \rvx_{k,0}=\rvx_{k-1,S} && k\in \mathbb{N}_{0,K-1}\\
        & \der \rvx_{k,t} =-\rvx_{k,t}\der t + \sqrt{2}\der B_{t}\quad &&k\in \mathbb{N}_{0,K-1}, t\in[0,S],\\
    \end{aligned}
\end{equation}
where $\rvx_{k,t}$ denotes the random variable of the OU process at time $(kS+t)$ with underlying density $p_{k,t}$.
Besides, we define the following conditional density, i.e., $p_{(k,t)|(k^\prime, t^\prime)}(\vx|\vx^\prime)$,
which presents the probability of obtaining $\rvx_{k,t}=\vx$ when $\rvx_{k^\prime, t^\prime}=\vx^\prime$. The diagram of SDE~\eqref{con_eq:rrds_forward} is presented in Fig~\ref{con_fig:rrds_forward}.

\paragraph{The reverse segmented OU process.}
According to \eqref{eq:reverse_process}, the reverse process of the segmented SDE~\eqref{con_eq:rrds_forward} can be presented as
\begin{equation*}
    \begin{aligned}
        &\rbkwx_{k,0} \sim p_{K-1,S}\ \mathrm{when}\ k=K-1,\ \mathrm{else}\ \rbkwx_{k,0}=\rbkwx_{k+1,S} && k\in \mathbb{N}_{0,K-1}\\
        &\der \rbkwx_{k,t} = \left[ \rbkwx_{k,t} + 2\grad\log p_{k, S-t}(\rbkwx_{k,t})\right]\der t + \sqrt{2}\der B_t\quad && k\in \mathbb{N}_{0,K-1}, t\in[0,S]\\
    \end{aligned}
\end{equation*}
where particles satisfy $\rbkwx_{k,t}=\rvx_{k,S-t}$ with underlying density $\bkwp_{k,t}=p_{k,S-t}$ for any $k\in\mathbb{N}_{0,K-1}$ and $t\in[0,S]$.
To approximately solve the SDE with numerical methods, we first split each segment into $R$ intervals $\{[(r-1)\eta, r\eta]\}_{=1,\dots,R}$, where $\eta$ is the interval length and $R = S/\eta$. Then we can 
replace the score function $\grad\log p_{k,S-t}$ as $\rbkwv_{k, t}$, and for $t\in [r\eta, (r+1)\eta]$, we freeze the value of this coefficient in the SDE at time $(k, r\eta)$ . 
Then starting from the standard Gaussian distribution, we consider the following new SDE:
 \begin{equation}
    \label{con_eq:rrds_actual_backward}
    \begin{aligned}
        &\rbkwx_{k,0} \sim p_{\infty}=\mathcal{N}(\vzero, \mI)\ \mathrm{when}\ k=K-1,\ \mathrm{else}\ \rbkwx_{k,0}=\rbkwx_{k+1,S} && k\in \mathbb{N}_{0,K-1}\\
        &\der \rbkwx_{k,t} = \left[\rbkwx_{k,t} + 2\rbkwv
        _{k, \lfloor t/\eta \rfloor\eta}\left(\rbkwx_{k, \lfloor t/\eta \rfloor \eta}\right)\right]\der t + \sqrt{2}\der B_t\quad && k\in \mathbb{N}_{0,K-1}, t\in[0,S]\\
    \end{aligned}
\end{equation}
where $p_\infty$ denotes the stationary distribution of the forward process. 
Similar to the segmented OU process, we define the following conditional density, i.e., $\bkwp_{(k,t)|(k^\prime, t^\prime)}(\vx|\vx^\prime)$,
which presents the probability of obtaining $\rbkwx_{k,t}=\vx$ when $\rbkwx_{k^\prime, t^\prime}=\vx^\prime$. The diagram of SDE~\eqref{con_eq:rrds_actual_backward} is presented in Fig~\ref{con_fig:rrds_forward}.

\begin{figure}
    \centering
    \usetikzlibrary{decorations.pathreplacing}

\begin{tikzpicture}

\node at (5,5.2) {The forward process};

\node at (5,-1) {The reverse process};

\filldraw (-0.5,4.2) circle (0.05); \node at (-0.7, 4.5) {$p_{0,0}=p_*$}; 
\draw[-latex]  (-0.5,4.2)--(0.5,4.2);
\draw[densely dotted] (0.5,4.2)--(1.5,4.2);
\draw[-latex]  (1.5,4.2)--(2,4.2);
\filldraw (2,4.2) circle (0.05); \node at (1.8, 4.5) {$p_{k-1,0}$}; 
\draw[-] (2,4.2)--(5,4.2);
\filldraw (5,4.2) circle (0.05); \node at (4.8, 4.5) {$p_{k-1,S}=p_{k,0}$}; 
\draw[-] (5,4.2)--(8,4.2);
\filldraw (8,4.2) circle (0.05); \node at (7.8, 4.5) {$p_{k,S}$}; 
\draw[-latex]  (8,4.2)--(8.5,4.2);
\draw[densely dotted] (8.5,4.2)--(9.5,4.2);
\draw[-latex]  (9.5,4.2)--(10.5,4.2);
\filldraw (10.5,4.2) circle (0.05); \node at (10.3, 4.5) {$p_{K-1,S}\ \approx$}; 
\draw[dashed] (10.5,4.2)--(11.5,4.2);
\filldraw (11.5,4.2) circle (0.05); \node at (11.7, 4.5) {$p_\infty$};

\draw[-latex, dashed] (5,4.2)..controls (4, 3.5)and(3,4.5)..(1,3.2);
\draw[-latex, dashed] (8,4.2)..controls (8.2, 3.5)and(8.7,4.5)..(9,3.2);

\filldraw (-0.5,0) circle (0.05); \node at (-0.7, -0.3) {$\bkwp_{0,S}$}; 
\draw[-latex]  (0.5,0)--(-0.5,0);
\draw[densely dotted] (0.5,0)--(1.5,0);
\draw[-latex]  (2,0)--(1.5,0);
\filldraw (2,0) circle (0.05); \node at (1.8, -0.3) {$\bkwp_{k-1,S}$}; 
\draw[-] (2,0)--(5,0);
\filldraw (5,0) circle (0.05); \node at (4.8, -0.3) {$\bkwp_{k-1,0}=\bkwp_{k,S}$}; 
\draw[-] (5,0)--(8,0);
\filldraw (8,0) circle (0.05); \node at (7.8, -0.3) {$\bkwp_{k,0}$}; 
\draw[-latex]  (8.5,0)--(8,0);
\draw[densely dotted] (8.5,0)--(9.5,0);
\draw[-latex]  (10.5,0)--(9.5,0);
\filldraw (10.5,0) circle (0.05); \node at (10.3, -0.3) {$\bkwp_{K-1,0}$};

\draw[dashed] (-0.5,0)--(-0.5,4.3);
\draw[dashed] (10.5,0)--(10.5,4.3);
\node [rotate=90] at (-0.8,2) {$\approx$};
\draw[-] (11.5,4.3)--(10.5,0);
\node [rotate=74] at (11.3,2.2) {$=$};

\draw[-latex, dashed] (5,0)..controls (4, 1)and(3,0)..(1,1);
\draw[-latex, dashed] (8,0)..controls (8.2, 1)and(8.7,0)..(9,1);
\filldraw (1,1) circle (0.05); \node at (0.8, 1.3) {$\rbkwv_{k,R\eta}$}; 
\filldraw (9,1) circle (0.05); \node at (9.3, 1.3) {$\rbkwv_{k,0}$}; 
\draw (9,1)--(8.5,1);
\draw[densely dotted] (8.5,1)--(7.5,1);
\draw (7.5,1)--(7,1);
\filldraw[blue] (7,1) circle (0.05);\node at (6.8, 1.6) {$\textcolor{blue}{=\quad \rbkwv_{k,(r-1)\eta}}$}; 
\draw[blue] (7,1)--(4.75,1);
\draw [decorate,decoration={brace,amplitude=10pt}, blue] (4.75,1) -- (7,1);
\draw[red] (4.75,1)--(2.5,1);
\draw [decorate,decoration={brace,amplitude=10pt, mirror}, red] (4.75,1) -- (2.5,1);
\filldraw[red] (4.75,1) circle (0.05);
\node at (4.2, 1.6) {$\textcolor{red}{=\quad\rbkwv_{k,r\eta}}$}; 
\filldraw (2.5,1) circle (0.05);
\draw (2.5,1)--(2,1);
\draw[densely dotted] (2,1)--(1.5,1);
\draw (1.5,1)--(1,1);

\filldraw (1,3.2) circle (0.05); \node at (0.8, 2.9) {$\grad\ln p_{k,0}$}; 
\filldraw (9,3.2) circle (0.05); \node at (9.3, 2.9) {$\grad\ln p_{k,S}$}; 
\draw (8.5,3.2)--(9,3.2);
\draw[densely dotted] (7.5,3.2)--(8.5,3.2);
\filldraw[blue] (7,3.2) circle (0.05); \node at (7, 2.9) {$\textcolor{blue}{\grad\ln p_{k, S-(r-1)\eta}}$}; 
\draw (7.5,3.2)--(5.5,3.2);
\filldraw[red] (4.75,3.2) circle (0.05); \node at (4.5, 2.9) {$\textcolor{red}{\grad\ln p_{k, S-r\eta}}$};
\draw (5.5,3.2)--(4,3.2);
\filldraw (2.5,3.2) circle (0.05);
\draw (4,3.2)--(2,3.2);
\draw[densely dotted] (2,3.2)--(1.5,3.2);
\draw (1.5,3.2)--(1,3.2);

\node[rotate=90] at (7,2.2) {$\textcolor{blue}{\approx}$};

\node[rotate=90] at (4.5,2.2) {$\textcolor{red}{\approx}$};

\end{tikzpicture}
    \vspace{-.1in}
    \caption{\small The illustration of  SDE~\eqref{con_eq:rrds_forward} and~\eqref{con_eq:rrds_actual_backward}, covering  the definitions in Section~\ref{sec:pre}. The top of the figure describes the underlying distribution of the segmented OU process, i.e., SDE~\eqref{con_eq:rrds_forward}, and the bottom presents the corresponding distribution in the segmented OU process, i.e., SDE~\eqref{con_eq:rrds_actual_backward}.
    For the intermediate part, the upper half describes the gradients of the log densities along the forward SDE~\eqref{con_eq:rrds_forward}, while the lower half describes 
    approximated scores used to update particles in the reverse SDE~\eqref{con_eq:rrds_actual_backward}. }
    \label{con_fig:rrds_forward}
    \vspace{-.1in}
\end{figure}
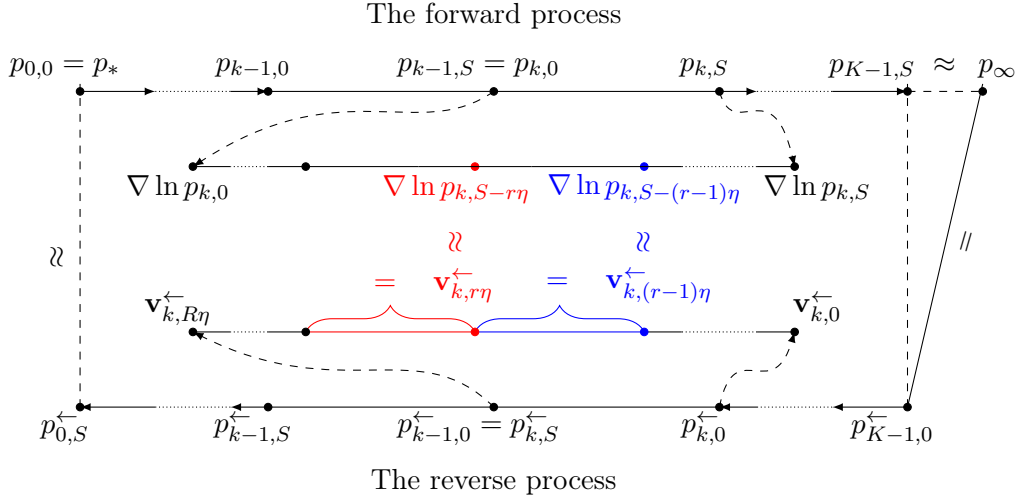

\paragraph{Basic properties of the OU process.} 
In the previous paragraph, we have demonstrated that SDE~\eqref{con_eq:rrds_forward} is an alternative presentation of the OU process.
Therefore, the properties in the OU process can be directly introduced for this segmented version.
First, the transition kernel in the $k$-th segment satisfies
\begin{equation*}
    p_{(k,t)|(k,0)}(\vx|\vx_0)= \left(2\pi \left(1-e^{-2t}\right)\right)^{-d/2}
     \cdot \exp \left[\frac{-\left\|\vx -e^{-t}\vx_0 \right\|^2}{2\left(1-e^{-2t}\right)}\right], \quad \forall\ 0< t\le S.
\end{equation*}
Plugging the transition kernel into Tweedie’s formula, the score function can be reformulated as the following lemma whose proof is deferred in Appendix~\ref{app_sec:lems_rec_ite}.
\begin{lemma}[Lemma 1 of \cite{huang2023monte}]
    \label{con_lem:score_reformul}
    For any $k\in \mathbb{N}_{0,K-1}$ and $t\in[0,S]$, the score function can be written as
    \begin{equation*}
        \grad \log p_{k,S-t}(\vx) = \mathbb{E}_{\rvx_0 \sim q_{k,S-t}(\cdot|\vx)}\left[-\frac{\vx - e^{-(S-t)}\vx_0}{\left(1-e^{-2(S-t)}\right)}\right]
    \end{equation*}
    where the conditional density function $q_{k,S-t}(\cdot | \vx)$ is defined as
    \begin{equation*}
        q_{k,S-t}(\vx_0|\vx) \propto \exp\left(\log p_{k,0}(\vx_0)-\frac{\left\|\vx - e^{-(S-t)}\vx_0\right\|^2}{2\left(1-e^{-2(S-t)}\right)}\right).
    \end{equation*}
\end{lemma}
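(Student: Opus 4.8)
The plan is to combine Tweedie's formula (equivalently, Bayes' rule applied to the Gaussian transition kernel of the OU process) with the explicit form of the transition kernel just displayed. First I would observe that within the $k$-th segment the process $\rvx_{k,\cdot}$ is an OU process started from $\rvx_{k,0}\sim p_{k,0}$, so the joint law of $(\rvx_{k,0},\rvx_{k,S-t})$ has density $p_{k,0}(\vx_0)\,p_{(k,S-t)|(k,0)}(\vx\mid\vx_0)$. Marginalizing over $\vx_0$ gives $p_{k,S-t}(\vx)=\int p_{k,0}(\vx_0)\,p_{(k,S-t)|(k,0)}(\vx\mid\vx_0)\,\der\vx_0$. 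Since the transition kernel is Gaussian in $\vx$ with mean $e^{-(S-t)}\vx_0$ and covariance $(1-e^{-2(S-t)})\mI$, the posterior density of $\rvx_0$ given $\rvx_{k,S-t}=\vx$ is exactly
\begin{equation*}
    q_{k,S-t}(\vx_0\mid\vx)=\frac{p_{k,0}(\vx_0)\,p_{(k,S-t)|(k,0)}(\vx\mid\vx_0)}{p_{k,S-t}(\vx)}\propto\exp\!\left(\log p_{k,0}(\vx_0)-\frac{\|\vx-e^{-(S-t)}\vx_0\|^2}{2(1-e^{-2(S-t)})}\right),
\end{equation*}
which is the claimed form of $q_{k,S-t}$ (the normalization constant absorbs the $(2\pi(1-e^{-2(S-t)}))^{-d/2}$ factor and $p_{k,S-t}(\vx)$, neither of which depends on $\vx_0$).

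Next I would compute $\grad\log p_{k,S-t}(\vx)$ by differentiating under the integral sign. Writing $\beta=1-e^{-2(S-t)}$ and $\alpha=e^{-(S-t)}$, we have
\begin{equation*}
    \grad_\vx p_{k,S-t}(\vx)=\int p_{k,0}(\vx_0)\,\grad_\vx p_{(k,S-t)|(k,0)}(\vx\mid\vx_0)\,\der\vx_0 = \int p_{k,0}(\vx_0)\,p_{(k,S-t)|(k,0)}(\vx\mid\vx_0)\cdot\left(-\frac{\vx-\alpha\vx_0}{\beta}\right)\der\vx_0,
\end{equation*}
using $\grad_\vx\log p_{(k,S-t)|(k,0)}(\vx\mid\vx_0)=-(\vx-\alpha\vx_0)/\beta$. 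Dividing by $p_{k,S-t}(\vx)$ and recognizing the resulting ratio as an expectation against the posterior $q_{k,S-t}(\cdot\mid\vx)$ yields
\begin{equation*}
    \grad\log p_{k,S-t}(\vx)=\frac{\grad_\vx p_{k,S-t}(\vx)}{p_{k,S-t}(\vx)}=\EE_{\rvx_0\sim q_{k,S-t}(\cdot\mid\vx)}\!\left[-\frac{\vx-e^{-(S-t)}\rvx_0}{1-e^{-2(S-t)}}\right],
\end{equation*}
which is exactly the statement. The boundary case $t=S$ (so $S-t=0$) is excluded since the transition kernel is only defined for positive time; for $0<S-t\le S$ everything above is well-defined.

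The only genuine technical point—hardly an obstacle, but the step requiring a word of justification—is the interchange of gradient and integral. I would justify it by noting that $p_{k,0}$ is a probability density and the Gaussian kernel together with its gradient is dominated, locally uniformly in $\vx$, by an integrable function of $\vx_0$ (the Gaussian tails and the polynomial prefactor $\|\vx-\alpha\vx_0\|$ decay fast enough, and $p_{k,0}$ integrates to one); a standard dominated-convergence argument then legitimizes differentiation under the integral. Since this lemma is quoted verbatim as Lemma 1 of \citet{huang2023monte} and the transition kernel is the standard OU kernel, the proof is essentially a one-line application of Tweedie's formula once the posterior $q_{k,S-t}$ is identified, and I would present it in that compact form.
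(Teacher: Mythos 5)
Your proposal is correct and follows essentially the same route as the paper's proof in Appendix~\ref{app_sec:lems_rec_ite}: write $p_{k,S-t}(\vx)=\int p_{k,0}(\vx_0)\,p_{(k,S-t)|(k,0)}(\vx|\vx_0)\,\der\vx_0$ using the Gaussian OU transition kernel, differentiate under the integral, and recognize the ratio $\grad p_{k,S-t}/p_{k,S-t}$ as an expectation over the posterior $q_{k,S-t}(\cdot|\vx)$. Your added remark on justifying the interchange of gradient and integral is a minor refinement the paper leaves implicit.
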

Therefore, to approximate the score $\grad\log p_{k,S-r\eta}(\vx)$ with an estimator $\rbkwv_{k, r\eta}(\vx)$, we can draw samples from $q_{k, S- r\eta}(\cdot|\vx)$ and calculate their empirical mean.



\paragraph{Assumptions.} To guarantee the convergence in KL divergence, the Langevin-based methods require the target distribution to satisfy certain isoperimetric properties such as Log-Sobolev inequality (LSI) and  Poincar\'e inequality (PI) or even strong log-concavity~\citep{vempala2019rapid, cheng2018convergence, dwivedi2018log,ma2019sampling,zou2019stochastic,zou2021faster} (the formal definitions of these conditions are deferred to~Appendix~\ref{sec:not_ass_app}).
Some other works consider milder assumptions such as modified LSI~\citep{erdogdu2021convergence} and weak Poincar\'e inequality~\citep{mousavi2023towards}, but they are only the analytical continuation of LSI and PI, which still exhibit a huge gap with the general non-log-concave distributions. \citet{huang2023monte} requires the  target distribution $p_*$ to have a heavier tail than that of the Gaussian distribution.

Remarkably, our algorithm does not require any isoperimetric condition or condition on the tail properties of $p_*$ to establish the convergence guarantee.  We only require the following mild conditions on the target distribution.

\begin{enumerate}[label=\textbf{[{A}{\arabic*}]}]
    \item  
    \label{con_ass:lips_score} For any $k\in\mathbb{N}_{0,K-1}$ and $t\in [0,S]$, the score $\grad \log p_{k,t}$ is $L$-Lipschitz.
    \item \label{con_ass:second_moment} The target distribution has a bounded second moment, i.e., $M\coloneqq \mathbb{E}_{p_*}[\left\|\cdot\right\|^{2}]<\infty$.
\end{enumerate}
Assumption~\ref{con_ass:lips_score} corresponds to the $L$-smoothness condition of the log density $f_*$ in traditional ULA analysis, which has been widely made in prior works \citep{chen2022sampling,chen2023restoration,huang2023monte}.
It is often used to ensure that numerical discretization is feasible. We emphasize that Assumption~\ref{con_ass:lips_score} can be relaxed to only assume the target distribution is smooth rather than the entire OU process, based on the technique in \cite{chen2023improved} (see rough calculations in their Lemmas 12 and 14).
We do not include this additional relaxation in this paper to make our analysis clearer.
Assumption~\ref{con_ass:second_moment} is one of the weakest assumptions being adopted for the analyses of posterior sampling.

\section{Proposed Methods}
\label{sec:method}

In this section, we introduce a new approach called Recursive Score Estimation (RSE) and describe the proposed Recursive Score Diffusion-based Monte Carlo (RS-DMC) method.
We start by discussing the motivations and intuitions behind the use of recursion.
Next, we provide implementation details for the RSE process and emphasize the importance of selecting an appropriate segment length.
Finally, we present the RS-DMC method based on the RSE approach.

\subsection{Difficulties of the vanilla DMC}

We consider the reverse segmented OU process, i.e., SDE~\ref{con_eq:rrds_actual_backward} and begin with the original version of DMC in \citet{huang2023monte}, which can be seen as a special case of the reverse segmented OU process with a large segment length $S=T$ and a small number of segments $K=1$.
According to the reverse SDE~\ref{con_eq:rrds_actual_backward}, for the $r$-th iteration within one single segment, we need to estimate $\grad\log p_{0,S-r\eta}$ to update the particles.
Specifically, by Lemma~\ref{con_lem:score_reformul}, we have
\begin{equation*}
    \grad \log p_{0,S-r\eta}(\vx) = \mathbb{E}_{\rvx_0 \sim q_{0, S-r\eta}(\cdot|\vx)}\left[-\frac{\vx - e^{-(S-r\eta)}\vx_0}{\left(1-e^{-2(S-r\eta)}\right)}\right]
\end{equation*}
for any $\vx\in\R^d$, where the conditional distribution $q_{0, S-r\eta}(\cdot |\vx)$ is
\begin{equation}
    \label{eq:tar_axu_dis}
    \small
    q_{0,S-r\eta}(\vx_0|\vx) \propto \exp\bigg( \log p_{0,0}(\vx_0)-\frac{\left\|\vx - e^{-(S-r\eta)}\vx_0\right\|^2}{2\left(1-e^{-2(S-r\eta)}\right)}\bigg).
\end{equation}
Since the analytic form $\grad\log p_{0, 0}=-f_*$ exists, we can use the ULA to draw samples from $q_{0, S-r\eta}(\cdot|\vx)$ and calculate the empirical mean to estimate $\grad\log p_{0,S-r\eta}(\vx)$.

However, sampling from $q_{0, S-r\eta}(\cdot|\vx)$ is not an easy task. When $r$ is very small, sampling $q_{0,S-r\eta}(\cdot|\vx)$ via ULA is almost as difficult as sampling $p_{0,0}(\vx_0)$ via ULA (see \eqref{eq:tar_axu_dis}), since the additive quadratic term, whose coefficient is $e^{-2(S-r\eta)}/2(1-e^{-2(S-r\eta)})$, will be nearly negligible in this case. This is because that $S=T$ is large and then $e^{-2(S-r\eta)}/2(1-e^{-2(S-r\eta)})\sim \exp(-2T)$ becomes extremely small when $r\eta=O(T)$.
More specifically, as shown in \citet{huang2023monte}, when $e^{-2(S-r\eta)}\le 2L/(1+2L)$, the LSI parameter of $q_{0,S-r\eta}(\cdot|\vx)$ can be as worse as $\exp\big(-\mathcal O(1/\epsilon)\big)$. Then applying ULA for sampling this distribution needs a dramatically high gradient complexity that is exponential in $1/\epsilon$.


\subsection{Intuition of the recursion}

Therefore, the key to avoiding sampling such a hard distribution is to restrict the segment length. By Lemma \ref{con_lem:score_reformul}, it can be straightforwardly verified that if the segment length satisfies $S\le \frac{1}{2}\log\left(\frac{2L+1}{2L}\right)$, 
\begin{equation}
    \label{con_ineq:cho_s_small}
    -\grad_{\vx_0}^2\log q_{k,S-r\eta}(\vx_0|\vx) \succeq -\grad^2_{\vx_0} \log p_{k,0}(\vx_0)+ \frac{e^{-2S}}{1-e^{-2S}}\cdot \mI \succeq \frac{e^{-2S}}{2(1-e^{-2S})}
\end{equation}
where the last inequality follows from Assumption~\ref{con_ass:lips_score}.
This implies that $q_{k,S-r\eta}(\vx_0|\vx)$ is strongly log-concave for all $r\le \lfloor S/\eta\rfloor$, which can be efficiently sampled via the standard ULA. However, ULA requires to calculate the score function  $\nabla_{\vx_0} \log q_{k,S-r\eta}(\vx_0|\vx)$, which further needs to calculate $\nabla\log p_{k,0}(\vx)$ according to Lemma \ref{con_lem:score_reformul}. Different from the vanilla DMC where the formula of $\nabla \log p_{0,0}(\vx)$ is known, the score $\nabla\log p_{k,0}(\vx)$ in \eqref{con_ineq:cho_s_small} is an unknown quantity, which also requires to be estimated. In fact, based on our definition, we can rewrite $p_{k,0}(\vx)$ as $p_{k-1,S}(\vx)$ (see Figure \ref{con_fig:rrds_forward}), then applying Lemma \ref{con_lem:score_reformul}, we can again decompose the problem of estimating $\nabla \log p_{k-1,S}(\vx)$ into the subproblems of sampling $q_{k-1, S}(\cdot|\vx)$ and the estimation of $\nabla \log p_{k-1,0}(\vx)$, which is naturally organized in a recursive manner. Therefore, by recursively adopting this subproblem decomposition, we summarize the recursive process for approximating $\grad\log p_{k,S-r\eta}(\vx)$ as follows and illustrate the diagram in Figure \ref{con_fig:rse}: 

\begin{itemize}[leftmargin=*,nosep]
    \item \textbf{Step 1:} We approximate the score $\grad\log p_{k,S-r\eta}(\vx)$ by a mean estimation with samples generated by running ULA over the intermediate target distribution $q_{k, S-r\eta}(\cdot|\vx)$.
    \item \textbf{Step 2:} When running ULA for $q_{k, S-t}(\cdot|\vx)$, we estimate the score $\grad\log p_{k,0} = \grad\log p_{k-1,S}$.
    \item \textbf{Step 3:} We jump to Step~1 to approximate the score $\grad\log p_{k-1,S}(\vx)$ via drawing samples from $q_{k-1, S}(\cdot|\vx)$, and continue the recursion.
\end{itemize}
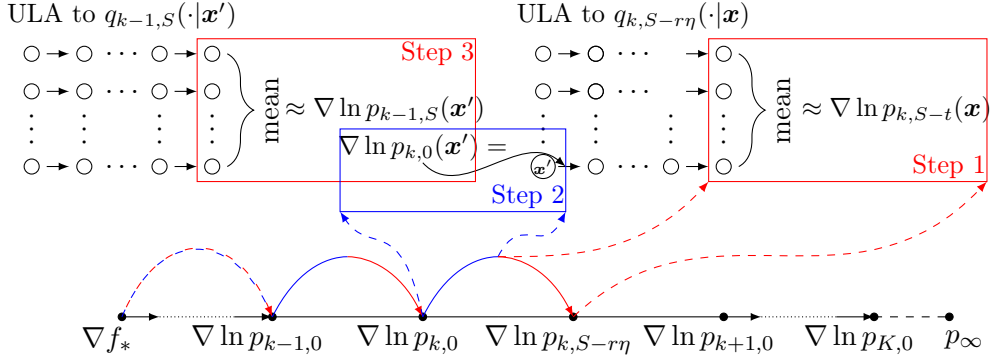
\begin{figure}
    \centering
    \begin{tikzpicture}

\node at (10.3,0.25) {\small $\approx\grad\ln p_{k,S-t}(\vx)$};
\draw [decorate,decoration={brace,amplitude=10pt}] (8.2,1) -- (8.2,-0.5);
\node[rotate=90] at (8.8, 0.25) {mean};
\node at (6.8, 1.5) {\small ULA to $q_{k,S-r\eta}(\cdot|\vx)$};

\draw[fill=none](8,1) circle (0.1);
\draw[-latex] (7.5,1)--(7.8,1);
\draw[fill=none](6.3,1) circle (0.1);
\node at (6.8,1) {$\ldots$};
\draw[fill=none](6.3,1) circle (0.1);
\draw[-latex] (5.8,1)--(6.1,1);
\draw[fill=none](5.6,1) circle (0.1);

\draw[fill=none](8,0.5) circle (0.1);
\draw[-latex] (7.5,0.5)--(7.8,0.5);
\draw[fill=none](6.3,0.5) circle (0.1);
\node at (6.8,0.5) {$\ldots$};
\draw[fill=none](6.3,0.5) circle (0.1);
\draw[-latex] (5.8,0.5)--(6.1,0.5);
\draw[fill=none](5.6,0.5) circle (0.1);

\node[rotate=90] at (8,0) {$\ldots$};
\node[rotate=90] at (7.3,0) {$\ldots$};
\node[rotate=90] at (6.3,0) {$\ldots$};
\node[rotate=90] at (5.6,0) {$\ldots$};

\draw[fill=none](8,-0.5) circle (0.1);
\draw[-latex] (7.5,-0.5)--(7.8,-0.5);
\draw[fill=none](7.3,-0.5) circle (0.1);
\node at (6.8,-0.5) {$\ldots$};
\draw[fill=none](6.3,-0.5) circle (0.1);
\draw[-latex] (5.8,-0.5)--(6.1,-0.5);
\node[draw,circle,minimum size=3pt,inner sep=0pt] at (5.6,-0.5) {\tiny $\vx^\prime$};

\node at (3.5,0.25) {\small $\approx\grad\ln p_{k-1,S}(\vx^\prime)$};
\node at (4,-0.25) {\small $\grad\ln p_{k,0}(\vx^\prime)=$};
\draw[-latex] (4,-0.45)..controls (4.5, -1)and(5.3,0)..(5.9,-0.5);
\draw [decorate,decoration={brace,amplitude=10pt}] (1.4,1) -- (1.4,-0.5);
\node[rotate=90] at (2, 0.25) {mean};
\node at (0, 1.5) {\small ULA to $q_{k-1,S}(\cdot|\vx^\prime)$};

\draw[fill=none](1.2,1) circle (0.1);
\draw[-latex] (0.7,1)--(1,1);
\draw[fill=none](0.5,1) circle (0.1);
\node at (0,1) {$\ldots$};
\draw[fill=none](-0.5,1) circle (0.1);
\draw[-latex] (-1,1)--(-0.7,1);
\draw[fill=none](-1.2,1) circle (0.1);

\draw[fill=none](1.2,0.5) circle (0.1);
\draw[-latex] (0.7,0.5)--(1,0.5);
\draw[fill=none](0.5,0.5) circle (0.1);
\node at (0,0.5) {$\ldots$};
\draw[fill=none](-0.5,0.5) circle (0.1);
\draw[-latex] (-1,0.5)--(-0.7,0.5);
\draw[fill=none](-1.2,0.5) circle (0.1);

\node[rotate=90] at (1.2,0) {$\ldots$};
\node[rotate=90] at (0.5,0) {$\ldots$};
\node[rotate=90] at (-0.5,0) {$\ldots$};
\node[rotate=90] at (-1.2,0) {$\ldots$};

\draw[fill=none](1.2,-0.5) circle (0.1);
\draw[-latex] (0.7,-0.5)--(1,-0.5);
\draw[fill=none](0.5,-0.5) circle (0.1);
\node at (0,-0.5) {$\ldots$};
\draw[fill=none](-0.5,-0.5) circle (0.1);
\draw[-latex] (-1,-0.5)--(-0.7,-0.5);
\draw[fill=none](-1.2,-0.5) circle (0.1);

\draw [red] (11.5,1.2) rectangle (7.8,-0.7);
\node [red] at (11, -0.5) {\small Step 1};

\draw [red] (4.7,1.2) rectangle (1,-0.7);
\node [red] at (4.2, 1) {\small Step 3};

\draw [blue] (5.9,0) rectangle (2.9, -1.1);
\node [blue] at (5.4, -0.9) {\small Step 2};

\filldraw (0,-2.5) circle (0.05); \node at (-0.2, -2.8) {$\grad f_*$}; 
\draw[-latex]  (0,-2.5)--(0.5,-2.5);
\draw[densely dotted] (0.5,-2.5)--(1.5,-2.5);
\draw[-latex]  (1.5,-2.5)--(2,-2.5);
\filldraw (2,-2.5) circle (0.05); \node at (1.8, -2.8) {$\grad\ln p_{k-1,0}$}; 
\draw[-] (2,-2.5)--(5,-2.5);
\filldraw (4,-2.5) circle (0.05); \node at (3.8, -2.8) {$\grad\ln p_{k,0}$}; 
\draw[-] (5,-2.5)--(8,-2.5);
\filldraw (6,-2.5) circle (0.05);\node at (5.8, -2.8) {$\grad\ln p_{k,S-r\eta}$};
\filldraw (8,-2.5) circle (0.05); \node at (7.8, -2.8) {$\grad\ln p_{k+1,0}$}; 
\draw[-latex]  (8,-2.5)--(8.5,-2.5);
\draw[densely dotted] (8.5,-2.5)--(9.5,-2.5);
\draw[-latex]  (9.5,-2.5)--(10,-2.5);
\filldraw (10,-2.5) circle (0.05); \node at (9.8, -2.8) {$\grad \ln p_{K,0}$}; 
\draw[dashed] (10,-2.5)--(11, -2.5);
\filldraw (11,-2.5) circle (0.05); \node at (11.2, -2.8) {$p_\infty$}; 

\draw[blue,dash pattern= on 3pt off 5pt] (0,-2.5)..controls (0.5, -1.3)and(1.5,-1.3)..(2,-2.5);
\draw[-latex, red,dash pattern= on 3pt off 5pt,dash phase=4pt] (0,-2.5)..controls (0.5, -1.3)and(1.5,-1.3)..(2,-2.5);

\draw[blue] (2,-2.5)..controls (2.3, -1.9)and(2.7,-1.7)..(3,-1.7);
\draw[-latex, red] (3,-1.7)..controls (3.3, -1.7)and(3.7,-1.9)..(4,-2.5);

\draw[blue] (4,-2.5)..controls (4.3, -1.9)and(4.7,-1.7)..(5,-1.7);
\draw[-latex, red] (5,-1.7)..controls (5.3, -1.7)and(5.7,-1.9)..(6,-2.5);

\draw[-latex, blue, dashed] (5,-1.7)..controls (5.3, -1.1)and(5.6,-1.7)..(5.9,-1.1);

\draw[-latex, blue, dashed] (4,-2.5)..controls (3.7, -1.1)and(3.3,-1.7)..(2.9,-1.1);

\draw[-latex, red, dashed] (5,-1.7)..controls (5.9, -1.7)and(6.9,-1.7)..(7.8,-0.7);

\draw[-latex, red, dashed] (6,-2.5)..controls (7.8, -0.8)and(9.7,-2.6)..(11.5,-0.7);

\end{tikzpicture}
    \vspace{-.1in}
    \caption{\small The illustration of recursive score estimation (RSE).
    The upper half presents RSE from a local view, which shows how to utilize the former score, e.g., $\grad\log p_{k,0}(\vx^\prime)$ to update particles by ULA in the sampling subproblem formulated by the latter score, e.g., $\grad\log p_{k,S-t}(\vx)$. The lower half presents RSE from a global view, which is a series of interconnected mean estimation and sampling subproblems accordingly.}
    \label{con_fig:rse}
    \vspace{-.1in}
\end{figure}

\subsection{Recursive Score Estimation and Reverse Diffusion Sampling}

\paragraph{Recursive Score Estimation.}
In the previous section, we explained the rough intuition behind introducing recursion. By conducting the recursion, we need to solve a series of sampling and mean estimation subproblems. Then, it is demanding to control the error propagation between these subproblems in order to finally ensure small sampling errors. In particular, this amounts to the adaptive adjustment of the sample numbers for mean estimation and iteration numbers for ULA in solving sampling subproblems.
Specifically, if we require score estimation $\rbkwv_{k,r\eta}\colon \R^d\rightarrow \R^d$ to satisfy
\begin{equation}
    \label{ineq:score_req}
    \left\|\grad\log p_{k,S-r\eta}(\vx) - \rbkwv_{k,r\eta}(\vx)\right\|^2\le \epsilon,\ \forall \vx\in\R^d
\end{equation}
with a high probability, then the sample number in Step 1 and the number of calls of Step 2 (the iteration number of ULA) in Fig~\ref{con_fig:rse} will be 
two functions with respected to the target error $\epsilon$, denoted as $n_{k,r}(\epsilon)$ and $m_{k,r}(\epsilon)$ respectively.
Furthermore, when Step 2 is introduced to update ULA, we rely on an approximation of $\grad\log p_{k,0}$ instead of the exact score. 
To ensure \eqref{ineq:score_req} is met, the error resulting from estimating $\grad\log p_{k,0}$ should be typically smaller than $\epsilon$. We express this requirement as:
\begin{equation*}
    \left\|\grad\log p_{k,0}(\vx) - \rbkwv_{k,0}(\vx)\right\|^2\le l_{k,r}(\epsilon),\ \forall \vx\in\R^d.
\end{equation*}
where $l_{k,r}(\epsilon)$ is a function of $\epsilon$ that satisfies $l_{k,r}(\epsilon)\le \epsilon$.
Under this condition, we provide Alg~\ref{con_alg:rse}, i.e., RSE, to calculate the score function for the $r$-th iteration at the $k$-th segment, i.e., $\grad\log p_{k,S-r\eta}(\vx)$.

\begin{algorithm}[t]
    \caption{Recursive Score Estimation (approximate $\grad\log p_{k,S-r\eta}(\vx)$): $\mathsf{RSE}(k,r,\vx,\epsilon)$}
    \label{con_alg:rse}
    \begin{algorithmic}[1]
            \State {\bfseries Input:} The segment number $k\in \mathbb{N}_{0, K-1}$, the iteration number $r\in\mathbb{N}_{0,R-1}$, variable $\vx$ requiring the score function, error tolerance $\epsilon$.
            \If{$k\equiv -1$} \label{con_alg:rse_score_ret_condi}
                \Return $-\grad f_*(\vx)$
            \EndIf
            \State Initial the returned vector $\vv^\prime \gets \vzero$ 
            \For{$i=1$ to $n_{k,r}(\epsilon)$}\label{con_alg:rse_outer_loops}
                \State Draw $\vx^\prime_0$ from an initial distribution $q^\prime_0$ 
                \For{$j=0$ to $m_{k,r}(\epsilon,\vx)-1$} \label{con_alg:rse_inner_loops}
                    \State $\vv^\prime_j \gets \mathsf{RSE}\left(k-1, 0, \vx^\prime_{j}, l_{k,r}(\epsilon)\right)$ \Comment{Recursive score estimation $\grad\log p_{k-1,S}(\vx^\prime_{j})$}
                    \IfThenElse{$r\not\equiv 0$}{$t^\prime \gets S-r\eta$}{$t^\prime \gets S$} \Comment{The gap of time since the last call}
                    \State  \label{con_alg:rse_inner_update} Update the particle 
                        \begin{equation*}
                            \vx^\prime_{j+1}\coloneqq \vx^\prime_{j}+\tau_r\cdot \underbrace{\bigg(\vv^\prime_j + \frac{e^{-t^\prime}\vx - e^{-2t^\prime}\vx^\prime_{j}}{1-e^{-2t^\prime}} \bigg)}_{\approx \grad\log q_{k,S-r\eta}(\vx^\prime_{j}|\vx)} +\sqrt{2\tau_r}\cdot \xi
                        \end{equation*}
                \EndFor
                \State \label{con_alg:rse_score_update} Update the score estimation of $\vv^\prime\approx \grad\log p_{k,S-r\eta}(\vx)$ with empirical mean as $$\vv^\prime \coloneqq \vv^\prime + \frac{1}{n_{k,r}(\epsilon)}\bigg(-\frac{\vx - e^{-t^\prime}\vx_{m_{k,r}(\epsilon)}^\prime}{1-e^{-2t^\prime}}\bigg)$$
            \EndFor        
    \Return $\vv^\prime$. \Comment{As the approximation of $\grad\log p_{k,S-r\eta}(\vx)$}
    \end{algorithmic}
\end{algorithm}

\paragraph{Quasi-polynomial Complexity.} We consider the ideal case for interpreting the complexity of our score estimation method. In particular, since the benign error propagation, i.e., $l_{k,r}(\epsilon)=\epsilon$, is almost proven in Lemma~\ref{lem:recursive_core_lem}, we suppose the number of calls to the recursive function, $\mathsf{RSE}(k-1,0,\vx^\prime,l_{k,r}(\epsilon))$, is uniformly bounded by $m_{k,r}(\epsilon)\cdot n_{k,r}(\epsilon)$ for all feasible $(k,r)$ pairs when the RSE algorithm is executed with input $(k,r,\vx,\epsilon)$.
Then, recall that we will conduct the recursion in at most $K$ rounds, the total gradient complexity for estimating one score will be 
\begin{equation*}
    \left[m_{k,r}(\epsilon)\cdot n_{k,r}(\epsilon)\right]^{\mathcal{O}(K)} = \left[m_{k,r}(\epsilon)\cdot n_{k,r}(\epsilon)\right]^{\mathcal{O}(T/S)}. 
\end{equation*}
This formula highlights the importance of selecting a sufficiently large segment with length $S$ to reduce the number of recursive function calls and improve gradient complexity.
In our analysis, we set $S=\frac{1}{2}\log\left(\frac{2L+1}{2L}\right)$, which is ``just'' small enough to ensure that all intermediate target distributions in the sampling subproblems are strongly log-concave.
In this condition, due to the choice of $T$ is $\mathcal{O}(\log (d/\epsilon))$ in general cases and $m_{k,r}(\cdot)$ and $n_{k,r}(\cdot)$ are typically polynomial w.r.t. the target sampling error $\epsilon$ and dimension $d$ (see Theorem~\ref{thm:main_rrds_formal} in Appendix~\ref{app_sec:main_proof}), we would expect a quasi-polynomial gradient complexity. 

\paragraph{Diffusion-based Monte Carlo with Recursive Score Estimation.}
Then based on the RSE algorithm in Alg \ref{con_alg:rse}, we can directly apply the DDPM \citep{ho2020denoising} based method to perform the sampling, giving rise to the Recursive Score Diffusion-based Monte Carlo (RS-DMC) method. We summarize the proposed RS-DMC algorithm in Alg~\ref{alg:rrds} (the detailed setup of $m_{k,r}(\cdot)$, $n_{k,r}(\cdot)$, $l_{k,r}(\cdot)$ are provided in Theorem~\ref{thm:main_rrds_formal} in Appendix~\ref{app_sec:main_proof}).



\begin{algorithm}
    \caption{Recursive Score Diffusion-based Monte Carlo (RS-DMC)}
    \label{alg:rrds}
    \begin{algorithmic}[1]
            \State {\bfseries Input:} Initial particle $\rbkwx_{K,S}$ sampled from $p_\infty$, Terminal time $T$, Step size $\eta$, required convergence accuracy $\epsilon$;
            \For{$k=K-1$ down to $0$}
                \State Initialize the particle as $\bkwx_{k,0} \gets  \bkwx_{k+1, S}$
                \For{$r=0$ to $R-1$}
                    \State Approximate the score, i.e., $\grad\log p_{k,S-r\eta}(\bkwx_{k,r\eta})$ by $\vv^\prime \gets \mathsf{RSE}(k,r,\bkwx_{k,r\eta},l(\epsilon))$
                    \State $\bkwx_{k, (r+1)\eta} \gets e^\eta\bkwx_{k,r\eta}+\left(e^\eta - 1\right)\vv^\prime +\xi$ where $\xi$  is sampled from  $\mathcal{N}\left(0,  \left(e^{2\eta}-1\right)\mI_d\right)$
                \EndFor
            \EndFor
            \State {\bfseries return} $\bkwx_{0,S}$.
    \end{algorithmic}
\end{algorithm}

\vspace{-.2in}
\section{Analysis of \ourmethod}
In this section, we will establish the convergence guarantee for RS-DMC and reveal how the gradient complexity depends on the problem dimension and the target sampling error. We will also compare the gradient complexity of RS-DMC with other sampling methods to justify its strength. 
Additionally, we will provide a proof roadmap that briefly summarizes the critical theoretical techniques.

\subsection{Theoretical Results}

The following theorem states that $\ourmethod$ can provably converge to the target distribution in KL-divergence with quasi-polynomial gradient complexity.
\begin{theorem}[Gradient complexity of $\ourmethod$, informal]
    \label{thm:rrds}
    Under Assumptions~\ref{con_ass:lips_score}-\ref{con_ass:second_moment}, let $\bkwp_{0,S}$ be the distribution of the samples generated by $\ourmethod$, then there exists a collection of appropriate hyperparameters $n_{k,r}, m_{k,r}, \tau_r, \eta, l_{k,r}$ and $l$ such that with probability at least $1-\epsilon$, it holds that
   $\KL{p_*}{\bkwp_{0,S}}=\tilde{O}(\epsilon)$. Besides, the gradient complexity of $\ourmethod$ is
    \begin{equation}
        \label{eq:converge_rrds_sm}
        \begin{aligned}
            \exp\big[\mathcal{O}\left( L^3\cdot \log^3 \big((Ld+M)/\epsilon\right)\cdot \max\left\{\log\log Z^2,1\right\} \big)\big],
        \end{aligned}
    \end{equation}
    where $Z$ denotes the maximum norm of particles which appears in Alg~\ref{alg:rrds}.
\end{theorem}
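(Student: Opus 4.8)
The plan is to decouple the argument into three pieces that interact only through the per-step score accuracy: (i) a discretization analysis of the reverse segmented SDE~\eqref{con_eq:rrds_actual_backward} that bounds $\KL{p_*}{\bkwp_{0,S}}$ in terms of an accumulated score error; (ii) the recursive guarantee of Lemma~\ref{lem:recursive_core_lem} that $\mathsf{RSE}$ meets the pointwise accuracy requirement~\eqref{ineq:score_req}; and (iii) a counting argument for the recursion tree of Alg~\ref{con_alg:rse}. The three are then combined by choosing the hyperparameters recorded in Theorem~\ref{thm:main_rrds_formal}.

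\emph{Outer loop.} First I would run a DDPM-style change-of-measure analysis (Girsanov / Fokker--Planck) along \eqref{con_eq:rrds_actual_backward}: under Assumption~\ref{con_ass:lips_score}, $\KL{p_*}{\bkwp_{0,S}}$ is at most the sum of an initialization error $\KL{p_T}{p_\infty}\lesssim e^{-2T}(M+d)$ (using Assumption~\ref{con_ass:second_moment} and contractivity of the OU channel), a discretization error $\lesssim \mathrm{poly}(L)\,\eta\,T\,(M+d)$, and the accumulated score error $\sum_{k,r}\eta\,\EE_{p_{k,S-r\eta}}\|\grad\log p_{k,S-r\eta}-\rbkwv_{k,r\eta}\|^{2}$. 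Taking $T=\Theta(\log((Ld+M)/\epsilon))$, $\eta=\mathrm{poly}(\epsilon/(Ld+M))$, and demanding each score estimate to satisfy \eqref{ineq:score_req} with accuracy $\epsilon'=\mathrm{poly}(\epsilon\eta/T)$ forces all three terms to $\tilde O(\epsilon)$. Since $\bkwp_{0,S}$ corresponds to $p_{0,0}=p_*$ in the exact reverse process, Assumption~\ref{con_ass:lips_score} is exactly what lets us run down to $t=0$ without early stopping.

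\emph{Recursive score estimation.} The core is Lemma~\ref{lem:recursive_core_lem}, whose proof I would split into three ingredients. \emph{Geometry:} with $S=\tfrac12\log\tfrac{2L+1}{2L}$, inequality~\eqref{con_ineq:cho_s_small} and Lemma~\ref{con_lem:score_reformul} show every subproblem $q_{k,S-r\eta}(\cdot|\vx)$ is $L$-strongly log-concave and, by Assumption~\ref{con_ass:lips_score}, $O(\max\{L,1/\eta\})$-log-smooth, so the ULA loop in Alg~\ref{con_alg:rse} mixes in $m_{k,r}=\mathrm{poly}(L,1/\eta,d,\log(1/\epsilon))$ steps. \emph{Inexact gradient:} the update uses $\vv'_j=\mathsf{RSE}(k-1,0,\vx^\prime_j,l_{k,r}(\epsilon))$ in place of $\grad\log p_{k,0}$, injecting a stationary bias of order $l_{k,r}(\epsilon)/L$ in $W_2$; since $\grad\log p_{k,S-r\eta}(\vx)$ is an affine functional of $\rvx_0\sim q_{k,S-r\eta}(\cdot|\vx)$ with coefficient $e^{-t'}/(1-e^{-2t'})=O(\max\{L,1/\eta\})$, this bias translates into a controlled error in the estimated mean. \emph{Mean estimation:} strong log-concavity gives sub-Gaussian tails for the integrand, so a Chernoff bound with $n_{k,r}=\mathrm{poly}(1/\epsilon,d,L,\log Z)$ samples, union-bounded over the (at most polynomially many, $Z$-bounded) query points $\vx$, yields \eqref{ineq:score_req}; the union bound is what produces the $\max\{\log\log Z^{2},1\}$ factor. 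The crux is the (near-)\emph{benign propagation} claim $l_{k,r}(\epsilon)=\epsilon$ up to a polynomial or constant factor: a naive estimate would force the recursive call to be $\mathrm{poly}$-times more accurate, so after $K$ levels the base-case accuracy, and hence $n_{k,r}$, would blow up super-polynomially; the fix is that the segment-wise map $\grad\log p_{k,0}\mapsto\grad\log p_{k,S-r\eta}$ factors through a strongly log-concave sampling problem and is therefore contractive, which is precisely why $S$ must be capped at $\tfrac12\log\tfrac{2L+1}{2L}$.

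\emph{Complexity and the main obstacle.} Unrolling Alg~\ref{con_alg:rse}, a call at segment $k$ spawns at most $\prod_{j=0}^{k}\max_r m_{j,r}(\epsilon)\,n_{j,r}(\epsilon)$ leaf evaluations (recursion depth $\le K$, base case $k\equiv-1$ costing one gradient), i.e. $[\mathrm{poly}(L,d,M,1/\epsilon,\log Z)]^{O(K)}$ gradients, while Alg~\ref{alg:rrds} invokes $\mathsf{RSE}$ only $RK=T^{2}/(\eta S)$ times (a polynomial overhead). With $S=\Theta(1/L)$ and $T=\Theta(\log((Ld+M)/\epsilon))$ we get $K=T/S=O(L\log((Ld+M)/\epsilon))$, and tracking the residual $L$- and $\log$-factors hidden in $m_{k,r}$, $n_{k,r}$ and in the accuracy fed to the recursion gives $[\mathrm{poly}(\cdot)]^{O(K)}=\exp[\,O(L^{3}\log^{3}((Ld+M)/\epsilon)\,\max\{\log\log Z^{2},1\})\,]$, as claimed. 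I expect the benign-propagation step to be the genuine obstacle: the rest is a bookkeeping combination of standard DDPM discretization bounds, strongly-log-concave ULA convergence rates, and sub-Gaussian concentration, whereas keeping the base-level accuracy (and therefore $n_{k,r}$, and therefore the $(m_{k,r}n_{k,r})^{O(K)}$ count) from degrading faster than polynomially across the $K$ recursion levels is the one place where the specific choice of segment length and the contraction it induces are indispensable.
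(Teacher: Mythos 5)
Your plan mirrors the paper's own proof: the same Girsanov-based decomposition of $\KL{p_*}{\bkwp_{0,S}}$ into initialization, discretization, and accumulated score-estimation error, the same strongly log-concave inner subproblems induced by $S=\tfrac12\log\tfrac{2L+1}{2L}$ handled by ULA plus empirical mean estimation with only constant-factor accuracy degradation per recursion level, and the same $\left[m_{k,r}n_{k,r}\right]^{\mathcal{O}(K)}$ recursion-tree count with $K=\mathcal{O}(L\log((Ld+M)/\epsilon))$ yielding the quasi-polynomial complexity. The only (minor) inaccuracy is the attribution of the $\max\{\log\log Z^2,1\}$ factor: in the paper it comes from the inner ULA's warm-start cost, i.e.\ $m_{k,r}\propto\max\{\log\|\vx\|^2,1\}$ via Lemma~\ref{lem:inner_initialization}, rather than from a union bound over query points (the union bound over the recursion tree only affects the choice of $\delta$ and hence $n_{k,r}$ through $\log(1/\delta)$).
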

We defer the detailed configurations of $n_{k,r}, m_{k,r}, \tau_r, \eta, l_{k,r},$ $l$ and relative constants in the formal version of this theorem, i.e., Theorem~\ref{thm:main_rrds_formal} Appendix~\ref{app_sec:main_proof} and Table~\ref{tab:constant_list} in Appendix~\ref{sec:not_ass_app}, respectively.
Then, we show a comparison between our method and previous work.

\paragraph{Comparison with ULA.} 
The gradient complexity of ULA has been well studied for sampling the non-log-concave distribution. However, in order to prove the convergence in KL divergence or TV distance, they typically require additional isoperimetric conditions, such as Log-Soboleve and Poincar\'e inequality (see Definitions~\ref{def:lsi} and~\ref{def:pi}). In particular, when $p_*$ satisfies LSI with parameter $\alpha$, \cite{vempala2019rapid} proved the $\mathcal{O}\left( d\epsilon^{-1}\alpha^{-2}\right)$ in KL convergence. However, for general non-log-concave distributions, $\alpha$ is not dimension-free. For instance, under the Dissipative condition  \citep{hale2010asymptotic}, $\alpha$ can be as worse as $\exp(- \mathcal O(d))$ \citep{raginsky2017non}, leading to a $\exp(\mathcal O(d))$ gradient complexity results \citep{xu2018global}.

When the isoperimetric condition is absent, \cite{balasubramanian2022towards} proved the convergence of ULA  based on the Fisher information measure, i.e., $\FI{p}{p_*}\coloneqq \E_{p}[\left\|\grad\log (p/p_*)\right\|^2]$, they showed that ULA can generate the samples that satisfy $\FI{p}{p_*}\le \epsilon$ for some small error tolerance $\epsilon$.  However,  it may be unclear what can be entailed by such a guarantee $\FI{p}{p_*}\le \epsilon$.
It has demonstrated that, in some cases, even if the Fisher information $\FI{p}{p_*}$ is very small, the total variation distance/KL divergence remains bounded away from zero. This suggests that the convergence guarantee in Fisher information might be weaker than that in KL divergence (i.e., our convergence guarantee). 

\paragraph{Comparison with RDS.} Then we make a detailed comparison with RDS in \citep{huang2023monte}, which is the most similar algorithm compared to ours. Firstly, we would like to strengthen again that our convergence results are obtained on a milder assumption, while \citet{huang2023monte} additionally requires the target distribution to have a heavier tail. Besides, as discussed in the introduction section, RDS has a much worse gradient complexity since it performs all score estimation straightforwardly, while $\ourmethod$ is based on a recursive structure. Consequently, RDS involves many hard sampling subproblems that take exponential time to solve, while $\ourmethod$ only involves strongly log-concave subsampling problems that can be efficiently solved within polynomial time. As a result, the gradient complexity of RDS is proved to be $\mathrm{poly}(d)\cdot \mathrm{poly}(1/\epsilon)\cdot \exp\left(\mathcal{O}(1/\epsilon)\right)$, which is significantly worse than the quasi-polynomial gradient complexity of $\ourmethod$.

\subsection{Proof Sketch}

In this section, we aim to highlight the technical innovations by presenting the roadmap of our analysis. Due to space constraints, we have included the technical details in the Appendix.

Firstly, by requiring Novikov's conditions, we can establish an upper bound on the KL divergence gap between the target distribution $p_*$ and the underlying distribution of output particles, i.e., $\bkwp_{0,S}$, by Girsanov's Theorem which demonstrates
\begin{equation*}
    \small
    \begin{aligned}
        \KL{p_*}{\bkwp_{0,S}}\le  &\underbrace{\KL{p_{K-1,S}}{\bkwp_{K-1,0}}}_{\text{Term 1}}+ \underbrace{2\sum_{k=0}^{K-1}\sum_{r=0}^{R-1} \int_{0}^\eta \E_{\rbkwx_{k,r\eta}}\left[\left\|\grad\log p_{k,S-r\eta}(\rbkwx_{k,r\eta})- \rbkwv_{k,r\eta}(\rbkwx_{k,r\eta})\right\|^2\right] \der t}_{\text{Term 3}}\\
        & +\underbrace{2\sum_{k=0}^{K-1}\sum_{r=0}^{R-1} \int_{0}^\eta \E_{(\rbkwx_{k,t+r\eta}, \rbkwx_{k,r\eta})}  \left[\left\|\grad\log p_{k, S-(t+r\eta)}(\rbkwx_{k,t+r\eta}) - \grad\log p_{k,S-r\eta}(\rbkwx_{k,r\eta}) \right\|^2\right]\der t}_{\text{Term 2}}.
    \end{aligned}
\end{equation*}
Although Novikov's condition may not be met in general, we employ techniques in~\cite{chen2023improved} and sidestep this issue by utilizing a differential inequality argument as shown in Lemma~\ref{lem:prop8_chen2023improved}.

\paragraph{Upper bound Term 1.}
Intuitively, $\mathrm{Term\ 1}$ appears since we utilize the standard Gaussian to initialize the reverse OU process (SDE~\eqref{con_eq:rrds_actual_backward}) rather than $p_{K-1,S}$ which can hardly be sampled from directly in practice.
Therefore, the first term can be bounded using exponential mixing of the forward (Ornstein-Uhlenbeck) process towards the standard Gaussian in Lemma~\ref{lem:var_thm4_vempala2019rapid}, i.e., 
\begin{equation*}
    \KL{p_{K-1,S}}{\bkwp_{K-1,0}}\le \KL{p_*}{\bkwp_{K-1,0}}\exp(-KS) \le  (Ld+M)\exp(-KS),
\end{equation*}
where $\bkwp_{K-1,0}=\mathcal{N}(\vzero, \mI)$ as shown SDE~\eqref{con_eq:rrds_actual_backward}.

\paragraph{Upper bound Term 2.}
Term 2 corresponds to the discretization error, which has been successfully addressed in previous work~\cite{chen2022sampling,chen2023improved}. By utilizing the unique structure of the Ornstein-Uhlenbeck process, they managed to limit both the time and space discretization errors, which decrease as $\eta$ becomes smaller. To ensure the completeness of our proof, we have included it in Lemma~\ref{lem:dis_err}, utilizing the segmented notation.

\paragraph{Upper bound Term 3.}
Term 3 represents the accuracy of the score estimation.
In diffusion models, due to the parameterization of the target density, this term is trained by a neural network and assumed to be less than $\epsilon$ to ensure the convergence of the reverse process.
However, in \ourmethod, the score estimation is obtained using a non-parametric approach, i.e., Alg~\ref{con_alg:rse}.
To this end, we can provide rigorous high probability bound for this term under Alg~\ref{con_alg:rse}, which is stated in Lemma~\ref{lem:rse_err}.

Roughly speaking, for Alg~\ref{alg:rrds} with input each $(k,r,\vx,\epsilon)$, suppose the score estimation of $\grad\log p_{k,0}$ is given as $\rbkwv_{k-1,0}$ satisfying the following event
\begin{equation*}
    \bigcap_{
            \vx^\prime\in \sS_{k,r}(\vx,\epsilon)
        } \left\|\grad\log p_{k,0}(\vx^\prime) -\rbkwv_{k-1,0}(\vx^\prime)\right\|^2\le l_{k,r}(\epsilon)
\end{equation*}
where $\sS_{k,r}(\vx,\epsilon)$ denotes the set of particles appear in Alg~\ref{con_alg:rse} except for the recursion.
In this condition, Lemma~\ref{lem:recursive_core_lem} provides the upper bound of score estimation error as:
\begin{equation*}
    \small
    \begin{aligned}
        \left\|\rbkwv_{k,r\eta}(\vx) - \grad\log p_{k,S-r\eta}(\vx)\right\|^2\le  \frac{2e^{-2(S-r\eta)}}{\left(1-e^{-2(S-r\eta)}\right)^2}\cdot \underbrace{\bigg\|-\frac{1}{n_{r,k}(\epsilon)}\sum_{i=1}^{n_{r,k}(\epsilon)} \rvx_i^\prime + \E_{\rvx^\prime \sim q^\prime_{k,S-r\eta}(\cdot|\vx)}\left[\rvx^\prime\right]\bigg\|^2}_{\text{Term 3.1}}\\
        \quad + \frac{2e^{-2(S-r\eta)}}{\left(1-e^{-2(S-r\eta)}\right)^2}\cdot \underbrace{\bigg\|-\E_{\rvx^\prime \sim q^\prime_{k,S-r\eta}(\cdot|\vx)}\left[\rvx^\prime\right] + \E_{\rvx^\prime \sim q_{k,S-r\eta}(\cdot|\vx)}\left[\rvx^\prime\right]\bigg\|^2}_{\text{Term 3.2}}
    \end{aligned}
\end{equation*}
where $q^\prime_{k,S-r\eta}(\cdot|\vx)$ is the underlying distribution of output particles, i.e., $\rvx^\prime_{m_{k,r}(l_{k,r}(\epsilon))}$ in Alg~\ref{con_alg:rse}. 
Considering that the distribution $q_{k,S-r\eta}$ is strongly log-concave (given in Eq.~\ref{con_ineq:cho_s_small}) and we can get a lower bound on the strongly log-concave constant (see Lemma~\ref{lem:sm_con_of_q}). Therefore, $q^\prime_{k,S-r\eta}$ also satisfies the log-Sobolev inequality due to Lemma~\ref{lem:thm8_vempala2019rapid}, which can imply the variance upper bound (see Lemma~\ref{lem:lsi_var_bound}). Then, in our proof, we directly make use of the Sobolev inequality to derive the high-probability bound (or concentration results) for estimating the mean of $q^\prime_{k,S-r\eta}(\cdot|\vx)$ in Term 3.1 with Lemma~\ref{lem:recursive_core_lem} by selecting sufficiently large $n_{k,r}(\epsilon)$.
Besides, Term 3.2 can be upper bounded by $\mathrm{KL}\big(q^\prime_{k,S-r\eta}(\cdot|\vx)\|q_{k,S-r\eta}(\cdot|\vx)\big)$, which can be well controlled by conducting the ULA with a sufficiently large iteration number $m_{k,r}(\epsilon)$. 
Therefore, by conducting the following decomposition
\begin{equation*}
    \small
    \begin{aligned}
        &\sP\big[\left\|\grad\log p_{k,S-r\eta}(\bkwx_{k,r\eta})-\rbkwv_{k,r\eta}(\bkwx_{k,r\eta})\right\|^2\le \epsilon\big] \notag\\
        &\ge (1-\delta)  \mathbb{P}\bigg[\bigcap_{
            \vx^\prime\in \sS_{k,r}(\vx,\epsilon)
        } \left\|\grad\log p_{k,0}(\vx^\prime) -\rbkwv_{k-1,0}(\vx^\prime)\right\|^2\le l_{k,r}(\epsilon)\bigg].
    \end{aligned}
\end{equation*}
We only need to use this proof process recursively with a proper choice of $\delta$ ($\delta$ as a function of $\epsilon$) to get the bound:
\begin{equation*}
    \small
    \sP\big[\left\|\grad\log p_{k,S-r\eta}(\bkwx_{k,r\eta})-\rbkwv_{k,r\eta}(\bkwx_{k,r\eta})\right\|^2\le \epsilon\big]\ge 1-\epsilon,
\end{equation*}
which implies $\text{Term 3} \le \tilde{O}(\epsilon)$ with a probability at least $1-\epsilon$.
Due to the large amount of computation, we defer the details of the recursive proof procedure and the choice of $\delta$ to the Appendix~\ref{app_sec:core_lemmas}.

{
    \begin{figure}[!hbpt]
    \centering
    \begin{tabular}{ccccc}
          \includegraphics[width=3cm]{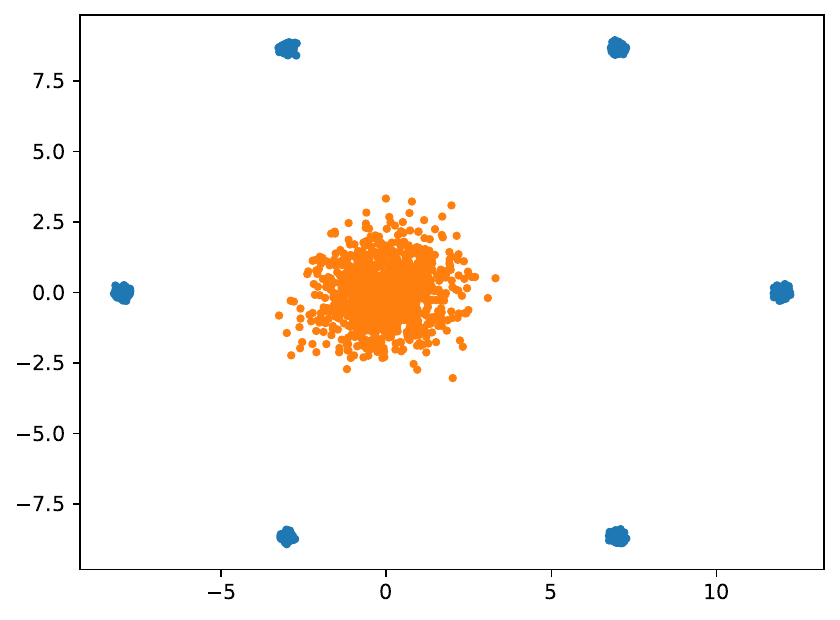}&
          \includegraphics[width=3cm]{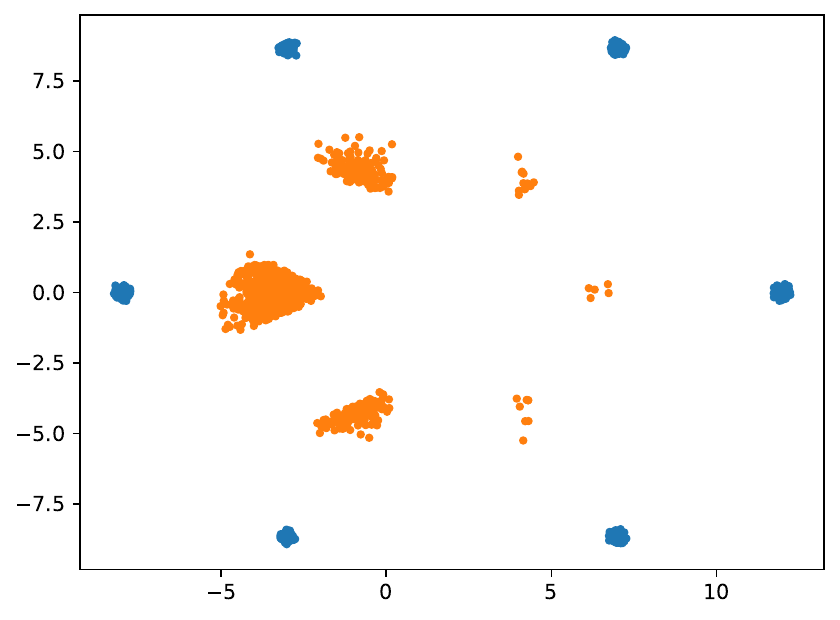}&
          \includegraphics[width=3cm]{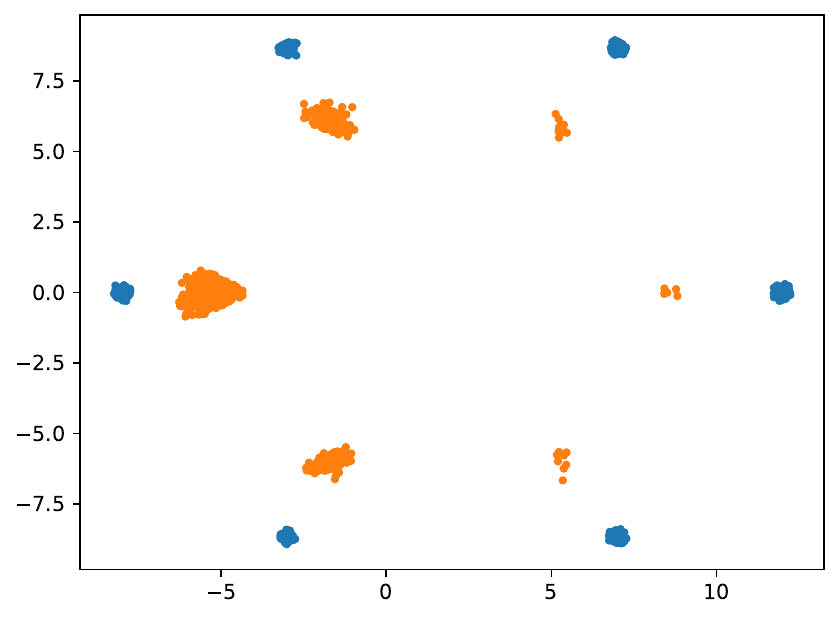}&
          \includegraphics[width=3cm]{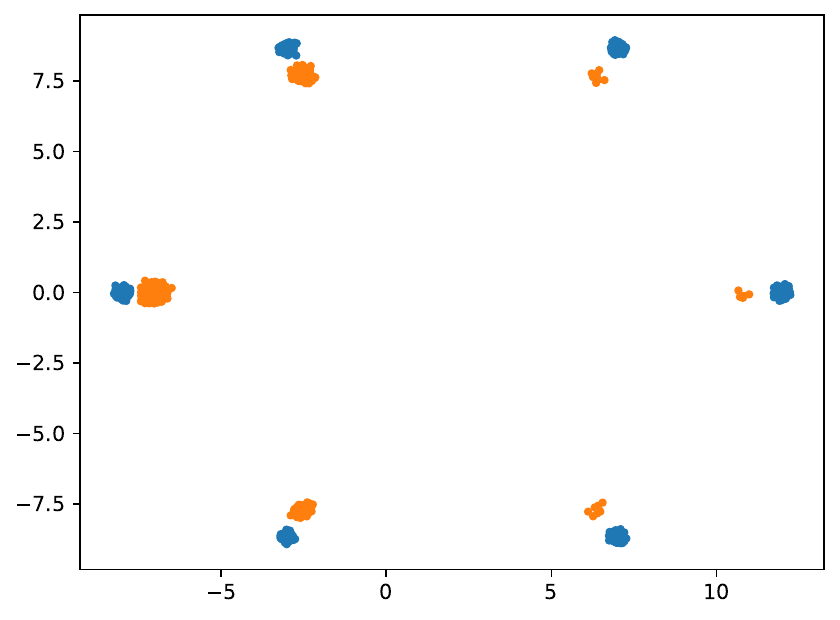}&
          \includegraphics[width=3cm]{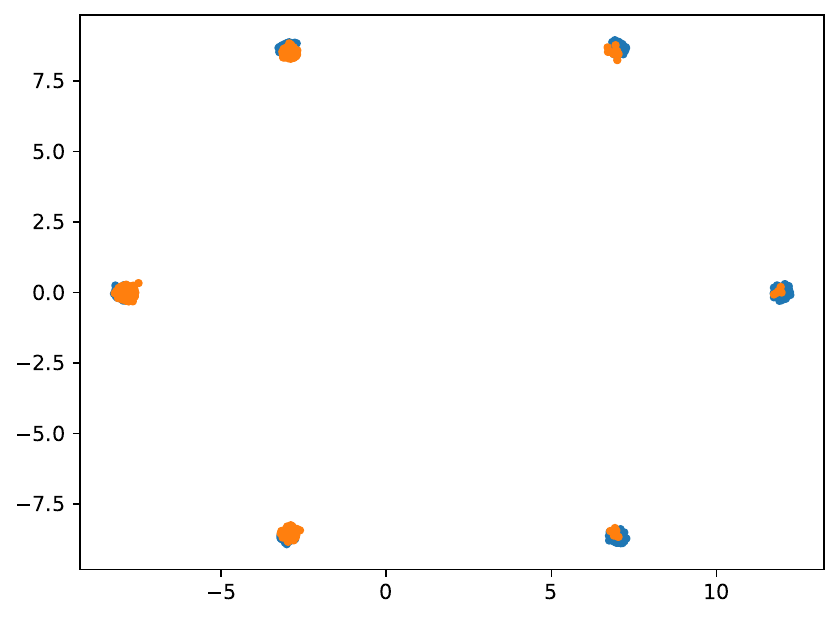}\\
          $\mathrm{grad\ num}=0$&$\mathrm{grad\ num}=25$&$\mathrm{grad\ num}=50$&$\mathrm{grad\ num}=100$&$\mathrm{grad\ num}=200$\\
          \includegraphics[width=3cm]{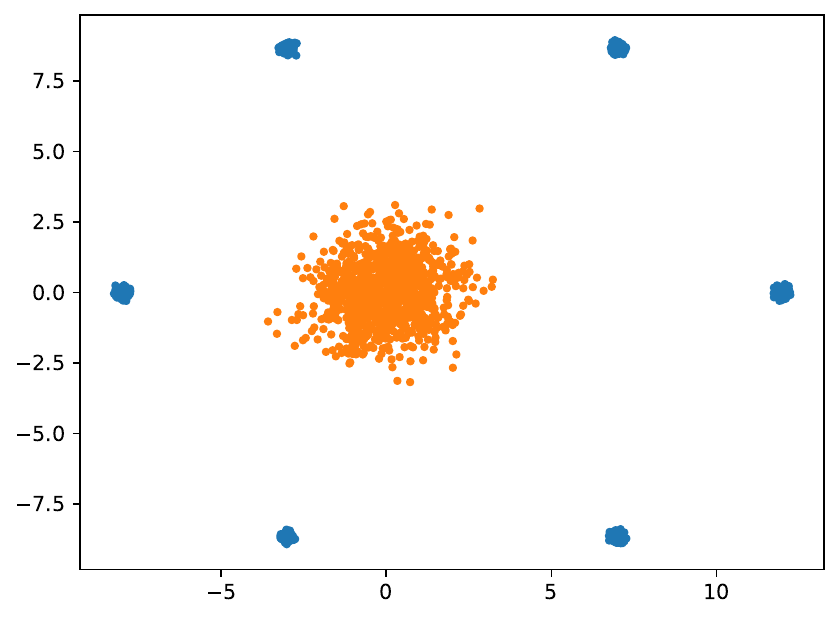}&
          \includegraphics[width=3cm]{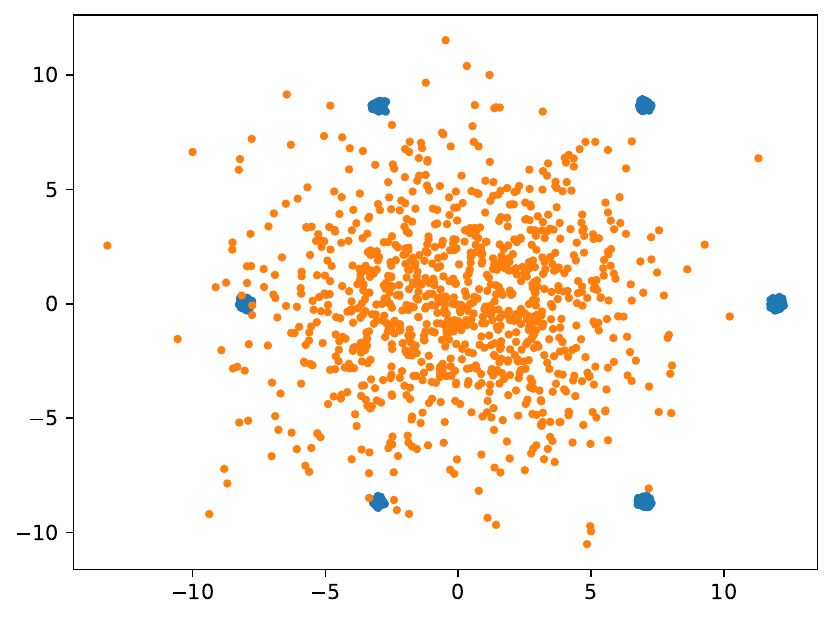}&
          \includegraphics[width=3cm]{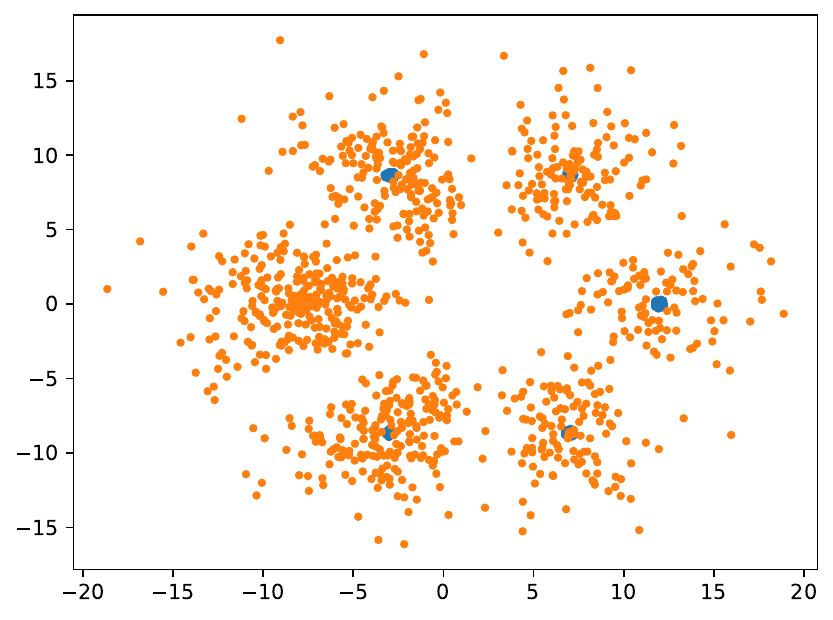}&
          \includegraphics[width=3cm]{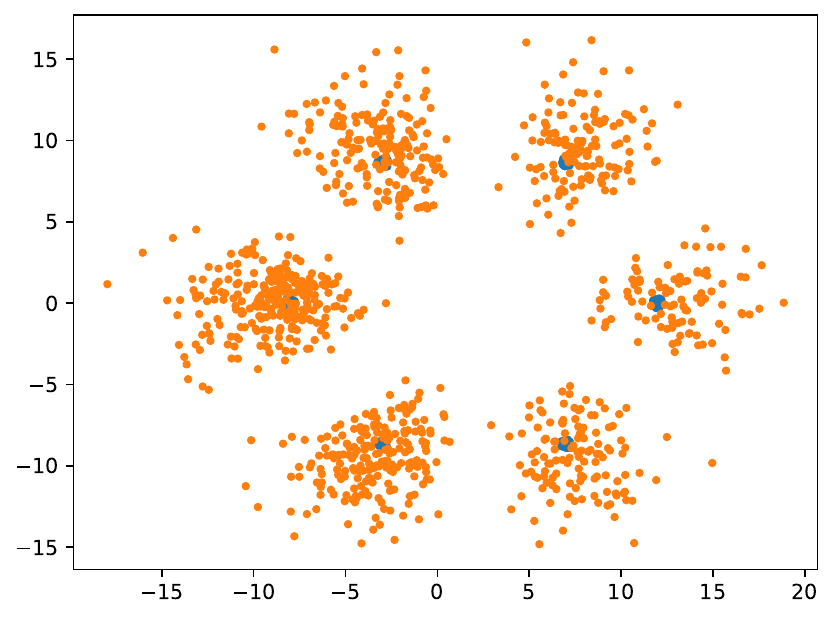}&
          \includegraphics[width=3cm]{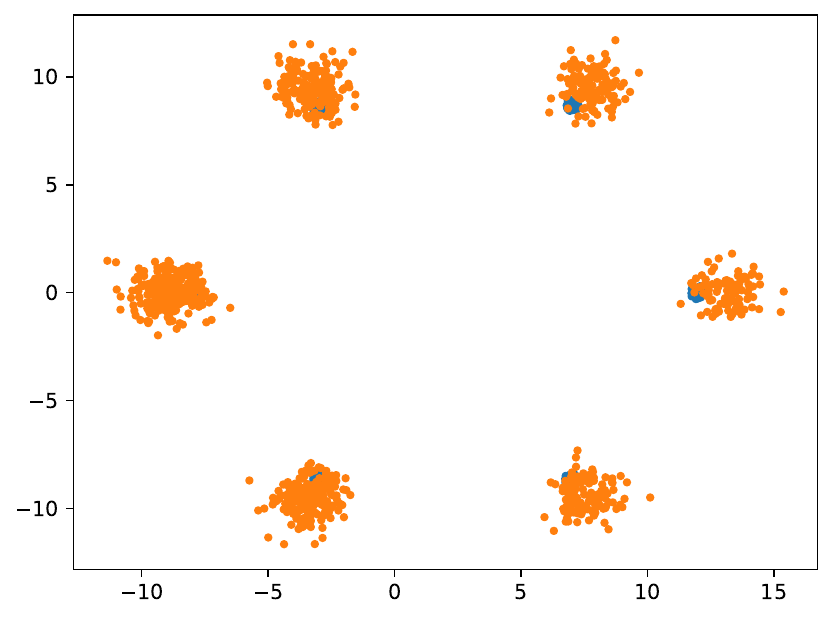}\\
          $\mathrm{grad\ num}=0$&$\mathrm{grad\ num}=100$&$\mathrm{grad\ num}=190$&$\mathrm{grad\ num}=195$&$\mathrm{grad\ num}=200$\\
          \includegraphics[width=3cm]{0_figs/RSDMC_0_dis.pdf}&
          \includegraphics[width=3cm]{0_figs/RSDMC_100_dis.pdf}&
          \includegraphics[width=3cm]{0_figs/RSDMC_190_dis.pdf}&
          \includegraphics[width=3cm]{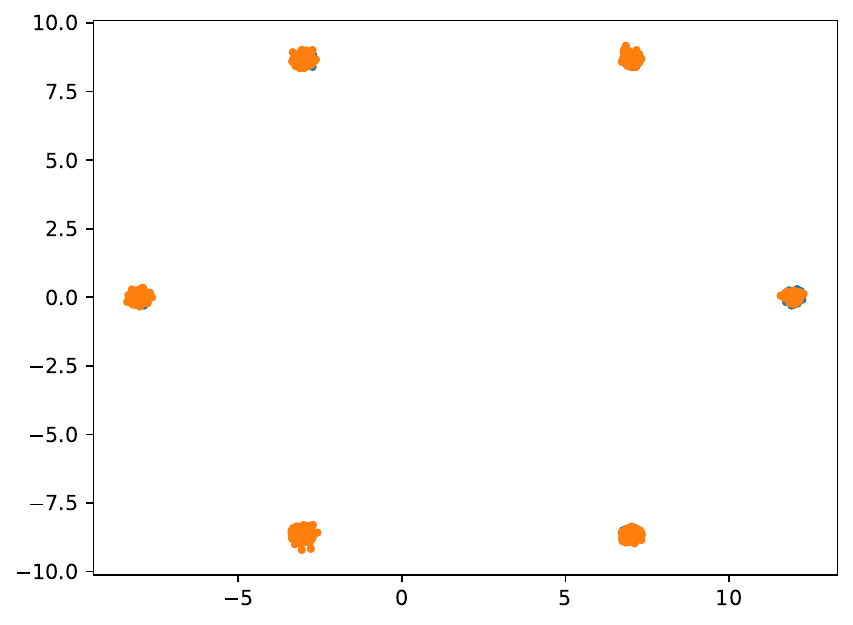}&
          \includegraphics[width=3cm]{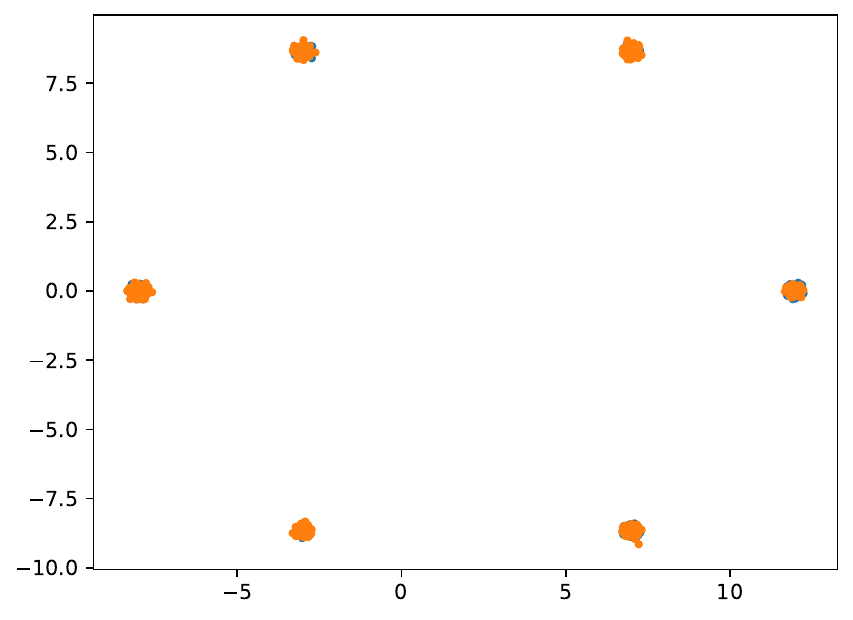}\\
          $\mathrm{grad\ num}=0$&$\mathrm{grad\ num}=100$&$\mathrm{grad\ num}=190$&$\mathrm{grad\ num}=195$&$\mathrm{grad\ num}=200$\\
    \end{tabular}
    \caption{\small Illustration of the returned particles for ULA, \ourmethod-v1 and \ourmethod-v2 shown with orange particles and the blue ones sampled from the ground truth. The first row is returned by ULA, the second is \ourmethod-v1 and the last is from \ourmethod-v2. Experimental results show that ULA converges fast in the local regions of modes, while it suffers from the problem of covering all modes. \ourmethod-v1 can cover most modes with few gradient oracles but converge slowly in local regions. \ourmethod-v2 takes advantage of both ULA and \ourmethod-v1, which can cover most modes and admit a faster local convergence. 
    }
    \label{fig:illustration_exp}
    \end{figure}
    \section{Empirical Results}
    
    \noindent\textbf{Experimental settings.} We consider the target distribution defined on $\R^2$ to be a mixture of Gaussian distributions with $6$ modes.
    Meanwhile, we draw $1,000$ particles from the target distribution, presented as blue nodes shown in Fig~\ref{fig:illustration_exp}.

    We fix the random seed and initialize particles with the standard Gaussian. Then, update particles with the following three settings:
    \begin{itemize}
        \item \textbf{ULA.} We choose  ULA~\cite{neal1992bayesian,roberts1996exponential} as the sampler, setting the step size and the iteration number as $2\cdot 10^{-4}$ and $200$ respectively.
        \item \textbf{RS-DMC-v1.} We choose RS-DMC as the sampler, setting the outer step size and the inner step size as $\eta = 5\cdot 10^{-2}$ and $\tau_r = 1\cdot 10^{-2}$ respectively. For inner loops, the number of samples and iterations, i.e., $n_{k,r}$ and $m_{k,r}$, are both chosen as $1$. For outer loops, the number of iterations is chosen as $200$, and we divide the entire process into two segments, i.e., $K=2$, and each segment contains $100$ iterations, i.e., $R=100$. 
        \item \textbf{RS-DMC-v2.} We choose the same hyper-parameter settings as that in RS-DMC-v1. Besides, we replace the last $10$ iterations as the ULA's update since when $p_{t}$ is closed to $p_*$, we have
        \begin{equation*}
            \small
            \begin{aligned}
                \lim_{t\rightarrow T} \grad \log p_0(\vx) = & \lim_{t\rightarrow T} \frac{\int \grad p_{0}(\vx_0)\cdot \left(2\pi (1-e^{-2(T-t)})\right)^{-d/2}\cdot \exp\left(-\frac{\left\|\vx-e^{-(T-t)}\vx_0\right\|^2}{2(1-e^{-2(T-t)})}\right)\der \vx_0}{\int p_{0}(\vx_0)\cdot \left(2\pi (1-e^{-2(T-t)})\right)^{-d/2}\cdot \exp\left(-\frac{\left\|\vx-e^{-(T-t)}\vx_0\right\|^2}{2(1-e^{-2(T-t)})}\right)\der \vx_0} =  -\grad f_*(\vx),
            \end{aligned}
        \end{equation*}
        which means $-\grad f_*(\vx)$ can be used to approximate the score, and we do not require the mean estimation with inner loops.
    \end{itemize}

    \paragraph{Experimental results.} To compare the behaviors of the three methods, we illustrate the particles when the algorithms return for different gradient complexity in Fig~\ref{fig:illustration_exp}.
    We note that
    \begin{itemize}
        \item ULA will quickly fall into some specific modes, and most steps are used to improve the mean estimation of each mode. However, the number of particles belonging to each mode is unbalanced and almost determined at the very beginning of the entire process. This is because the drift force of different modes at the origin varies greatly.
        \item \ourmethod-v1 quickly covers the different modes and converges to their means. Besides, the number of particles belonging to each mode is much more balanced than that in ULA. However, since we only choose $n_{k,r}=m_{k,r}=1$, and the score $\grad \log p_{k,t}(\vx)$ does not be approximated accurately, the convergence to specific modes will be relatively slow, which causes the variance of \ourmethod-v1 larger than the target distribution.
        \item \ourmethod-v2 takes the advantage of \ourmethod-v1 and estimate the the score $\grad \log p_{k,t}(\vx)$, when $p_{k,t}(\vx)$ approaches $p_*$, with $-f_*(\vx)$ directly rather than a inner-loop mean estimation. From another perspective, \ourmethod-v2 covers the different modes by \ourmethod-v1 and achieves local convergence by ULA. Hence, it has a balanced number of particles belonging to each mode and shares a variance almost the same as that in the ground truth.
    \end{itemize}
}
\section{Conclusion}
In this paper, we propose a novel non-parametric score estimation algorithm, i.e., RSE, presented in Alg~\ref{con_alg:rse} and derive its corresponding reverse diffusion sampling algorithm, i.e., $\ourmethod$, and outlined in Alg~\ref{alg:rrds}.
By introducing the segment length $S$ to balance the challenges of score estimation and recursive calls, $\ourmethod$ exhibits several advantages over Langevin-based MCMC, e.g., ULA, ULD, and MALA.
It can achieve KL convergence beyond isoperimetric target distributions with a quasi-polynomial gradient complexity, i.e., 
    \begin{equation*}
        \begin{aligned}
            \exp\big[\mathcal O(L^3\cdot \log^3(d/\epsilon)\cdot \max\left\{\log\log Z^2,1\right\} )\big].
        \end{aligned}
    \end{equation*} 
Additionally, the theoretical result also demonstrates the efficiency of $\ourmethod$ in challenging sampling tasks.
To the best of our knowledge, this is the first work that eliminates the exponential dependence with only smoothness and the second moment bounded assumptions.

\bibliographystyle{apalike}
\bibliography{0_contents/ref}  

\begin{thebibliography}{}

\bibitem[Anderson, 1982]{anderson1982reverse}
Anderson, B.~D. (1982).
\newblock Reverse-time diffusion equation models.
\newblock {\em Stochastic Processes and their Applications}, 12(3):313--326.

\bibitem[Balasubramanian et~al., 2022]{balasubramanian2022towards}
Balasubramanian, K., Chewi, S., Erdogdu, M.~A., Salim, A., and Zhang, S.
  (2022).
\newblock Towards a theory of non-log-concave sampling: first-order
  stationarity guarantees for langevin monte carlo.
\newblock In {\em Conference on Learning Theory}, pages 2896--2923. PMLR.

\bibitem[Chafa{\"\i}, 2004]{chafai2004entropies}
Chafa{\"\i}, D. (2004).
\newblock Entropies, convexity, and functional inequalities, on $\backslash phi
  $-entropies and $\backslash phi $-sobolev inequalities.
\newblock {\em Journal of Mathematics of Kyoto University}, 44(2):325--363.

\bibitem[Chen et~al., 2023a]{chen2023improved}
Chen, H., Lee, H., and Lu, J. (2023a).
\newblock Improved analysis of score-based generative modeling: User-friendly
  bounds under minimal smoothness assumptions.
\newblock In {\em International Conference on Machine Learning}, pages
  4735--4763. PMLR.

\bibitem[Chen et~al., 2023b]{chen2022sampling}
Chen, S., Chewi, S., Li, J., Li, Y., Salim, A., and Zhang, A.~R. (2023b).
\newblock Sampling is as easy as learning the score: theory for diffusion
  models with minimal data assumptions.
\newblock In {\em International Conference on Learning Representations}.

\bibitem[Chen et~al., 2023c]{chen2023restoration}
Chen, S., Daras, G., and Dimakis, A. (2023c).
\newblock Restoration-degradation beyond linear diffusions: A non-asymptotic
  analysis for ddim-type samplers.
\newblock In {\em International Conference on Machine Learning}, pages
  4462--4484. PMLR.

\bibitem[Cheng and Bartlett, 2018]{cheng2018convergence}
Cheng, X. and Bartlett, P. (2018).
\newblock Convergence of langevin mcmc in kl-divergence.
\newblock In {\em Algorithmic Learning Theory}, pages 186--211. PMLR.

\bibitem[Cheng et~al., 2018]{cheng2018underdamped}
Cheng, X., Chatterji, N.~S., Bartlett, P.~L., and Jordan, M.~I. (2018).
\newblock Underdamped langevin mcmc: A non-asymptotic analysis.
\newblock In {\em Conference on learning theory}, pages 300--323. PMLR.

\bibitem[Duane et~al., 1987]{Duane1987Hybrid}
Duane, S., Kennedy, A.~D., Pendleton, B.~J., and Roweth, D. (1987).
\newblock Hybrid monte carlo.
\newblock {\em Physics letters B}, 195(2):216--222.

\bibitem[Dwivedi et~al., 2018]{dwivedi2018log}
Dwivedi, R., Chen, Y., Wainwright, M.~J., and Yu, B. (2018).
\newblock Log-concave sampling: Metropolis-hastings algorithms are fast!
\newblock In {\em Conference on learning theory}, pages 793--797. PMLR.

\bibitem[Erdogdu and Hosseinzadeh, 2021]{erdogdu2021convergence}
Erdogdu, M.~A. and Hosseinzadeh, R. (2021).
\newblock On the convergence of langevin monte carlo: The interplay between
  tail growth and smoothness.
\newblock In {\em Conference on Learning Theory}, pages 1776--1822. PMLR.

\bibitem[Hale, 2010]{hale2010asymptotic}
Hale, J.~K. (2010).
\newblock {\em Asymptotic behavior of dissipative systems}.
\newblock Number~25. American Mathematical Soc.

\bibitem[Ho et~al., 2020]{ho2020denoising}
Ho, J., Jain, A., and Abbeel, P. (2020).
\newblock Denoising diffusion probabilistic models.
\newblock {\em Advances in neural information processing systems},
  33:6840--6851.

\bibitem[Holzm{\"u}ller and Bach, 2023]{holzmuller2023convergence}
Holzm{\"u}ller, D. and Bach, F. (2023).
\newblock Convergence rates for non-log-concave sampling and log-partition
  estimation.
\newblock {\em arXiv preprint arXiv:2303.03237}.

\bibitem[Huang et~al., 2023]{huang2023monte}
Huang, X., Dong, H., Hao, Y., Ma, Y., and Zhang, T. (2023).
\newblock Monte carlo sampling without isoperimetry: A reverse diffusion
  approach.

\bibitem[Lee et~al., 2022]{lee2022convergence}
Lee, H., Lu, J., and Tan, Y. (2022).
\newblock Convergence for score-based generative modeling with polynomial
  complexity.
\newblock {\em arXiv preprint arXiv:2206.06227}.

\bibitem[Ma et~al., 2021]{ma2021there}
Ma, Y.-A., Chatterji, N.~S., Cheng, X., Flammarion, N., Bartlett, P.~L., and
  Jordan, M.~I. (2021).
\newblock Is there an analog of {N}esterov acceleration for gradient-based
  {MCMC}?
\newblock {\em Bernoulli}, 27(3).

\bibitem[Ma et~al., 2015]{ma2015complete}
Ma, Y.-A., Chen, T., and Fox, E. (2015).
\newblock A complete recipe for stochastic gradient {MCMC}.
\newblock In {\em Advances in Neural Information Processing Systems (NIPS) 28},
  pages 2899--2907.

\bibitem[Ma et~al., 2019]{ma2019sampling}
Ma, Y.-A., Chen, Y., Jin, C., Flammarion, N., and Jordan, M.~I. (2019).
\newblock Sampling can be faster than optimization.
\newblock {\em Proceedings of the National Academy of Sciences},
  116(42):20881--20885.

\bibitem[Menz and Schlichting, 2014]{menz2014poincare}
Menz, G. and Schlichting, A. (2014).
\newblock Poincar{\'e} and logarithmic sobolev inequalities by decomposition of
  the energy landscape.

\bibitem[Montanari, 2023]{montanari2023sampling}
Montanari, A. (2023).
\newblock Sampling, diffusions, and stochastic localization.
\newblock {\em arXiv preprint arXiv:2305.10690}.

\bibitem[Mou et~al., 2021]{mou2021high}
Mou, W., Ma, Y.-A., Wainwright, M.~J., Bartlett, P.~L., and Jordan, M.~I.
  (2021).
\newblock High-order langevin diffusion yields an accelerated mcmc algorithm.
\newblock {\em The Journal of Machine Learning Research}, 22(1):1919--1959.

\bibitem[Mousavi-Hosseini et~al., 2023]{mousavi2023towards}
Mousavi-Hosseini, A., Farghly, T., He, Y., Balasubramanian, K., and Erdogdu,
  M.~A. (2023).
\newblock Towards a complete analysis of langevin monte carlo: Beyond
  poincar$\backslash$'e inequality.
\newblock {\em arXiv preprint arXiv:2303.03589}.

\bibitem[Neal, 1992]{neal1992bayesian}
Neal, R. (1992).
\newblock Bayesian learning via stochastic dynamics.
\newblock {\em Advances in neural information processing systems}, 5.

\bibitem[Neal, 2010]{Neal2010MCMC}
Neal, R.~M. (2010).
\newblock {MCMC using Hamiltonian dynamics}.
\newblock {\em Handbook of Markov Chain Monte Carlo}, 54:113--162.

\bibitem[Raginsky et~al., 2017]{raginsky2017non}
Raginsky, M., Rakhlin, A., and Telgarsky, M. (2017).
\newblock Non-convex learning via stochastic gradient langevin dynamics: a
  nonasymptotic analysis.
\newblock In {\em Conference on Learning Theory}, pages 1674--1703. PMLR.

\bibitem[Roberts and Stramer, 2002]{Roberts2014Langevin}
Roberts, G.~O. and Stramer, O. (2002).
\newblock Langevin diffusions and {Metropolis-Hastings} algorithms.
\newblock {\em Methodol. Comput. Appl. Probab.}, 4:337--357.

\bibitem[Roberts and Tweedie, 1996]{roberts1996exponential}
Roberts, G.~O. and Tweedie, R.~L. (1996).
\newblock Exponential convergence of langevin distributions and their discrete
  approximations.
\newblock {\em Bernoulli}, pages 341--363.

\bibitem[Sohl-Dickstein et~al., 2015]{sohl2015deep}
Sohl-Dickstein, J., Weiss, E., Maheswaranathan, N., and Ganguli, S. (2015).
\newblock Deep unsupervised learning using nonequilibrium thermodynamics.
\newblock In {\em International Conference on Machine Learning}, pages
  2256--2265. PMLR.

\bibitem[Song et~al., 2020]{song2020score}
Song, Y., Sohl-Dickstein, J., Kingma, D.~P., Kumar, A., Ermon, S., and Poole,
  B. (2020).
\newblock Score-based generative modeling through stochastic differential
  equations.
\newblock In {\em International Conference on Learning Representations}.

\bibitem[Vargas et~al., 2023]{vargas2023denoising}
Vargas, F., Grathwohl, W., and Doucet, A. (2023).
\newblock Denoising diffusion samplers.
\newblock {\em arXiv preprint arXiv:2302.13834}.

\bibitem[Vempala and Wibisono, 2019]{vempala2019rapid}
Vempala, S. and Wibisono, A. (2019).
\newblock Rapid convergence of the unadjusted langevin algorithm: Isoperimetry
  suffices.
\newblock {\em Advances in neural information processing systems}, 32.

\bibitem[Xifara et~al., 2014]{Xifara2014Langevin}
Xifara, T., Sherlock, C., Livingstone, S., Byrne, S., and Girolami, M. (2014).
\newblock Langevin diffusions and the {Metropolis adjusted Langevin} algorithm.
\newblock {\em Stat. Probabil. Lett.}, 91:14--19.

\bibitem[Xu et~al., 2018]{xu2018global}
Xu, P., Chen, J., Zou, D., and Gu, Q. (2018).
\newblock Global convergence of {L}angevin dynamics based algorithms for
  nonconvex optimization.
\newblock In {\em Advances in Neural Information Processing Systems}, pages
  3126--3137.

\bibitem[Zou et~al., 2019]{zou2019stochastic}
Zou, D., Xu, P., and Gu, Q. (2019).
\newblock Stochastic gradient hamiltonian monte carlo methods with recursive
  variance reduction.
\newblock {\em Advances in Neural Information Processing Systems}, 32.

\bibitem[Zou et~al., 2021]{zou2021faster}
Zou, D., Xu, P., and Gu, Q. (2021).
\newblock Faster convergence of stochastic gradient langevin dynamics for
  non-log-concave sampling.
\newblock In {\em Uncertainty in Artificial Intelligence}, pages 1152--1162.
  PMLR.

\end{thebibliography}





\newpage
\appendix
\section{Notations}
\label{sec:not_ass_app}
\begin{table}[h]
    \centering
    \begin{tabular}{c l}
        \toprule
        Symbols & Description\\
        \midrule
        $\varphi_{\sigma^2}$ & The density function of the centered Gaussian distribution, i.e., $\mathcal{N}\left(\vzero,  \sigma^2\mI\right)$. \\
        \midrule
        $p_*, p_{0,0}$ & The target density function (initial distribution of the forward process) \\
        \midrule
        $\left\{\rvx_{k,t}\right\}_{k\in\mathbb{N}_{0,K-1}, t\in[0,S]}$ &  The forward process, i.e., SDE~\ref{con_eq:rrds_forward}\\
        $p_{k,t}$ & The density function of $\rvx_{k,t}$, i.e., $\rvx_{k,t}\sim p_{k,t}$\\
        $p_\infty$ & The density function of the stationary distribution of the forward process\\
        \midrule
        $\{\rbkwx_{k,t}\}_{k\in\mathbb{N}_{0,K-1},t\in[0,S]}$ &  The practical reverse process following from SDE~\ref{con_eq:rrds_actual_backward} with initial distribution $p_\infty$\\
        $\bkwp_{k,t}$ & The density function of $\rbkwx_{k,t}$, i.e., $\rbkwx_{k,t}\sim \bkwp_{k,t}$\\
        \bottomrule
    \end{tabular}
    \caption{The list of notations defined in Section~\ref{sec:pre}, where $\mathbb{N}_{a,b}$ is denoted as the set of natural numbers from $a\in \mathbb{N}_*$ to any $b \in \mathbb{N}_+$.}
    \label{tab:notation_list_app}
\end{table}

\begin{table}
    \centering  
    \footnotesize
    \begin{tabular}{cc|cc}
    \toprule
         Constant symbol & Value &  Constant symbol & Value\\
         \midrule
         $C_\eta$ & $2^{-14} L^{-2}$ &  $C_{m,1}$ & $\log \left(2M\cdot 3^2 \cdot 5L\right)+M\cdot 3L$ \\
         $C_n$ & $2^6\cdot 5^2\cdot C_\eta^{-1}$ &  $C_{m}$ & $2^9\cdot 3^2\cdot 5^3\cdot C_{m,1}C_{\eta}^{-1.5}$ \\
         $C_{u,1}$ & $\log\left(\frac{5C_nC_m}{10^4}\right)+\log \left(2 \max\left\{\log Z, \frac{1}{2}\right\}\right)$ & $C_{u,2}$ & $70/S^2+10/S$\\
         $C_{u,3}$ & $2C_{u,1}/S$ & $S$ & $1/2\log((2L+1)/2L)$\\
         \bottomrule
    \end{tabular}    
    \caption{Constant List independent with $\epsilon$ and $d$.}
    \label{tab:constant_list}
\end{table}

In this section, we summarize the notations defined in Section~\ref{sec:pre} in Table~\ref{tab:notation_list_app} for easy reference and cross-checking.
Additionally, another important notation is the score estimation, denoted as $\rbkwv_{k,r\eta}$, which is used to approximate $\grad\log p_{k,S-r\eta}$.
When $r=0$, $\rbkwv_{k,0}$ is expected to approximate $\grad\log p_{k,S}$ which is not explicitly defined in SDE~\ref{con_eq:rrds_forward}.
However, sine $\rvx_{k,S}=\rvx_{k+1,0}$ in Eq~\ref{con_eq:rrds_forward}, the underlying distributions, i.e., $p_{k,S}$ and $p_{k+1,0}$, are equal, and $\tilde{\rvv}_{k,0}$ can be considered as the score estimation of $\grad\log p_{k+1, 0}$.
For $\grad\log p_{0,0}$, which can be calculated exactly as $\grad f_*$, we define
\begin{equation}
    \label{def:v-10_def}
    \rbkwv_{-1,0}(\vx) = \grad \log p_{0,0}(\vx) = -\grad f_*(\vx)
\end{equation}
as a complement.

\paragraph{Isopermetric conditions and assumptions.}
According to the classical theory of Markov chains and diffusion processes, some conditions can lead to fast convergence over time without being as strict as log concavity. Isoperimetric inequalities, such as the log-Sobolev inequality (LSI) or the Poincar\'e inequality (PI), are examples of these conditions defined as follows.
\begin{definition}[Logarithmic Sobolev inequality]
\label{def:lsi}
 A distribution with density function $p$ satisfies the log-Sobolev inequality with a constant $\mu>0$ if for all smooth function $g\colon \R^d \rightarrow \R$ with $\E_p[g^2]\le \infty$,
 \begin{equation*}
    \mathbb{E}_{p}\left[g^2\log g^2\right]-\mathbb{E}_{p}\left[g^2\right]\log \mathbb{E}_{p}\left[g^2\right] \le  2\alpha^{-1}\mathbb{E}_{p}\left[\left\|\grad g\right\|^2\right].
\end{equation*}  
\end{definition}
By supposing $g=1+\epsilon \hat{g}$ with $\epsilon\rightarrow 0$, a weaker isoperimetric inequality, i.e., PI can be defined~\cite{menz2014poincare}.
\begin{definition}[Poincar\'e inequality]
\label{def:pi}
A distribution with density function $p$ satisfies the Poincar\'e inequality with a constant $\mu>0$ if for all smooth function $\hat{g}\colon \R^d \rightarrow \R$,
 \begin{equation*}
    \Var(\hat{g})\le  \alpha^{-1}\mathbb{E}_{p}\left[\left\|\grad \hat{g}\right\|^2\right].
\end{equation*}  
\end{definition}

We also provide a list of constants used in our following proof in Table~\ref{tab:constant_list} to prevent confusion.

\section{Proof of Theorem~\ref{thm:rrds}}
\label{app_sec:main_proof}
\begin{theorem}{The formal version of Theorem~\ref{thm:rrds}}
    \label{thm:main_rrds_formal}
    In Alg~\ref{alg:rrds}, suppose we set
    \begin{equation*}
        \begin{aligned}
            &{S=1/2\cdot \log (1+1/2L),\quad  K= 2\log [(Ld+M)/\epsilon] \cdot S^{-1},}\\
            &{\eta = C_\eta(M+d)^{-1}\epsilon,\quad R=S/\eta,}\\
            &l(\epsilon)=10\epsilon,\quad l_{k,r}(\epsilon)=\epsilon/960,\\
            &n_{k,r}(\epsilon) = C_n\cdot (d+M)\epsilon^{-2}\cdot \max\{d,-2\log \delta\},\\
            &m_{k,r}(\epsilon,\vx) = C_m\cdot (d+M)^3\epsilon^{-3}\cdot \max\{\log \|\vx\|^2,1\},\\
            &{\tau_r = 2^{-5}\cdot 3^{-2}\cdot e^{2(S-r\eta)}\left(1-e^{-2(S-r\eta)}\right)^2\cdot  d^{-1}\epsilon}  
        \end{aligned}
    \end{equation*}
    where $\delta$ satisfies
    \begin{equation*}
        \begin{aligned}
            \delta = &\mathrm{pow}\left(2, -\frac{2}{S}\log \frac{Ld+M}{\epsilon}\right)\cdot \mathrm{pow}\left(\frac{C_\eta S \epsilon^2}{4(d+M)}\cdot \log^{-2}\left(\frac{Ld+M}{\epsilon}\right)\cdot \mathrm{pow}\left(\left(\frac{Ld+M}{\epsilon}\right), \right.\right.\\
            &\quad \left.\left. -C_{u,2}\log \frac{Ld+M}{\epsilon}-C_{u,3}\right), \frac{2}{S}\log \frac{Ld+M}{\epsilon}+1\right),
        \end{aligned}
    \end{equation*}  
    and the initial underlying distribution $q^\prime_0$ of the Alg~\ref{con_alg:rse} with input $(k,r,\vx,\epsilon)$ satisfies
    \begin{equation*}
        q^\prime_0(\vx^\prime)\propto \exp\left(-\frac{\left\|\vx-e^{-(S-r\eta)}\vx^\prime\right\|^2}{2(1-e^{-2(S-r\eta)})}\right),
    \end{equation*}
    we have
    \begin{equation*}
        \mathbb{P}\left[\KL{\hat{p}_{0,S}}{\bkwp_{0,S}} = \tilde{O}(\epsilon)\right]\ge 1-\epsilon.
    \end{equation*}
    In this condition, the gradient complexity will be 
    \begin{equation*}
        \exp\left[\mathcal{O}\left(L^3\cdot  \log^3 \left((Ld+M)/\epsilon\right)\cdot \max\left\{\log\log Z^2,1\right\} \right)\right]
    \end{equation*}
    where $Z$ is the maximal norm of particles appeared in Alg~\ref{alg:rrds}.
\end{theorem}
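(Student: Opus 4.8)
The plan is to follow the three-term decomposition already sketched in the main text and then supply the quantitative choices of $n_{k,r},m_{k,r},\tau_r,\eta,l_{k,r},l,\delta$ that appear in the statement. Since Novikov's condition need not hold for the non-parametric score estimates produced by Alg~\ref{con_alg:rse}, I would run Girsanov's theorem through the differential-inequality workaround of \cite{chen2023improved} (Lemma~\ref{lem:prop8_chen2023improved}) to obtain
\begin{equation*}
    \KL{p_*}{\bkwp_{0,S}}\le \underbrace{\KL{p_{K-1,S}}{\bkwp_{K-1,0}}}_{\mathrm{Term\ 1}}+\underbrace{(\text{discretization error})}_{\mathrm{Term\ 2}}+\underbrace{(\text{score estimation error})}_{\mathrm{Term\ 3}}.
\end{equation*}
Term~1 and Term~2 are the ``standard diffusion-model'' parts. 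By exponential contraction of the OU semigroup (Lemma~\ref{lem:var_thm4_vempala2019rapid}), $\mathrm{Term\ 1}\le (Ld+M)e^{-KS}$, so $KS=T=2\log((Ld+M)/\epsilon)$ forces $\mathrm{Term\ 1}\le\epsilon^2/(Ld+M)\le\epsilon$; and the time/space discretization bound for the OU reverse process (Lemma~\ref{lem:dis_err}, rewritten in the segmented notation of \eqref{con_eq:rrds_forward}--\eqref{con_eq:rrds_actual_backward}) gives $\mathrm{Term\ 2}=\tilde O(\epsilon)$ once $\eta=C_\eta(M+d)^{-1}\epsilon$ and $R=S/\eta$. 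These two computations pin down $K$, $T$, $\eta$, $R$.

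\textbf{Controlling Term~3 through the recursion.} Here I would invoke Lemma~\ref{lem:recursive_core_lem} to reduce, for each query $(k,r,\vx)$, the single-query score error to the prefactor $2e^{-2(S-r\eta)}/(1-e^{-2(S-r\eta)})^2$ times the sum of Term~3.1 (deviation of the empirical mean of the $n_{k,r}$ ULA outputs from $\E_{q'_{k,S-r\eta}(\cdot|\vx)}[\cdot]$) and Term~3.2 (the ULA bias $\E_{q'}[\cdot]-\E_{q}[\cdot]$). The choice $S=\tfrac12\log(1+\tfrac1{2L})$ is precisely ``just small enough'' so that, combining Assumption~\ref{con_ass:lips_score} with \eqref{con_ineq:cho_s_small}, each $q_{k,S-r\eta}(\cdot|\vx)$ is strongly log-concave with an explicit constant (Lemma~\ref{lem:sm_con_of_q}); hence it, and by Lemma~\ref{lem:thm8_vempala2019rapid} its ULA surrogate $q'_{k,S-r\eta}$, satisfies an LSI with controlled variance (Lemma~\ref{lem:lsi_var_bound}). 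Turning the LSI into sub-exponential concentration makes $\mathrm{Term\ 3.1}\le\epsilon$ with probability $\ge1-\delta$ once $n_{k,r}(\epsilon)=C_n(d+M)\epsilon^{-2}\max\{d,-2\log\delta\}$; bounding $\mathrm{Term\ 3.2}\le \mathrm{KL}(q'_{k,S-r\eta}\|q_{k,S-r\eta})$ and invoking the strongly-log-concave ULA rate makes it $\le\epsilon$ once $m_{k,r}(\epsilon,\vx)=C_m(d+M)^3\epsilon^{-3}\max\{\log\|\vx\|^2,1\}$ (the $\log\|\vx\|^2$ is the warm-start/initial-KL cost of the inner ULA, which is why the final bound carries the maximal particle norm $Z$). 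Since running this inner ULA requires $\grad\log p_{k,0}$, itself only estimated, the per-query bound is conditional on $\rbkwv_{k-1,0}$ being accurate to tolerance $l_{k,r}(\epsilon)=\epsilon/960$ on the \emph{entire} finite set $\sS_{k,r}(\vx,\epsilon)$ that Alg~\ref{con_alg:rse} touches; as $l_{k,r}(\epsilon)\le\epsilon$, the required accuracy does not degrade down the recursion.

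\textbf{Unrolling and the gradient count.} I would then iterate the conditional bound of Lemma~\ref{lem:rse_err} over the $K$ levels: the failure event at level $k$ lies inside a per-node event of probability $\le\delta$ union the children's failure events, so a union bound over the whole recursion tree (which has $\le(n_{k,r}m_{k,r}R)^{O(K)}$ nodes) controls total failure by $O(\delta)$ times that count. The explicit $\delta$ in the statement is exactly the value that makes this product $\le\epsilon$; reinserting it gives $\mathrm{Term\ 3}=\tilde O(\epsilon)$ with probability $\ge1-\epsilon$, and combining the three terms yields $\KL{p_*}{\bkwp_{0,S}}=\tilde O(\epsilon)$ with probability $\ge1-\epsilon$. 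For complexity: each leaf of the tree is one evaluation of $\grad f_*$, the tree has depth $K$ and uniform out-degree $n_{k,r}m_{k,r}=\mathrm{poly}(d,M,1/\epsilon,L,\log Z)$ (using $-\log\delta=O(L^3\log^2((Ld+M)/\epsilon))$ and $\max\{\log\|\vx\|^2,1\}\le\max\{\log Z^2,1\}$ uniformly), and the top-level loop contributes a factor $KR$, so the total is $KR\cdot(\mathrm{poly})^{O(K)}=\exp[O(K\log\mathrm{poly})]$, which after substituting $K=\Theta(L\log((Ld+M)/\epsilon))$ (forced by $S=\Theta(1/L)$) collapses to $\exp[O(L^3\log^3((Ld+M)/\epsilon)\max\{\log\log Z^2,1\})]$.

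\textbf{Main obstacle.} I expect the delicate part to be the simultaneous reconciliation inside the recursive argument: one must show error propagation is \emph{benign} --- $l_{k,r}(\epsilon)=\Theta(\epsilon)$, which is the content of Lemma~\ref{lem:recursive_core_lem} and the reason for the fixed constant $960$ --- so that neither the accuracy requirement nor the sample/iteration budget blows up multiplicatively with depth, while \emph{at the same time} choosing the per-node failure probability $\delta$ small enough that the union bound over the $(\cdot)^{O(K)}$ nodes closes, without making $n_{k,r}$ (which scales with $-\log\delta$) so large that $(\mathrm{poly})^{O(K)}$ exits the quasi-polynomial regime. Carrying the particle-norm bound $Z$ consistently through all the nested warm starts, and checking that the prefactor $e^{-2(S-r\eta)}/(1-e^{-2(S-r\eta)})^2$ (which degenerates as $r\eta\to S$, offset by the $\tau_r$, $n_{k,r}$, $m_{k,r}$ schedules) stays controlled for every $r\le R-1$, are the accompanying technical hurdles.
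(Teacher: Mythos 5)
Your proposal is correct and follows essentially the same route as the paper: the Girsanov/differential-inequality decomposition into initialization, discretization, and score-estimation terms, the strong-log-concavity of $q_{k,S-r\eta}$ from the choice of $S$ feeding LSI-based concentration (for $n_{k,r}$) and ULA convergence with the warm-start $\log\|\vx\|^2$ cost (for $m_{k,r}$), the conditional per-node bound unrolled over the recursion with a union bound and the stated $\delta$, and the leaf-count giving the quasi-polynomial complexity. Minor numerical slips (the contraction rate $e^{-KS}$ vs.\ the paper's $e^{-KS/2}$, and the order of $-\log\delta$) do not affect the argument.
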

\begin{proof}[Proof of Theorem~\ref{thm:main_rrds_formal}]
    According to Lemma~\ref{lem:prop8_chen2023improved}, suppose $\hat{\rvx}_{k,t}=\rvx_{k,S-t}$ whose SDE can be presented as 
    \begin{equation*}
    \begin{aligned}
        &\rbkwx_{k,0} \sim p_{K-1,S}\ \mathrm{when}\ k=K-1,\ \mathrm{else}\ \rbkwx_{k,0}=\rbkwx_{k+1,S} && k\in \mathbb{N}_{0,K-1}\\
        &\der \rbkwx_{k,t} = \left[ \rbkwx_{k,t} + 2\grad\log p_{k, S-t}(\rbkwx_{k,t})\right]\der t + \sqrt{2}\der B_t\quad && k\in \mathbb{N}_{0,K-1}, t\in[0,S]\\
    \end{aligned}
    \end{equation*}
    due to~\cite{chen2022sampling}. 
    Then, we have $\KL{p_*}{\bkwp_{0,S}} = \KL{\hat{p}_{0,S}}{\bkwp_{0,S}}$ which satisfies
    \begin{equation}
        \small
        \label{ineq:kl_chain}
        \begin{aligned}
            \KL{\hat{p}_{0,S}}{\bkwp_{0,S}}\le  &\underbrace{\KL{\hat{p}_{K-1,0}}{\bkwp_{K-1,0}}}_{\text{Term 1}}\\
            &+\sum_{k=0}^{K-1}\sum_{r=0}^{R-1} \int_{0}^\eta \E_{(\hat{\rvx}_{k,t+r\eta}, \hat{\rvx}_{k,r\eta})} \left[\left\|\grad\log p_{k, S-(t+r\eta)}(\hat{\rvx}_{k,t+r\eta}) - \rbkwv_{k,r\eta}(\hat{\rvx}_{k,r\eta})\right\|^2\right]\der t.
        \end{aligned}
    \end{equation}
    \paragraph{Upper bound Term 1.} Term 1 can be upper-bounded as
    \begin{equation*}
        \text{Term 1}=\KL{p_{K-1,S}}{\bkwp_{K-1,0}}\le (Ld+M)\cdot \exp\left(-KS/2\right)
    \end{equation*}
    with Lemma~\ref{lem:var_thm4_vempala2019rapid} when $\bkwp_{K-1,0}$ is chosen as the standard Gaussian.
    Therefore, we choose
    \begin{equation*}
        \begin{aligned}
            \quad S=\frac{1}{2}\log \frac{2L+1}{2L},\quad  K= 2\log \frac{Ld+M}{\epsilon}\cdot \left(\frac{1}{2}\log \frac{2L+1}{2L}\right)^{-1},\quad \text{and}\quad KS \ge 2\log \frac{Ld+M}{\epsilon},
        \end{aligned}
    \end{equation*}
    which make the inequality $\text{Term 1}\le \epsilon$ establish.

    For the remaining term of RHS of Eq~\ref{ineq:kl_chain}, it can be decomposed as follows:
    \begin{equation}
        \label{ineq:error_decomp_term23}
        \begin{aligned}
            &\sum_{k=0}^{K-1}\sum_{r=0}^{R-1} \int_{0}^\eta \E_{(\hat{\rvx}_{k,t+r\eta}, \hat{\rvx}_{k,r\eta})} \left[\left\|\grad\log p_{k, S-(t+r\eta)}(\hat{\rvx}_{k,t+r\eta}) - \rbkwv_{k,r\eta}(\hat{\rvx}_{k,r\eta})\right\|^2\right]\der t\\
            &\le \underbrace{2\sum_{k=0}^{K-1}\sum_{r=0}^{R-1} \int_0^\eta \E \left[\left\|\grad\log p_{k, S-(t+r\eta)}(\hat{\rvx}_{k,t+r\eta}) - \grad\log p_{k,S-r\eta}(\hat{\rvx}_{k,r\eta}) \right\|^2\right] \der t}_{\text{Term 2}} \\
            & + \underbrace{2\sum_{k=0}^{K-1}\sum_{r=0}^{R-1} \int_0^\eta \E_{(\hat{\rvx}_{k,t+r\eta}, \hat{\rvx}_{k,r\eta})} \left[\left\|\grad\log p_{k,S-r\eta}(\hat{\rvx}_{k,r\eta})- \rbkwv_{k,r\eta}(\hat{\rvx}_{k,r\eta})\right\|^2\right] \der t}_{\text{Term 3}}
        \end{aligned}
    \end{equation}

    \paragraph{Upper bound Term 2.} This term is mainly from the discretization error in the reverse process.
    Therefore, its analysis is highly related to~\cite{chen2022sampling,chen2023improved}. 
    To ensure the completeness of our proof, we have included it in our analysis, utilizing the segmented notation presented in Section~\ref{sec:not_ass_app}.
    Specifically, we have
    \begin{equation*}
        \begin{aligned}
            \mathrm{Term\ 2} \le & 4\sum_{k=0}^{K-1}\sum_{r=0}^{R-1} \int_0^\eta \E\left[\left\|\grad\log p_{k,S-(t+r\eta)}(\hat{\rvx}_{k,t+r\eta})-\grad\log p_{k,S-(t+r\eta)}(\hat{\rvx}_{k,r\eta})\right\|^2\right]\\
            & + 4\sum_{k=0}^{K-1}\sum_{r=0}^{R-1} \int_0^\eta \E\left[\left\|\grad\log \frac{p_{k, S-(t+r\eta)}(\hat{\rvx}_{k,r\eta})}{p_{k,S-r\eta}(\hat{\rvx}_{k,r\eta})}\right\|^2\right] \der t\\
            \le & 4\sum_{k=0}^{K-1}\sum_{r=0}^{R-1} \int_0^\eta \left(\E\left[L^2\left\|\hat{\rvx}_{k,t+r\eta}-\hat{\rvx}_{k,r\eta}\right\|^2\right] + \E\left[\left\|\grad\log \frac{p_{k,S-r\eta}(\hat{\rvx}_{k,r\eta})}{p_{k, S-(t+r\eta)}(\hat{\rvx}_{k,r\eta})}\right\|^2\right]\right) \der t
        \end{aligned}
    \end{equation*}
    where the last inequality follows from Assumption~\ref{con_ass:lips_score}.
    Combining this result with Lemma~\ref{lem:dis_err}, when the stepsize, i.e., $\eta$ of the reverse process is $\eta = C_\eta(M+d)^{-1}\epsilon$, then it has $\text{Term 2}\le \epsilon$.

    \paragraph{Upper bound Term 3.}
    Due to the randomness of $\rbkwv_{k,r\eta}$, we consider a high probability bound, which is formulated as
    \begin{equation}
        \label{ineq:term3_fin}
        \sP\left[\bigcap_{
                \substack{k\in \mathbb{N}_{0,K-1}\\ r\in\mathbb{N}_{0,R-1}}
           }\left\|\grad\log p_{k,S-r\eta}(\rbkwx_{k,r\eta})-\rbkwv_{k,r\eta}(\rbkwx_{k,r\eta})\right\|^2\le 10\epsilon\right]\ge 1-\epsilon,
    \end{equation}
    which means we choose $l(\epsilon)=10\epsilon$.
    Lemma~\ref{lem:rse_err} demonstrate that under the following settings, i.e.,
    \begin{equation*}
        \begin{aligned}
            &l_{k,r}(\epsilon) = \epsilon / 960, \\
            &n_{k,r}(\epsilon) = C_n\cdot (d+M)\epsilon^{-2}\cdot \max\{d,-2\log \delta\},\\
            &m_{k,r}(\epsilon,\vx) = C_m\cdot (d+M)^3\epsilon^{-3}\cdot \max\{\log \|\vx\|^2,1\},
        \end{aligned}
    \end{equation*}
    where $\delta$ satisfies
    \begin{equation*}
        \begin{aligned}
            \delta \coloneqq &\mathrm{pow}\left(2, -\frac{2}{S}\log \frac{Ld+M}{\epsilon}\right)\cdot \mathrm{pow}\left(\frac{C_\eta S \epsilon^2}{4(d+M)}\cdot \log^{-2}\left(\frac{Ld+M}{\epsilon}\right)\cdot \mathrm{pow}\left(\left(\frac{Ld+M}{\epsilon}\right), \right.\right.\\
            &\quad \left.\left. -C_{u,2}\log \frac{Ld+M}{\epsilon}-C_{u,3}\right), \frac{2}{S}\log \frac{Ld+M}{\epsilon}+1\right),
        \end{aligned}
    \end{equation*}  
    Eq~\ref{ineq:term3_fin} can be achieved with a gradient complexity:
    \begin{equation*}
        \exp\left[\mathcal{O}\left(L^3\cdot  \log^3 \left((Ld+M)/\epsilon\right)\cdot \max\left\{\log\log Z^2,1\right\} \right)\right]
    \end{equation*}
    where $Z$ is the maximal norm of particles appeared in Alg~\ref{alg:rrds}.
    All constants can be found in Table~\ref{tab:constant_list}.
    In this condition, we have
    \begin{equation*}
        \begin{aligned}
            \mathrm{Term\ 3}\le 4\cdot \frac{T}{\eta}\cdot \left(\eta \cdot 10\epsilon\right) \le 40\epsilon\log \frac{Ld+M}{\epsilon} = \tilde{O}(\epsilon).
        \end{aligned}
    \end{equation*}
    Combining the upper bound of $\mathrm{Term\ 1}$, $\mathrm{Term\ 2}$ and $\mathrm{Term\ 3}$, we have
    \begin{equation*}
        \KL{\hat{p}_{0,S}}{\bkwp_{0,S}} = \tilde{O}(\epsilon).
    \end{equation*}
    The proof is completed.
\end{proof}

{
    \begin{corollary}
    Suppose we set all parameters except for $\delta$ to be the same as that in Theorem~\ref{thm:main_rrds_formal}, and define
    \begin{equation*}
        \begin{aligned}
            \delta = &\mathrm{pow}\left(2, -\frac{2}{S}\log \frac{Ld+M}{\epsilon}\right)\cdot \mathrm{pow}\left(\frac{C_\eta S \epsilon \delta^\prime}{4(d+M)}\cdot \log^{-2}\left(\frac{Ld+M}{\epsilon}\right)\cdot \mathrm{pow}\left(\left(\frac{Ld+M}{\epsilon}\right), \right.\right.\\
            &\quad \left.\left. -C_{u,2}\log \frac{Ld+M}{\epsilon}-C_{u,3}\right), \frac{2}{S}\log \frac{Ld+M}{\epsilon}+1\right),
        \end{aligned}
    \end{equation*}
    we have
    \begin{equation*}
        \mathbb{P}\left[\KL{\hat{p}_{0,S}}{\bkwp_{0,S}} = \tilde{O}(\epsilon)\right]\ge 1-\delta^\prime.
    \end{equation*}
    In this condition, the gradient complexity will be 
    \begin{equation*}
        \exp\left[\mathcal{O}\left(L^3\cdot  \max\left\{ \left(\log \frac{Ld+M}{\epsilon}\right)^3, \log \frac{Ld+M}{\epsilon}\cdot \log \frac{1}{\delta^\prime}\right\}\cdot \max\left\{\log\log Z^2,1\right\} \right)\right]
    \end{equation*}
    where $Z$ is the maximal norm of particles appeared in Alg~\ref{alg:rrds}.
    \end{corollary}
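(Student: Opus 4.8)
The plan is to re-run the proof of Theorem~\ref{thm:main_rrds_formal} essentially verbatim, tracking carefully where the confidence parameter enters and how the target error inside the recursive estimation step is allowed to scale. Recall that the only place a failure probability is incurred is in the control of Term~3 via Lemma~\ref{lem:rse_err} and its supporting recursive argument (Lemma~\ref{lem:recursive_core_lem}), where one wants the concentration event
\begin{equation*}
    \sP\Bigl[\bigcap_{k,r}\bigl\|\grad\log p_{k,S-r\eta}(\rbkwx_{k,r\eta})-\rbkwv_{k,r\eta}(\rbkwx_{k,r\eta})\bigr\|^2\le 10\epsilon\Bigr]\ge 1-\delta^\prime
\end{equation*}
to hold. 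In the theorem this probability is fixed at $1-\epsilon$; in the corollary we decouple it, asking for $1-\delta^\prime$ instead. First I would isolate, inside the proof of Lemma~\ref{lem:rse_err}, the single union-bound/recursion step where the per-call failure probability $\delta$ is converted into the global failure probability. There the global budget enters linearly: each of the $\sim m_{k,r}n_{k,r}$ recursive calls, over $\mathcal{O}(K)$ levels, must succeed, so the innermost $\delta$ must be set so that the product of all the per-step failure contributions is at most the prescribed global budget. In Theorem~\ref{thm:main_rrds_formal} that budget was $\epsilon$ (hence the factor $\epsilon^2/(d+M)$ sitting inside the first $\mathrm{pow}(\cdot)$ in $\delta$, which is really $\epsilon\cdot\bigl(\epsilon/(d+M)\bigr)$ with one $\epsilon$ playing the role of the confidence level); replacing that bookkeeping $\epsilon$ by $\delta^\prime$ gives exactly the corollary's expression $C_\eta S\epsilon\delta^\prime/\bigl(4(d+M)\bigr)\cdot\log^{-2}(\cdots)\cdots$.

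Concretely, the key steps in order are: (i) Keep the decomposition \eqref{ineq:kl_chain}--\eqref{ineq:error_decomp_term23} and the bounds on Term~1 and Term~2 untouched — they are deterministic and depend only on $S$, $K$, $\eta$, none of which we change. (ii) In the Term~3 argument, re-examine the recursive probabilistic bound of the form $\sP[\text{success at }(k,r)]\ge(1-\delta)\,\sP[\text{success of the child call}]$; unrolling this over the recursion tree of depth $\mathcal{O}(K)$ and branching $m_{k,r}n_{k,r}$ shows that a global success probability $1-\delta^\prime$ is guaranteed provided $1-(\text{number of nodes})\cdot\delta\ge 1-\delta^\prime$, i.e. $\delta\lesssim\delta^\prime/(\text{number of nodes})$. (iii) Substitute the explicit node count — which is itself a function of $n_{k,r}$, $m_{k,r}$ (which depend on $\delta$ through $\max\{d,-2\log\delta\}$), $K=2S^{-1}\log((Ld+M)/\epsilon)$ — and solve the resulting self-referential inequality for $\delta$, obtaining the nested $\mathrm{pow}$ expression stated, with $\epsilon\mapsto\delta^\prime$ in the appropriate slot. (iv) Finally recompute the gradient complexity: it is $[m_{k,r}n_{k,r}]^{\mathcal{O}(K)}$, and since $n_{k,r}$ carries a $\max\{d,-2\log\delta\}$ factor and $\log(1/\delta)$ is now $\mathcal{O}\bigl(S^{-1}\log((Ld+M)/\epsilon)\cdot[\,\log((Ld+M)/\epsilon)+\log(1/\delta^\prime)+\ldots]\bigr)$, taking the $\mathcal{O}(K)$-th power and the logarithm yields the exponent $L^3\cdot\max\{\log^3((Ld+M)/\epsilon),\,\log((Ld+M)/\epsilon)\log(1/\delta^\prime)\}\cdot\max\{\log\log Z^2,1\}$ claimed.

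The main obstacle I anticipate is step (iii): the inequality defining $\delta$ is genuinely self-referential because $n_{k,r}$ and $m_{k,r}$ — hence the total number of recursive calls whose failure probabilities we are union-bounding — themselves depend on $\delta$ through the $\max\{d,-2\log\delta\}$ term in $n_{k,r}$. One must check that the fixed-point (or rather, a sufficient explicit choice) for $\delta$ is consistent, i.e. that plugging the claimed $\mathrm{pow}$-expression back in does make the union bound go through, and that $-\log\delta$ only grows poly-logarithmically so that $n_{k,r}$ remains quasi-polynomial. This is exactly the same delicate accounting that was already carried out for the $\delta^\prime=\epsilon$ case in Appendix~\ref{app_sec:core_lemmas}; the corollary merely needs one to verify that the argument is insensitive to which small quantity ($\epsilon$ or $\delta^\prime$) occupies the ``confidence'' slot, so that replacing one $\epsilon$ factor in the Theorem's $\delta$-formula by $\delta^\prime$ and propagating it through the complexity bound is legitimate. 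Everything else — the strong log-concavity of $q_{k,S-r\eta}$, the LSI-based variance/concentration bounds, the discretization analysis — is reused unchanged.
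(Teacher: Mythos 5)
Your proposal matches the paper's own proof: the paper likewise reuses the Term~1/Term~2 bounds unchanged and reruns the Term~3 recursion (Lemma~\ref{lem:rse_err_2} in place of Lemma~\ref{lem:rse_err}), replacing the confidence budget $\epsilon$ by $\delta^\prime$ in the constraint $C_B\,\delta\,(-\log\delta)^{\frac{2}{S}\log\frac{Ld+M}{\epsilon}}\le\delta^\prime$ and resolving the self-referential choice of $\delta$ via $\delta=\delta^\prime/C_B\cdot a^{-\frac{2}{S}\log\frac{Ld+M}{\epsilon}}$ with $a\gtrsim 2C_B/\delta^\prime$, exactly as you outline. Your reading that one factor of $\epsilon$ in the theorem's $\delta$-formula plays the role of the confidence level and is swapped for $\delta^\prime$, and that the complexity exponent then picks up the $\log\frac{Ld+M}{\epsilon}\cdot\log\frac{1}{\delta^\prime}$ term through $\max\{d,-2\log\delta\}$, is precisely how the paper carries it out.
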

    \begin{proof}
        In this corollary, we follow the same proof roadmap as that shown in Theorem~\ref{thm:main_rrds_formal}.
        Combining Eq~\ref{ineq:kl_chain} and Eq~\ref{ineq:error_decomp_term23}, we have
        \begin{equation}
        \small
        \label{ineq:err_decompose}
        \begin{aligned}
            \KL{\hat{p}_{0,S}}{\bkwp_{0,S}}\le  &\underbrace{\KL{\hat{p}_{K-1,0}}{\bkwp_{K-1,0}}}_{\text{Term 1}}\\
            &\le \underbrace{2\sum_{k=0}^{K-1}\sum_{r=0}^{R-1} \int_0^\eta \E \left[\left\|\grad\log p_{k, S-(t+r\eta)}(\hat{\rvx}_{k,t+r\eta}) - \grad\log p_{k,S-r\eta}(\hat{\rvx}_{k,r\eta}) \right\|^2\right] \der t}_{\text{Term 2}} \\
            & + \underbrace{2\sum_{k=0}^{K-1}\sum_{r=0}^{R-1} \int_0^\eta \E_{(\hat{\rvx}_{k,t+r\eta}, \hat{\rvx}_{k,r\eta})} \left[\left\|\grad\log p_{k,S-r\eta}(\hat{\rvx}_{k,r\eta})- \rbkwv_{k,r\eta}(\hat{\rvx}_{k,r\eta})\right\|^2\right] \der t}_{\text{Term 3}}
        \end{aligned}
        \end{equation}
        It should be noted that the techniques for upper-bounding $\mathrm{Term\ 1}$ and $\mathrm{Term\ 2}$ are the same as that in Theorem~\ref{thm:main_rrds_formal}.
        
        \paragraph{Upper bound Term 3.}
        Due to the randomness of $\rbkwv_{k,r\eta}$, we consider a high probability bound, which is formulated as
        \begin{equation}
        \label{ineq:term3_fin_2}
        \sP\left[\bigcap_{
                \substack{k\in \mathbb{N}_{0,K-1}\\ r\in\mathbb{N}_{0,R-1}}
           }\left\|\grad\log p_{k,S-r\eta}(\rbkwx_{k,r\eta})-\rbkwv_{k,r\eta}(\rbkwx_{k,r\eta})\right\|^2\le 10\epsilon\right]\ge 1-\delta^\prime,
        \end{equation}
        which means we choose $l(\epsilon)=10\epsilon$.
        Lemma~\ref{lem:rse_err_2} demonstrate that under the following settings, i.e.,
        \begin{equation*}
        \begin{aligned}
            &l_{k,r}(\epsilon) = \epsilon / 960, \\
            &n_{k,r}(\epsilon) = C_n\cdot (d+M)\epsilon^{-2}\cdot \max\{d,-2\log \delta\},\\
            &m_{k,r}(\epsilon,\vx) = C_m\cdot (d+M)^3\epsilon^{-3}\cdot \max\{\log \|\vx\|^2,1\},
        \end{aligned}
        \end{equation*}
        where $\delta$ satisfies
        \begin{equation*}
        \begin{aligned}
            \delta \coloneqq &\mathrm{pow}\left(2, -\frac{2}{S}\log \frac{Ld+M}{\epsilon}\right)\cdot \mathrm{pow}\left(\frac{C_\eta S \epsilon \delta^\prime}{4(d+M)}\cdot \log^{-2}\left(\frac{Ld+M}{\epsilon}\right)\cdot \mathrm{pow}\left(\left(\frac{Ld+M}{\epsilon}\right), \right.\right.\\
            &\quad \left.\left. -C_{u,2}\log \frac{Ld+M}{\epsilon}-C_{u,3}\right), \frac{2}{S}\log \frac{Ld+M}{\epsilon}+1\right),
        \end{aligned}
        \end{equation*}  
        Eq~\ref{ineq:term3_fin_2} can be achieved with a gradient complexity:
        \begin{equation*}
        \exp\left[\mathcal{O}\left( L^3\cdot  \max\left\{ \left(\log \frac{Ld+M}{\epsilon}\right)^3, \log \frac{Ld+M}{\epsilon}\cdot \log \frac{1}{\delta^\prime}\right\}\cdot \max\left\{\log\log Z^2,1\right\} \right)\right]
        \end{equation*}
        where $Z$ is the maximal norm of particles appeared in Alg~\ref{alg:rrds}.
        All constants can be found in Table~\ref{tab:constant_list}.
        In this condition, we have
        \begin{equation*}
        \begin{aligned}
            \mathrm{Term\ 3}\le 4\cdot \frac{T}{\eta}\cdot \left(\eta \cdot 10\epsilon\right) \le 40\epsilon\log \frac{Ld+M}{\epsilon} = \tilde{O}(\epsilon).
        \end{aligned}
        \end{equation*}
        Combining the upper bound of $\mathrm{Term\ 1}$, $\mathrm{Term\ 2}$ and $\mathrm{Term\ 3}$, we have
        \begin{equation*}
        \KL{\hat{p}_{0,S}}{\bkwp_{0,S}} = \tilde{O}(\epsilon).
        \end{equation*}
        The proof is completed.
    \end{proof}
}

\section{Lemmas for Bounding Initialization Error}

\begin{lemma}[Lemma 11 in~\cite{vempala2019rapid}]
    \label{lem:lem11_vempala2019rapid}
    Suppose $p\propto \exp(-f)$ and $f\colon \R^d \rightarrow \R$ is $L$-gradient Lipschitz continuous function.
    Then, we have
    \begin{equation*}
        \E_{\rvx\sim p}\left[\left\|\grad f(\rvx)\right\|^2\right]\le Ld
    \end{equation*}
\end{lemma}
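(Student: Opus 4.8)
The plan is to evaluate $\E_{\rvx\sim p}\big[\|\grad f(\rvx)\|^2\big]$ by integration by parts, converting the gradient term into a Laplacian term that the smoothness hypothesis controls. Write $p(\vx)=e^{-f(\vx)}/Z$ with $Z=\int_{\R^d}e^{-f(\vx)}\,\der\vx$. Since $f\in C^1$ has an $L$-Lipschitz gradient, it is twice differentiable almost everywhere with $\grad^2 f(\vx)\preceq L\mI$, hence $\Delta f(\vx)=\Tr\!\big(\grad^2 f(\vx)\big)\le Ld$ wherever the Hessian exists. The computation starts from the pointwise identity
\begin{equation*}
    \grad\cdot\big(\grad f(\vx)\,e^{-f(\vx)}\big)=\Delta f(\vx)\,e^{-f(\vx)}-\|\grad f(\vx)\|^2\,e^{-f(\vx)}.
\end{equation*}
Integrating over $\R^d$ and using that the divergence term integrates to zero gives
\begin{equation*}
    \int_{\R^d}\|\grad f(\vx)\|^2 e^{-f(\vx)}\,\der\vx=\int_{\R^d}\Delta f(\vx)\,e^{-f(\vx)}\,\der\vx\le Ld\int_{\R^d}e^{-f(\vx)}\,\der\vx=LdZ,
\end{equation*}
and dividing by $Z$ yields $\E_{\rvx\sim p}\big[\|\grad f(\rvx)\|^2\big]\le Ld$.

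\textbf{The main obstacle.} The only delicate point is justifying rigorously that the divergence term contributes nothing, i.e. that the boundary flux $\int_{\partial B_R}e^{-f}\,\grad f\cdot\vn\,\der\sigma$ vanishes along a suitable sequence $R\to\infty$, while all integrals involved stay finite. The clean route is to first prove the smooth case and then pass to the limit: mollify $f$ to $f_\epsilon=f*\phi_\epsilon$, which is $C^\infty$ and still has $L$-Lipschitz gradient, so $-Ld\le\Delta f_\epsilon\le Ld$ everywhere. Apply the divergence theorem on $B_R$; since $|\Delta f_\epsilon|\le Ld$ the term $\int_{B_R}\Delta f_\epsilon e^{-f_\epsilon}$ converges absolutely as $R\to\infty$, and since $R\mapsto g_\epsilon(R):=\int_{\partial B_R}e^{-f_\epsilon}\,\der\sigma$ is nonnegative with $\int_0^\infty g_\epsilon=Z_\epsilon<\infty$, one must have $\liminf_{R\to\infty}R\,g_\epsilon(R)=0$; combined with the linear growth bound $\|\grad f_\epsilon(\vx)\|\le\|\grad f_\epsilon(\vzero)\|+L\|\vx\|$, this forces the boundary flux to tend to $0$ along that subsequence, giving $\int_{\R^d}\|\grad f_\epsilon\|^2 e^{-f_\epsilon}\le LdZ_\epsilon$ in the smooth case.

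\textbf{Finishing.} Finally, let $\epsilon\to0$: $Z_\epsilon\to Z$ by dominated convergence (using an $L$-smoothness upper bound on $f_\epsilon$ to dominate $e^{-f_\epsilon}$ uniformly near each point, or simply $f_\epsilon\to f$ locally uniformly), and $\grad f_\epsilon\to\grad f$ locally uniformly, so Fatou's lemma applied to $\|\grad f_\epsilon\|^2 e^{-f_\epsilon}$ gives $\int_{\R^d}\|\grad f\|^2 e^{-f}\le \liminf_\epsilon \int_{\R^d}\|\grad f_\epsilon\|^2 e^{-f_\epsilon}\le Ld\lim_\epsilon Z_\epsilon=LdZ$. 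Dividing through by $Z$ completes the proof. In short, the whole statement reduces to the one-line integration-by-parts identity plus standard mollification and truncation bookkeeping, the latter being the only place any care is needed.
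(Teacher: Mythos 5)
Your proof is correct and follows essentially the same route as the source the paper cites for this lemma (the paper gives no independent proof of it): the one-line integration-by-parts identity $\E_{p}\big[\|\grad f\|^2\big]=\E_{p}\big[\Delta f\big]\le Ld$, with your mollification/truncation argument supplying the regularity and boundary-term bookkeeping that the cited Lemma 11 of Vempala and Wibisono leaves implicit. The only spot worth tightening is the unproved claim $Z_\epsilon<\infty$ together with the domination used for $Z_\epsilon\to Z$; both follow in one line from Jensen's inequality, $e^{-f_\epsilon}=e^{-f\ast\phi_\epsilon}\le e^{-f}\ast\phi_\epsilon$, which gives $Z_\epsilon\le Z$ directly and makes the final limit argument airtight.
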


\begin{lemma}
    \label{lem:init_kl_bound}
    Under the notation in Section~\ref{sec:not_ass_app}, suppose $p\propto \exp(-f)$ satisfies Assumption~\ref{con_ass:lips_score} and~\ref{con_ass:second_moment}, then we have
    \begin{equation*}
        \KL{p}{\varphi_1}\le Ld+M
    \end{equation*}
\end{lemma}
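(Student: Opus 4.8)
The goal is to bound $\KL{p}{\varphi_1}$ where $\varphi_1$ is the standard Gaussian density and $p\propto e^{-f}$ satisfies $L$-smoothness of $\log p$ (Assumption~\ref{con_ass:lips_score}, applied at $k=0$, $t=0$) and has second moment at most $M$ (Assumption~\ref{con_ass:second_moment}). The plan is to expand the KL divergence directly and estimate each resulting term using the two hypotheses plus the companion Lemma~\ref{lem:lem11_vempala2019rapid}.

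First I would write
\begin{equation*}
    \KL{p}{\varphi_1} = \E_{\rvx\sim p}\left[\log \frac{p(\rvx)}{\varphi_1(\rvx)}\right] = \E_{\rvx\sim p}\left[\log p(\rvx) + \frac{d}{2}\log(2\pi) + \frac{\|\rvx\|^2}{2}\right] = -H(p) + \frac{d}{2}\log(2\pi) + \frac{M'}{2},
\end{equation*}
where $H(p)$ is the differential entropy of $p$ and $M' = \E_p[\|\cdot\|^2]\le M$. So the task reduces to lower-bounding $H(p)$, i.e.\ upper-bounding $-H(p)=\E_p[\log p]=-\E_p[f]$ up to the normalizing constant. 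The standard route here is to compare $p$ against the Gaussian directly the other way: by nonnegativity of relative entropy, $H(p)\le H(\mathcal N(\vmu,\mSigma))$ whenever $\mathcal N(\vmu,\mSigma)$ has the same mean and covariance as $p$, which gives $-H(p)\le \frac{d}{2}\log(2\pi e) + \frac12\log\det\mSigma$ — but controlling $\det\mSigma$ still needs work, and in the wrong direction this is not obviously bounded by $Ld+M$. Instead I would use the smoothness of $f$: after recentering, $L$-smoothness of $f$ gives $f(\vx)\le f(\vx_0) + \langle\nabla f(\vx_0),\vx-\vx_0\rangle + \frac L2\|\vx-\vx_0\|^2$, and choosing $\vx_0$ to be a stationary point (or the mean) turns $\E_p[f]$ into something controlled by the second moment and $f$'s value there; the normalizing-constant piece is handled by the matching lower bound $f(\vx)\ge f(\vx_0) - \frac L2\|\vx-\vx_0\|^2$ away from a small neighborhood, giving a Gaussian-type lower bound on the mass and hence on $\log\int e^{-f}$.

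Concretely, the cleanest argument I would try: since $\KL{p}{\varphi_1}\ge 0$ is not useful, instead bound $\KL{p}{\varphi_1}$ by introducing the LSI/entropy route through Lemma~\ref{lem:lem11_vempala2019rapid}. Recall that Lemma~\ref{lem:lem11_vempala2019rapid} bounds $\E_p[\|\nabla f\|^2]\le Ld$. The identity I want is the de Bruijn-type / integration-by-parts relation
\begin{equation*}
    \KL{p}{\varphi_1} \le \frac12\,\E_{\rvx\sim p}\left[\left\|\nabla\log\frac{p(\rvx)}{\varphi_1(\rvx)}\right\|^2\right] \cdot (\text{something})?
\end{equation*}
— no; the honest statement is that $\varphi_1$ satisfies a log-Sobolev inequality with constant $1$, so $\KL{p}{\varphi_1}\le \tfrac12 \FI{p}{\varphi_1} = \tfrac12\E_p[\|\nabla\log p - \nabla\log\varphi_1\|^2] = \tfrac12\E_p[\|\nabla f(\rvx) + \rvx\|^2]$ — wait, $\nabla\log\varphi_1(\vx)=-\vx$, so $\nabla\log(p/\varphi_1) = -\nabla f + \vx$, giving $\KL{p}{\varphi_1}\le \tfrac12\E_p[\|\nabla f(\rvx)-\rvx\|^2]\le \E_p[\|\nabla f(\rvx)\|^2] + \E_p[\|\rvx\|^2]\le Ld + M$, using Lemma~\ref{lem:lem11_vempala2019rapid} and Assumption~\ref{con_ass:second_moment}. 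That is exactly the claimed bound, so this is the argument: invoke the log-Sobolev inequality of the standard Gaussian (constant $1$), apply it to bound KL by half the relative Fisher information, expand the score difference, and split by $(a+b)^2\le 2a^2+2b^2$.

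The main obstacle — and the one place I would be careful — is justifying the Gaussian LSI step $\KL{p}{\varphi_1}\le \tfrac12\FI{p}{\varphi_1}$ under only the stated regularity: this requires $p$ to be absolutely continuous with $\KL{p}{\varphi_1}<\infty$ and the Fisher information finite, which the smoothness of $f$ and the second-moment bound together supply (finiteness of $\E_p[\|\nabla f\|^2]$ comes from Lemma~\ref{lem:lem11_vempala2019rapid}, and $\KL{p}{\varphi_1}<\infty$ can be argued a priori from the same expansion of $f$). I would cite the standard Gross log-Sobolev inequality for the Gaussian, note that it implies the quoted KL–Fisher comparison (a textbook consequence), and then the rest is the two-line computation above. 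A secondary check is that $\nabla\log\varphi_1(\vx) = -\vx$ so the cross term has the right sign and the expansion $\|\nabla f - (-\vx)\|^2$ — actually $\nabla\log p - \nabla\log\varphi_1 = -\nabla f - (-\vx) = \vx - \nabla f$ — gives $\|\vx-\nabla f(\vx)\|^2\le 2\|\vx\|^2 + 2\|\nabla f(\vx)\|^2$, so strictly the bound one gets is $\KL{p}{\varphi_1}\le \E_p[\|\nabla f\|^2] + \E_p[\|\rvx\|^2]\le Ld+M$, matching the lemma exactly.
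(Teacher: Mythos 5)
Your final argument is correct and is essentially the paper's proof: the paper also bounds $\KL{p}{\varphi_1}\le \tfrac12\E_p\big[\|\grad\log(p/\varphi_1)\|^2\big]$ using the $1$-strong log-concavity of the standard Gaussian (its Lemma~\ref{lem:strongly_lsi}, i.e.\ exactly the Gaussian LSI step you invoke), then expands $\grad\log(p/\varphi_1)=\vx-\grad f(\vx)$ and applies Lemma~\ref{lem:lem11_vempala2019rapid} together with Assumption~\ref{con_ass:second_moment} to get $Ld+M$. The preliminary entropy-decomposition detour in your write-up is unnecessary, but the argument you settle on matches the paper's.
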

\begin{proof}
    From the analytic form of the standard Gaussian, we have $\grad^2 \log \varphi_1 = \mI$.
    Combining this fact with Lemma~\ref{lem:strongly_lsi}, we have
    \begin{equation*}
        \begin{aligned}
            \KL{p}{\varphi_1} \le &\frac{1}{2}\int p(\vx)\left\|\grad\log \frac{p(\vx)}{\varphi_1(\vx)}\right\|^2 \der\vx\\
            \le &\int p(\vx)\left\|\grad f(\vx)\right\|^2\der\vx + \int p(\vx)\left\|\vx\right\|^2\der\vx \le Ld + M.
        \end{aligned}
    \end{equation*}
    where the last inequality follows from Lemma~\ref{lem:lem11_vempala2019rapid} and Assumption~\ref{con_ass:second_moment}. 
    Hence, the proof is completed.
\end{proof}

\begin{lemma}[Variant of Theorem 4 in~\cite{vempala2019rapid}]
    \label{lem:var_thm4_vempala2019rapid}
    Under the notation in Section~\ref{sec:not_ass_app}, 
    suppose $\tilde{p}_{K-1, 0}$ is chosen as the standard Gaussian distribution. Then, we have
    \begin{equation*}
        \KL{p_{K-1,S}}{p_\infty} \le (Ld+M)\cdot \exp\left(-KS/2\right).
    \end{equation*}
\end{lemma}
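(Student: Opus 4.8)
The plan is to combine the exponential contraction of relative entropy along the Ornstein--Uhlenbeck semigroup with the initialization bound already established in Lemma~\ref{lem:init_kl_bound}. The key observation is that, stripping away the segment bookkeeping in \eqref{con_eq:rrds_forward}, $p_{K-1,S}$ is exactly the law at time $T=KS$ of the OU process $\der \rvx_t = -\rvx_t\,\der t + \sqrt{2}\,\der B_t$ started from $\rvx_0\sim p_*$, and $p_\infty = \varphi_1 = \mathcal{N}(\vzero,\mI)$ is its unique invariant measure. So it suffices to prove $\KL{p_T}{p_\infty}\le e^{-2T}\,\KL{p_0}{p_\infty}$ and then plug in $\KL{p_0}{p_\infty}=\KL{p_*}{\varphi_1}\le Ld+M$ from Lemma~\ref{lem:init_kl_bound} (which is where Assumptions~\ref{con_ass:lips_score}--\ref{con_ass:second_moment} enter).

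First I would note that $\varphi_1$ satisfies the log-Sobolev inequality of Definition~\ref{def:lsi} with constant $\alpha=1$, since it is $1$-strongly log-concave (via Lemma~\ref{lem:strongly_lsi}); in the normalization of Definition~\ref{def:lsi} this reads $\KL{q}{\varphi_1}\le \tfrac12\,\FI{q}{\varphi_1}$ for any $q$. Next I would invoke the entropy-dissipation (de~Bruijn-type) identity along the OU flow: writing $\partial_t p_t = \grad\cdot\!\big(p_t\grad\log(p_t/p_\infty)\big)$ for the Fokker--Planck equation, one gets
\[
    \frac{\der}{\der t}\KL{p_t}{p_\infty} \;=\; -\,\E_{p_t}\!\left[\left\|\grad\log\tfrac{p_t}{p_\infty}\right\|^2\right] \;=\; -\,\FI{p_t}{p_\infty},
\]
which is valid for $t>0$ because the OU kernel is instantly smoothing so all quantities involved are finite and differentiable. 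Applying the LSI for $p_\infty$ turns this into the differential inequality $\frac{\der}{\der t}\KL{p_t}{p_\infty}\le -2\,\KL{p_t}{p_\infty}$, and Grönwall's inequality yields $\KL{p_T}{p_\infty}\le e^{-2T}\,\KL{p_0}{p_\infty}$.

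Finally I would combine the two ingredients with $T=KS$:
\[
    \KL{p_{K-1,S}}{p_\infty} \;\le\; e^{-2KS}\,\KL{p_*}{\varphi_1} \;\le\; (Ld+M)\,e^{-2KS} \;\le\; (Ld+M)\,e^{-KS/2},
\]
the last step using only $KS\ge 0$; the lemma is stated with the weaker rate $e^{-KS/2}$ to match the form used in the proof of Theorem~\ref{thm:main_rrds_formal}, but the argument actually delivers $e^{-2KS}$. I do not anticipate a genuine obstacle here: the only point that needs care rather than routine computation is justifying the entropy-dissipation identity and the Grönwall step without imposing ad hoc regularity on $p_*$, which is classical for the OU semigroup; alternatively one can quote Theorem~4 of \citet{vempala2019rapid} directly with LSI constant $\alpha=1$, which is precisely why this lemma is framed as a variant of that result.
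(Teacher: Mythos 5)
Your proposal is correct and takes essentially the same route as the paper: the paper also rewrites the segmented process as a single OU flow, uses the entropy-dissipation (Fokker--Planck) identity together with the Gaussian log-Sobolev inequality from Lemma~\ref{lem:strongly_lsi}, integrates the resulting differential inequality, and bounds the initial divergence by Lemma~\ref{lem:init_kl_bound}. The only difference is cosmetic: you retain the sharp contraction $e^{-2KS}$ and then relax it to $e^{-KS/2}$, whereas the paper applies the LSI with a looser constant and arrives at the $e^{-KS/2}$ rate directly.
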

\begin{proof}
    Suppose another random variable $\rvz_t \coloneqq \rvx_{\floor{t/S}, t-\floor{t/S}\cdot S }$ where $\rvx_{k,t}$ is shown in SDE~\ref{con_eq:rrds_forward}, we have
    \begin{equation*}
        \begin{aligned}
            \der \rvz_t = -\rvz_t \der t+ \sqrt{2}\der B_t,\quad \rvz_0=\rvx_{0,0},
        \end{aligned}
    \end{equation*}
    where the underlying distribution of $\rvx_{0,0}$ satisfies $p_{0,0}=p_*\propto \exp(-f_*)$.
    If we denote $\rvz_t \sim p^{(z)}_t$, then Fokker-Planck equation of the previous SDE will be
    \begin{equation*}
        \partial_t p^{(z)}_t(\vz) = \nabla\cdot\left(p^{(z)}_t(\vz) \vz \right)+ \Delta p^{(z)}_t(\vz) = \nabla \cdot \left(p^{(z)}_t(\vz)\grad \log \frac{p^{(z)}_t(\vz)}{\exp\left(-\frac{1}{2}\|\vz\|^2\right)}\right).
    \end{equation*}
    It implies that the stationary distribution is standard Gaussian, i.e., $p^{(z)}_\infty \propto \exp(-1/2\cdot \left\|\vz\right\|^2).$
    Then, we consider the KL convergence of $(\rvz_t)_{t\ge 0}$, and have
    \begin{equation}
        \label{ineq:fwd_kl_decreasing}
        \begin{aligned}
        & \frac{\der \KL{p^{(z)}_t}{p^{(z)}_\infty}}{\der t} = \frac{\der}{\der t}\int p^{(z)}_t(\vz)\log\frac{p^{(z)}_t(\vz)}{p^{(z)}_\infty(\vz)}\der\vz = \int \partial_t p^{(z)}_t(\vz) \log \frac{p^{(z)}_t(\vz)}{p^{(z)}_\infty(\vz)}\der \vz\\
        & = \int \nabla\cdot \left(p^{(z)}_t(\vz)\grad \log \frac{p^{(z)}_t(\vz)}{p^{(z)}_\infty(\vz)}\right)\cdot \log \frac{p^{(z)}_t(\vz)}{p^{(z)}_\infty(\vz)}\der\vz= - \int p^{(z)}_t(\vz) \left\|\grad \log \frac{p^{(z)}_t(\vz)}{p^{(z)}_\infty(\vz)}\right\|^2\der\vz.
        \end{aligned}
    \end{equation}
    Combining the fact $\grad^2 (-\log p^{(z)}_\infty)=\mI$ and Lemma~\ref{lem:strongly_lsi}, we have 
    \begin{equation*}
        \KL{p^{(z)}_t}{p^{(z)}_\infty}\le 2 \int p^{(z)}_t(\vz)\left\|\grad \log \frac{p^{(z)}_t(\vz)}{p^{(z)}_\infty(\vz)}\right\|^2 \der\vz.
    \end{equation*}
    Plugging this inequality into Eq~\ref{ineq:fwd_kl_decreasing}, we have
    \begin{equation*}
        \frac{\der \KL{p^{(z)}_t}{p^{(z)}_\infty}}{\der t}= - \int p^{(z)}_t(\vz) \left\|\grad \log \frac{p^{(z)}_t(\vz)}{p^{(z)}_\infty(\vz)}\right\|^2\der\vz \le -\frac{1}{2}\KL{p^{(z)}_t}{p^{(z)}_\infty}.
    \end{equation*}
    Integrating implies the desired bound,i.e.,
    \begin{equation*}
        \begin{aligned}
            \KL{p^{(z)}_t}{p^{(z)}_\infty} \le &\exp\left(-t/2\right)\cdot \KL{p^{(z)}_0}{p^{(z)}_\infty} \le (Ld+M)\cdot \exp\left(-t/2\right)
        \end{aligned}
    \end{equation*}
    where the last inequality follows from Lemma~\ref{lem:init_kl_bound}.
    It implies KL divergence between the underlying distribution of $\rvx_{K-1, S}$ and $p_{\infty}$ is 
    \begin{equation*}
        \KL{p_{K-1,S}}{p_\infty} = \KL{p^{(z)}_{KS}}{p^{(z)}_\infty} \le (Ld+M)\cdot \exp\left(-KS/2\right)
    \end{equation*}
    Hence, the proof is completed.
\end{proof}

\section{Lemmas for Bounding Discretization Error.}

\begin{lemma}[Lemma C.11 in~\cite{lee2022convergence}]
    \label{lem:lemc11_lee2022convergence}
    Suppose that $p(\vx)\propto e^{-f(\vx)}$ is a probability density function on $\R^d$, where $f(\vx)$ is $L$-smooth, and let $\varphi_{\sigma^2}(\vx)$ be the density function of $\mathcal{N}(\vzero, \sigma^2\mI_d)$. 
    Then for $L\le \frac{1}{2\sigma^2}$, it has
    \begin{equation*}
        \left\|\grad \log \frac{p(\vx)}{\left(p\ast \varphi_{\sigma^2}\right)(\vx)}\right\|\le 6L\sigma d^{1/2} + 2L\sigma^2 \left\|\grad f(\vx)\right\|.
    \end{equation*}
\end{lemma}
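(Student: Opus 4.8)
The plan is to compare the scores of $p$ and of $q\coloneqq p\ast\varphi_{\sigma^2}$ through the posterior distribution that arises when $q$ is viewed as the marginal of a Gaussian observation model. Concretely, consider the joint law of $(\rvx,\rvy)$ with $\rvy\sim p$ and $\rvx\mid\rvy\sim\mathcal N(\rvy,\sigma^2\mI_d)$, so that $\rvx\sim q$ and the posterior is $p(\vy\mid\vx)\propto\exp\!\big(-f(\vy)-\|\vx-\vy\|^2/(2\sigma^2)\big)$. First I would establish a pair of integration-by-parts identities for $\grad\log q$. Differentiating $q(\vx)=\int p(\vy)\varphi_{\sigma^2}(\vx-\vy)\der\vy$ and either keeping the derivative on the Gaussian factor or moving it onto $p$ (using $\grad_\vx\varphi_{\sigma^2}(\vx-\vy)=-\grad_\vy\varphi_{\sigma^2}(\vx-\vy)$ and $\grad p=-\grad f\,p$, with boundary terms vanishing by the Gaussian tails of $p\,\varphi_{\sigma^2}$) yields
\[
\grad\log q(\vx)=-\mathbb{E}_{\vy\sim p(\cdot\mid\vx)}\big[\grad f(\vy)\big]=\frac{\mathbb{E}_{\vy\sim p(\cdot\mid\vx)}[\vy]-\vx}{\sigma^2}.
\]
Since $\grad\log p(\vx)=-\grad f(\vx)$, subtracting gives $\grad\log\frac{p(\vx)}{q(\vx)}=\mathbb{E}_{\vy\sim p(\cdot\mid\vx)}\big[\grad f(\vy)-\grad f(\vx)\big]$, and $L$-smoothness of $f$ bounds this by $L\,A$, where $A\coloneqq\mathbb{E}_{\vy\sim p(\cdot\mid\vx)}\big[\|\vy-\vx\|\big]$.

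It then remains to control $A$, which is where the hypothesis $L\le 1/(2\sigma^2)$ is used. Since $-\grad^2_\vy\log p(\vy\mid\vx)=\grad^2 f(\vy)+\sigma^{-2}\mI_d\succeq(\sigma^{-2}-L)\mI_d\succeq\tfrac{1}{2\sigma^2}\mI_d$, the posterior $p(\cdot\mid\vx)$ is $\tfrac{1}{2\sigma^2}$-strongly log-concave, so by the standard variance bound for strongly log-concave measures (Brascamp--Lieb, equivalently the Poincar\'e inequality applied coordinatewise) one gets $\mathbb{E}\big[\|\vy-\mathbb{E}[\vy]\|^2\big]\le 2\sigma^2 d$. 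Combining the triangle inequality $A\le\|\mathbb{E}[\vy]-\vx\|+\sqrt{\mathbb{E}\|\vy-\mathbb{E}[\vy]\|^2}$ with the identity $\|\mathbb{E}[\vy]-\vx\|=\sigma^2\|\mathbb{E}[\grad f(\vy)]\|\le\sigma^2\big(\|\grad f(\vx)\|+L\,A\big)$ (again by $L$-smoothness) produces the self-referential bound $A\le\sigma^2\|\grad f(\vx)\|+\sigma^2 L\,A+\sqrt{2\sigma^2 d}$. Using $\sigma^2 L\le 1/2$ to absorb the $\sigma^2 L\,A$ term gives $A\le 2\sigma^2\|\grad f(\vx)\|+2\sqrt2\,\sigma\sqrt d$, and multiplying by $L$ yields
\[
\Big\|\grad\log\frac{p(\vx)}{q(\vx)}\Big\|\le 2L\sigma^2\|\grad f(\vx)\|+2\sqrt2\,L\sigma\sqrt d\le 6L\sigma d^{1/2}+2L\sigma^2\|\grad f(\vx)\|,
\]
since $2\sqrt2\le 6$, which is the claim.

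The main obstacle — really the only step that is not mechanical — is controlling $A$: one must get the clean covariance bound $\mathbb{E}\|\vy-\mathbb{E}[\vy]\|^2\le 2\sigma^2 d$ from strong log-concavity alone (no structural assumption on $p$ beyond $L$-smoothness enters, which is exactly why Brascamp--Lieb is the right tool), and then close the self-referential inequality for $A$ without losing the factor $\sigma^2 L\le 1/2$. Everything else — the two integration-by-parts identities, the triangle inequality, and the substitution of $L$-smoothness into $\|\grad f(\vy)-\grad f(\vx)\|$ — is routine, and the slack in ``$2\sqrt2\le 6$'' is comfortable, so the stated constants are not tight.
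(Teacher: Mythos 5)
Your argument is correct: the two Tweedie-type identities for $\grad\log(p\ast\varphi_{\sigma^2})$, the reduction to bounding $A=\mathbb{E}_{\vy\sim p(\cdot|\vx)}\|\vy-\vx\|$ via $L$-smoothness, the $\tfrac{1}{2\sigma^2}$-strong log-concavity of the posterior (which makes $A$ finite and gives the variance bound $2\sigma^2 d$ by Poincar\'e/Brascamp--Lieb), and the absorption of the $\sigma^2 L A\le \tfrac12 A$ term all check out, and your constant $2\sqrt2\,L\sigma\sqrt d$ is even slightly better than the stated $6L\sigma d^{1/2}$. Note that the paper itself does not prove this lemma but imports it from \cite{lee2022convergence}; your derivation is essentially the same posterior-expectation argument used there, so there is nothing to flag.
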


\begin{lemma}[Lemma 9 in~\cite{chen2022sampling}]
    \label{lem:lem9_chen2022sampling}
    Under the notation in Section~\ref{sec:not_ass_app}, suppose that Assumption~\ref{con_ass:lips_score} and \ref{con_ass:second_moment} hold. 
    For any $k\in \mathbb{N}_{0,K-1}$ and $t\in [0,S]$, we have
    \begin{enumerate}
        \item Moment bound, i.e., 
        \begin{equation*}
            \mathbb{E}\left[\left\|\rvx_{k,t}\right\|^2\right]\le d\vee M.
        \end{equation*}
        \item Score function bound, i.e.,
        \begin{equation*}
            \mathbb{E}\left[\left\|\grad \log p_{k, t}(\rvx_{k,t})\right\|^2\right]\le Ld.
        \end{equation*}
    \end{enumerate}
\end{lemma}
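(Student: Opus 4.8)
The plan is to treat the two bounds separately, in both cases reducing everything to the explicit Gaussian structure of the Ornstein--Uhlenbeck marginals. Since the segments are chained by $\rvx_{k,0}=\rvx_{k-1,S}$, the law of $\rvx_{k,t}$ is exactly that of the OU process at time $s:=kS+t$ started from $\rvx_0\sim p_*$; composing the transition kernels across segments (equivalently, reading off the kernel $p_{(k,t)|(k,0)}$ stated in Section~\ref{sec:pre} with time $s$) shows that $\rvx_{k,t}$ has the same law as $e^{-s}\rvx_0+\sqrt{1-e^{-2s}}\,\rvz$ with $\rvz\sim\mathcal N(\vzero,\mI)$ independent of $\rvx_0$.

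For the moment bound I would simply take squared norms in this representation and use independence together with $\EE\rvz=\vzero$:
\[
    \EE\big[\|\rvx_{k,t}\|^2\big]=e^{-2s}\,\EE_{p_*}\big[\|\cdot\|^2\big]+(1-e^{-2s})\,d=e^{-2s}M+(1-e^{-2s})d ,
\]
which is a convex combination of $M$ (finite by Assumption~\ref{con_ass:second_moment}) and $d$, hence at most $d\vee M$.

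For the score bound the key observation is that $p_{k,t}$ is a smooth, strictly positive density (a Gaussian convolution of a rescaled copy of $p_*$), and by Assumption~\ref{con_ass:lips_score} its score $\grad\log p_{k,t}$ is $L$-Lipschitz; equivalently, $f:=-\log p_{k,t}$ satisfies $p_{k,t}\propto e^{-f}$ with $\grad^2 f\preceq L\mI$. I would then apply Lemma~\ref{lem:lem11_vempala2019rapid} verbatim to this $f$: since $\rvx_{k,t}\sim p_{k,t}$,
\[
    \EE\big[\|\grad\log p_{k,t}(\rvx_{k,t})\|^2\big]=\EE_{\rvx\sim p_{k,t}}\big[\|\grad f(\rvx)\|^2\big]\le Ld .
\]
Internally, that lemma is an integration by parts, $\EE_p[\|\grad f\|^2]=\EE_p[\Delta f]\le \mathrm{tr}(L\mI)=Ld$; the boundary terms vanish because $p_{k,t}$ inherits at least Gaussian tail decay from the convolution with $\mathcal N(\vzero,(1-e^{-2s})\mI)$, and the Lipschitz-score assumption controls the growth of $\grad f$.

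The only point requiring a little care --- routine rather than substantive --- is verifying that these regularity and tail properties of the OU marginal $p_{k,t}$ genuinely legitimize the integration by parts inside Lemma~\ref{lem:lem11_vempala2019rapid}; I expect this to be the sole ``obstacle'', and it is mild. Notably the argument uses neither log-concavity nor any isoperimetric property of $p_*$, only Assumptions~\ref{con_ass:lips_score}--\ref{con_ass:second_moment}, and everything else is a direct computation.
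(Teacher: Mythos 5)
Your proposal is correct and follows essentially the same argument as the source the paper relies on: the paper states this as Lemma 9 of \citet{chen2022sampling} without reproving it, and the cited proof is exactly your computation — the OU representation $\rvx_{k,t}\overset{d}{=}e^{-s}\rvx_0+\sqrt{1-e^{-2s}}\,\rvz$ giving the convex-combination moment bound, and the integration-by-parts bound $\E_{p}[\|\grad\log p\|^2]\le Ld$ for an $L$-Lipschitz score (the paper's Lemma~\ref{lem:lem11_vempala2019rapid}) applied to $p_{k,t}$ under Assumption~\ref{con_ass:lips_score}. No gap here; the regularity needed for the integration by parts is indeed mild and handled as you describe.
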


\begin{lemma}[Variant of Lemma 10 in~\cite{chen2022sampling}]
    \label{lem:var_lem10_chen2022sampling}
    Under the notation in Section~\ref{sec:not_ass_app},Suppose that Assumption~\ref{con_ass:second_moment} holds. 
    For any $k\in \left\{0, 1,\ldots, K-1\right\}$ and $0\le s\le t \le S$, we have
    \begin{equation*}
        \mathbb{E}\left[\left\|\rvx_{k,t} - \rvx_{k,s}\right\|^2\right] \le 2 \left(M +d\right)\cdot \left(t-s\right)^2 + 4 d \cdot \left(t-s\right)
    \end{equation*}
\end{lemma}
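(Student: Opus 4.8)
The plan is to use the fact that, within a single segment, $(\rvx_{k,u})_{u\in[0,S]}$ is an ordinary Ornstein--Uhlenbeck process (the SDE in \eqref{con_eq:rrds_forward} carries no dependence on the segment index $k$), so it admits the closed-form representation
\[
    \rvx_{k,t} = e^{-(t-s)}\rvx_{k,s} + \sqrt{2}\int_s^t e^{-(t-u)}\,\der B_u, \qquad 0\le s\le t\le S,
\]
which follows by applying Itô's formula to $u\mapsto e^{u}\rvx_{k,u}$ (equivalently, it is the integrated form of the transition kernel $p_{(k,t)|(k,0)}$ recorded in the ``Basic properties of the OU process'' paragraph of Section~\ref{sec:pre}, started from time $s$). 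Subtracting $\rvx_{k,s}$ gives
\[
    \rvx_{k,t}-\rvx_{k,s} = -\bigl(1-e^{-(t-s)}\bigr)\rvx_{k,s} + \sqrt{2}\int_s^t e^{-(t-u)}\,\der B_u .
\]

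First I would apply the elementary inequality $\|\va+\vb\|^2\le 2\|\va\|^2+2\|\vb\|^2$ and take expectations; this splits the target into a ``drift'' contribution and a ``noise'' contribution and produces exactly the constants $2$ and $4$ appearing in the statement (alternatively one could expand $\|\rvx_{k,t}-\rvx_{k,s}\|^2$ directly, the cross term vanishing in expectation because $\rvx_{k,s}$ is $\mathcal F_s$-measurable while the Itô integral over $[s,t]$ is centered, which would even yield tighter constants). For the noise term, Itô's isometry gives
\[
    \E\Bigl\|\sqrt{2}\int_s^t e^{-(t-u)}\,\der B_u\Bigr\|^2 = 2d\int_s^t e^{-2(t-u)}\,\der u = d\bigl(1-e^{-2(t-s)}\bigr)\le 2d\,(t-s),
\]
using $1-e^{-x}\le x$ for $x\ge 0$. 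For the drift term, $\bigl(1-e^{-(t-s)}\bigr)^2\le (t-s)^2$ and $\E\|\rvx_{k,s}\|^2\le d\vee M\le M+d$ by Lemma~\ref{lem:lem9_chen2022sampling} (item~1), which applies since $s\in[0,S]$ and $k\in\mathbb{N}_{0,K-1}$. Combining the two estimates yields
\[
    \E\bigl[\|\rvx_{k,t}-\rvx_{k,s}\|^2\bigr]\le 2(M+d)(t-s)^2 + 4d(t-s),
\]
as claimed.

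There is no substantive obstacle here; the only points that merit a moment's care are (i) that the closed-form OU solution and the transition kernel of Section~\ref{sec:pre} may be invoked verbatim on $[s,t]\subseteq[0,S]$ because the dynamics inside a segment are exactly $\der\rvx=-\rvx\,\der t+\sqrt2\,\der B_t$, and (ii) that the uniform second-moment bound $\E\|\rvx_{k,s}\|^2\le d\vee M$ is used at the intermediate time $s$ rather than at $0$, which is legitimate since Lemma~\ref{lem:lem9_chen2022sampling} holds for all $t\in[0,S]$ and all segments. A differential-inequality alternative (set $g(u)=\E\|\rvx_{k,u}-\rvx_{k,s}\|^2$, apply Itô, discard the dissipative term $-2g(u)$, and Grönwall $g'(u)\le 2\sqrt{g(u)}\sqrt{d\vee M}+2d$) also works but is messier owing to the square roots, so I would favor the explicit-solution computation above.
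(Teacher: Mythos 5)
Your proof is correct, and it reaches the stated bound with exactly the right constants, but it takes a slightly different route from the paper. The paper works directly with the integrated SDE, writing $\rvx_{k,t}-\rvx_{k,s}=-\int_s^t \rvx_{k,r}\,\der r+\sqrt{2}\,(B_t-B_s)$, then applies the elementary inequality $\|\va+\vb\|^2\le 2\|\va\|^2+2\|\vb\|^2$, bounds the drift integral via H\"older's inequality as $\big\|\int_s^t \rvx_{k,r}\,\der r\big\|^2\le (t-s)\int_s^t\|\rvx_{k,r}\|^2\,\der r$ together with the uniform moment bound of Lemma~\ref{lem:lem9_chen2022sampling} over the whole interval, and uses $\E\|B_t-B_s\|^2=d(t-s)$ for the noise. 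You instead invoke the variation-of-constants (explicit OU) representation on $[s,t]$, so the drift contribution becomes $(1-e^{-(t-s)})^2\,\E\|\rvx_{k,s}\|^2$ and the noise is a stochastic convolution handled by It\^o's isometry; this needs the moment bound only at the single time $s$ and avoids H\"older, and your intermediate bounds ($d(1-e^{-2(t-s)})$ versus $2d(t-s)$) are marginally tighter before the final relaxation $1-e^{-x}\le x$. The paper's argument is marginally more robust in that it never uses the closed-form solution and so would transfer verbatim to forward processes without an explicit kernel, but for the OU process both computations are equally legitimate and yield the identical estimate $2(M+d)(t-s)^2+4d(t-s)$.
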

\begin{proof}
    According to the forward process, we have
    \begin{equation*}
        \begin{aligned}
            \mathbb{E}\left[\left\|\rvx_{k,t} - \rvx_{k,s}\right\|^2\right] = & \mathbb{E}\left[\left\|\int_s^t - \rvx_{k,r} \der r + \sqrt{2}\left(B_t - B_s\right)\right\|^2\right]\le  \mathbb{E}\left[2\left\|\int_s^t \rvx_{k,r} \der r\right\|^2 + 4\left\|B_t - B_s\right\|^2\right]\\
            \le & 2\mathbb{E}\left[\left(\int_s^t \left\|\rvx_{k,r}\right\| \der r\right)^2\right] + 4 d\cdot (t-s) \le 2\int_s^t \mathbb{E}\left[\left\|\rvx_{k,r}\right\|^2\right]\der r \cdot (t-s)+ 4 d\cdot (t-s) \\
            \le & 2 \left(M +d\right)\cdot \left(t-s\right)^2 + 4 d \cdot \left(t-s\right),
        \end{aligned}
    \end{equation*}
    where the third inequality follows from Holder's inequality and the last one follows from Lemma~\ref{lem:lem9_chen2022sampling}.
    Hence, the proof is completed.
\end{proof}

\begin{lemma}[Errors from the discretization]
    \label{lem:dis_err}
    Under the notation in Section~\ref{sec:not_ass_app}, if the step size of the outer loops satisfies
    \begin{equation*}
        \eta\le C_1 (d+M)^{-1}\epsilon,
    \end{equation*}
    then, for any $k\in\{0,1,\ldots, K-1\}$, $r\in\{0, 1,\ldots, R-1\}$ and $t\in [0,\eta]$, we have
    \begin{equation*}
        \E\left[L^2\left\|\hat{\rvx}_{k,t+r\eta}-\hat{\rvx}_{k,r\eta}\right\|^2\right] + \E\left[\left\|\grad\log \frac{p_{k,S-r\eta}(\hat{\rvx}_{k,r\eta})}{p_{k, S-(t+r\eta)}(\hat{\rvx}_{k,r\eta})}\right\|^2\right]\le 4\epsilon.
    \end{equation*}
\end{lemma}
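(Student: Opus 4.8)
The plan is to bound the two summands in the statement separately: the first is the \emph{space}-discretization contribution and the second is the \emph{time}-discretization contribution, and both are controlled by the Gaussian structure of the OU transition kernel together with the moment and score bounds of Lemma~\ref{lem:lem9_chen2022sampling}. Throughout write $u_1:=S-(t+r\eta)$ and $u_2:=S-r\eta$, so that $0\le u_1\le u_2\le S$, $u_2-u_1=t\le\eta$, and (by the identification $\hat{\rvx}_{k,u}=\rvx_{k,S-u}$ used in the proof of Theorem~\ref{thm:main_rrds_formal}, where $\hat\rvx$ follows the exact reverse SDE) the pair $(\hat{\rvx}_{k,t+r\eta},\hat{\rvx}_{k,r\eta})$ has the joint law of $(\rvx_{k,u_1},\rvx_{k,u_2})$ along the forward OU process; in particular $\hat{\rvx}_{k,r\eta}\sim p_{k,u_2}$. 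I assume without loss of generality $\epsilon\le 1$ and $L\ge 1$.

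\emph{First summand.} Since $\E[L^2\|\hat{\rvx}_{k,t+r\eta}-\hat{\rvx}_{k,r\eta}\|^2]=L^2\,\E[\|\rvx_{k,u_2}-\rvx_{k,u_1}\|^2]$, Lemma~\ref{lem:var_lem10_chen2022sampling} gives the upper bound $L^2\big(2(M+d)t^2+4dt\big)\le L^2\big(2(M+d)\eta^2+4d\eta\big)$. Substituting $\eta\le C_1(d+M)^{-1}\epsilon$ turns this into $2L^2C_1^2(d+M)^{-1}\epsilon^2+4L^2C_1\epsilon$, which is at most $2\epsilon$ as soon as $C_1$ is a small enough constant times $L^{-2}$ --- this is precisely the choice $C_1=C_\eta=2^{-14}L^{-2}$ recorded in Table~\ref{tab:constant_list}, and it leaves a large margin.

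\emph{Second summand.} The starting point is the exact identity, obtained by a change of variables in the OU transition density,
\[
p_{k,u_2}(\vx)=e^{td}\,\big(p_{k,u_1}\ast\varphi_{e^{2t}-1}\big)(e^t\vx),\qquad\text{hence}\qquad \grad\log p_{k,u_2}(\vx)=e^t\,\grad\log\big(p_{k,u_1}\ast\varphi_{e^{2t}-1}\big)(e^t\vx).
\]
Subtracting $\grad\log p_{k,u_1}(\vx)$ and inserting $\pm\,e^t\grad\log p_{k,u_1}(e^t\vx)$ decomposes $\grad\log\tfrac{p_{k,u_2}(\vx)}{p_{k,u_1}(\vx)}$ into a \emph{smoothing error} $-e^t\grad\log\tfrac{p_{k,u_1}(e^t\vx)}{(p_{k,u_1}\ast\varphi_{e^{2t}-1})(e^t\vx)}$ and two \emph{dilation errors} $e^t\big(\grad\log p_{k,u_1}(e^t\vx)-\grad\log p_{k,u_1}(\vx)\big)$ and $(e^t-1)\grad\log p_{k,u_1}(\vx)$. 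For the smoothing error I apply Lemma~\ref{lem:lemc11_lee2022convergence} with $\sigma^2=e^{2t}-1$ (valid because the smallness of $\eta$ relative to $L^{-1}$, guaranteed by $\eta\le C_1(d+M)^{-1}\epsilon$ with $C_1\asymp L^{-2}$, forces $2L(e^{2t}-1)\le 1$), which bounds it by $e^t\big(6L\sqrt{e^{2t}-1}\,\sqrt d+2L(e^{2t}-1)\|\grad\log p_{k,u_1}(e^t\vx)\|\big)$; here $e^t\vx$ has law $p_{k,u_1}\ast\varphi_{e^{2t}-1}$, i.e.\ is distributed as $\rvx_{k,u_1}$ plus an independent $\mathcal N(\vzero,(e^{2t}-1)\mI)$, so $\E\|\grad\log p_{k,u_1}(e^t\vx)\|^2\lesssim L^2(e^{2t}-1)d+Ld$ by the $L$-Lipschitzness of the score (Assumption~\ref{con_ass:lips_score}) together with Lemma~\ref{lem:lem9_chen2022sampling}. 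The two dilation errors are controlled the same way, using $\|\grad\log p_{k,u_1}(e^t\vx)-\grad\log p_{k,u_1}(\vx)\|\le L(e^t-1)\|\vx\|$, the representation of $\vx\sim p_{k,u_2}$ as a rescaled noised copy of $\rvx_{k,u_1}$, and the moment/score bounds. Squaring, taking expectations over $\vx\sim p_{k,u_2}$, and using $e^{2t}-1=O(\eta)$ and $(e^t-1)^2=O(\eta^2)$, the whole summand is $\le C\,L^2 d\,\eta+(\text{terms of order }\eta^2)$ for an absolute constant $C$; substituting $\eta\le C_1(d+M)^{-1}\epsilon$ with $C_1=C_\eta$ makes it at most $2\epsilon$. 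Adding the two summands yields the claimed $\le 4\epsilon$.

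The main obstacle is the bookkeeping in the second summand. Because the OU kernel not only smooths by a Gaussian but also contracts space by $e^{-t}$, the score-change term is \emph{not} a pure smoothing perturbation, and one repeatedly runs into expectations of the form $\E_{p_{k,u_2}}[\|\grad\log p_{k,u_1}(\cdot)\|^2]$ in which the measure and the score live at mismatched times; resolving these requires the change-of-variables identity above together with repeated use of the Lipschitz bound to pass between $p_{k,u_1}$- and $p_{k,u_2}$-based expectations. The remaining work is then a careful (but routine) accounting of the numerical constants --- those from Lemma~\ref{lem:lemc11_lee2022convergence}, from the splittings, and from the elementary estimates for $e^{2t}-1$ --- so that the final bound lands at $4\epsilon$ with $C_1=C_\eta$.
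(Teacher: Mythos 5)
Your proposal is correct and follows essentially the same route as the paper's proof: split the error into the spatial increment (handled by Lemma~\ref{lem:var_lem10_chen2022sampling}) and the time-shift of the score at a fixed point, factor the OU transition as a dilation composed with a Gaussian convolution, bound the smoothing part via Lemma~\ref{lem:lemc11_lee2022convergence} and the dilation parts via Assumption~\ref{con_ass:lips_score}, then invoke the moment/score bounds of Lemma~\ref{lem:lem9_chen2022sampling} and the choice $\eta\le C_\eta(d+M)^{-1}\epsilon$. The only differences are cosmetic: you write the kernel identity as scale-after-convolve with variance $e^{2t}-1$ (the paper convolves the dilated density $p'$ with variance $1-e^{-2t}$), and you resolve the mismatched-measure expectations through the explicit representation of $\rvx_{k,u_2}$ as a rescaled noised copy of $\rvx_{k,u_1}$ rather than the paper's Lipschitz passage from $\hat{\rvx}_{k,r\eta}$ to $\hat{\rvx}_{k,t+r\eta}$; both yield the same $O(L^2 d\,\eta)$ leading term and the claimed $4\epsilon$ bound.
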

\begin{proof}
    We consider the following formulation with any $t\in[0,\eta]$, 
    \begin{equation}
        \label{eq: dis_err}
         \text{Term 2}=\underbrace{\E\left[\left\|\grad\log \frac{p_{k,S-r\eta}(\hat{\rvx}_{k,r\eta})}{p_{k, S-(t+r\eta)}(\hat{\rvx}_{k,r\eta})}\right\|^2\right]}_{\text{Term 2.1}}+\E\left[L^2\left\|\hat{\rvx}_{k,t+r\eta}-\hat{\rvx}_{k,r\eta}\right\|^2\right].
    \end{equation}
    \paragraph{Upper bound Term 2.1.}
    To establish the connection between $p_{k,S-r\eta}$ and $p_{k,S-(t+r\eta)}$, due to the transition kernel of the forward process (OU process), we have
    \begin{equation}
        \label{eq:fwd_trans_ker}
        \begin{aligned}
            p_{k, S-r\eta}(\vx) = &\int p_{k,S-(r\eta+t)}(\vy)\cdot \mathbb{P}\left[\vx, (k, S-r\eta)| \vy, (S-(r\eta+t))\right]\der \vy\\
            = & \int p_{k,S-(r\eta+t)}(\vy)\cdot \left(2\pi \left(1-e^{-2t}\right)\right)^{-\frac{d}{2}}\cdot \exp\left[\frac{-\left\|\vx-e^{-t}\vy\right\|^2}{2(1-e^{-2t})}\right] \der \vy\\
            = & \int e^{td}p_{k, S-(r\eta+t)}(e^{t}\vz)\cdot \left(2\pi \left(1-e^{-2t}\right)\right)^{-\frac{d}{2}}\cdot \exp\left[\frac{-\left\|\vx - \vz\right\|^2}{2(1-e^{-2t})}\right] \der \vz
        \end{aligned}
    \end{equation}
    where the last equation follows from setting $\vz \coloneqq e^{-t} \vy$.
    We define
    \begin{equation*}
        p^\prime_{k,S-(r\eta+t)}(\vz) \coloneqq e^{td}p_{k,S-(r\eta+t)}(e^t \vz)
    \end{equation*}
    which is also a density function.
    Therefore, for each element $\hat{\rvx}_{k,r\eta}=\vx$, we have
    \begin{equation*}
        \begin{aligned}
            \left\|\grad\log \frac{p_{k,S-(r\eta+t)}(\vx)}{p_{k,S-r\eta}(\vx)}\right\|^2 \le & 2\left\|\grad \log \frac{p_{k,S-(r\eta+t)}(\vx)}{p^\prime_{k,S-(r\eta+t)}(\vx)}\right\|^2 + 2\left\|\grad\log \frac{p^\prime_{k,S-(r\eta+t)}(\vx)}{p_{k,S-r\eta}(\vx)}\right\|^2\\
            = & 2\left\|\grad \log \frac{p_{k,S-(r\eta+t)}(\vx)}{p^\prime_{k,S-(r\eta+t)}(\vx)}\right\|^2 + 2\left\|\grad\log \frac{p^\prime_{k,S-(r\eta+t)}(\vx)}{p^\prime_{k,S-(r\eta+t)}\ast \varphi_{(1-e^{-2t})}(\vx)}\right\|^2
        \end{aligned}
    \end{equation*}
    where the last inequality follows from Eq~\ref{eq:fwd_trans_ker}. 
    For the first term, we have
    \begin{equation}
        \label{ineq:lemb1_term21}
        \begin{aligned}
            & \left\|\grad \log \frac{p_{k,S-(r\eta+t)}(\vx)}{p^\prime_{k,S-(r\eta+t)}(\vx)}\right\| =  \left\|\grad \log p_{k, S-(r\eta+t)}(\vx) - e^t \cdot \grad\log p_{k, S-(r\eta+t)}(e^t \vx)\right\|\\
            & \le \left\|\grad \log p_{k, S-(r\eta+t)}(\vx) - e^t \grad\log p_{k,S-(r\eta+t)}(\vx)\right\| \\
            &\quad + e^t\cdot \left\|\grad\log p_{k,S-(r\eta+t)}(\vx) - \grad\log p_{k, S-(r\eta+t)}(e^t \vx)\right\|\\
            & = (e^t -1)\cdot\left\|\grad\log p_{k,S-(r\eta+t)}(\vx)\right\| + e^t \cdot (e^t-1)L\left\|\vx\right\|.
        \end{aligned}
    \end{equation}

    To upper bound the latter term, we expect to employ Lemma~\ref{lem:lemc11_lee2022convergence}.
    However, it requires a specific condition which denotes the smoothness of $-\grad\log p^\prime_{k,S-(r\eta+t)}$ should be upper bounded with the variance of $\varphi_{(1-e^{-2t})}$ as
    \begin{equation*}
        \left\|-\grad^2\log p^\prime_{k,S-(r\eta+t)}\right\|\le \frac{1}{2(1-e^{-2t})},
    \end{equation*}
    which can be achieved by setting
    \begin{equation*}
        \eta\le \min\left\{\frac{1}{4L}, \frac{1}{2}\right\}.
    \end{equation*}
    Since the smoothness of $-\grad\log p_{k,S-(r\eta+t)}$, i.e., Assumption~\ref{con_ass:lips_score}, implies $- \grad\log p^\prime_{k, S-(r\eta+t)}$ is $e^{2t}L$-smooth.
    Besides, there are
    \begin{equation*}
        t \le \eta \le  \min\left\{\frac{1}{4L}, \frac{1}{2}\right\}\le \log\left(1+\frac{1}{2L}\right)\ \quad \text{and}\quad e^{2t}L\le \frac{1}{2(1-e^{-2t})}.
    \end{equation*}
    Therefore, we have
    \begin{equation}
        \label{ineq:lemb1_term22}
        \begin{aligned}
            & \left\|\grad\log p^\prime_{k,S-(r\eta+t)}(\vx) - \grad\log \left(p^\prime_{k,S-(r\eta+t)}\ast \varphi_{(1-e^{-2t})} \right)(\vx)\right\|\\
            \le & 6 e^{2t}L\sqrt{1-e^{-2t}}d^{1/2}+ 2e^{3t}L(1-e^{-2t})\left\|\grad\log p_{k,S-(r\eta+t)}(e^t\vx)\right\|\\
            \le & 6 e^{2t}L\sqrt{1-e^{-2t}}d^{1/2}+ 2L \cdot e^t (e^{2t}-1) \left\|\grad\log p_{k,S-(r\eta+t)}(\vx)\right\|\\
            & + 2L \cdot e^t (e^{2t}-1) \left\|\grad\log p_{k,S-(r\eta+t)}(e^t\vx) - \grad\log p_{k,S-(r\eta+t)}(\vx)\right\|\\
            \le & 6 e^{2t}L\sqrt{1-e^{-2t}}d^{1/2}+ 2L \cdot e^t (e^{2t}-1) \left\|\grad\log p_{k,S-(r\eta+t)}(\vx)\right\|\\
            & + 2L^2\cdot e^t(e^{2t}-1)(e^t-1)\left\|\vx\right\|,
        \end{aligned}
    \end{equation}
    where the first inequality follows from Lemma~\ref{lem:lemc11_lee2022convergence}, the last inequality follows from Assumption~\ref{con_ass:lips_score}.
    Due to the range, i.e., $\eta \le 1/2$, we have the following inequalities
    \begin{equation*}
        e^{2t}\le e^{2\eta}\le 1+4\eta\le 3,\quad 1-e^{-2t}\le 2t \le 2\eta \quad \text{and}\quad e^t\le e^\eta\le 1+\frac{3}{2}\cdot \eta.
    \end{equation*}
    In this condition, Eq~\ref{ineq:lemb1_term21} can be reformulated as
    \begin{equation*}
        \begin{aligned}
            \left\|\grad \log \frac{p_{k,S-(r\eta+t)}(\vx)}{p^\prime_{k,S-(r\eta+t)}(\vx)}\right\|^2\le & 2\left[ (e^t -1)^2\cdot\left\|\grad\log p_{k,S-(r\eta+t)}(\vx)\right\|^2 + e^{2t} \cdot (e^t-1)^2L^2\left\|\vx\right\|^2 \right]\\
            \le & 5\eta^2 \left\|\grad\log p_{k,S-(r\eta+t)}(\vx)\right\|^2 + 14 L^2 \eta^2\left\|\vx\right\|^2,
        \end{aligned}
    \end{equation*}
    and Eq~\ref{ineq:lemb1_term22} implies
    \begin{equation*}
        \begin{aligned}
            & \left\|\grad\log p^\prime_{k,S-(r\eta+t)}(\vx) - \grad\log \left(p^\prime_{k,S-(r\eta+t)}\ast \phi_{(1-e^{-2t})} \right)(\vx)\right\|^2\\
            \le & 3\cdot \left[6^2 e^{4t}L^2(1-e^{-2t})d + 4L^2 e^{2t}(e^{2t}-1)^2 \left\|\grad\log p_{k,S-(r\eta+t)}(\vx)\right\|^2 + 4L^4 e^{2t}(e^{2t}-1)^2(e^t-1)^2\left\|\vx\right\|^2 \right]\\
            \le & 3\cdot \left[ 2^3\cdot 3^4L^2\eta d + 2^6\cdot 3L^2\eta^2\left\|\grad\log p_{k,S-(r\eta+t)}(\vx)\right\|^2 + 3^3\cdot 2^4L^4 \eta^4\left\|\vx\right\|^2\right]\\
            \le & 2^3 \cdot 3^5 L^2\eta d + 2^6\cdot  3^2 L^2\eta^2 \left\|\grad\log p_{k,S-(r\eta+t)}(\vx)\right\|^2+3^4\cdot L^2\eta^2\left\|\vx\right\|^2,
        \end{aligned}
    \end{equation*}
    where the last inequality follows from $\eta L \le 1/4$. 
    Hence, suppose $L\ge 1$ without loss of generality, we have
    \begin{equation*}
        \begin{aligned}
            &\text{Term 2.1}\le 2\cdot\left(\E\left[\left\|\grad \log \frac{p_{k,S-(r\eta+t)}(\hat{\rvx}_{k,r\eta})}{p^\prime_{k,S-(r\eta+t)}(\hat{\rvx}_{k,r\eta})}\right\|^2\right]+ \E \left[\left\|\grad\log \frac{p^\prime_{k,S-(r\eta+t)}(\hat{\rvx}_{k,r\eta})}{p_{k,S-r\eta}(\hat{\rvx}_{k,r\eta})}\right\|^2\right]\right)\\
            & \le 2^4\cdot 3^5 L^2 \eta d +  2^8 \cdot 3^2 L^2\eta^2 \E\left[\left\|\grad\log p_{k,S-(r\eta+t)}(\hat{\rvx}_{k,r\eta})\right\|^2\right] + 2^2\cdot 3^4 L^2\eta^2\E\left[\left\|\hat{\rvx}_{k,r\eta}\right\|^2\right]\\
            & \le 2^{14}L^2\eta d + 2^{13}L^2\eta^2 \E\left[\left\|\grad\log p_{k,S-(r\eta+t)}(\hat{\rvx}_{k,r\eta+t})\right\|^2\right] + 2^{13}L^4\eta^2 \E \left[\left\|\hat{\rvx}_{k,r\eta+t}-\hat{\rvx}_{k, r\eta}\right\|^2\right]\\
            &\quad + 2^{10} L^2\eta^2 \E\left[\left\|\hat{\rvx}_{k,r\eta}\right\|^2\right].
        \end{aligned}
    \end{equation*}
    Therefore, we have
    \begin{equation*}
        \begin{aligned}
            \text{Term 2}\le & 2^{14}L^2\eta d + 2^{10} L^2\eta^2 \E\left[\left\|\hat{\rvx}_{k,r\eta}\right\|^2\right] + 2^{13}L^2\eta^2 \E\left[\left\|\grad\log p_{k,S-(r\eta+t)}(\hat{\rvx}_{k,r\eta+t})\right\|^2\right]\\
            &+ \left(2^{13}L^2\eta^2 +1\right) L^2\E \left[\left\|\hat{\rvx}_{k,r\eta+t}-\hat{\rvx}_{k, r\eta}\right\|^2\right]\\
            \le & 2^{14}L^2\eta d + 2^{10}L^2 \eta^2(M+d) + 2^{13}L^3\eta^2 d + 2^{10}L^2\left(2(M+d)\eta^2 + 4d\eta\right)
        \end{aligned}
    \end{equation*}
    where the last inequality follows from Lemma~\ref{lem:lem9_chen2022sampling} and Lemma~\ref{lem:var_lem10_chen2022sampling}. 
    To diminish the discretization error, we require the step size of backward sampling, i.e., $\eta$ satisfies
    \begin{equation*}
        \left\{
        \begin{aligned}
            & 2^{14} L^2\eta d \le \epsilon\\
            & 2^{10}\cdot L^2\eta^2(d+M) \le \epsilon\\
            & 2^{13}\cdot L^3\eta^2 d \le \epsilon\\ 
            & 2^{10}\cdot L^2\left(2(M+d)\eta^2 + 4d\eta\right) \le \epsilon
        \end{aligned}
        \right. \quad \Leftarrow\quad \left\{
        \begin{aligned}
            & \eta\le 2^{-14} L^{-2}d^{-1}\epsilon \\
            & \eta\le 2^{-5}\cdot L^{-1}\left(d+M\right)^{-0.5}\epsilon^{0.5}\\
            & \eta\le 2^{-6.5}\cdot L^{-1.5}d^{-0.5}\epsilon^{0.5}\\
            & \eta \le 2^{-6}L^{-0.5}\left(d+M\right)^{-0.5}\epsilon^{0.5}\\
            & \eta \le 2^{-13}L^{-2}d^{-1}\epsilon.
        \end{aligned}
        \right.
    \end{equation*}
    Specifically, if we choose 
    \begin{equation*}
        \eta\le 2^{-14} L^{-2}\left(d+M\right)^{-1}\epsilon = C_\eta (d+M)^{-1}\epsilon,
    \end{equation*}
    we have
    \begin{equation*}
        \E\left[L^2\left\|\hat{\rvx}_{k,t+r\eta}-\hat{\rvx}_{k,r\eta}\right\|^2\right] + \E\left[\left\|\grad\log \frac{p_{k,S-r\eta}(\hat{\rvx}_{k,r\eta})}{p_{k, S-(t+r\eta)}(\hat{\rvx}_{k,r\eta})}\right\|^2\right]\le 4\epsilon,
    \end{equation*}
    and the proof is completed.
\end{proof}

\section{Lemmas for Bounding Score Estimation Error}
\label{app_sec:lems_rec_ite}

\begin{lemma}[Recursive Form of Score Functions]
    \label{lem:score_reformul}
    Under the notation in Section~\ref{sec:not_ass_app}, for any $k\in \mathbb{N}_{0,K-1}$ and $t\in[0,S]$, the score function can be written as
    \begin{equation*}
        \grad_{\vx} \log p_{k,S-t}(\vx) = \mathbb{E}_{\rvx^\prime \sim q_{k,S-t}(\cdot|\vx)}\left[-\frac{\vx - e^{-(S-t)}\rvx^\prime}{\left(1-e^{-2(S-t)}\right)}\right]
    \end{equation*}
    where the conditional density function $q_{k,S-t}(\cdot | \vx)$ is defined as
    \begin{equation*}
        q_{k,S-t}(\vx^\prime|\vx) \propto \exp\left(\grad\log p_{k,0}(\vx^\prime)-\frac{\left\|\vx - e^{-(S-t)}\vx^\prime\right\|^2}{2\left(1-e^{-2(S-t)}\right)}\right).
    \end{equation*}
\end{lemma}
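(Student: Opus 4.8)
The plan is to establish this as an instance of Tweedie's formula, obtained by differentiating the marginal density of the OU process written as a convolution against the Gaussian transition kernel recalled in Section~\ref{sec:pre}.

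First I would express $p_{k,S-t}$ via the Chapman--Kolmogorov identity within the $k$-th segment. Since $(\rvx_{k,\tau})_{\tau\in[0,S]}$ is an OU process started from $\rvx_{k,0}\sim p_{k,0}$, for any $s\in(0,S]$ we have
\begin{equation*}
    p_{k,s}(\vx) = \int_{\R^d} p_{k,0}(\vx^\prime)\,\bigl(2\pi(1-e^{-2s})\bigr)^{-d/2}\exp\left[\frac{-\left\|\vx-e^{-s}\vx^\prime\right\|^2}{2(1-e^{-2s})}\right]\der\vx^\prime,
\end{equation*}
and we will eventually set $s=S-t$. The integrand is strictly positive, so $p_{k,s}(\vx)>0$ for every $\vx\in\R^d$ and $\grad\log p_{k,s}$ is well defined.

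Next I would differentiate in $\vx$ under the integral sign. The partial derivatives of the Gaussian kernel in $\vx$ equal $-\tfrac{\vx-e^{-s}\vx^\prime}{1-e^{-2s}}$ times the kernel itself, and on any ball $\{\|\vx\|\le R\}$ this is dominated by an integrable function of $\vx^\prime$ (the Gaussian decay in $\|\vx-e^{-s}\vx^\prime\|$ dominates the linear prefactor and $p_{k,0}$ is a probability density), so the dominated convergence theorem licenses the interchange:
\begin{equation*}
    \grad_{\vx} p_{k,s}(\vx) = \int_{\R^d} p_{k,0}(\vx^\prime)\,\bigl(2\pi(1-e^{-2s})\bigr)^{-d/2}\exp\left[\frac{-\left\|\vx-e^{-s}\vx^\prime\right\|^2}{2(1-e^{-2s})}\right]\left(-\frac{\vx-e^{-s}\vx^\prime}{1-e^{-2s}}\right)\der\vx^\prime.
\end{equation*}

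Finally I would divide by $p_{k,s}(\vx)$: the normalizing constant $\bigl(2\pi(1-e^{-2s})\bigr)^{-d/2}$ cancels, and writing $p_{k,0}(\vx^\prime)=\exp\bigl(\log p_{k,0}(\vx^\prime)\bigr)$ shows that the resulting quotient is exactly $\E_{\rvx^\prime\sim q_{k,s}(\cdot|\vx)}\bigl[-\tfrac{\vx-e^{-s}\rvx^\prime}{1-e^{-2s}}\bigr]$, with $q_{k,s}(\vx^\prime|\vx)\propto\exp\bigl(\log p_{k,0}(\vx^\prime)-\tfrac{\|\vx-e^{-s}\vx^\prime\|^2}{2(1-e^{-2s})}\bigr)$, since $p_{k,s}(\vx)$ (up to the cancelled Gaussian constant) is precisely the normalizing constant of $q_{k,s}(\cdot|\vx)$. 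Substituting $s=S-t$ yields the claim. The only genuinely delicate step is the differentiation under the integral sign, and it is routine here thanks to the Gaussian tails of the OU transition kernel; no isoperimetric or tail assumption on $p_*$ is needed for this particular lemma.
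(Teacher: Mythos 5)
Your proposal is correct and follows essentially the same route as the paper's proof: write $p_{k,S-t}$ as the convolution of $p_{k,0}$ with the OU Gaussian transition kernel, differentiate in $\vx$, and divide by $p_{k,S-t}(\vx)$ to recognize the quotient as the expectation under $q_{k,S-t}(\cdot|\vx)$. The only difference is that you explicitly justify the differentiation under the integral sign via domination by the Gaussian tails, a step the paper performs without comment.
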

\begin{proof}
    When the OU process, i.e., SDE~\ref{con_eq:rrds_forward}, is selected as the forward path, for any $k\in\mathbb{N}_{0,K}$ and $t\in[0,S]$, the transition kernel has a closed form, i.e.,
    \begin{equation*}
        p_{k, t|0}(\vx |\vx_0) = \left(2\pi \left(1-e^{-2t}\right)\right)^{-d/2}
     \cdot \exp \left[\frac{-\left\|\vx -e^{-t}\vx_0 \right\|^2}{2\left(1-e^{-2t}\right)}\right], \quad \forall\ 0\le t\le S.
    \end{equation*}
    In this condition, we have
    \begin{equation*}
        \begin{aligned}
        p_{k,S-t}(\vx) = &\int_{\R^d} p_{k,0}(\vx_0) \cdot p_{k,S-t|0}(\vx|\vx_0)\der \vx_0\\
        = & \int_{\R^d} p_{k,0}(\vx_0)\cdot \left(2\pi \left(1-e^{-2(S-t)}\right)\right)^{-d/2}
     \cdot \exp \left[\frac{-\left\|\vx -e^{-(S-t)}\vx_0 \right\|^2}{2\left(1-e^{-2(S-t)}\right)}\right]\der \vx_0
        \end{aligned}
    \end{equation*}
    Plugging this formulation into the following equation
    \begin{equation*}
        \grad_{\vx} \log p_{k, S-t}(\vx) = \frac{\grad p_{k,S-t}(\vx)}{p_{k,S-t}(\vx)},
    \end{equation*}
    we have
    \begin{equation}
        \label{equ:grad_ln_pt}
        \begin{aligned}
            \grad_{\vx} \log p_{k,S-t}(\vx) = &\frac{\grad \int_{\R^d} p_{k,0}(\vx_0)\cdot \left(2\pi \left(1-e^{-2(S-t)}\right)\right)^{-d/2} \cdot \exp \left[\frac{-\left\|\vx -e^{-(S-t)}\vx_0 \right\|^2}{2\left(1-e^{-2(S-t)}\right)}\right]\der \vx_0}{\int_{\R^d} p_{k,0}(\vx_0)\cdot \left(2\pi \left(1-e^{-2(S-t)}\right)\right)^{-d/2} \cdot \exp \left[\frac{-\left\|\vx -e^{-(S-t)}\vx_0 \right\|^2}{2\left(1-e^{-2(S-t)}\right)}\right]\der \vx_0}\\
            = &\frac{\int_{\R^d} p_{k,0}(\vx_0) \cdot \exp\left(\frac{-\left\|\vx - e^{-(S-t)}\vx_0\right\|^2}{2\left(1-e^{-2(S-t)}\right)}\right) \cdot \left(-\frac{\vx - e^{-(S-t) }\vx_0}{\left(1-e^{-2(T-t)}\right)}\right)\der \vx_0}{\int_{\R^d} p_{k,0}(\vx_0)\cdot \exp\left(\frac{-\left\|\vx - e^{-(S-t)}\vx_0\right\|^2}{2\left(1-e^{-2(S-t)}\right)}\right)\der \vx_0}\\
            = & \mathbb{E}_{\rvx_0 \sim q_{k,S-t}(\cdot|\vx)}\left[-\frac{\vx - e^{-(S-t)}\rvx_0}{\left(1-e^{-2(S-t)}\right)}\right]
        \end{aligned}
    \end{equation}
    where the density function $q_{T-t}(\cdot |\vx)$ is defined as
    \begin{equation*}
        \begin{aligned}
            q_{k,S-t}(\vx_0|\vx) = & \frac{p_{k,0}(\vx_0) \cdot \exp\left(\frac{-\left\|\vx - e^{-(S-t)}\vx_0\right\|^2}{2\left(1-e^{-2(S-t)}\right)}\right) }{\int_{\R^d} p_{k,0}(\vx_0) \cdot \exp\left(\frac{-\left\|\vx - e^{-(S-t)}\vx_0\right\|^2}{2\left(1-e^{-2(S-t)}\right)}\right) \der \vx_0}\\ 
            \propto & \exp\left(-f_{k,0}(\vx_0)-\frac{\left\|\vx - e^{-(S-t)}\vx_0\right\|^2}{2\left(1-e^{-2(S-t)}\right)}\right),
        \end{aligned}
    \end{equation*}
    where $p_{k,0}\propto \exp(-f_{k,0})$.
    Hence, the proof is completed.
\end{proof}

\begin{lemma}[Strong log-concavity and L-smoothness of the auxiliary targets]
    \label{lem:sm_con_of_q}
    Under the notation in Section~\ref{sec:not_ass_app}, for any $k\in \mathbb{N}_{0,K-1}$, $r\in\mathbb{N}_{0,R-1}$ and $\vx\in\R^d$, we define the auxiliary target distribution as
    \begin{equation*}
        q_{k,S-r\eta}(\vx^\prime|\vx) \propto \exp\left(\grad\log p_{k,0}(\vx^\prime)-\frac{\left\|\vx - e^{-(S-r\eta)}\vx^\prime\right\|^2}{2\left(1-e^{-2(S-r\eta)}\right)}\right).
    \end{equation*}
    We define 
    \begin{equation*}
        \mu_r\coloneqq \frac{1}{2}\cdot \frac{e^{-2(S-r\eta)}}{1-e^{-2(S-r\eta)}} \quad \mathrm{and}\quad L_r\coloneqq \frac{3}{2}\cdot \frac{e^{-2(S-r\eta)}}{1-e^{-2(S-r\eta)}}.
    \end{equation*}
    Then, we have
    \begin{equation*}
        \mu_r\mI \preceq -\grad^2\log q_{k,S-r\eta}(\vx^\prime|\vx) \preceq L_r\mI
    \end{equation*}
    when the segment length $S$ satisfies $S=\frac{1}{2}\log\left(\frac{2L+1}{2L}\right)$.
\end{lemma}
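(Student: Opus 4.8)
The plan is to compute the Hessian of $-\log q_{k,S-r\eta}(\cdot\,|\vx)$ directly in the variable $\vx^\prime$ and then combine the bound coming from Assumption~\ref{con_ass:lips_score} with the specific choice of the segment length $S$. Abbreviate $\tau \coloneqq S - r\eta$ and $c_\tau \coloneqq e^{-2\tau}/(1-e^{-2\tau})$. From the definition of $q_{k,S-r\eta}$ (cf.\ Lemma~\ref{lem:score_reformul}), up to an additive constant independent of $\vx^\prime$,
\[
-\log q_{k,S-r\eta}(\vx^\prime|\vx) = -\log p_{k,0}(\vx^\prime) + \frac{\left\|\vx - e^{-\tau}\vx^\prime\right\|^2}{2(1-e^{-2\tau})},
\]
so differentiating twice in $\vx^\prime$ gives
\[
-\grad_{\vx^\prime}^2 \log q_{k,S-r\eta}(\vx^\prime|\vx) = -\grad^2\log p_{k,0}(\vx^\prime) + c_\tau\,\mI .
\]

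By Assumption~\ref{con_ass:lips_score} the score $\grad\log p_{k,0}$ is $L$-Lipschitz, hence $-L\mI \preceq -\grad^2\log p_{k,0}(\vx^\prime) \preceq L\mI$, and therefore
\[
(c_\tau - L)\mI \preceq -\grad_{\vx^\prime}^2\log q_{k,S-r\eta}(\vx^\prime|\vx) \preceq (c_\tau + L)\mI .
\]
Since by definition $\mu_r = \tfrac12 c_\tau$ and $L_r = \tfrac32 c_\tau$, it suffices to prove the single inequality $c_\tau \ge 2L$: this simultaneously yields $c_\tau - L \ge \tfrac12 c_\tau = \mu_r$ and $c_\tau + L \le \tfrac32 c_\tau = L_r$.

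To verify $c_\tau \ge 2L$, note that for $r \in \{0,1,\dots,R-1\}$ with $R\eta = S$ we have $\tau = S - r\eta \in (0,S]$, the worst case $\tau = S$ being attained at $r=0$. The function $\tau \mapsto c_\tau = 1/(e^{2\tau}-1)$ is strictly decreasing on $(0,\infty)$, so $\min_{\tau\in(0,S]} c_\tau = c_S = 1/(e^{2S}-1)$. Plugging in $S = \tfrac12\log\frac{2L+1}{2L}$ gives $e^{2S} = \frac{2L+1}{2L}$, hence $e^{2S}-1 = \frac{1}{2L}$ and $c_S = 2L$, so $c_\tau \ge 2L$ for all admissible $\tau$, which completes the argument.

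The calculation is entirely routine; the only points needing a little care are the bookkeeping of the admissible range $\tau = S-r\eta \in (0,S]$ (so that the quadratic coefficient is well defined and the decreasing function $c_\tau$ is evaluated at its worst case $\tau = S$), and the observation that a single scalar inequality $c_\tau \ge 2L$ delivers both the strong-convexity and the smoothness bounds at once. Note that no isoperimetric or tail condition on $p_*$ enters — only the smoothness Assumption~\ref{con_ass:lips_score} is used.
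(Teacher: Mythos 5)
Your proof is correct and follows essentially the same route as the paper: the same Hessian decomposition $-\grad^2\log q = -\grad^2\log p_{k,0} + c_\tau\mI$, the same use of Assumption~\ref{con_ass:lips_score} to bound $\|\grad^2\log p_{k,0}\|\le L$, and the same observation that monotonicity of $c_\tau$ together with the choice of $S$ gives $c_\tau\ge c_S=2L$, which yields both the $\mu_r$ and $L_r$ bounds. Your packaging of the two bounds into the single scalar inequality $c_\tau\ge 2L$ is just a slightly tidier presentation of the paper's argument, not a different method.
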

\begin{proof}
    We begin with the formulation of $\grad^2\log q_{k,S-t}$, i.e.,
    \begin{equation}
        \label{eq:axu_hessian}
        -\grad^2 \log q_{k,S-r\eta}(\vx^\prime|\vx) = -\grad^2 \log p_{k,0}(\vx^\prime) + \frac{e^{-2(S-r\eta)}}{1-e^{-2(S-r\eta)}}\mI.
    \end{equation}
    By supposing $S=\frac{1}{2}\log\left(\frac{2L+1}{2L}\right)$, we have
    \begin{equation*}
        \frac{e^{-2(S-r\eta)}}{1-e^{-2(S-r\eta)}} \ge \frac{e^{-2S}}{1-e^{-2S}} = 2L \ge 2\left\|\grad^2 \log p_{k,0}\right\|.
    \end{equation*}
    Plugging this inequality into Eq~\ref{eq:axu_hessian}, we have
    \begin{equation*}
        \begin{aligned}
            &-\grad^2 p_{k,0}(\vx^\prime) +\frac{e^{-2(S-r\eta)}}{1-e^{-2(S-r\eta)}}\cdot \mI \preceq \left(\left\|\grad^2\log p_{k,0}(\vx^\prime)\right\| + \frac{e^{-2(S-r\eta)}}{1-e^{-2(S-r\eta)}} \right)\cdot \mI\\
            & \preceq \frac{3}{2}\cdot \frac{e^{-2(S-r\eta)}}{1-e^{-2(S-r\eta)}}\cdot \mI = L_r \mI.
        \end{aligned}
    \end{equation*}
    Besides, it has
    \begin{equation*}
        \begin{aligned}
            &-\grad^2 p_{k,0}(\vx^\prime) +\frac{e^{-2(S-r\eta)}}{1-e^{-2(S-r\eta)}}\cdot \mI \succeq \left(-\left\|\grad^2 \log p_{k,0}(\vx^\prime)\right\|+\frac{e^{-2(S-r\eta)}}{1-e^{-2(S-r\eta)}}\right)\cdot \mI\\
            & \succeq \frac{1}{2}\cdot \frac{e^{-2(S-r\eta)}}{1-e^{-2(S-r\eta)}}\cdot \mI = \mu_r \mI.
        \end{aligned}
    \end{equation*}
    Hence, the proof is completed.
\end{proof}

\subsection{Score Estimation Error from Empirical Mean}

\begin{lemma}
    \label{lem:concen_prop_v2}
    With a little abuse of notation, for each $i\in \mathbb{N}_{1,n_{k,r}}$ in Alg~\ref{con_alg:rse}, we denote the underlying distribution of \textbf{output particles} as $\rvx^\prime_{i}\sim q^\prime_{k,S-r\eta}$ and suppose it satisfies LSI with the constant $\mu_r^\prime$.
    Then, for any $\vx\in \R^d$, we have
    \begin{equation*}
        \sP\left[\left\|-\frac{1}{n_{k,r}}\sum_{i=1}^{n_{k,r}} \rvx_i^\prime + \E_{\rvx^\prime \sim q^\prime_{k,S-r\eta}(\cdot|\vx)}\left[\rvx^\prime\right]\right\|\le 2\epsilon^\prime\right] \ge 1-\delta
    \end{equation*}
    by requiring the sample number $n_{k,r}$ to satisfy
    \begin{equation*}
        n_{k,r} \ge \frac{\max\left\{d, -2\log \delta\right\}}{\mu_r^\prime \epsilon^{\prime 2}}.
    \end{equation*}
\end{lemma}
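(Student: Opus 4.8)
The plan is to view $(\rvx_1^\prime,\dots,\rvx_{n_{k,r}}^\prime)$ as a single draw from the product measure $\big(q^\prime_{k,S-r\eta}(\cdot|\vx)\big)^{\otimes n_{k,r}}$ on $\R^{n_{k,r}d}$ and to apply the standard log-Sobolev concentration toolkit to the deviation of the empirical mean. Write $\vm\coloneqq \E_{\rvx^\prime\sim q^\prime_{k,S-r\eta}(\cdot|\vx)}[\rvx^\prime]$ and define $F(\vy_1,\dots,\vy_{n_{k,r}})\coloneqq \big\|\tfrac{1}{n_{k,r}}\sum_{i=1}^{n_{k,r}}\vy_i-\vm\big\|$, so that the quantity to be controlled is exactly $F(\rvx_1^\prime,\dots,\rvx_{n_{k,r}}^\prime)$. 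The first step is to record that the product measure satisfies the log-Sobolev inequality (Definition~\ref{def:lsi}) with the \emph{same} constant $\mu_r^\prime$, by the tensorization property of LSI, and that $F$ is $n_{k,r}^{-1/2}$-Lipschitz with respect to the Euclidean metric on $\R^{n_{k,r}d}$, since $|F(\vy)-F(\vy^\prime)|\le \big\|\tfrac{1}{n_{k,r}}\sum_i(\vy_i-\vy_i^\prime)\big\|\le \tfrac{1}{n_{k,r}}\sum_i\|\vy_i-\vy_i^\prime\|\le n_{k,r}^{-1/2}\|\vy-\vy^\prime\|$ by the triangle and Cauchy--Schwarz inequalities.

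The second step bounds $\E[F]$. Since LSI with constant $\mu_r^\prime$ implies the Poincar\'e inequality (Definition~\ref{def:pi}) with the same constant, applying it to each coordinate map $\vx^\prime\mapsto (\vx^\prime)^{(j)}$ gives $\Var_{q^\prime_{k,S-r\eta}}\big((\rvx^\prime)^{(j)}\big)\le 1/\mu_r^\prime$, hence $\E\|\rvx^\prime-\vm\|^2\le d/\mu_r^\prime$. By independence, $\E[F^2]=\tfrac{1}{n_{k,r}}\E\|\rvx^\prime-\vm\|^2\le \tfrac{d}{n_{k,r}\mu_r^\prime}$, so $\E[F]\le \big(d/(n_{k,r}\mu_r^\prime)\big)^{1/2}$, which is at most $\epsilon^\prime$ as soon as $n_{k,r}\ge d/(\mu_r^\prime\epsilon^{\prime 2})$.

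The third step is the Herbst/Bobkov--G\"otze bound: for an $\ell$-Lipschitz function under a measure satisfying LSI with constant $\mu$, one has $\sP\big[F\ge \E[F]+t\big]\le \exp\big(-\mu t^2/(2\ell^2)\big)$. With $\ell=n_{k,r}^{-1/2}$ and $\mu=\mu_r^\prime$ this reads $\sP\big[F\ge \E[F]+t\big]\le \exp\big(-n_{k,r}\mu_r^\prime t^2/2\big)$. Taking $t=\epsilon^\prime$ and using $\E[F]\le\epsilon^\prime$ from the previous step, we obtain $\sP[F\ge 2\epsilon^\prime]\le \exp\big(-n_{k,r}\mu_r^\prime\epsilon^{\prime 2}/2\big)\le\delta$ whenever $n_{k,r}\ge -2\log\delta/(\mu_r^\prime\epsilon^{\prime 2})$. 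Combining this with the bound from the second step, $n_{k,r}\ge \max\{d,-2\log\delta\}/(\mu_r^\prime\epsilon^{\prime 2})$ suffices for $\sP[F\le 2\epsilon^\prime]\ge 1-\delta$, which is the claim.

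I do not expect a serious obstacle here: the only care needed is in computing the Lipschitz constant of the empirical-mean deviation ($n_{k,r}^{-1/2}$, not $1$) and in invoking the LSI-concentration inequality for the norm functional rather than coordinatewise. A coordinatewise approach (scalar sub-Gaussian concentration plus a union bound over the $d$ coordinates) would also work but would cost an extra $\log d$ factor and not match the stated sample size, so the Lipschitz-functional route above is the one to take.
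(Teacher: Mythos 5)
Your proof is correct, and it arrives at exactly the paper's sample-size requirement; the only real difference from the paper is how the $n_{k,r}$-fold sampling is handled. The paper passes to the law of the \emph{sum} $\sum_{i}\rvx_i^\prime$ on $\R^d$, invokes stability of LSI under convolution (its Lemma~\ref{lem:cor31_chafai2004entropies}, i.e., Corollary 3.1 of \cite{chafai2004entropies}) to get the degraded constant $\mu_r^\prime/n_{k,r}$, and then applies the $1$-Lipschitz concentration bound (Lemma~\ref{lem:lsi_concentration}) to $F(\vx)=\|\vx-n_{k,r}\vb^\prime\|$ with deviation $t=n_{k,r}\epsilon^\prime$; you instead stay on the product space $\R^{n_{k,r}d}$, use tensorization of LSI (constant unchanged) and absorb the averaging into the Lipschitz constant $n_{k,r}^{-1/2}$ of the empirical-mean deviation. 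These are two standard, equivalent bookkeepings of the same concentration phenomenon: both yield the exponent $n_{k,r}\mu_r^\prime\epsilon^{\prime 2}/2$, and your bound $\E[F]\le\sqrt{d/(n_{k,r}\mu_r^\prime)}$ via Jensen plus the Poincar\'e variance bound is precisely the paper's estimate of $\sigma^\prime$ via H\"older and Lemma~\ref{lem:lsi_var_bound}. The only cosmetic caveat is that tensorization of LSI is not among the paper's stated lemmas (the convolution result is what it cites), so if you wanted to match the paper's toolkit you would either cite tensorization explicitly or switch to the convolution formulation; your Lipschitz computation ($n_{k,r}^{-1/2}$ by triangle inequality plus Cauchy--Schwarz) and the rescaled Herbst bound for $\ell$-Lipschitz functions are both correct as written.
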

\begin{proof}
    For any $\vx\in \R^d$, we set
    \begin{equation*}
        \vb^\prime \coloneqq \E_{q^\prime_{k,S-r\eta}(\cdot|\vx)}\left[\rvx^\prime\right]\quad \text{and}\quad  \sigma^\prime\coloneqq \E_{\left\{\rvx^\prime_i\right\}_{i=1}^{n_{k,r}}\sim q^{\prime(n_{k,r})}_{k,S-r\eta}(\cdot|\vx)  }\left[\left\|\sum_{i=1}^{n_{k,r}}\rvx^\prime_i - \E\left[\sum_{i=1}^{n_{k,r}}\rvx^\prime_i\right]\right\|\right].
    \end{equation*}
    We begin with the following probability
    \begin{equation}
        \begin{aligned}
            &\sP_{\left\{\rvx^\prime_i\right\}_{i=1}^{n_{k,r}}\sim q^{\prime(n_{k,r})}_{k,S-r\eta}(\cdot|\vx)  }\left[\left\|-\frac{1}{n_{k,r}}\sum_{i=1}^{n_{k,r}} \rvx_i^\prime + \E_{\rvx^\prime \sim q^\prime_{k,S-r\eta}(\cdot|\vx)}\left[\rvx^\prime\right]\right\|^2\ge \left(\frac{\sigma^\prime}{n_{k,r}} + \epsilon^\prime\right)^2 \right]\\
            = &  \sP_{\left\{\rvx^\prime_i\right\}_{i=1}^{n_{k,r}}\sim q^{\prime(n_{k,r})}_{k,S-r\eta}(\cdot|\vx)  }\left[\left\|\sum_{i=1}^{n_{k,r}}\rvx^\prime_i - n_{k,r}\vb^\prime\right\| \ge \sigma^\prime + n_{k,r}\epsilon^\prime\right]
        \end{aligned}
    \end{equation}
    To lower bound this probability, we expect to utilize Lemma~\ref{lem:lsi_concentration} which requires the following two conditions:
    \begin{itemize}
        \item The distribution of $\sum_{i=1}^{n_{k,r}}\rvx^\prime_i$ satisfies LSI, and its LSI constant can be obtained.
        \item The formulation $\left\|\sum_{i=1}^{n_{k,r}}\rvx^\prime_i - n_{k,r}\vb^\prime\right\| \ge \sigma^\prime + n_{k,r}\epsilon^\prime$ can be presented as $F\ge \E[F]+\mathrm{bias}$ where $F$ is a $1$-Lipschitz function.
    \end{itemize}
    For the first condition, by employing Lemma~\ref{lem:cor31_chafai2004entropies}, we have that the LSI constant of 
    \begin{equation*}
        \sum_{i=1}^{n_{k,r}}\rvx_i^\prime \sim \underbrace{q^\prime_{k,S-r\eta}(\cdot|\vx) \ast q^\prime_{k,S-r\eta}(\cdot|\vx) \cdots \ast q^\prime_{k,S-r\eta}(\cdot|\vx)}_{n_{k,r}}
    \end{equation*}
    is $\mu^\prime_r/n_{k,r}$.
    For the second condition, we set the function $F\left( \vx \right) = \left\| \vx - n_{k,r}\vb^\prime\right\| \colon \R^{d}\rightarrow \R$ is $1$-Lipschitz because
    \begin{equation*}
        \left\|F\right\|_{\mathrm{Lip}}=\sup_{\vx \not= \vy} \frac{\left|F(\vx)-F(\vy)\right|}{\left\|\vx - \vy\right\|}=\sup_{\vx\not = \vy}\frac{\left|\left\| \vx \right\| - \left\| \vy \right\|\right|}{\left\| \left(\vx - \vy \right)\right\|} = 1.
    \end{equation*}
    Besides, we have 
    \begin{equation*}
        F\left(\sum_{i=1}^{n_{k,r}}\rvx_i^\prime\right) = \left\|\sum_{i=1}^{n_{k,r}}\rvx^\prime_i - n_{k,r}\vb^\prime\right\|\quad \mathrm{and}\quad \E\left[F\left(\sum_{i=1}^{n_{k,r}}\rvx_i^\prime\right)\right] = \sigma^\prime
    \end{equation*}
    where the second equation follows from the definition of $\sigma^\prime$.
    Therefore, with Lemma~\ref{lem:lsi_concentration}, we have 
    \begin{equation}
        \label{ineq:prod_concentration}
        \sP_{\left\{\rvx_i^\prime\right\}_{i=1}^{n_{k,r}} \sim q^{\prime(n_{k,r})}_{k,S-r\eta}(\cdot|\vx) }\left[\left\|\sum_{i=1}^{n_{k,r}} \rvx_i^\prime- n_{k,r}\vb^\prime \right\|\ge \sigma^\prime + n_{k,r}\epsilon^\prime\right]\le \exp\left(- \frac{\mu^\prime_r \epsilon^{\prime 2} n_{k,r}}{2}\right).
    \end{equation}
    
    Then, we consider the range of $\sigma^\prime$ and have
    \begin{equation}
        \label{ineq:prod_var_bound}
        \begin{aligned}
            \sigma^\prime =  & n_{k,r}\cdot \mathbb{E}_{\left\{\rvx_i^\prime\right\}_{i=1}^{n_{k,r}} \sim q^{\prime(n_{k,r})}_{k,S-r\eta}(\cdot|\vx)}\left\| \frac{1}{n_{k,r}}\sum_{i=1}^{n_{k,r}} \rvx_i^\prime - \vb^\prime\right\|\\
            \le & n_{k,r}\cdot \sqrt{\mathrm{var}\left(\frac{1}{n_{k,r}}\sum_{i=1}^{n_{k,r}} \rvx_i^\prime\right)}= \sqrt{n_{k,r} \mathrm{var}\left(\rvx_i^\prime\right)} \le \sqrt{\frac{n_{k,r} d}{\mu^\prime_r}},
        \end{aligned}
    \end{equation}
    the first inequality follows from Holder's inequality and the last follows from Lemma~\ref{lem:lsi_var_bound}.
    Combining Eq~\ref{ineq:prod_concentration} and Eq~\ref{ineq:prod_var_bound}, it has
    \begin{equation*}
        \small
        \sP_{\left\{\rvx^\prime_i\right\}_{i=1}^{n_{k,r}}\sim q^{\prime(n_{k,r})}_{k,S-r\eta}(\cdot|\vx)  }\left[\left\|-\frac{1}{n_{k,r}}\sum_{i=1}^{n_{k,r}} \rvx_i^\prime + \E_{\rvx^\prime \sim q^\prime_{k,S-r\eta}(\cdot|\vx)}\left[\rvx^\prime\right]\right\|^2\ge \left(\sqrt{\frac{d}{\mu_r^\prime n_{k,r}}} + \epsilon^\prime\right)^2 \right] \le \exp\left(- \frac{\mu^\prime_r \epsilon^{\prime 2} n_{k,r}}{2}\right).
    \end{equation*}
    By requiring 
    \begin{equation}
        \label{ineq:sample_n_req}
        \frac{d}{\mu_r^\prime n_{k,r}} \le \epsilon^{\prime 2} \quad\text{and}\quad -\frac{\mu_r^\prime \epsilon^{\prime 2} n_{k,r}}{2}\le \log \delta,
    \end{equation}
    we have
    \begin{equation*}
        \begin{aligned}
            &\sP\left[\left\|-\frac{1}{n_{k,r}}\sum_{i=1}^{n_{k,r}} \rvx_i^\prime + \E_{\rvx^\prime \sim q^\prime_{k,S-r\eta}(\cdot|\vx)}\left[\rvx^\prime\right]\right\|\le 2\epsilon^\prime\right]\\
            & =  1- \sP\left[\left\|-\frac{1}{n_{k,r}}\sum_{i=1}^{n_{k,r}} \rvx_i^\prime + \E_{\rvx^\prime \sim q^\prime_{k,S-r\eta}(\cdot|\vx)}\left[\rvx^\prime\right]\right\|\ge 2\epsilon^\prime\right] \ge 1- \delta.
        \end{aligned}
    \end{equation*}
    Noted that Eq.~\ref{ineq:sample_n_req} implies the sample number $n_{k,r}$ should satisfy
    \begin{equation*}
        n_{k,r}\ge \frac{d}{\mu_r^\prime\epsilon^{\prime 2}}\quad\text{and}\quad n_{k,r}\ge \frac{2\log \delta^{-1}}{\mu_r^\prime \epsilon^{\prime 2}}.
    \end{equation*}
    Hence, the proof is completed.
\end{proof}

\subsection{Score Estimation Error from Mean Gap}

\begin{lemma}
    \label{lem:inner_err_conv_v2}
    For any given $(k,r,\vx)$ in Alg~\ref{con_alg:rse}, suppose the distribution $q_{k,S-r\eta}(\cdot|\vx)$ satisfies
    \begin{equation*}
        \mu_r\mI \preceq -\grad^2\log q_{k,S-r\eta}(\cdot|\vx) \preceq L_r\mI,
    \end{equation*}
    and $\rvx_{j}^\prime \sim q_j^\prime(\cdot |\vx)$ corresponds to Line~\ref{con_alg:rse_inner_update} of Alg~\ref{con_alg:rse}.
    If $0<\tau_r \le \mu_r/(8L^2_r)$, we have
    \begin{equation*}
        \KL{q^\prime_{j+1}(\cdot|\vx)}{q_{k,S-r\eta}(\cdot|\vx)}\le e^{-\mu_r\tau_r}\KL{q^\prime_j(\cdot|\vx)}{q_{k,S-r\eta}(\cdot|\vx)} + 28L_r^2 d\tau_r^2
    \end{equation*}
    when the score estimation satisfies $\left\|\grad\log p_{k,0} -\vv^\prime\right\|_\infty \le L_r\sqrt{2d\tau_r}$.
\end{lemma}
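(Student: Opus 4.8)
Fix $(k,r,\vx)$ and abbreviate $\pi:=q_{k,S-r\eta}(\cdot\mid\vx)$, the target of the inner ULA loop, with $t':=S-r\eta$ (the case $r=0$, $t'=S$, is identical). By Lemma~\ref{lem:score_reformul} the true Langevin drift of $\pi$ is $b(\vy):=\grad_{\vy}\log\pi(\vy)=\grad\log p_{k,0}(\vy)+\frac{e^{-t'}\vx-e^{-2t'}\vy}{1-e^{-2t'}}$, whereas Line~\ref{con_alg:rse_inner_update} of Alg~\ref{con_alg:rse} uses $\hat b(\vy):=\vv'(\vy)+\frac{e^{-t'}\vx-e^{-2t'}\vy}{1-e^{-2t'}}$. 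The Gaussian pieces cancel, so $\hat b-b=\vv'-\grad\log p_{k,0}$, and the hypothesis $\|\grad\log p_{k,0}-\vv'\|_\infty\le L_r\sqrt{2d\tau_r}$ (read as a uniform bound on the Euclidean gradient error) gives $\|\hat b(\vy)-b(\vy)\|^2\le 2L_r^2 d\tau_r=:\delta_0^2$ for every $\vy$. Since $\pi$ is $\mu_r$-strongly log-concave and $L_r$-log-smooth, it satisfies the log-Sobolev inequality with constant $\mu_r$ (Bakry--\'Emery; Lemma~\ref{lem:strongly_lsi}) and Talagrand's transportation inequality $W_2^2(\,\cdot\,,\pi)\le\frac{2}{\mu_r}\KL{\,\cdot\,}{\pi}$.

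\textbf{Interpolation and entropy dissipation.} The plan is to use the standard one-step interpolation: for $t\in[0,\tau_r]$ let $\der\rvy_t=\hat b(\rvy_0)\,\der t+\sqrt2\,\der B_t$ with $\rvy_0\sim q'_j$, so that $\rvy_{\tau_r}\sim q'_{j+1}$; write $\rho_t:=\mathrm{Law}(\rvy_t)$ and $\rho_{0,t}$ for the joint law of $(\rvy_0,\rvy_t)$. From the Fokker--Planck equation $\partial_t\rho_t=\grad\cdot\big(\rho_t(\grad\log\rho_t-\EE[\hat b(\rvy_0)\mid\rvy_t=\cdot\,])\big)$, integration by parts, and $\grad\log\rho_t=\grad\log(\rho_t/\pi)+b$, one gets
\begin{equation*}
    \frac{\der}{\der t}\KL{\rho_t}{\pi}=-\EE_{\rho_t}\Big\|\grad\log\tfrac{\rho_t}{\pi}\Big\|^2-\EE_{\rho_{0,t}}\Big\langle\grad\log\tfrac{\rho_t}{\pi}(\rvy_t),\,b(\rvy_t)-\hat b(\rvy_0)\Big\rangle .
\end{equation*}
Bounding the inner product by Young's inequality with weight $\tfrac14$ on the Fisher term, and using LSI$(\mu_r)$ in the form $\EE_{\rho_t}\|\grad\log(\rho_t/\pi)\|^2\ge 2\mu_r\KL{\rho_t}{\pi}$, I obtain $\frac{\der}{\der t}\KL{\rho_t}{\pi}\le-\tfrac32\mu_r\KL{\rho_t}{\pi}+\EE_{\rho_{0,t}}\|b(\rvy_t)-\hat b(\rvy_0)\|^2$.

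\textbf{Bounding the one-step error and closing.} By the triangle inequality, $L_r$-smoothness of $\log\pi$, and $\|\hat b-b\|^2\le\delta_0^2$, the error term is at most $2L_r^2\,\EE\|\rvy_t-\rvy_0\|^2+2\delta_0^2$. Since $\rvy_t-\rvy_0=t\,\hat b(\rvy_0)+\sqrt2\,B_t$ with $B_t$ independent of $\rvy_0$, we have $\EE\|\rvy_t-\rvy_0\|^2=t^2\,\EE\|\hat b(\rvy_0)\|^2+2dt$ and $\EE\|\hat b(\rvy_0)\|^2\le 2\,\EE_{q'_j}\|b\|^2+2\delta_0^2$. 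The moment $\EE_{q'_j}\|\grad\log\pi\|^2$ I would control via an optimal $W_2$ coupling of $q'_j$ with $\pi$: $\EE_{q'_j}\|\grad\log\pi\|^2\le 2\,\EE_{\pi}\|\grad\log\pi\|^2+2L_r^2 W_2^2(q'_j,\pi)\le 2L_r d+\tfrac{4L_r^2}{\mu_r}\KL{q'_j}{\pi}$, using Lemma~\ref{lem:lem11_vempala2019rapid} and Talagrand. Substituting and applying Gr\"onwall,
\begin{equation*}
    \KL{q'_{j+1}}{\pi}\le e^{-\frac32\mu_r\tau_r}\KL{q'_j}{\pi}+\int_0^{\tau_r}\EE\|b(\rvy_s)-\hat b(\rvy_0)\|^2\,\der s ,
\end{equation*}
and one checks, using $\tau_r\le\mu_r/(8L_r^2)$ (equivalently $L_r\tau_r\le\tfrac{1}{24}$, $\mu_r\tau_r\le\tfrac{1}{72}$) to simplify powers of $\tau_r$, that the integral splits into (i) a part proportional to $\KL{q'_j}{\pi}$ with prefactor of order $L_r^4\tau_r^3/\mu_r$, which is $\le e^{-\mu_r\tau_r}-e^{-\frac32\mu_r\tau_r}$ and can therefore be folded into the contraction term, and (ii) the remaining ``clean'' terms, each a multiple of $L_r^2 d\tau_r^2$ (coming from $\int 4L_r^2 d s\,\der s$, from $2\delta_0^2\tau_r$, and from the $L_r d$ part of $\EE\|\hat b(\rvy_0)\|^2$), which sum to at most $28L_r^2 d\tau_r^2$. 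This yields the stated recursion.

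\textbf{Main obstacle.} The machinery above is routine; the crux is the \emph{constant accounting}. One must verify that the $\KL{q'_j}{\pi}$-proportional piece of the discretization error is genuinely dominated by the gap between the raw contraction rate $e^{-\frac32\mu_r\tau_r}$ and the advertised rate $e^{-\mu_r\tau_r}$ --- this is precisely what the step-size restriction $\tau_r\le\mu_r/(8L_r^2)$ is calibrated for --- and that all dimension-dependent terms, in particular the $O(L_r^2 d\tau_r)$ contribution of the inexact gradient, fit under the constant $28$. A secondary, purely technical point is justifying the entropy-dissipation identity (finiteness of the integrals), which is immediate here since one ULA step convolves with a nondegenerate Gaussian and the potential of $\pi$ is strongly convex.
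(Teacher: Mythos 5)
Your proposal is correct and follows essentially the same route as the paper's proof: the same one-step interpolation and Fokker--Planck/entropy-dissipation identity, the same Young-inequality split retaining $\tfrac34$ of the Fisher information (hence the $-\tfrac32\mu_r$ rate), the same error decomposition with the drift-movement term controlled via $\E_{q'_j}\|\grad\log\pi\|^2\le 2L_rd+\tfrac{4L_r^2}{\mu_r}\KL{q'_j}{\pi}$ (which is exactly Lemma~\ref{lem:rapidlem12}, which you re-derive via Talagrand), and the same Gr\"onwall step with the $\KL{q'_j}{\pi}$-proportional piece absorbed into the gap between $e^{-3\mu_r\tau_r/2}$ and $e^{-\mu_r\tau_r}$ using $\tau_r\le\mu_r/(8L_r^2)$. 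The constant accounting you leave as "one checks" matches the paper's explicit tally ($14L_r^2d\tau_r$ in the differential inequality, yielding $28L_r^2d\tau_r^2$ after integration), so there is no gap.
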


\begin{proof}
    Suppose the loop in Line~\ref{con_alg:rse_inner_loops} of Alg~\ref{con_alg:rse} aims to draw a sample from the target distribution $q_{k,S-r\eta}(\cdot|\vx)$ satisfying
    \begin{equation*}
        q_{k,S-r\eta}(\vx^\prime|\vx) \propto \exp(-g_{k,r}(\vx^\prime)) \coloneqq \exp\left(-f_{k,0}(\vx^\prime)-\frac{\left\|\vx-e^{-(S-r\eta)}\vx^\prime\right\|^2}{2(1-e^{-2(S-r\eta)})}\right).
    \end{equation*}
    The score function of the target, i.e., $\grad g_{k,r}(\vx^\prime)$, satisfies
    \begin{equation*}
        \grad g_{k,r}(\vx^\prime) = \grad f_{k,0}(\vx^\prime) + \frac{-e^{-(S-r\eta)}\vx + e^{-2(S-r\eta)}\vx^\prime}{1-e^{-2(S-r\eta)}}.
    \end{equation*}
    At the $j$-th iteration corresponding to Line~\ref{con_alg:rse_inner_update} in Alg~\ref{con_alg:rse}.
    The previous score is approximated by
    \begin{equation*}
        \grad g^\prime(\vx^\prime)=\vv^\prime(\vx^\prime) + \frac{-e^{-(S-r\eta)}\vx + e^{-2(S-r\eta)}\vx^\prime}{1-e^{-2(S-r\eta)}}.
    \end{equation*}
    where $\vv^\prime(\cdot)$ is used to approximate $\grad\log p_{k,0}(\cdot)$ by calling Alg~\ref{con_alg:rse} recursively.
    Suppose $\rvx^\prime_{j}=\vz_0$, the $j$-th iteration is equivalent to the following SDE
    \begin{equation*}
    \begin{aligned}
        &\der \rvz_t = -\grad g^\prime(\vz_0)\der t + \sqrt{2}\der B_t,
    \end{aligned}
    \end{equation*}
    we denote the underlying distribution of $\rvz_t$ as $q_t$.
    Similarly, we set $q_{0t}$ as the joint distribution of $(\rvz_0, \rvz_t)$, and have
    \begin{equation*}
        q_{0t}(\vz_0,\vz_t) = q_0(\vz_0)\cdot q_{t|0}(\vz_t|\vz_0).
    \end{equation*}
    According to the Fokker-Planck equation, we have
    \begin{equation*}
        \partial_t q_{t|0}(\vz_t|\vz_0)= \grad\cdot\left(q_{t|0}(\vz_t|\vz_0)\cdot \grad g^\prime(\vz_0)\right) + \Delta q_{t|0}(\vz_t|\vz_0)
    \end{equation*}
    In this condition, we have
    \begin{equation*}
        \begin{aligned}
            \partial_t q_{t}(\vz_t) = &\int \frac{\partial q_{t|0}(\vz_t|\vz_0)}{\partial t} \cdot q_0(\vz_0)\der\vz_0\\
            = & \int \left[\grad \cdot \left(q_{t|0}(\vz_t|\vz_0)\cdot \grad g^\prime(\vz_0)\right) +\Delta q_{t|0}(\vz_t|\vz_0)\right]\cdot q_0(\vz_0)\der\vz_0\\
            = & \grad\cdot \left(q_t(\vz_t)\int q_{0|t}(\vz_0|\vz_t) \grad g^\prime(\vz_0)\der\vz_0\right)+ \Delta q_t(\vz_t).
        \end{aligned}
    \end{equation*}
    For abbreviation, we suppose
    \begin{equation*}
        q_*(\cdot)\coloneqq q_{k,S-r\eta}(\cdot|\vx)\quad \text{and}\quad g_*\coloneqq g_{k,r}.
    \end{equation*}
    With these notations, the dynamic of the KL divergence between $q_t$ and $q_*$ is
    \begin{equation}
        \label{ineq:kl_descent}
        \begin{aligned}
            &\partial_t\KL{q_t}{q_*} = \int \partial_t q_t(\vz_t)\log \frac{q_t(\vz_t)}{q_*(\vz_t)}\der\vz_t\\
            = & \int \grad\cdot \left[q_t(\vz_t)\left(\int q_{0|t}(\vz_0|\vz_t)\grad g^\prime(\vz_0)\der\vz_0 + \grad\log q_t(\vz_t)\right)\right]\cdot \log \frac{q_t(\vz_t)}{q_*(\vz_t)}\der\vz_t\\
            = & -\int q_t(\vz_t)\left(\left\|\grad\log \frac{q_t(\vz_t)}{q_*(\vz_t)}\right\|^2+\left<\int q_{0|t}(\vz_0|\vz_t)\grad g^\prime(\vz_0)\der\vz_0+\grad\log q_*(\vz_t),\grad\log\frac{q_t(\vz_t)}{q_*(\vz_t)}\right>\right) \der\vz_t\\
            = & -\int q_t(\vz_t)\left\|\grad \log \frac{q_t(\vz_t)}{q_*(\vz_t)}\right\|^2 \der\vz_t + \int q_{0t}(\vz_0, \vz_t)\left<\grad g^\prime(\vz_0)-\grad g_*(\vz_t),\grad \log \frac{q_t(\vz_t)}{q_*(\vz_t)}\right>\der (\vz_0, \vz_t)\\
            \le & -\frac{3}{4}\int q_t(\vz_t)\left\|\grad\log \frac{q_t(\vz_t)}{q_*(\vz_t)}\right\|^2 \der \vz_t + \int q_{0t}(\vz_0,\vz_t)\left\|\grad g^\prime(\vz_0) - \grad g_*(\vz_t)\right\|^2 \der (\vz_0,\vz_t)\\
            \le & -\frac{3}{4}\int q_t(\vz_t)\left\|\grad\log \frac{q_t(\vz_t)}{q_*(\vz_t)}\right\|^2 + 2\int q_{0t}(\vz_0,\vz_t)\left\|\grad g^\prime(\vz_0) - \grad g_*(\vz_0)\right\|^2 \der (\vz_0,\vz_t)\\
            & + 2\int q_{0t}(\vz_0,\vz_t)\left\|\grad g_*(\vz_0) - \grad g_*(\vz_t)\right\|^2 \der (\vz_0,\vz_t).
        \end{aligned}
    \end{equation}
    \paragraph{Upper bound the first term in Eq~\ref{ineq:kl_descent}.} The target distribution $q_*$ satisfies $\mu_r$-strong convexity, i.e.,
    \begin{equation*}
        \mu_r\mI \preceq  -\grad^2 \log q_{k, S-r\eta}(\vx^\prime|\vx)=-\grad^2 \log(q_*(\vx^\prime)),
    \end{equation*}
    It means $q_*$ satisfies LSI with the constant $\mu_r$ due to Lemma~\ref{lem:strongly_lsi}.
    Hence, we have
    \begin{equation}
        \label{ineq:eq17_term1_upb}
        -\frac{3}{4}\int q_t(\vz_t)\left\|\grad\log \frac{q_t(\vz_t)}{q_*(\vz_t)}\right\|^2\le -\frac{3\mu_r}{2}\KL{q_t}{q_*}.
    \end{equation}

    \paragraph{Upper bound the second term in Eq~\ref{ineq:kl_descent}.} We assume that there is a uniform upper bound $\epsilon_g$ satisfying 
    \begin{equation}
        \label{ineq:eq17_term2_upb}
        \left\|\grad g^\prime(\vz)-\grad g_*(\vz)\right\| \le \epsilon_g\quad \Rightarrow\quad \int q_{0t}(\vz_0,\vz_t)\left\|\grad g^\prime(\vz_0) - \grad g_*(\vz_0)\right\|^2 \der (\vz_0,\vz_t)\le \epsilon_g^2.
    \end{equation}

    \paragraph{Upper bound the third term in Eq~\ref{ineq:kl_descent}.} Due to the monotonicity of $e^{-t}/(1-e^{-t})$, we have
    \begin{equation*}
      2L \le \frac{e^{-2(S-r\eta)}}{1-e^{-2(S-r\eta)}} \le \frac{e^{-2\eta}}{1-e^{-2\eta}}\le \eta^{-1}
    \end{equation*}
    where we suppose $\eta\le 1/2$ without loss of the generality to establish the last inequality.
    Hence, the target distribution $q_*$ satisfies
    \begin{equation*}
        \begin{aligned}
            &-\grad^2 \log q_* = -\grad^2 \log q_{k, S-r\eta}(\cdot |\vx)=-\grad^2\log p_{k,0} + \frac{e^{-2(S-r\eta)}}{1-e^{-2(S-r\eta)}} \\
            \preceq & \left\|\grad^2\log p_{k,0}\right\|\mI + \frac{e^{-2(S-r\eta)}}{1-e^{-2(S-r\eta)}}\mI \coloneqq L_r \mI  \preceq (L+ \eta^{-1})\mI,
        \end{aligned}
    \end{equation*}
    where the last inequality follows from Assumption~\ref{con_ass:lips_score}.
    This result implies the smoothness of $q_*$, and we have
    \begin{equation}
        \label{ineq:eq17_term3_upb}
        \begin{aligned}
            & \int q_{0t}(\vz_0,\vz_t)\left\|\grad g_*(\vz_0) - \grad g_*(\vz_t)\right\|^2 \der (\vz_0,\vz_t)\\
            \le & L_r^2 \int q_{0t}(\vz_0, \vz_t)\left\|\vz_t - \vz_0\right\|^2 \der (\vz_0,\vz_t)= L_r^2 \cdot \E_{q_{0t}}\left[\left\|-t \grad g^\prime(\vz_0)+\sqrt{2t}\xi\right\|^2\right]\\
            = & L^2_r\cdot \left(2td +t^2\E_{q_{0}}\left\|\grad g^\prime(\vz_0)-\grad g_*(\vz_0)+\grad g_*(\vz_0)\right\|^2\right) \\
            \le & 2L_r^2 \cdot \left(td + t^2\epsilon_g^2 + t^2\E_{q_0}\left\|\grad g_*(\vz_0)\right\|^2\right)\\
            \le & 2L_r^2 dt +2L_r^2\epsilon_g^2 t^2+4L_r^3 dt^2 + \frac{8L_r^4 t^2}{\mu_r}\KL{q_0}{q_*},
        \end{aligned}
    \end{equation}
    where the last inequality follows from Lemma~\ref{lem:rapidlem12}.
    
    Hence, Combining Eq~\ref{ineq:kl_descent}, Eq~\ref{ineq:eq17_term1_upb}, Eq~\ref{ineq:eq17_term2_upb}, Eq~\ref{ineq:eq17_term3_upb} with $t\le \tau_r \le 1/(2L_r)$ and $\epsilon^2_g \le 2L_r^2 d\tau_r$, we have
    \begin{equation*}
        \begin{aligned}
            \partial_t\KL{q_t}{q_*}
            \le & -\frac{3\mu_r}{2}\KL{q_t}{q_*} + 2\epsilon_g^2 + \frac{16L_r^4 t^2}{\mu_r}\KL{q_0}{q_*}+ 4L_r^2 dt + 4L_r^2 \epsilon_g^2 t^2 + 8L_r^3 dt^2\\
            \le & -\frac{3\mu_r}{2}\KL{q_t}{q_*} + 4L_r^2 d\tau_r+\frac{16L_r^4 \tau_r^2}{\mu_r}\KL{q_0}{q_*}+ 4L_r^2d\tau_r + 8L_r^4 d\tau_r^3 + 8L_r^3 d \tau_r^2\\
            \le  &-\frac{3\mu_r}{2}\KL{q_t}{q_*}+\frac{16L_r^4 t^2}{\mu_r}\KL{q_0}{q_*}+14L_r^2d\tau_r. 
        \end{aligned}
    \end{equation*}
    Multiplying both sides by $\exp(\frac{3\mu_r t}{2})$, then the previous inequality can be written as
    \begin{equation*}
        \frac{\der}{\der t}\left(e^{\frac{3\mu_r t}{2}}\KL{q_t}{q_*}\right)\le e^{\frac{3\mu_r t}{2}}\cdot \left(\frac{16L_r^4 \tau_r^2}{\mu_r}\KL{q_0}{q_*} + 14L_r^2 d\tau_r\right).
    \end{equation*}
    Integrating from $t=0$ to $t=\tau_r$, we have
    \begin{equation*}
        \begin{aligned}
            e^{\frac{3\mu_r \tau_r}{2}}\KL{q_t}{q_*} - \KL{q_0}{q_*}\le & \frac{2}{3\mu_r}\cdot \left(e^{\frac{3\mu_r \tau_r}{2}}-1\right)\cdot \left(\frac{16L_r^4 \tau_r^2}{\mu_r}\KL{q_0}{q_*} + 14L_r^2 d\tau_r\right)\\
            \le & 2\tau_r\cdot \left(\frac{16L_r^4 \tau_r^2}{\mu_r}\KL{q_0}{q_*} + 14L_r^2 d\tau_r\right)
        \end{aligned}
    \end{equation*}
    where the last inequality establishes due to the fact $e^c \le 1+2c$ when $0<c\le \frac{3}{2}\cdot \mu_r \tau_r\le 1$.
    It means we have
    \begin{equation*}
        \begin{aligned}
            \KL{q_t}{q_*}\le e^{-\frac{3\mu_r \tau_r}{2}}\cdot \left(1+\frac{32 L_r^4 \tau_r^3}{\mu_r}\right)\KL{q_0}{q_*} + e^{-\frac{3\mu_r \tau_r}{2}}\cdot 28L_r^2 d\tau_r^2.
        \end{aligned}
    \end{equation*}
    By requiring $0<\tau_r\le \mu_r/(8L_r^2)$, we have
    \begin{equation*}
        1+\frac{32 L_r^4 \tau_r^3}{\mu_r} \le 1+\frac{\mu_r \tau_r}{2}\le e^{\frac{\mu_r\tau_r}{2}}\quad \text{and}\quad e^{-\frac{3\mu_r \tau_r}{2}}\le 1.
    \end{equation*}
    Hence, there is
    \begin{equation}
        \label{ineq:suff_des_ld_v2}
        \KL{q_t}{q_*}\le e^{-\mu_r\tau_r}\KL{q_0}{q_*}  + 28L_r^2 d\tau_r^2,
    \end{equation}
    and the proof is completed.
\end{proof}

\begin{lemma}
    \label{lem:inner_initialization}
    In Alg~\ref{con_alg:rse}, suppose the input is $(k,r,\vx,\epsilon)$ and $k>0$, if we choose the initial distribution of the inner loop to be 
    \begin{equation*}
        q^\prime_0(\vx^\prime)\propto \exp\left(-\frac{\left\|\vx-e^{-(S-r\eta)}\vx^\prime\right\|^2}{2(1-e^{-2(S-r\eta)})}\right),
    \end{equation*}
    then suppose $q_{k,S-r\eta}(\cdot|\vx)$ satisfies LSI with the constant $\mu_r$ and $L_r$ smoothness.
    Their KL divergence can be upper-bounded as
    \begin{equation*}
        \log \KL{q_0^\prime(\cdot)}{q_{k,S-r\eta}(\cdot|\vx)} \le  \log \|\vx\|^2 + \log \left[\frac{L_r^2 M}{\mu_r^2}\cdot \frac{de^{S}}{1-e^{-2S}}\right] + \frac{Me^{-S}}{1-e^{-2S}}.
    \end{equation*}
\end{lemma}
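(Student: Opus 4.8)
The starting point is that $q^\prime_0$ and $q_{k,S-r\eta}(\cdot|\vx)$ carry \emph{exactly} the same Gaussian factor $\exp\bigl(-\|\vx-e^{-(S-r\eta)}\vx^\prime\|^2/(2(1-e^{-2(S-r\eta)}))\bigr)$, and differ only through the extra factor $p_{k,0}(\vx^\prime)$ in $q_{k,S-r\eta}$; hence $\grad\log q^\prime_0(\vx^\prime)-\grad\log q_{k,S-r\eta}(\vx^\prime|\vx)=-\grad\log p_{k,0}(\vx^\prime)$. I would first invoke the log-Sobolev inequality of $q_{k,S-r\eta}(\cdot|\vx)$ (equivalently, apply the standard Fisher-information bound, Lemma~\ref{lem:strongly_lsi}, to the $\mu_r$-strongly log-concave density of Lemma~\ref{lem:sm_con_of_q}) to get
\[
   \KL{q^\prime_0}{q_{k,S-r\eta}(\cdot|\vx)}\ \le\ \frac{1}{2\mu_r}\,\E_{\vx^\prime\sim q^\prime_0}\Bigl[\bigl\|\grad\log p_{k,0}(\vx^\prime)\bigr\|^2\Bigr].
\]
Here $q^\prime_0$ is the explicit Gaussian $\cN\bigl(e^{(S-r\eta)}\vx,\,(2\mu_r)^{-1}\mI\bigr)$, since $-\grad^2\log q^\prime_0=\tfrac{e^{-2(S-r\eta)}}{1-e^{-2(S-r\eta)}}\mI=2\mu_r\mI$, so from now on every expectation is over a known Gaussian.

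\textbf{Controlling $\grad\log p_{k,0}$ at a reference point.} By Assumption~\ref{con_ass:lips_score} the score $\grad\log p_{k,0}$ is $L$-Lipschitz, so $\|\grad\log p_{k,0}(\vx^\prime)\|\le\|\grad\log p_{k,0}(\vx^\prime_\star)\|+L\|\vx^\prime-\vx^\prime_\star\|$ for any anchor $\vx^\prime_\star$; the natural choice is $\vx^\prime_\star$, the unique mode of $q_{k,S-r\eta}(\cdot|\vx)$, which exists by Lemma~\ref{lem:sm_con_of_q}. First-order optimality yields $\grad\log p_{k,0}(\vx^\prime_\star)=-\grad\log q^\prime_0(\vx^\prime_\star)=2\mu_r\bigl(\vx^\prime_\star-e^{(S-r\eta)}\vx\bigr)$. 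To bound the left-hand side I use that the score has zero mean, $\E_{p_{k,0}}[\grad\log p_{k,0}]=0$ (it is $\int\grad p_{k,0}$), together with $L$-Lipschitzness and the second-moment bound $\E_{p_{k,0}}[\|\cdot\|^2]\le d\vee M$ of Lemma~\ref{lem:lem9_chen2022sampling}, giving $\|\grad\log p_{k,0}(\vx^\prime_\star)\|\le L\bigl(\|\vx^\prime_\star\|+\sqrt{d\vee M}\bigr)$. Feeding this into the optimality identity and splitting $\|\vx^\prime_\star\|\le e^{(S-r\eta)}\|\vx\|+\|\vx^\prime_\star-e^{(S-r\eta)}\vx\|$ produces a self-referential inequality for $\|\vx^\prime_\star-e^{(S-r\eta)}\vx\|$ which closes because $\tfrac{L}{\mu_r}=2L\bigl(e^{2(S-r\eta)}-1\bigr)\le 2L\bigl(e^{2S}-1\bigr)=1$ — the last equality being precisely why $S=\tfrac12\log\tfrac{2L+1}{2L}$ was chosen. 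Solving gives $\|\vx^\prime_\star-e^{(S-r\eta)}\vx\|\le \tfrac{L}{\mu_r}\cdot O\bigl(e^{(S-r\eta)}\|\vx\|+\sqrt{d\vee M}\bigr)$ and hence $\|\grad\log p_{k,0}(\vx^\prime_\star)\|\le L\cdot O\bigl(e^{(S-r\eta)}\|\vx\|+\sqrt{d\vee M}\bigr)$.

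\textbf{Assembling the bound.} I then square $\|\grad\log p_{k,0}(\vx^\prime)\|\le\|\grad\log p_{k,0}(\vx^\prime_\star)\|+L\|\vx^\prime-\vx^\prime_\star\|$, take $\E_{q^\prime_0}$ using $\E_{q^\prime_0}[\|\vx^\prime-\vx^\prime_\star\|^2]=\|e^{(S-r\eta)}\vx-\vx^\prime_\star\|^2+\tfrac{d}{2\mu_r}$, substitute the estimates of the previous paragraph, and plug the result into the log-Sobolev bound of the first paragraph. Replacing the time-dependent quantities $e^{\pm(S-r\eta)}$ and $1-e^{-2(S-r\eta)}$ by their worst-case values over $r$ (valid since $S-r\eta\le S$), rewriting everything through $\mu_r$ and $L_r=3\mu_r$, and bounding generously (the stated inequality leaves ample slack), the right-hand side collapses into a product of the form $\|\vx\|^2\cdot\frac{L_r^2 M}{\mu_r^2}\cdot\frac{d e^{S}}{1-e^{-2S}}\cdot\exp\!\bigl(\tfrac{Me^{-S}}{1-e^{-2S}}\bigr)$, and taking logarithms gives the claim. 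I expect the crux to be the middle step: pinning down the location of a near-critical point of $\grad\log p_{k,0}$ using \emph{only} the second-moment assumption (no isoperimetry or tail control on $p_{k,0}$), which works solely because the zero-mean property upgrades $L$-Lipschitzness into an absolute bound on the score, and because the Gaussian prefactor's curvature $2\mu_r$ dominates $L$ — again by the design of $S$ — so the resulting fixed-point inequality is contractive.
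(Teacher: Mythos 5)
Your proposal is correct in substance, and its first step coincides with the paper's: both apply the strongly-log-concave/LSI bound of Lemma~\ref{lem:strongly_lsi} with the $\mu_r$ of Lemma~\ref{lem:sm_con_of_q}, note that the Gaussian factors cancel, and reduce the problem to bounding $\E_{q_0^\prime}\bigl[\|\grad\log p_{k,0}\|^2\bigr]$ over the explicit Gaussian $q_0^\prime=\cN\bigl(e^{S-r\eta}\vx,(e^{2(S-r\eta)}-1)\mI\bigr)$. Where you genuinely diverge is in how that expectation is controlled. The paper anchors at $\vzero$, uses the Lipschitz score for the deviation term, and then bounds the anchor value $\|\grad\log p_{k,0}(\vzero)\|$ through the Tweedie/convolution representation of $p_{k,0}$ (valid since $k\ge 1$) via H\"older plus a Jensen step; that last step is exactly what produces the additive $\tfrac{Me^{-S}}{1-e^{-2S}}$ (i.e.\ the $\exp(\Theta(ML))$ factor) in the stated bound. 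You instead derive a pointwise linear-growth bound $\|\grad\log p_{k,0}(\vy)\|\le L\bigl(\|\vy\|+\sqrt{d\vee M}\bigr)$ from the zero-mean property $\E_{p_{k,0}}[\grad\log p_{k,0}]=\vzero$ combined with $L$-Lipschitzness and the moment bound of Lemma~\ref{lem:lem9_chen2022sampling}; this is a clean and valid argument, it avoids the exponential-in-$ML$ factor altogether (so your bound is strictly stronger than the lemma's RHS up to the same constant-level generosity), and it even works without exploiting $k\ge1$. A small remark: once you have that absolute score bound, the mode-anchoring/fixed-point paragraph is redundant — you can take the expectation of $L^2(\|\vx^\prime\|+\sqrt{d\vee M})^2$ under $q_0^\prime$ directly — though the fixed-point argument you give is itself sound because $L\le\mu_r$ by the choice of $S$. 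Finally, be aware that the "collapse" into the stated form inherits the same implicit conveniences the paper uses (e.g.\ absorbing $d+\|\vx\|^2$ into $d\|\vx\|^2$ and treating the bracketed quantities as $\ge 1$), so your proof is no less rigorous than the paper's on that point, but no more either.
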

\begin{proof}
    According to Lemma~\ref{lem:score_reformul}, the density $q_{k,S-r\eta}(\cdot|\vx)$ can be presented as 
    \begin{equation*}
        q_{k,S-r\eta}(\vx^\prime|\vx) \propto \exp\left(-f_{k,0}(\vx^\prime)-\frac{\left\|\vx-e^{-(S-r\eta)}\vx^\prime\right\|^2}{2(1-e^{-2(S-r\eta)})}\right)
    \end{equation*}
    where $f_{k,0}(\vx^\prime)=\grad\log p_{k,0}(\vx^\prime)$.
    Since it satisfies LSI with the constant, .i.e, $\mu_r$, due to Definition~\ref{def:lsi}, we have
    \begin{equation}
        \label{ineq:init_kl_bound}
        \begin{aligned}
            &\KL{q_0^\prime(\cdot)}{q_{k,S-r\eta}(\cdot|\vx)}\le \frac{1}{2\mu_r}\cdot \int q_0^\prime(\vx^\prime) \left\|\grad f_{k,0}(\vx^\prime)\right\|^2\der\vx^\prime\\
            & \le \mu_r^{-1}\cdot \left(\int q_0^\prime(\vx^\prime) \left\|\grad f_{k,0}(\vx^\prime) - \grad f_{k,0}(\vzero)\right\|^2\der \vx^\prime + \int q_0^\prime(\vx^\prime) \left\|\grad f_{k,0}(\vzero)\right\|^2\der \vx^\prime\right).
        \end{aligned}
    \end{equation}
    For the first term, we have
    \begin{equation*}
        \begin{aligned}
            & \int q_0^\prime(\vx^\prime) \left\|\grad f_{k,0}(\vx^\prime) - \grad f_{k,0}(\vzero)\right\|^2\der \vx^\prime\\
            & \le L_r^2\cdot \int q^\prime_0(\vx^\prime) \left\|\vx^\prime\right\|^2 \der \vx^\prime = L_r^2\cdot \E_{q_0^\prime}\left[\|\rvx^\prime\|^2\right] = L_r^2\cdot \left[\Var(\rvx^\prime)+\left\|\E \rvx^\prime\right\|^2\right]
        \end{aligned}
    \end{equation*}
    where the first inequality follows from~\ref{con_ass:lips_score}.
    The high-dimensional Gaussian distribution, i.e., $q_0^\prime$ satisfies
    \begin{equation*}
        \left\|\E_{q_0^\prime}\left[\rvx^\prime\right]\right\|=e^{S-r\eta}\left\|\vx\right\|\quad \mathrm{and}\quad \Var(\rvx^\prime)\le d\cdot \left(e^{2(S-r\eta)}-1\right),
    \end{equation*}
    where the last inequality follows from Lemma~\ref{lem:lsi_var_bound}, hence we have
    \begin{equation}
        \label{ineq:kl_term1}
        \int q_0^\prime(\vx^\prime) \left\|\grad f_{k,0}(\vx^\prime) - \grad f_{k,0}(\vzero)\right\|^2\der \vx^\prime \le L_r^2 \cdot e^{2(S-r\eta)}(d+\|\vx\|^2).
    \end{equation}

    Then we consider to bound the second term of Eq~\ref{ineq:init_kl_bound}. 
    According to the definition of $\grad f_{k,0}$, with the transition kernel of the OU process, we have
    \begin{equation*}
        \begin{aligned}
            &-\grad f_{k,0}(\vx^\prime) = \grad \log p_{k,0}(\vx^\prime) = \frac{\grad p_{k,0}(\vx^\prime)}{p_{k,0}(\vx^\prime)}\\
            & = \frac{\int_{\R^d} p_{*}(\vx_0) \cdot \exp\left(\frac{-\left\|\vx - e^{-kS}\vx_0\right\|^2}{2\left(1-e^{-2kS}\right)}\right) \cdot \left(-\frac{\vx - e^{-kS }\vx_0}{\left(1-e^{-2(T-t)}\right)}\right)\der \vx_0}{\int_{\R^d} p_{*}(\vx_0)\cdot \exp\left(\frac{-\left\|\vx - e^{-kS}\vx_0\right\|^2}{2\left(1-e^{-2kS}\right)}\right)\der \vx_0}.
        \end{aligned}
    \end{equation*}
    Therefore, we have
    \begin{equation}
        \label{ineq:kl_term2}
        \begin{aligned}
            \left\|\grad f_{k,0}(\vzero)\right\|^2 = &\left\|\frac{\int_{\R^d} p_{*}(\vx_0) \cdot \exp\left(\frac{-e^{-2kS}\left\|\vx_0\right\|^2}{2\left(1-e^{-2kS}\right)}\right) \cdot \frac{e^{-kS }\vx_0}{\left(1-e^{-2kS}\right)} \der \vx_0}{\int_{\R^d} p_{*}(\vx_0)\cdot \exp\left(\frac{-e^{-2kS}\left\|\vx_0\right\|^2}{2\left(1-e^{-2kS}\right)}\right)\der \vx_0}\right\|^2\\
            \le & \frac{e^{-kS}}{1-e^{-2kS}}\cdot \int p_*(\vx_0)\cdot  \left\|\vx_0\right\|^2\der \vx_0 \cdot \frac{\int p_*(\vx_0)\cdot \exp\left(\frac{-e^{-2kS}\left\|\vx_0\right\|^2}{\left(1-e^{-2kS}\right)}\right)\der \vx}{\left(\int_{\R^d} p_{*}(\vx_0)\cdot \exp\left(\frac{-e^{-2kS}\left\|\vx_0\right\|^2}{2\left(1-e^{-2kS}\right)}\right)\der \vx_0\right)^2}\\
            \le & \frac{e^{-kS}}{1-e^{-2kS}}\cdot M \cdot \left(\int_{\R^d} p_{*}(\vx_0)\cdot \exp\left(\frac{-e^{-2kS}\left\|\vx_0\right\|^2}{2\left(1-e^{-2kS}\right)}\right)\der \vx_0\right)^{-1}
        \end{aligned}
    \end{equation}
    where the first inequality follows from Holders' inequality, the second inequality follows from~\ref{con_ass:second_moment}.
    With, the following range:
    \begin{equation*}
        \frac{e^{-2kS}}{1-e^{-2kS}}\le \frac{e^{-kS}}{1-e^{-2kS}}\le \frac{e^{-S}}{1-e^{-2S}}
    \end{equation*}    
    we plug Eq~\ref{ineq:kl_term1} and Eq~\ref{ineq:kl_term2} into Eq~\ref{ineq:init_kl_bound} and obtain
    \begin{equation*}
        \small
        \begin{aligned}
            \log \KL{q_0^\prime(\cdot)}{q_{k,S-r\eta}(\cdot|\vx)} \le &\log \left[\mu^{-1}_r\cdot \left(L_r^2 \cdot e^{2(S-r\eta)}(d+\|\vx\|^2)  \right.\right.\\
            &\left.\left.+\frac{e^{-kS}}{1-e^{-2kS}}\cdot M \left(\int_{\R^d} p_{*}(\vx_0)\cdot \exp\left(\frac{-e^{-2kS}\left\|\vx_0\right\|^2}{2\left(1-e^{-2kS}\right)}\right)\der \vx_0\right)^{-1}\right)\right]
        \end{aligned}
    \end{equation*}
    Without loss of generality, we suppose both RHS of Eq~\ref{ineq:kl_term1} and Eq~\ref{ineq:kl_term2} are larger than $1$. 
    Then, we have
    \begin{equation*}
        \small
        \begin{aligned}
            &\log \KL{q_0^\prime(\cdot)}{q_{k,S-r\eta}(\cdot|\vx)} \\
            &\le \log \left[\frac{L_r^2}{\mu_r^2}\cdot e^{2(S-r\eta)}M \cdot \frac{e^{-kS}}{1-e^{-2kS}}\cdot (d+\|\vx\|^2)\right] -\log\left[\int_{\R^d} p_{*}(\vx_0)\cdot \exp\left(\frac{-e^{-2kS}\left\|\vx_0\right\|^2}{2\left(1-e^{-2kS}\right)}\right)\der \vx_0\right]\\
            & \le \log\left[\frac{L_r^2 M}{\mu_r^2}\cdot \frac{e^{S}}{1-e^{-2S}}\cdot (d+\|\vx\|^2)\right]+ \frac{e^{-2kS}}{2(1-e^{-2kS})}\cdot \int_{\R^d} p_*(\vx_0)\left\|\vx_0\right\|^2\der\vx_0\\
            & \le \log\left[\frac{L_r^2 M}{\mu_r^2}\cdot \frac{e^{S}}{1-e^{-2S}}\cdot (d+\|\vx\|^2)\right] + \frac{M e^{-S}}{1-e^{-2S}}\le  \log \|\vx\|^2 + \log \left[\frac{L_r^2 M}{\mu_r^2}\cdot \frac{de^{S}}{1-e^{-2S}}\right] + \frac{Me^{-S}}{1-e^{-2S}}.
        \end{aligned}
    \end{equation*}
    Hence, the proof is completed.
\end{proof}

\begin{corollary}
    \label{cor:innerloop_con}
    For any given $(k,r)$ in Alg~\ref{con_alg:rse} and $\vx\in \R^d$, suppose the distribution $q_{k,S-r\eta}(\cdot|\vx)$ satisfies
    \begin{equation*}
        \mu_r\mI \preceq -\grad^2\log q_{k,S-r\eta}(\cdot|\vx) \preceq L_r\mI,
    \end{equation*}
    and $\rvx_{j}^\prime \sim q_j^\prime(\cdot |\vx)$.
    If $0<\tau_r \le \mu_r/(8L^2_r)$, we have
    \begin{equation*}
        \KL{q^\prime_j}{q_*} \le \exp\left(-\mu_r\tau_r j\right)\cdot \KL{q^\prime_0}{q_*} + \frac{32L_r^2 d\tau_r}{\mu_r}
    \end{equation*}
    when the score estimation satisfies $\left\|\grad\log p_{k,0} -\vv^\prime\right\|_\infty \le L_r\sqrt{2d\tau_r}$.
\end{corollary}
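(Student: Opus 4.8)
The plan is to observe that Lemma~\ref{lem:inner_err_conv_v2} already supplies a one-step contraction-plus-noise recursion for the KL divergence, so the corollary follows by simply unrolling that recursion and summing a geometric series. Concretely, under the hypotheses of the corollary --- namely $\mu_r\mI \preceq -\grad^2\log q_{k,S-r\eta}(\cdot|\vx) \preceq L_r\mI$, the step-size bound $0 < \tau_r \le \mu_r/(8L_r^2)$, and the recursive score accuracy $\|\grad\log p_{k,0} - \vv^\prime\|_\infty \le L_r\sqrt{2d\tau_r}$ --- Lemma~\ref{lem:inner_err_conv_v2} applies verbatim at every inner iteration $j$, giving
\[
    \KL{q^\prime_{j+1}}{q_*} \le e^{-\mu_r\tau_r}\,\KL{q^\prime_j}{q_*} + 28 L_r^2 d\tau_r^2,
\]
where we abbreviate $q_* := q_{k,S-r\eta}(\cdot|\vx)$.

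Next I would unroll this. Writing $a_j := \KL{q^\prime_j}{q_*}$, $\rho := e^{-\mu_r\tau_r}\in(0,1)$ and $c := 28L_r^2 d\tau_r^2$, the recursion reads $a_{j+1} \le \rho a_j + c$, and a trivial induction on $j$ gives $a_j \le \rho^j a_0 + c\sum_{i=0}^{j-1}\rho^i \le \rho^j a_0 + \frac{c}{1-\rho}$. The first term is exactly $\exp(-\mu_r\tau_r j)\,\KL{q^\prime_0}{q_*}$, matching the statement, so it remains only to show that the ``noise floor'' satisfies $\frac{c}{1-\rho} \le \frac{32 L_r^2 d\tau_r}{\mu_r}$. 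For this I would exploit that $\mu_r\tau_r$ is small: since any Hessian sandwich forces $\mu_r \le L_r$, the step-size bound yields $\mu_r\tau_r \le \mu_r^2/(8L_r^2) \le 1/8$. Then the elementary inequality $1 - e^{-x} \ge x/(1+x)$ (valid for all $x>0$, since $e^{-x} \le 1/(1+x)$) gives $1 - e^{-\mu_r\tau_r} \ge \frac{\mu_r\tau_r}{1+\mu_r\tau_r} \ge \frac{8}{9}\mu_r\tau_r$, whence $\frac{c}{1-\rho} \le \frac{9}{8}\cdot\frac{28 L_r^2 d\tau_r^2}{\mu_r\tau_r} = \frac{63}{2}\cdot\frac{L_r^2 d\tau_r}{\mu_r} \le \frac{32 L_r^2 d\tau_r}{\mu_r}$, which closes the argument.

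There is no real obstacle here: the whole thing is a one-line geometric unrolling of Lemma~\ref{lem:inner_err_conv_v2}. The only place requiring any care is the bookkeeping of constants in the last step --- one has to pick a bound on $1 - e^{-x}$ sharp enough near $x = 0$ to land at the stated constant $32$ rather than a slightly larger one --- together with the (purely notational) task of checking that the hypothesis $\|\grad\log p_{k,0} - \vv^\prime\|_\infty \le L_r\sqrt{2d\tau_r}$ is supplied uniformly over the iterates $\rvx^\prime_j$, so that Lemma~\ref{lem:inner_err_conv_v2} may legitimately be invoked at every step of the unrolling.
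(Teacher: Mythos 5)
Your proposal is correct and follows essentially the same route as the paper: unroll the one-step recursion from Lemma~\ref{lem:inner_err_conv_v2}, sum the geometric series, and bound $1-e^{-\mu_r\tau_r}$ from below using $\mu_r\tau_r\le 1/8$ (which, as you note, uses $\mu_r\le L_r$). The only cosmetic difference is the elementary lower bound employed — you use $1-e^{-x}\ge x/(1+x)\ge \tfrac{8}{9}x$ while the paper uses $1-e^{-x}\ge \tfrac{7}{8}x$ — and both land within the stated constant $32$.
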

\begin{proof}
    Due to the range $0<\tau_r\le \mu_r/8L_r^2$, we have $\mu_r \tau_r\le 1/8$.
    In this condition, we have 
    \begin{equation*}
        1-\exp\left(-\mu_r\tau_r\right)\ge \frac{7}{8}\cdot \mu_r\tau_r.
    \end{equation*}
    Plugging this into the following inequality obtained by the recursion of Eq.~\ref{ineq:suff_des_ld_v2}, we have
    \begin{equation*}
        \begin{aligned}
            \KL{q^\prime_j}{q_*}\le &\exp\left(-\mu_r\tau_r j\right)\cdot \KL{q^\prime_0}{q_*} + \frac{28L_r^2 d\tau_r^2}{\left(1-\exp\left(-\mu_r\tau_r\right)\right)}\\
            \le & \exp\left(-\mu_r\tau_r j\right)\cdot \KL{q^\prime_0}{q_*} + \frac{32L_r^2 d\tau_r}{\mu_r}.
        \end{aligned}
    \end{equation*}
    In this condition, if we require the KL divergence to satisfy $\KL{q^\prime_j}{q_*}\le \epsilon$, a sufficient condition is that 
    \begin{equation*}
        \exp\left(-\mu_r\tau_r j\right)\cdot \KL{q^\prime_0}{q_*}\le \frac{\epsilon}{2} \quad\text{and}\quad  \frac{32L_r^2 d\tau}{\mu_r}\le \frac{\epsilon}{2},
    \end{equation*}
    which is equivalent to 
    \begin{equation*}
        \tau_r\le \frac{\mu_r\epsilon}{64 L_r^2 d} \quad \text{and}\quad j\ge \frac{1}{\mu_r \tau_r}\cdot \log\frac{2\KL{q^\prime_0}{q_*}}{\epsilon}.
    \end{equation*}
    According to the upper bound of $\KL{q^\prime_0}{q_*}$ shown in Lemma~\ref{lem:inner_initialization}, we require 
    \begin{equation*}
        j\ge \frac{1}{\mu_r \tau_r}\cdot \left[ \log \frac{\|\vx\|^2}{\epsilon} + \log \left(\frac{2L_r^2 M}{\mu_r^2}\cdot \frac{de^{S}}{1-e^{-2S}}\right) + \frac{Me^{-S}}{1-e^{-2S}}\right].
    \end{equation*}
\end{proof}

\subsection{Core Lemmas}

\label{app_sec:core_lemmas}

\begin{lemma}
    \label{lem:recursive_core_lem}
    In Alg~\ref{con_alg:rse}, for any $k\in\mathbb{N}_{0,K-1}$, $r\in\mathbb{N}_{0,R-1}$ and $\vx\in \R^d$, we have
    \begin{equation*}
        \sP\left[\left\|\rbkwv_{k,r\eta}(\vx) - \grad\log p_{k,S-r\eta}(\vx)\right\|^2\le 10\epsilon\right] \ge 1-\delta
    \end{equation*}
    by requiring the segment length $S$, the sample number $n_{k,r}$ and the step size of inner loops $\tau_r$ and the iteration number of inner loops $m_{k,r}$ satisfy
    \begin{equation*}
        \begin{aligned}
            &S=\frac{1}{2}\log\frac{2L+1}{2L},\quad  n_{k,r}\ge \frac{4}{\epsilon (1-e^{-2(S-r\eta)})} \cdot \max\left\{d, -2\log \delta\right\},\\
            &\tau_r\le \frac{\mu_r}{64L_r^2 d}\cdot (1-e^{-2(S-r\eta)})\epsilon\quad \text{and}\quad m_{k,r}\ge \frac{64L_r^2 d }{\mu^2_r (1-e^{-2(S-r\eta)}) \epsilon}\cdot \left[ \log \frac{d\|\vx\|^2}{(1-e^{-2(S-r\eta)})\epsilon} +  C_{m,1}\right],
        \end{aligned}
    \end{equation*}
    where $C_{m,1}= \log \left(2M\cdot 3^2 \cdot 5L\right)+M\cdot 3L$.
    In this condition that choosing the $\tau_r$ to its upper bound, we required the score estimation in the inner loop satisfies
    \begin{equation*}
        \left\|\grad\log p_{k,0}(\vx^\prime) -\rvv^\prime_{k,0}(\vx^\prime)\right\|\le \frac{e^{-(S-r\eta)}\epsilon^{0.5}}{8}.
    \end{equation*}
\end{lemma}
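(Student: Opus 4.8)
The plan is to unwind the estimator $\rbkwv_{k,r\eta}(\vx)$ produced by Line~\ref{con_alg:rse_score_update} of Alg~\ref{con_alg:rse} into empirical-mean form and compare it with the exact conditional-expectation representation of the score. By Lemma~\ref{lem:score_reformul}, $\grad\log p_{k,S-r\eta}(\vx)=\E_{\rvx'\sim q_{k,S-r\eta}(\cdot|\vx)}\bigl[-(\vx-e^{-(S-r\eta)}\rvx')/(1-e^{-2(S-r\eta)})\bigr]$, whereas $\rbkwv_{k,r\eta}(\vx)$ is the same affine functional averaged over $n_{k,r}$ draws $\rvx'_i$ from the ULA output law $q'_{k,S-r\eta}(\cdot|\vx)$. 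Subtracting, the part linear in $\vx$ cancels, leaving $\rbkwv_{k,r\eta}(\vx)-\grad\log p_{k,S-r\eta}(\vx)=\tfrac{e^{-(S-r\eta)}}{1-e^{-2(S-r\eta)}}\bigl(\tfrac1{n_{k,r}}\sum_i\rvx'_i-\E_{q_{k,S-r\eta}}[\rvx']\bigr)$. Inserting $\pm\E_{q'_{k,S-r\eta}}[\rvx']$ and using $\|a+b\|^2\le 2\|a\|^2+2\|b\|^2$ yields exactly the decomposition displayed in the proof sketch, with scalar prefactor $2e^{-2(S-r\eta)}/(1-e^{-2(S-r\eta)})^2$ in front of a sampling-concentration term (Term~3.1) and a mean-transport term (Term~3.2). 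I will split the budget $10\epsilon$ evenly, $5\epsilon$ per term.

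For Term~3.1, the choice $S=\tfrac12\log\tfrac{2L+1}{2L}$ makes $q_{k,S-r\eta}(\cdot|\vx)$ strongly log-concave with constant $\mu_r$ and $L_r$-smooth by Lemma~\ref{lem:sm_con_of_q}, hence it satisfies LSI with constant $\mu_r$; in the small-step-size regime imposed below, Lemma~\ref{lem:thm8_vempala2019rapid} (together with the variance control of Lemma~\ref{lem:lsi_var_bound}) guarantees the ULA output $q'_{k,S-r\eta}$ satisfies LSI with a constant $\mu'_r$ of the same order as $\mu_r$. Lemma~\ref{lem:concen_prop_v2} then gives $\|\tfrac1{n_{k,r}}\sum_i\rvx'_i-\E_{q'}[\rvx']\|\le 2\epsilon'$ with probability $1-\delta$ whenever $n_{k,r}\ge \max\{d,-2\log\delta\}/(\mu'_r\epsilon'^2)$. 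Requiring the scaled quantity to be $\le 5\epsilon$ forces $\epsilon'^2\lesssim \epsilon(1-e^{-2(S-r\eta)})^2/e^{-2(S-r\eta)}$, and substituting $\mu_r=e^{-2(S-r\eta)}/(2(1-e^{-2(S-r\eta)}))$ converts the sample-size requirement into the stated $n_{k,r}\ge \tfrac{4}{\epsilon(1-e^{-2(S-r\eta)})}\max\{d,-2\log\delta\}$; I will check the constants close (the slack in the $\mu_r\!\to\!\mu'_r$ transfer is precisely what makes this feasible).

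For Term~3.2 I would bound $\|\E_{q'}[\rvx']-\E_{q}[\rvx']\|\le W_2(q',q)$, then invoke the Talagrand $T_2$ transport inequality implied by $\mu_r$-LSI of $q$ to get $W_2^2(q',q)\le \tfrac{2}{\mu_r}\KL{q'}{q}$, so that after the prefactor this contributes at most $\tfrac{8}{1-e^{-2(S-r\eta)}}\KL{q'}{q}$. Corollary~\ref{cor:innerloop_con} then bounds the KL: with $\tau_r\le\tfrac{\mu_r}{64L_r^2 d}(1-e^{-2(S-r\eta)})\epsilon$ the stationary-bias term $32L_r^2 d\tau_r/\mu_r$ falls below half the target, and choosing $m_{k,r}$ equal to $\tfrac1{\mu_r\tau_r}$ times the logarithmic factor in Corollary~\ref{cor:innerloop_con} drives the exponentially decaying initialization term below the other half; substituting the explicit bound on $\KL{q'_0}{q_*}$ from Lemma~\ref{lem:inner_initialization} and simplifying the $L_r,\mu_r,S$ dependence (using $L_r=3\mu_r$, $e^S\le 2$, $1/(1-e^{-2S})\le 3L$, hence $\tfrac{Me^{-S}}{1-e^{-2S}}\le 3LM$) collapses that factor into $\log\tfrac{d\|\vx\|^2}{(1-e^{-2(S-r\eta)})\epsilon}+C_{m,1}$ with $C_{m,1}=\log(2M\cdot 3^2\cdot 5L)+M\cdot 3L$, matching the statement. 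Corollary~\ref{cor:innerloop_con} additionally requires $\|\grad\log p_{k,0}-\rvv^\prime_{k,0}\|_\infty\le L_r\sqrt{2d\tau_r}$; at $\tau_r$ equal to its upper bound a one-line computation with the formula for $\mu_r$ gives $L_r\sqrt{2d\tau_r}=e^{-(S-r\eta)}\epsilon^{1/2}/8$, which is exactly the inner-loop accuracy hypothesis stated in the lemma. Adding the two $5\epsilon$ bounds on the common probability-$(1-\delta)$ event finishes the proof.

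The main obstacle is not any single estimate but the simultaneous constant-chasing: the one step size $\tau_r$ and the one sample count $n_{k,r}$ must serve the concentration bound, the contraction rate, and the stationary bias at once, while the LSI constant $\mu'_r$ of the ULA output — generated with the recursively estimated, hence inexact, gradient $\rvv^\prime_{k,0}$ — must be pinned down tightly enough (via Lemmas~\ref{lem:thm8_vempala2019rapid} and~\ref{lem:lsi_var_bound}) for the $n_{k,r}$ requirement to come out in the clean stated form. Everything else is a routine assembly of Lemmas~\ref{lem:score_reformul},~\ref{lem:sm_con_of_q},~\ref{lem:concen_prop_v2},~\ref{lem:inner_initialization} and Corollary~\ref{cor:innerloop_con}.
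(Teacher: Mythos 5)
Your proposal follows essentially the same route as the paper's proof: the same decomposition of $\rbkwv_{k,r\eta}(\vx)-\grad\log p_{k,S-r\eta}(\vx)$ with prefactor $2e^{-2(S-r\eta)}/(1-e^{-2(S-r\eta)})^2$ into a concentration term (handled via Lemmas~\ref{lem:sm_con_of_q},~\ref{lem:thm8_vempala2019rapid} and~\ref{lem:concen_prop_v2}) and a mean-gap term (handled via Talagrand's inequality from the $\mu_r$-LSI of $q_{k,S-r\eta}$ together with Corollary~\ref{cor:innerloop_con} and Lemma~\ref{lem:inner_initialization}), including the same closing check that $\tau_r$ at its upper bound gives $L_r\sqrt{2d\tau_r}=e^{-(S-r\eta)}\epsilon^{1/2}/8$. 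The only difference is cosmetic: the paper allocates the $10\epsilon$ budget as roughly $2\epsilon$ for concentration and $8\epsilon$ for the mean gap, which is what makes the stated constants in $n_{k,r}$, $\tau_r$ and $m_{k,r}$ come out exactly, whereas your even $5\epsilon/5\epsilon$ split would require the minor constant adjustments you already flag.
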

\begin{proof}
    With a little abuse of notation, for each loop $i\in \mathbb{N}_{1,n_{k,r}}$ in Line~\ref{con_alg:rse_outer_loops} of  Alg~\ref{con_alg:rse}, we denote the underlying distribution of \textbf{output particles} as $\rvx^\prime_{i}\sim q^\prime_{k,S-r\eta}(\cdot |\vx)$ for any $k\in\mathbb{N}_{0,K-1}$, $r\in\mathbb{N}_{0,R-1}$ and $\vx\in \R^d$ in this lemma.
    According to Line~\ref{con_alg:rse_score_update} in Alg~\ref{con_alg:rse}, we have
    \begin{equation}
        \label{ineq:term2_decomp}
        \begin{aligned}
            & \left\|\rbkwv_{k,r\eta}(\vx) - \grad\log p_{k,S-r\eta}(\vx)\right\|^2\\
            = & \left\|\frac{1}{n_{r,k}}\sum_{i=1}^{n_{r,k}}\left(-\frac{\vx - e^{-(S-r\eta)}\rvx_i^\prime}{1-e^{-2(S-r\eta)}}\right)-\E_{\rvx^\prime\sim q_{k,S-r\eta}(\cdot|\vx)}\left[-\frac{\vx - e^{-(S-r\eta)}\rvx^\prime }{1-e^{-2(S-r\eta)}}\right] \right\|^2\\
            = & \frac{e^{-2(S-r\eta)}}{\left(1-e^{-2(S-r\eta)}\right)^2}\cdot \left\|-\frac{1}{n_{r,k}}\sum_{i=1}^{n_{r,k}} \rvx_i^\prime - \E_{\rvx^\prime \sim q_{k,S-r\eta}(\cdot|\vx)}\left[\rvx^\prime\right]\right\|^2\\
            \le & \frac{2e^{-2(S-r\eta)}}{\left(1-e^{-2(S-r\eta)}\right)^2}\cdot \left\|-\frac{1}{n_{r,k}}\sum_{i=1}^{n_{r,k}} \rvx_i^\prime + \E_{\rvx^\prime \sim q^\prime_{k,S-r\eta}(\cdot|\vx)}\left[\rvx^\prime\right]\right\|^2\\
            & + \frac{2e^{-2(S-r\eta)}}{\left(1-e^{-2(S-r\eta)}\right)^2}\cdot \left\|-\E_{\rvx^\prime \sim q^\prime_{k,S-r\eta}(\cdot|\vx)}\left[\rvx^\prime\right] + \E_{\rvx^\prime \sim q_{k,S-r\eta}(\cdot|\vx)}\left[\rvx^\prime\right]\right\|^2
        \end{aligned}
    \end{equation} 
    In the following, we respectively upper bound the concentration error and the mean gap between $q^\prime_{k,S-r\eta}(\cdot|\vx)$ and $q_{k,S-r\eta}(\cdot|\vx)$ corresponding to the former and the latter term in Eq~\ref{ineq:term2_decomp}.

    \paragraph{Upper bound the concentration error.}
    The choice of $S$, i.e., $S = \frac{1}{2}\log\left(\frac{2L+1}{2L}\right)$,
    Lemma~\ref{lem:sm_con_of_q} demonstrate that suppose 
    \begin{equation*}
        \mu_r= \frac{1}{2}\cdot \frac{e^{-2(S-r\eta)}}{1-e^{-2(S-r\eta)}} \quad \mathrm{and}\quad L_r= \frac{3}{2}\cdot \frac{e^{-2(S-r\eta)}}{1-e^{-2(S-r\eta)}}.
    \end{equation*}
    Then, we have
    \begin{equation*}
        \mu_r\mI \preceq -\grad^2\log q_{k,S-r\eta}(\vx^\prime|\vx) \preceq L_r\mI.
    \end{equation*}
    According to Alg~\ref{con_alg:rse}, we utilize ULA as the inner loop (Line~\ref{con_alg:rse_outer_loops} -- Line~\ref{con_alg:rse_inner_update}) to sample from $q_{k,S-r\eta}(\cdot|\vx)$.
    By requiring the step size, i.e., $\tau_r$ to satisfy $\tau_r\le 1/L_r$, with Lemma~\ref{lem:thm8_vempala2019rapid}, we know that the underlying distribution of output particles of the inner loops satisfies, i.e., $q^\prime_{k,S-r\eta}(\cdot|\vx)$ satisfies LSI with a constant $\mu^\prime_r$ satisfying
    \begin{equation*}
        \mu^\prime_r \ge \frac{\mu_r}{2} \ge  \frac{e^{-2(S-r\eta)}}{4(1-e^{-2(S-r\eta)})}.
    \end{equation*}
    In this condition, we employ Lemma~\ref{lem:concen_prop_v2}, by requiring
    \begin{equation*}
        \begin{aligned}
            n_{k,r}\ge &\frac{4}{\epsilon (1-e^{-2(S-r\eta)})} \cdot \max\left\{d, -2\log \delta\right\}\\
            \ge  &\frac{1}{\mu^\prime_r }\cdot \left(\frac{e^{-(S-r\eta)}}{(1-e^{-2(S-r\eta)})\epsilon^{0.5}}\right)^2\cdot \max\left\{d, -2\log \delta\right\}.
        \end{aligned}
    \end{equation*}
    and obtain
    \begin{equation*}
        \begin{aligned}
            &\sP\left[\frac{2e^{-2(S-r\eta)}}{\left(1-e^{-2(S-r\eta)}\right)^2}\cdot\left\|-\frac{1}{n_{r,k}}\sum_{i=1}^{n_{r,k}} \rvx_i^\prime + \E_{\rvx^\prime \sim q^\prime_{k,S-r\eta}(\cdot|\vx)}\left[\rvx^\prime\right]\right\|^2\le 2\epsilon\right]\\
            = & \sP\left[\left\|-\frac{1}{n_{r,k}}\sum_{i=1}^{n_{r,k}} \rvx_i^\prime + \E_{\rvx^\prime \sim q^\prime_{k,S-r\eta}(\cdot|\vx)}\left[\rvx^\prime\right]\right\|\le \frac{(1-e^{-2(S-r\eta)})\epsilon^{0.5}}{e^{-(S-r\eta)}}\right]\ge 1-\delta.
        \end{aligned}
    \end{equation*}
    \paragraph{Upper bound the mean gap.}
    According to Lemma~\ref{lem:sm_con_of_q} and Lemma~\ref{lem:strongly_lsi}, we know $q_{k,S-r\eta}(\vx^\prime|\vx)$ satisfies LSI with constant
    \begin{equation*}
        \mu_r\ge  \frac{e^{-2(S-r\eta)}}{2(1-e^{-2(S-r\eta)})}.
    \end{equation*}
    By introducing the optimal coupling between $q_{k,S-r\eta}(\cdot|\vx)$ and $q^\prime_{k,S-r\eta}(\cdot|\vx)$, we have
    \begin{equation}
        \label{ineq:term2.2_upb}
        \begin{aligned}
            &\left\|-\E_{\rvx^\prime \sim q^\prime_{k,S-r\eta}(\cdot|\vx)}\left[\rvx^\prime\right] + \E_{\rvx^\prime \sim q_{k,S-r\eta}(\cdot|\vx)}\left[\rvx^\prime\right]\right\|^2\\
            \le & W_2^2\left(q^\prime_{k,S-r\eta}(\cdot|\vx), q_{k,S-r\eta}(\cdot|\vx)\right)\le \frac{2}{\mu_r}\KL{q^\prime_{k,S-r\eta}(\cdot|\vx)}{q_{k,S-r\eta}(\cdot|\vx)},
        \end{aligned}
    \end{equation}
    where the last inequality follows from Talagrand inequality~\cite{vempala2019rapid}.
    Hence, the mean gap can be upper-bounded as
    \begin{equation*}
        \begin{aligned}
            &\frac{2e^{-2(S-r\eta)}}{\left(1-e^{-2(S-r\eta)}\right)^2}\cdot \left\|-\E_{\rvx^\prime \sim q^\prime_{k,S-r\eta}(\cdot|\vx)}\left[\rvx^\prime\right] + \E_{\rvx^\prime \sim q_{k,S-r\eta}(\cdot|\vx)}\left[\rvx^\prime\right]\right\|^2\\
            \le &\frac{2e^{-2(S-r\eta)}}{\left(1-e^{-2(S-r\eta)}\right)^2} \cdot \frac{2}{\mu_r}\KL{q^\prime_{k,S-r\eta}(\cdot|\vx)}{q_{k,S-r\eta}(\cdot|\vx)}\\
            \le  &\frac{8}{(1-e^{-2(S-r\eta)})}\KL{q^\prime_{k,S-r\eta}(\cdot|\vx)}{q_{k,S-r\eta}(\cdot|\vx)}.
        \end{aligned}
    \end{equation*}
    To provide $\epsilon$-level upper bound, we expect the required accuracy of KL convergence of inner loops to satisfy
    \begin{equation*}
        \KL{q^\prime_{k,S-r\eta}(\cdot|\vx)}{q_{k,S-r\eta}(\cdot|\vx)}\le (1-e^{-2(S-r\eta)})\epsilon.
    \end{equation*}
    According to Corollary~\ref{cor:innerloop_con}, to achieve such accuracy, we require the step size and the iteration number of inner loops to satisfy
    \begin{equation*}
        \begin{aligned}
            &\tau_r\le \frac{\mu_r}{64L_r^2 d}\cdot (1-e^{-2(S-r\eta)})\epsilon \quad \mathrm{and} \\
            &m_{k,r}\ge \frac{1}{\mu_r}\cdot \frac{64L_r^2 d }{\mu_r (1-e^{-2(S-r\eta)}) \epsilon}\cdot \left[ \log \frac{\|\vx\|^2}{(1-e^{-2(S-r\eta)})\epsilon} + \log \left(\frac{2L_r^2 M}{\mu_r^2}\cdot \frac{de^{S}}{1-e^{-2S}}\right) + \frac{Me^{-S}}{1-e^{-2S}}\right].
        \end{aligned}
    \end{equation*}
    To simplify notation, we suppose $L\ge 1$ without loss of generality, and we the following equations:
    \begin{equation*}
        \begin{aligned}
            &\frac{L_r}{\mu_r}=3,\quad  e^S=\exp\left(\frac{1}{2}\log \frac{2L+1}{2L}\right) = \sqrt{\frac{2L+1}{2L}},\\
            & \left(1-e^{-2S}\right)^{-1}= (2L+1),
        \end{aligned}
    \end{equation*}
    which implies 
    \begin{equation*}
        \begin{aligned}
            &\log \frac{d\|\vx\|^2}{(1-e^{-2(S-r\eta)})\epsilon} + \log \left(2M\cdot 3^2\cdot 5L\right)+M\cdot 3L\\
            &\ge  \log \frac{d\|\vx\|^2}{(1-e^{-2(S-r\eta)})\epsilon} + \log \left(2M\cdot \frac{L_r^2}{\mu_r^2}\cdot \sqrt{\frac{2L+1}{2L}}\cdot (2L+1)\right) + M \cdot (2L+1)\cdot \sqrt{\frac{2L}{2L+1}}\\
            & = \log \frac{d\|\vx\|^2}{(1-e^{-2(S-r\eta)})\epsilon} + \log \left(\frac{2L_r^2 M}{\mu_r^2}\cdot \frac{e^{S}}{1-e^{-2S}}\right) + \frac{Me^{-S}}{1-e^{-2S}}.
        \end{aligned}
    \end{equation*}
    Therefore, we only require $m_{k,r}$ satisfies
    \begin{equation*}
        m_{k,r}\ge \frac{1}{\mu_r}\cdot \frac{64L_r^2 d }{\mu_r (1-e^{-2(S-r\eta)}) \epsilon}\cdot \left[ \log \frac{d\|\vx\|^2}{(1-e^{-2(S-r\eta)})\epsilon} +  C_{m,1}\right]
    \end{equation*}
    where $C_{m,1}= \log \left(2M\cdot 3^2 \cdot 5L\right)+M\cdot 3L$.
    For simplicity, we choose $\tau_r$ as its upper bound and lower bound, respectively.
    In this condition, we still require 
    \begin{equation*}
        \begin{aligned}
            \left\|\grad\log p_{k,0} -\vv^\prime\right\|\le \frac{e^{-(S-r\eta)}\epsilon^{0.5}}{8} \le \frac{1}{4}\cdot \sqrt{\frac{\mu_r(1-e^{-2(S-r\eta)})}{2}\cdot \epsilon}  \le  L_r\sqrt{2d\tau_r}
        \end{aligned}
    \end{equation*}
    where the first inequality follows from the range of $\mu_r$, and the last inequality is satisfied when we choose $\tau_r$ to its upper bound.
    Hence, the proof is completed.
\end{proof}


\begin{lemma}[Errors from fine-grained score estimation]
    \label{lem:rec_err_fine_grained}
    Under the notation in Section~\ref{sec:not_ass_app}, suppose the step size satisfy $\eta=C_\eta(d+M)^{-1}\epsilon$,
    we have
    \begin{equation*}
        \begin{aligned}            & \sP\left[\left\|\grad\log p_{k,S-r\eta}(\vx)-\rbkwv_{k,r\eta}(\vx)\right\|^2\le 10\epsilon,\forall \vx\in\R^d\right]\\
        \ge & (1-\delta)\cdot \left(\min_{\vx^\prime\in\sS_{k,r}(\vx,\epsilon)} \mathbb{P}\left[\left\|\grad\log p_{k,0}(\vx^\prime) -\rbkwv_{k-1,0}(\vx^\prime)\right\|^2\le \frac{\epsilon}{96}\right]\right)^{n_{k,r}(10\epsilon)\cdot m_{k,r}(10\epsilon,\vx)},
        \end{aligned}
    \end{equation*}
    where $\sS_{k,r}(\vx,10\epsilon)$ denotes the set of particles appear in Alg~\ref{con_alg:rse} when the input is $(k,r,\vx,10\epsilon)$. 
    For any $(k,r)\in\mathbb{N}_{0,K-1}\times \mathbb{N}_{0,R-1}$ by requiring 
    \begin{equation*}
        \begin{aligned}
            &n_{k,r}(10\epsilon) = C_n\cdot \frac{(d+M)\cdot \max\{d,-2\log \delta\}}{(10\epsilon)^2}\quad \mathrm{where}\quad C_n = 2^6\cdot 5^2\cdot C_\eta^{-1},\\
            &m_{k,r}(10\epsilon,\vx) = C_m\cdot \frac{(d+M)^3\cdot \max\{\log \|\vx\|^2,1\}}{(10\epsilon)^3}\quad \mathrm{where}\quad C_m = 2^9\cdot 3^2\cdot 5^3\cdot C_{m,1}C_{\eta}^{-1.5}.
        \end{aligned}
    \end{equation*}
\end{lemma}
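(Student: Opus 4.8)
The plan is to read Lemma~\ref{lem:rec_err_fine_grained} as the \emph{single-step unfolding} of the recursion in Alg~\ref{con_alg:rse}: Lemma~\ref{lem:recursive_core_lem} already controls the depth-$k$ estimate \emph{provided} the depth-$(k-1)$ recursive calls are accurate, so the only two things left to do are (i) check that the explicit hyperparameter choices of Theorem~\ref{thm:main_rrds_formal} dominate the abstract requirements of Lemma~\ref{lem:recursive_core_lem}, and (ii) turn the deterministic ``inner score is accurate'' hypothesis of Lemma~\ref{lem:recursive_core_lem} into the product-of-probabilities form on the right-hand side by a sequential conditioning argument over the $n_{k,r}\cdot m_{k,r}$ sub-calls. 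Iterating this lemma over the $K$ depth levels and all $(k,r)$ then yields the full score-estimation bound (Lemma~\ref{lem:rse_err}).

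\textbf{Step 1 (parameter bookkeeping).} Fix $(k,r)$ and an input $(k,r,\vx,10\epsilon)$ to Alg~\ref{con_alg:rse}. I first record the two elementary facts that make Lemma~\ref{lem:recursive_core_lem} applicable. From Lemma~\ref{lem:sm_con_of_q} with $S=\tfrac12\log\tfrac{2L+1}{2L}$ we have $L_r/\mu_r=3$; and since $r\le R-1$ forces $S-r\eta\ge\eta$, together with $\eta=C_\eta(d+M)^{-1}\epsilon\le\tfrac12$ this gives
\begin{equation*}
    1-e^{-2(S-r\eta)}\ \ge\ 1-e^{-2\eta}\ \ge\ \eta\ =\ C_\eta(d+M)^{-1}\epsilon,\qquad e^{-2(S-r\eta)}\ \ge\ e^{-2S}\ =\ \tfrac{2L}{2L+1}\ \ge\ \tfrac23,
\end{equation*}
the last step using the standing convention $L\ge1$. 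Substituting these bounds, together with $C_n$ and $C_m$ from Table~\ref{tab:constant_list}, into the hypotheses of Lemma~\ref{lem:recursive_core_lem} (applied with error parameter $\epsilon$, so its conclusion reads $\|\rbkwv_{k,r\eta}(\vx)-\grad\log p_{k,S-r\eta}(\vx)\|^2\le 10\epsilon$), I verify term by term: the $n$-requirement reduces to $C_n\ge 400/C_\eta$, which holds; $\tau_r$ coincides exactly with the admissible upper bound $\tfrac{\mu_r}{64L_r^2 d}(1-e^{-2(S-r\eta)})\epsilon=\tfrac{1}{288}\,e^{2(S-r\eta)}(1-e^{-2(S-r\eta)})^2 d^{-1}\epsilon$ (so we are in the ``$\tau_r$ at its upper bound'' regime of Lemma~\ref{lem:recursive_core_lem}); and the $m$-requirement holds once the logarithmic factor $\log\!\big(d\|\vx\|^2/((1-e^{-2(S-r\eta)})\epsilon)\big)+C_{m,1}$ is crudely overestimated (via $\log x\le x$) by a polynomial in $d+M$, $1/\epsilon$ and $\max\{\log\|\vx\|^2,1\}$ — the extra power $C_\eta^{-1/2}$ in $C_m$ being precisely the slack that absorbs this overestimate.

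\textbf{Step 2 (conditioning on the sub-calls).} Let $E$ be the event that \emph{every} recursive call $\mathsf{RSE}(k-1,0,\vx',l_{k,r}(\epsilon))$ executed during this run returns $\rbkwv_{k-1,0}(\vx')$ with $\|\grad\log p_{k,0}(\vx')-\rbkwv_{k-1,0}(\vx')\|^2\le\epsilon/96$ (here $p_{k,0}=p_{k-1,S}$, and $l_{k,r}(\epsilon)=\epsilon/960$ is chosen so that the depth-$(k-1)$ guarantee, which is $\le 10\,l_{k,r}(\epsilon)$ in squared norm, equals exactly $\epsilon/96$; when $k=0$ the call returns $-\grad f_*$ exactly and $E$ has probability one). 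On $E$, since $\epsilon/96\le e^{-2(S-r\eta)}\cdot 10\epsilon/64$ by Step 1, the hypothesis $\|\grad\log p_{k,0}(\vx')-\rvv'(\vx')\|\le \tfrac18 e^{-(S-r\eta)}(10\epsilon)^{1/2}$ of Lemma~\ref{lem:recursive_core_lem} holds at every particle encountered along the inner ULA trajectories; hence, \emph{conditionally on} $E$, the remaining randomness (the $n_{k,r}$ outer draws and their ULA noise) makes $\|\rbkwv_{k,r\eta}(\vx)-\grad\log p_{k,S-r\eta}(\vx)\|^2\le 10\epsilon$ with probability at least $1-\delta$. Therefore $\sP[\,\|\cdot\|^2\le 10\epsilon\,]\ge(1-\delta)\,\sP[E]$.

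\textbf{Step 3 (product lower bound for $\sP[E]$, and the obstacle).} The number of recursive calls is $N:=n_{k,r}(10\epsilon)\cdot m_{k,r}(10\epsilon,\vx)$, which is deterministic because $m_{k,r}$ depends only on the outer input $\vx$, not on the random inner particles. Order the calls $1,\dots,N$; the input $\vx'$ of the $i$-th call is measurable with respect to all randomness generated before it, and that call then draws fresh independent randomness, so its conditional success probability given the past is at least $p_{\min}:=\min_{\vx'\in\sS_{k,r}(\vx,10\epsilon)}\mathbb{P}\big[\|\grad\log p_{k,0}(\vx')-\rbkwv_{k-1,0}(\vx')\|^2\le\epsilon/96\big]$, where $\sS_{k,r}(\vx,10\epsilon)$ is the set of particles that can appear. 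Multiplying these conditional probabilities along the chain gives $\sP[E]\ge p_{\min}^{\,N}$, and combining with Step 2 gives the stated inequality. The main obstacle is Step 1: the requirements of Lemma~\ref{lem:recursive_core_lem} are phrased in the ``intrinsic'' scales $1-e^{-2(S-r\eta)}$, $\mu_r$, $L_r$, whereas the hyperparameters are specified as clean polynomials in $d+M$ and $\epsilon$, so the verification hinges entirely on the $1-e^{-2(S-r\eta)}\ge\eta$ substitution and on controlling logarithmic factors by crude polynomial overestimates (and checking these are exactly what the constants $C_n,C_m$ in Table~\ref{tab:constant_list} were sized to cover). A secondary, measure-theoretic point in Step 3 is to make ``the set of particles that appears'' precise enough that the sequential-conditioning argument is rigorous.
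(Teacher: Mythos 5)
Your proposal is correct and follows essentially the same route as the paper's own proof: check that the explicit choices of $n_{k,r}$, $m_{k,r}$, $\tau_r$ dominate the abstract requirements of Lemma~\ref{lem:recursive_core_lem} via $1-e^{-2(S-r\eta)}\ge 1-e^{-2\eta}\ge \eta$ and $e^{-2(S-r\eta)}\ge \tfrac{2L}{2L+1}\ge \tfrac23$ (so inner accuracy $\epsilon/96$ suffices and $\tau_r$ sits at its admissible upper bound), then bound the success probability below by $(1-\delta)\cdot\sP[E]$ and bound $\sP[E]$ by the minimal single-call probability raised to the power $n_{k,r}(10\epsilon)\, m_{k,r}(10\epsilon,\vx)$, your sequential-conditioning argument being a slightly more careful rendering of the paper's independence claim. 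The only blemish is the stray factor $10$ in your Step~2 (the hypothesis of Lemma~\ref{lem:recursive_core_lem} applied with parameter $\epsilon$ reads $\tfrac18 e^{-(S-r\eta)}\epsilon^{1/2}$, not $\tfrac18 e^{-(S-r\eta)}(10\epsilon)^{1/2}$), which is harmless because the needed inequality $\tfrac{\epsilon}{96}=\tfrac23\cdot\tfrac{\epsilon}{64}\le e^{-2(S-r\eta)}\tfrac{\epsilon}{64}$ already holds under your Step~1 bounds.
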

\begin{proof}
    According to Line~\ref{con_alg:rse_inner_update} of Alg~\ref{con_alg:rse}, for any $\vx\in\R^d$, the score estimation $\rbkwv_{k,r\eta}$ is constructed by estimating the mean in RHS of the following expectation using $n_{k,r}$ samples (i.e., calculating the empirical mean):
    \begin{align}            \grad_{\vx} \log p_{k,S-r\eta}(\vx) =&  \mathbb{E}_{\rvx^\prime \sim q_{k,S-r\eta}(\cdot|\vx)}\left[-\frac{\vx - e^{-(S-r\eta)}\rvx^\prime}{\left(1-e^{-2(S-r\eta)}\right)}\right]\label{eq:estimation_score1_proof}\\
            \mathrm{where}\quad  q_{k,S-r\eta}(\vx^\prime|\vx) \propto & \exp\left(\log\ p_{k,0}(\vx^\prime)-\frac{\left\|\vx - e^{-(S-r\eta)}\vx^\prime\right\|^2}{2\left(1-e^{-2(S-r\eta)}\right)}\right).\label{eq:estimation_score2_proof}
    \end{align}
Then in order to guarantee an accurate estimation for $\nabla_\vx\log p_{k,S-r\eta}(\vx)$, i.e., denoted by $\rbkwv_{k,r\eta}(\vx)$, with Lemma~\ref{lem:recursive_core_lem}, we require 
\begin{enumerate}
\item Get a precise estimation for $\nabla \log p_{k,0}(\vx')$, in order to 
guarantee that the estimation for $\nabla \log q_{k,S-r\eta}(\vx'|\vx)$ is accurate. 
In particular, we require
\begin{equation*}
    \left\|\grad\log p_{k,0}(\vx^\prime_{i,j}) -\rbkwv_{k-1,0}(\vx^\prime_{i,j})\right\|\le \frac{e^{-(S-r\eta)}\epsilon^{0.5}}{8}.
\end{equation*}
\item Based on the $\nabla \log q_{k,S-r\eta}(\vx'|\vx)$, we run ULA with appropriate step size $\tau_r$ and iteration number $m_{k,r}$ satisfying
\begin{equation}
    \small
    \label{ineq:fine_grained_req1}
    \tau_r\le \frac{\mu_r}{64L_r^2 d}\cdot (1-e^{-2(S-r\eta)})\epsilon\quad \text{and}\quad m_{k,r}\ge \frac{1}{\mu_r}\cdot \frac{64L_r^2 d }{\mu_r (1-e^{-2(S-r\eta)}) \epsilon}\cdot \log \frac{2C_0}{(1-e^{-2(S-r\eta)})\epsilon}
\end{equation}
to generate samples $\vx'$ whose underlying distribution $q^\prime_{k,S-r\eta}(\cdot|\vx)$ is sufficiently close to $q_{k,S-r\eta}(\vx'|\vx)$, i.e., 
\begin{equation*}
        \KL{q^\prime_{k,S-r\eta}(\cdot|\vx)}{q_{k,S-r\eta}(\cdot|\vx)}\le (1-e^{-2(S-r\eta)})\epsilon.
\end{equation*}
\item Generate a sufficient number of samples satisfying
\begin{equation}
    \label{ineq:fine_grained_req2}
    n_{k,r}\ge \frac{4}{\epsilon (1-e^{-2(S-r\eta)})} \cdot \max\left\{d, -2\log \delta\right\}.
\end{equation}
such that the empirical estimation of the expectation in \eqref{eq:estimation_score1_proof} is accurate, i.e.,
\begin{equation*}
        \begin{aligned}
            &\mathbb{P}\left[\left\|\grad\log p_{k,S-r\eta}(\vx)-\tilde{\rvv}_{k,r\eta}(\vx) \right\|^2\le 10\epsilon\right]\\
            & = \mathbb{P}\left[\left\|\grad\log p_{k,S-r\eta}(\vx) - \frac{1}{n_{k,r}}\sum_{i=1}^{n_{k,r}}\left[-\frac{\vx - e^{-(S-r\eta)}\vx_{i,m_{k,r}}^\prime}{\left(1-e^{-2(S-r\eta)}\right)}\right]\right\|\le 10\epsilon\right]\ge 1-\delta.
        \end{aligned}
\end{equation*}
\end{enumerate}

    Due to the fact $r\eta\ge 0$, the first condition can be achieved by requiring
    \begin{equation*}
        \left\|\grad\log p_{k,0}(\vx^\prime_{i,j}) -\rbkwv_{k-1,0}(\vx^\prime_{i,j})\right\|\le \sqrt{\frac{2}{3}}\cdot \frac{\epsilon^{0.5}}{8} \le \sqrt{\frac{2L}{2L+1}}\cdot \frac{\epsilon^{0.5}}{8} = \frac{e^{-S}\epsilon^{0.5}}{8}\le \frac{e^{-(S-r\eta)}\epsilon^{0.5}}{8},
    \end{equation*}
    where the second inequality is established by supposing $L\ge 1$ without loss of generality, and the last equation follows from the choice of $S$.

    To investigate the setting of hyper-parameters, i.e., the number of samples for empirical mean estimation $n_{k,r}$ and the number of iterations for ULA $m_{k,r}$.
    We first reformulate them as two functions, i.e.,
    \begin{equation*}
        \begin{aligned}
            &n_{k,r}(10\epsilon) = C_n\cdot \frac{(d+M)\cdot \max\{d,-2\log \delta\}}{(10\epsilon)^2}\quad \mathrm{where}\quad C_n = 2^6\cdot 5^2\cdot C_\eta^{-1},\\
            &m_{k,r}(10\epsilon,\vx) = C_m\cdot \frac{(d+M)^3\cdot \max\{\log \|\vx\|^2,1\}}{(10\epsilon)^3}\quad \mathrm{where}\quad C_m = 2^9\cdot 3^2\cdot 5^3\cdot C_{m,1}C_{\eta}^{-1.5}.
        \end{aligned}
    \end{equation*}
    since this presentation helps to explain the connection between them and the input of Alg~\ref{con_alg:rse}.
    Different from the results shown in Lemma~\ref{lem:recursive_core_lem}, $n_{k,r}(\cdot)$ and $m_{k,r}(\cdot,\cdot)$ is independent with $k$ and $r$.
    However, these choices will still make 
    Eq~\ref{ineq:fine_grained_req1} and Eq~\ref{ineq:fine_grained_req2} establish, because
    \begin{equation*}
        \begin{aligned}
            n_{k,r}(10\epsilon)  = & \frac{16}{\epsilon}\cdot \frac{(d+M)}{C_\eta\epsilon}\cdot \max\left\{d,-2\log\delta\right\} \ge \frac{16}{\epsilon \eta}\cdot \max\left\{d,-2\log\delta\right\}\\
            \ge &  \frac{16}{\epsilon (1-e^{-2\eta})}\cdot \max\left\{d,-2\log\delta\right\} \ge \frac{16}{\epsilon (1-e^{-2(S-r\eta)})} \cdot \max\left\{d, -2\log \delta\right\}\\
            m_{k,r}(10\epsilon,\vx) = & 576\cdot \frac{(d+M)^3}{\epsilon^3}\cdot \frac{C_{m,1}}{C_\eta^{1.5}}\cdot \max\{\log\|\vx\|^2,1\} \ge 64\cdot \frac{L_r^2}{\mu_r^2}\cdot \left(\frac{d}{\epsilon\eta}\right)^{1.5}\cdot C_{m,1}\cdot \max\{\log\|\vx\|^2,1\}\\
            \ge & 64\cdot \frac{L_r^2}{\mu_r^2}\cdot \frac{d}{\epsilon\eta}\log \frac{d}{\epsilon\eta} \cdot C_{m,1}\cdot \max\{\log\|\vx\|^2,1\}\ge 64\cdot \frac{L_r^2}{\mu_r^2}\cdot \frac{d}{\epsilon\eta }\left(\log \frac{d\|\vx\|^2}{\epsilon\eta}+C_{m,1}\right)\\
            \ge & 64\cdot \frac{L_r^2}{\mu_r^2}\cdot \frac{d}{\epsilon(1-e^{-2\eta}) }\left(\log \frac{d\|\vx\|^2}{\epsilon(1-e^{-2\eta}) }+C_{m,1}\right)\\
            \ge & \frac{64L_r^2 d }{\mu^2_r (1-e^{-2(S-r\eta)}) \epsilon}\cdot \left( \log \frac{d\|\vx\|^2}{(1-e^{-2(S-r\eta)})\epsilon} +  C_{m,1}\right)
        \end{aligned}
    \end{equation*}
    with the proper choice of step size, i.e., $\eta=C_\eta(d+M)^{-1}\epsilon$.
    With these settings, Lemma~\ref{lem:recursive_core_lem} demonstrates that
    \begin{equation*}
        \small
        \begin{aligned}
            &\sP\left[\left\|\grad\log p_{k,S-r\eta}(\vx)-\rbkwv_{k,r\eta}(\vx) \right\|^2\le 10\epsilon, \forall \vx\in\R^d \Big| \right.\\
            &\left.\quad \bigcap_{
            \vx^\prime\in \sS_{k,r}(\vx,10\epsilon)
        } \left\|\grad\log p_{k,0}(\vx^\prime) -\rbkwv_{k-1,0}(\vx^\prime)\right\|^2\le \frac{\epsilon}{96}\right] \ge 1-\delta.
        \end{aligned}
    \end{equation*}
    where $\sS_{k,r}(\vx,10\epsilon)$ denotes the set of particles appear in Alg~\ref{con_alg:rse} when the input is $(k,r,\vx,10\epsilon)$ except for the recursion.
    It satisfies $|\sS_{k,r}(\vx,10\epsilon)| = n_{k,r}(10\epsilon)\cdot m_{k,r}(10\epsilon,\vx)$. 
    Furthermore, we have
    \begin{equation}
        \label{ineq: recursive_prob_lob_fine_grained_condi1}
        \small
        \begin{aligned}
        & \sP\left[\left\|\grad\log p_{k,S-r\eta}(\vx)-\rbkwv_{k,r\eta}(\vx)\right\|^2\le 10\epsilon\right]\\
        & \ge \sP\left[\left\|\grad\log p_{k,S-r\eta}(\vx)-\rbkwv_{k,r\eta}(\vx)\right\|^2\le 10\epsilon \Big| \bigcap_{
            \vx^\prime\in \sS_{k,r}(\vx,10\epsilon)
        } \left\|\grad\log p_{k,0}(\vx^\prime) -\rbkwv_{k-1,0}(\vx^\prime)\right\|^2\le \frac{\epsilon}{96}\right]\\
        &\quad \cdot \mathbb{P}\left[\bigcap_{
            \vx^\prime\in \sS_{k,r}(\vx,10\epsilon)
        } \left\|\grad\log p_{k,0}(\vx^\prime) -\rbkwv_{k-1,0}(\vx^\prime)\right\|^2\le \frac{\epsilon}{96}\right]\\
        & \ge (1-\delta)\cdot \mathbb{P}\left[\bigcap_{
            \vx^\prime\in \sS_{k,r}(\vx,10\epsilon)
        } \left\|\grad\log p_{k,0}(\vx^\prime) -\rbkwv_{k-1,0}(\vx^\prime)\right\|^2\le \frac{\epsilon}{96}\right].
        \end{aligned}
    \end{equation}
    Considering that for each $\vx_{i,j}^\prime$, the score estimation, i.e., $\rbkwv_{k-1,0}(\vx_{i,j}^\prime)$ is independent,
    hence, we have
    \begin{equation}
        \label{ineq: recursive_prob_lob_fine_grained_condi2}
        \begin{aligned}
        &\mathbb{P}\left[\bigcap_{
            \vx^\prime\in \sS_{k,r}(\vx,10\epsilon)
        } \left\|\grad\log p_{k,0}(\vx^\prime) -\rbkwv_{k-1,0}(\vx^\prime)\right\|^2\le \frac{\epsilon}{96}\right]\\
        &= \prod_{\vx^\prime\in\sS_{k,r}(\vx,10\epsilon)}\mathbb{P}\left[\left\|\grad\log p_{k,0}(\vx^\prime) -\rbkwv_{k-1,0}(\vx^\prime)\right\|^2\le \frac{\epsilon}{96}\right]\\
        & \ge \left(\min_{\vx^\prime\in\sS_{k,r}(\vx,\epsilon)} \mathbb{P}\left[\left\|\grad\log p_{k,0}(\vx^\prime) -\rbkwv_{k-1,0}(\vx^\prime)\right\|^2\le \frac{\epsilon}{96}\right]\right)^{|\sS_{k,r}(\vx,\epsilon)|}
        \end{aligned}
    \end{equation}
    Therefore, combining Eq~\ref{ineq: recursive_prob_lob_fine_grained_condi1} and Eq~\ref{ineq: recursive_prob_lob_fine_grained_condi2}, we have
    \begin{equation*}
        \begin{aligned}            & \sP\left[\left\|\grad\log p_{k,S-r\eta}(\vx)-\rbkwv_{k,r\eta}(\vx)\right\|^2\le 10\epsilon \right]\\
        \ge & (1-\delta)\cdot \left(\min_{\vx^\prime\in\sS_{k,r}(\vx,\epsilon)} \mathbb{P}\left[\left\|\grad\log p_{k,0}(\vx^\prime) -\rbkwv_{k-1,0}(\vx^\prime)\right\|^2\le \frac{\epsilon}{96}\right]\right)^{n_{k,r}(10\epsilon)\cdot m_{k,r}(10\epsilon,\vx)},
        \end{aligned}
    \end{equation*}
    and the proof is completed.
\end{proof}

\begin{corollary}[Errors from coarse-grained score estimation]
    \label{lem:rec_err_coarse_grained}
    Under the notation in Section~\ref{sec:not_ass_app}, suppose the step size satisfy $\eta=C_1(d+M)^{-1}\epsilon$,
    we have
    \begin{equation}
        \label{ineq: recursive_prob_lob_coarse_grained}
        \begin{aligned}            & \sP\left[\left\|\grad\log p_{k+1,0}(\vx)- \rbkwv_{k,0}(\vx)\right\|^2\le 10\epsilon,\forall \vx\in\R^d\right]\\
        \ge & (1-\delta)\cdot \left(\min_{\vx^\prime\in\sS_{k,0}(\vx,\epsilon)} \mathbb{P}\left[\left\|\grad\log p_{k,0}(\vx^\prime) -\rbkwv_{k-1,0}(\vx^\prime)\right\|^2\le \frac{\epsilon}{96}\right]\right)^{n_{k,0}(10\epsilon)\cdot m_{k,0}(10\epsilon,\vx)},
        \end{aligned}
    \end{equation}
    where $\sS_{k,0}(\vx,10\epsilon)$ denotes the set of particles appear in Alg~\ref{con_alg:rse} when the input is $(k,0,\vx,10\epsilon)$.
    For any $k\in\mathbb{N}_{1,K-1}$ by requiring 
    \begin{equation*}
        \begin{aligned}
            &n_{k,0}(10\epsilon) = C_n\cdot \frac{(d+M)\cdot \max\{d,-2\log \delta\}}{(10\epsilon)^2}\quad \mathrm{where}\quad C_n = 2^6\cdot 5^2\cdot C_\eta^{-1},\\
            &m_{k,0}(10\epsilon,\vx) = C_m\cdot \frac{(d+M)^3\cdot \max\{\log \|\vx\|^2,1\}}{(10\epsilon)^3}\quad \mathrm{where}\quad C_m = 2^9\cdot 3^2\cdot 5^3\cdot C_{m,1}C_{\eta}^{-1.5}.
        \end{aligned}
    \end{equation*}
    Besides, for any $\vx\in\R^d$, we have
    \begin{equation*}
        \mathbb{P}\left[\left\|\grad\log p_{0,0}(\vx^\prime) -\rbkwv_{-1,0}(\vx^\prime)\right\|^2\le \frac{\epsilon}{96},\forall \vx^\prime\in\R^d\right]=1
    \end{equation*}
    by requiring $\tilde{\rvv}_{-1,0}(\vx^\prime)=-\grad f_*(\vx^\prime)$, which corresponds to Line~\ref{con_alg:rse_score_ret_condi} in Alg~\ref{con_alg:rse}.
\end{corollary}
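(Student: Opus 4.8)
The plan is to obtain this corollary essentially for free as the $r=0$ instance of Lemma~\ref{lem:rec_err_fine_grained}, supplemented by a one-line verification of the recursion's base case. First I would record the bookkeeping identity: by construction of the segmented OU process in SDE~\eqref{con_eq:rrds_forward} we have $\rvx_{k,S}=\rvx_{k+1,0}$, hence the densities coincide, $p_{k,S}=p_{k+1,0}$, and therefore $\grad\log p_{k,S}=\grad\log p_{k+1,0}$. Setting $r=0$ gives $S-r\eta=S$, so running $\mathsf{RSE}(k,0,\vx,10\epsilon)$ (with the branch $t'\gets S$ of Alg~\ref{con_alg:rse}) produces $\rbkwv_{k,0}$, an approximation of $\grad\log p_{k,S}=\grad\log p_{k+1,0}$, using the auxiliary target $q_{k,S}(\cdot|\vx)$, whose score requires $\grad\log p_{k,0}$. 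In Alg~\ref{con_alg:rse} this required quantity is supplied by the recursive call $\mathsf{RSE}(k-1,0,\cdot,l_{k,0}(\epsilon))$, i.e.\ by $\rbkwv_{k-1,0}$, which approximates $\grad\log p_{k-1,S}=\grad\log p_{k,0}$; this is precisely the quantity appearing in the conditioning event of \eqref{ineq: recursive_prob_lob_coarse_grained}.

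With these identifications in place, I would simply read off Lemma~\ref{lem:rec_err_fine_grained} at $r=0$. That lemma already holds for every $(k,r)\in\mathbb{N}_{0,K-1}\times\mathbb{N}_{0,R-1}$ with the same index-independent functions $n_{k,r}(\cdot)$ and $m_{k,r}(\cdot,\cdot)$, so the choices $n_{k,0}(10\epsilon)=C_n(d+M)\max\{d,-2\log\delta\}/(10\epsilon)^2$ and $m_{k,0}(10\epsilon,\vx)=C_m(d+M)^3\max\{\log\|\vx\|^2,1\}/(10\epsilon)^3$ are nothing but the $r=0$ specializations of the ones already justified there. The restriction $k\in\mathbb{N}_{1,K-1}$ is needed only so that the recursive index $k-1$ remains nonnegative (the case $k=0$ being the base case). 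The lower bound \eqref{ineq: recursive_prob_lob_coarse_grained} itself, including the fact that $\sS_{k,0}(\vx,10\epsilon)$ denotes the set of non-recursive particles of cardinality $n_{k,0}(10\epsilon)\cdot m_{k,0}(10\epsilon,\vx)$ and the passage from an intersection-of-events probability to a product and then to the minimum raised to that cardinality (using independence of the recursive score estimations), is inherited verbatim from the proof of Lemma~\ref{lem:rec_err_fine_grained}, so I would only cite that argument rather than repeat it.

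Finally, for the base case I would invoke line~\ref{con_alg:rse_score_ret_condi} of Alg~\ref{con_alg:rse}: when $k\equiv-1$ the algorithm returns $-\grad f_*(\vx')$, and by \eqref{def:v-10_def} this equals $\grad\log p_{0,0}(\vx')$ exactly; hence $\|\grad\log p_{0,0}(\vx')-\rbkwv_{-1,0}(\vx')\|^2=0\le\epsilon/96$ for all $\vx'\in\R^d$, deterministically, which gives the asserted probability $1$. Since no genuinely new estimate is introduced beyond Lemma~\ref{lem:rec_err_fine_grained}, I do not expect a real obstacle here; the only point that demands care is the index bookkeeping — verifying $S-r\eta|_{r=0}=S$, the identification $p_{k,S}=p_{k+1,0}$, the shift between $\rbkwv_{k,0}$ and $\grad\log p_{k+1,0}$, and the $t'\gets S$ branch — so that the $r=0$ case of the earlier lemma applies without modification.
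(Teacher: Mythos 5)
Your proposal is correct and follows essentially the same route as the paper: specialize Lemma~\ref{lem:rec_err_fine_grained} to $r=0$, use the identity $\rvx_{k,S}=\rvx_{k+1,0}$ (hence $p_{k,S}=p_{k+1,0}$ and equality of their scores) to rewrite the bound in terms of $\grad\log p_{k+1,0}$, and dispatch the base case via \eqref{def:v-10_def}, where $\rbkwv_{-1,0}=-\grad f_*=\grad\log p_{0,0}$ exactly, giving probability $1$. No gaps.
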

\begin{proof}
    When $k>0$, plugging $r=0$ into Lemma~\ref{lem:rec_err_fine_grained}, we can obtain the result except inequality Eq~\ref{ineq: recursive_prob_lob_coarse_grained}.
    Instead, we have
    \begin{equation}
        \label{eq:margin_prob}
        \begin{aligned}            & \sP\left[\left\|\grad\log p_{k,S}(\vx)- \rbkwv_{k,0}(\vx)\right\|^2\le 10\epsilon,\forall \vx\in\R^d\right]\\
        \ge & (1-\delta)\cdot \left(\min_{\vx^\prime\in\sS_{k,0}(\vx,\epsilon)} \mathbb{P}\left[\left\|\grad\log p_{k,0}(\vx^\prime) -\rbkwv_{k-1,0}(\vx^\prime)\right\|^2\le \frac{\epsilon}{96}\right]\right)^{n_{k,0}(10\epsilon)\cdot m_{k,0}(10\epsilon,\vx)}.
        \end{aligned}
    \end{equation}
    Since the forward process, i.e., SDE~\ref{con_eq:rrds_forward}, satisfies $\rvx_{k,S}=\rvx_{k+1,0}$, we have 
    \begin{equation*}
        p_{k,S}(\vx)=p_{k+1,0}(\vx)=\int p_*(\vy)\cdot \left(2\pi \left(1-e^{-2(k+1)S}\right)\right)^{-d/2}
     \cdot \exp \left[\frac{-\left\|\vx -e^{-(k+1)S}\vy \right\|^2}{2\left(1-e^{-2(k+1)S}\right)}\right] \der \vy,
    \end{equation*}
    which means $\grad\log p_{k,S}=\grad\log p_{k+1,0}$.
    Therefore, Eq~\ref{ineq: recursive_prob_lob_coarse_grained} is established.

    When $k=0$, due to the definition of $\tilde{\rvv}_{-1,0}$ in Eq~\ref{def:v-10_def}, we know Eq~\ref{eq:margin_prob} is established.
    Hence, the proof is completed.
\end{proof}

\begin{lemma}[Errors from score estimation]
    \label{lem:rse_err}
    Under the notation in Section~\ref{sec:not_ass_app}, suppose the step size satisfy $\eta=C_\eta(d+M)^{-1}\epsilon$, we have
    \begin{equation*}
        \sP\left[\bigcap_{
                \substack{k\in \mathbb{N}_{0,K-1}\\ r\in\mathbb{N}_{0,R-1}}
           }\left\|\grad\log p_{k,S-r\eta}(\bkwx_{k,r\eta})-\rbkwv_{k,r\eta}(\bkwx_{k,r\eta})\right\|^2\le 10\epsilon\right]\ge 1-\epsilon
    \end{equation*}
    with Alg~\ref{con_alg:rse} by properly choosing the number for mean estimations and ULA iterations.
    The total gradient complexity will be at most 
    {
        \begin{equation*}
            \exp\left[\mathcal{O}\left(L^3\cdot  \left(\log \frac{Ld+M}{\epsilon}\right)^3 \cdot \max\left\{\log \log Z^2, 1\right\} \right)\right],
        \end{equation*}
    }  
    where $Z$ is the maximal norm of particles that appear in Alg~\ref{alg:rrds}.
\end{lemma}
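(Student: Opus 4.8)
This lemma is the cumulative (all-$(k,r)$, all-levels) version of the per-call recursion already in hand, so the plan is to unroll Lemma~\ref{lem:rec_err_fine_grained} and Corollary~\ref{lem:rec_err_coarse_grained} down the recursion tree, union-bound over the top-level calls, and then fix the hyperparameters. First I would fix notation: a generic call is $\mathsf{RSE}(k,r,\vx,\epsilon')$, the base case $k=-1$ returns $-\grad f_*$ exactly (zero error, by Corollary~\ref{lem:rec_err_coarse_grained}), and each call sitting at depth $j$ in the tree carries tolerance $10\epsilon\cdot 960^{-j}$ because the recursive call passes $l_{k,r}(\epsilon')=\epsilon'/960$. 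Iterating the bound of Lemma~\ref{lem:rec_err_fine_grained},
\begin{equation*}
\sP\big[\|\grad\log p_{k,S-r\eta}(\vx)-\rbkwv_{k,r\eta}(\vx)\|^2\le 10\epsilon\big]\ge (1-\delta)\cdot\Big(\min_{\vx'}\sP\big[\|\grad\log p_{k,0}(\vx')-\rbkwv_{k-1,0}(\vx')\|^2\le\tfrac{\epsilon}{96}\big]\Big)^{n_{k,r}m_{k,r}},
\end{equation*}
all the way down to the exact base case and using $(1-\delta)^N\ge 1-N\delta$, a single top-level score estimation succeeds with probability at least $1-O(K)\,\delta\,\prod_{j=0}^{K-1}n_j m_j$, where $n_j,m_j$ denote the sample/iteration counts at depth $j$ (evaluated at tolerance $10\epsilon\cdot 960^{-j}$, hence growing geometrically in $j$). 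A union bound over the $KR$ pairs $(k,r)$ that Alg~\ref{alg:rrds} actually queries (with $R=S/\eta$) then reduces the desired $1-\epsilon$ guarantee to requiring $\delta\le \epsilon/\big(K^2R\prod_j n_j m_j\big)$.

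\textbf{Fixing $\delta$ and counting.} Next I would resolve the circular dependence: $n_{k,r}$ contains $\max\{d,-2\log\delta\}$, but only as a multiplicative factor, so $\log\prod_j n_j m_j$ depends on $\delta$ only through $\log(-\log\delta)$ and the fixed-point inequality $-\log\delta \gtrsim \log(K^2R/\epsilon)+\sum_j\log(n_j m_j)$ admits a solution of size $-\log\delta=O\big((K{+}1)\,C_{u,2}\log^2\tfrac{Ld+M}{\epsilon}\big)$, which is precisely the value written out in Theorem~\ref{thm:main_rrds_formal} once $K=\tfrac{2}{S}\log\tfrac{Ld+M}{\epsilon}$ and the constants of Table~\ref{tab:constant_list} are substituted. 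Then, with $\delta$, $n_{k,r}$, $m_{k,r}$, $\tau_r$ fixed as in that theorem, the number of $\grad f_*$ evaluations is $KR\cdot\prod_{j=0}^{K-1}n_j m_j$; taking logarithms, $\log(n_j m_j)=O\big(\log\tfrac{Ld+M}{\epsilon}+\log\log Z^2\big)+O(j)$, where the $O(j)$ is the geometric tolerance shrinkage and the $\log\log Z^2$ comes from the $\max\{\log\|\vx\|^2,1\}$ factor in $m_{k,r}$ bounded via $\|\vx\|\le Z$. Summing, $\sum_{j=0}^{K-1}\log(n_j m_j)=O(K^2)+O\big(K\log\tfrac{Ld+M}{\epsilon}\max\{\log\log Z^2,1\}\big)$, and substituting $S=\tfrac12\log(1+\tfrac1{2L})=\Theta(L^{-1})$, hence $K=\Theta\big(L\log\tfrac{Ld+M}{\epsilon}\big)$, and bounding crudely yields $\exp[\mathcal O(L^3(\log\tfrac{Ld+M}{\epsilon})^3\max\{\log\log Z^2,1\})]$, as claimed.

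\textbf{Main obstacle.} There is essentially no new probabilistic idea here — everything rests on Lemma~\ref{lem:rec_err_fine_grained} and Corollary~\ref{lem:rec_err_coarse_grained} — so the hard part will be the accounting, and in particular two coupled bookkeeping issues: the per-level tolerance shrinks by a fixed constant factor over $\Theta(L\log(\cdot))$ levels, so the counts $n_j,m_j$ blow up geometrically and one must check this costs only the $O(K^2)$, i.e.\ $O(L^2\log^2)$, term and nothing worse; and the failure probability $\delta$ is doubly small, with $-\log\delta$ scaling like $C_{u,2}\cdot K=\Theta(L^2)\cdot\Theta(L\log(\cdot))$, yet it must be verified that $-\log\delta$ enters the gradient count only inside an outer logarithm so that the final bound remains quasi-polynomial. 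The most tedious piece is confirming that the specific $\delta$, $n_{k,r}$, $m_{k,r}$, $\tau_r$, $\eta$ declared in Theorem~\ref{thm:main_rrds_formal} simultaneously satisfy all the constraints invoked along the way — the concentration requirement of Lemma~\ref{lem:concen_prop_v2}, the ULA step-size and iteration conditions of Corollary~\ref{cor:innerloop_con} and Lemma~\ref{lem:recursive_core_lem}, and the discretization bound $\eta=C_\eta(d+M)^{-1}\epsilon$.
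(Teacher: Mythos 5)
Your proposal is correct and follows essentially the same route as the paper: unroll Lemma~\ref{lem:rec_err_fine_grained} and Corollary~\ref{lem:rec_err_coarse_grained} down the $O(K)$-level recursion with tolerance shrinking by the factor $960$ per level, use $(1-\delta)^N\ge 1-N\delta$ and a bound over the $KR$ queried pairs, and fix $\delta$ through the same fixed-point argument in which $-\log\delta$ enters the counts only logarithmically. The only cosmetic difference is the final accounting — you count the $\grad f_*$ calls directly ($K\log u+O(K^2)$ in the exponent) whereas the paper bounds the count by $\epsilon/\delta$ — but both land within the claimed $\exp[\mathcal{O}(L^3\log^3((Ld+M)/\epsilon)\max\{\log\log Z^2,1\})]$.
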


\begin{proof}
    We begin with lower bounding the following probability with $(i,j)\in \mathbb{N}_{0,K-1}\times \mathbb{N}_{0,R-1}$ and $(i,j)\not=(0,0)$,
    \begin{equation*}
        \sP\left[\left\|\grad\log p_{k,S-r\eta}(\bkwx_{k,r\eta})- \rbkwv_{k,r\eta}(\bkwx_{k,r\eta})\right\|^2\le 10\epsilon\right].
    \end{equation*}
    In the following part of this Lemma, we set $\eta=C_\eta(d+M)^{-1}\epsilon$ and denote $\delta$ as a tiny positive constant waiting for determining.
    With Lemma~\ref{lem:rec_err_fine_grained}, we have 
    \begin{equation}
        \small
        \label{ineq:prob_lob_beg}
        \begin{aligned}            & \sP\left[\left\|\grad\log p_{k,S-r\eta}(\bkwx_{k,r\eta})- \rbkwv_{k,r\eta}(\bkwx_{k,r\eta})\right\|^2\le 10\epsilon\right]\\
        \ge & (1-\delta)\cdot \left(\min_{\vx^\prime\in\sS_{k,r}(\bkwx_{k,r\eta},10\epsilon)} \mathbb{P}\left[\left\|\grad\log p_{k,0}(\vx^\prime) -\rbkwv_{k-1,0}(\vx^\prime)\right\|^2\le \frac{10\epsilon}{960}\right]\right)^{n_{k,r}(10\epsilon)\cdot m_{k,r}(10\epsilon,\bkwx_{k,r\eta})}.
        \end{aligned}
    \end{equation}
    Then, if $k\ge 1$, for each item of the latter term, supposing $10\epsilon^\prime = \epsilon/96$, Lemma~\ref{lem:rec_err_coarse_grained} shows
    \begin{equation*}
        \small
        \begin{aligned}
            & \mathbb{P}\left[\left\|\grad\log p_{k,0}(\vx^\prime) -\rbkwv_{k-1,0}(\vx^\prime)\right\|^2\le \frac{\epsilon}{96}\right] = \mathbb{P}\left[\left\|\grad\log p_{k,0}(\vx^\prime) -\rbkwv_{k-1,0}(\vx^\prime)\right\|^2\le 10\epsilon^\prime\right] \\
            \ge & (1-\delta)\cdot \left(\min_{\vx^{\prime\prime}\in \sS_{k-1,0}(\vx^\prime, 10\epsilon^\prime)} \mathbb{P}\left[\left\|\grad\log p_{k-1,0}(\vx^{\prime\prime}) -\rbkwv_{k-2,0}(\vx^{\prime\prime})\right\|^2\le \frac{\epsilon^\prime}{96}\right]\right)^{n_{k,0}(10\epsilon^\prime)\cdot m_{k,r}(10\epsilon^\prime,\vx^\prime)}\\
            = & (1-\delta)\cdot \left(\min_{\vx^{\prime\prime}\in \sS_{k-1,0}(\vx^\prime, \epsilon/96)}\mathbb{P}\left[\left\|\grad\log p_{k-1,0}(\vx^{\prime\prime}) -\rbkwv_{k-2,0}(\vx^{\prime\prime})\right\|^2\le \frac{\epsilon}{96\cdot 960}\right]\right)^{n_{k,0}(\epsilon/96)\cdot m_{k,0}(\epsilon/96,\vx^\prime)}.
        \end{aligned}
    \end{equation*}
    Only particles that appear in the iteration will appear in powers of Eq~\ref{ineq:prob_lob_beg}.
    To simplify the notation, we set $Z$ as the upper bound of the norm of particles appear in Alg~\ref{alg:rrds}, 
    \begin{equation*}
        \begin{aligned}
            &m_{k,r}(10\epsilon,\vx)\le  m_{k,r}(10\epsilon)\coloneqq C_m\cdot \frac{(d+M)^3\cdot \max\{2\log Z,1\}}{(10\epsilon)^3}\\
            &\mathrm{and}\quad u_{k,r}(\epsilon)\coloneqq n_{k,r}(\epsilon)\cdot m_{k,r}(\epsilon).
        \end{aligned}
    \end{equation*}
    Plugging this inequality into Eq~\ref{ineq:prob_lob_beg}, we have
    \begin{equation*}
        \begin{aligned}
            &\sP\left[\left\|\grad\log p_{k,S-r\eta}(\bkwx_{k,r\eta})- \rbkwv_{k,r\eta}(\bkwx_{k,r\eta})\right\|^2\le 10\epsilon\right]\\
            \ge & (1-\delta)^{1+u_{k,r}(10\epsilon)}\cdot \left(\mathbb{P}\left[\left\|\grad\log p_{k-1,0}(\vx^{\prime\prime}) -\rbkwv_{k-2,0}(\vx^{\prime\prime})\right\|^2\le \frac{10\epsilon}{(960)^2}\right]\right)^{u_{k,r}(10\epsilon)\cdot u_{k,0}(\frac{\epsilon}{96})}.
        \end{aligned}
    \end{equation*}
    Using Lemma~\ref{lem:rec_err_coarse_grained} recursively, we will have
    \begin{equation}
        \small
        \label{ineq:prob_lob_mid}
        \begin{aligned}
            &\sP\left[\left\|\grad\log p_{k,S-r\eta}(\bkwx_{k,r\eta})- \rbkwv_{k,r\eta}(\bkwx_{k,r\eta})\right\|^2\le 10\epsilon \right]\\
            \ge & (1-\delta)^{1+u_{k,r}(10\epsilon)+ u_{k,r}(10\epsilon)\cdot u_{k,0}\left(\frac{10\epsilon}{960}\right)+\ldots + u_{k,r}(10\epsilon)\cdot \prod_{i=k}^2 u_{i,0}\left(\frac{10\epsilon}{960^{k-i+1}}\right)}\\
            &\left(\mathbb{P}\left[\left\|\grad\log p_{0,0}(\vx^\prime) -\tilde{\rvv}_{-1,0}(\vx^\prime)\right\|^2\le \frac{10\epsilon}{(960)^{k+1}},\forall \vx^\prime\in\R^d\right]\right)^{u_{k,r}(10\epsilon)\cdot \prod_{i=k}^1 u_{i,0}\left(\frac{10\epsilon}{960^{k-i+1}}\right)}\\
            = & (1-\delta)^{1+u_{k,r}(10\epsilon)+ u_{k,r}(10\epsilon)\cdot u_{k,0}\left(\frac{10\epsilon}{960}\right)+\ldots + u_{k,r}(10\epsilon)\cdot \prod_{i=k}^2 u_{i,0}\left(\frac{10\epsilon}{960^{k-i+1}}\right)}\\
            \ge & 1-\delta\cdot \left(1+u_{k,r}(10\epsilon)+ u_{k,r}(10\epsilon)\cdot u_{k,0}\left(\frac{10\epsilon}{960}\right)+\ldots + u_{k,r}(10\epsilon)\cdot \prod_{i=k}^2 u_{i,0}\left(\frac{10\epsilon}{960^{k-i+1}}\right)\right)
        \end{aligned}
    \end{equation}
    where the third inequality follows from the case $k=0$ in Lemma~\ref{lem:rec_err_coarse_grained} and the last inequality follows from union bound.

    Then, we start to upper bound the coefficient of $\delta$. 
    According to Lemma~\ref{lem:rec_err_fine_grained} and Lemma~\ref{lem:rec_err_coarse_grained}, it can be noted that the function $u_{k,r}(\cdot)$ is independent with $k$ and $r$.
    It is actually because we provide a union bound for the sample number $n_{k,r}$ and the iteration number $m_{k,r}$ when $(k,r)\in \mathbb{N}_{0,K-1}\times \mathbb{N}_{0,R-1}$.
    Therefore, the explicit form of the uniformed $u$ is defined as
    \begin{equation*}
        \begin{aligned}
            u(10\epsilon) = \underbrace{C_nC_m\cdot (d+m^2_2)^4\cdot \max\{d, \log (1/\delta^2)\}\cdot \max\{2\log Z, 1\}}_{\text{independent with $\epsilon$}}\cdot (10\epsilon)^{-5}
        \end{aligned}
    \end{equation*}
    Then, we have
    \begin{equation*}
        \begin{aligned}
            u\left(\frac{10\epsilon}{960}\right)=u(10\epsilon) \cdot 960^5\quad\mathrm{and}\quad u\left(\frac{10\epsilon}{960^i}\right)= u(10\epsilon) \cdot 960^{5i}.
        \end{aligned}
    \end{equation*}
    Combining this result with Eq~\ref{ineq:prob_lob_mid}, we obtain
    \begin{equation*}
        \begin{aligned}
            &1+u_{k,r}(10\epsilon)+ u_{k,r}(10\epsilon)\cdot u_{k,0}\left(\frac{10\epsilon}{960}\right)+\ldots + u_{k,r}(10\epsilon)\cdot \prod_{i=k}^2 u_{i,0}\left(\frac{10\epsilon}{960^{k-i+1}}\right)\\
            &\le (k+1)\cdot u(10\epsilon)\cdot \prod_{i=k}^2 u\left(\frac{10\epsilon}{960^{5(k-i+1)}}\right)= (k+1)\cdot u(10\epsilon)\cdot\prod_{i=k}^2 \left(u(10\epsilon)\cdot 960^{k-i+1} \right)\\
            &= (k+1)\cdot 960^{2.5k(k-1)}\cdot u(10\epsilon)^{k}\le K\cdot 960^{2.5(K-1)(K-2)}\cdot u(10\epsilon)^{K-1}.
        \end{aligned}
    \end{equation*}
    Considering that $K=2/S\cdot \log[(Ld+M)/\epsilon]$, to bound RHS of the previous inequality,
    we have
    \begin{equation*}
        \begin{aligned}
            & \log \left(960^{2.5(K-1)(K-2)}\cdot u(10\epsilon)^{K-1}\right)= 2.5(K-1)(K-2)\log(960)+(K-1)\log(u(10\epsilon)) \\
            &\le  2.5\cdot \log(960)\cdot \left(\frac{2}{S}\log \frac{Ld+M}{\epsilon}\right)^2+ \frac{2}{S}\log \frac{Ld+M}{\epsilon}\cdot \left(\log C_nC_m+4\log(d+M)+ \log d+\log\left(2\log \frac{1}{\delta}\right)\right.\\
            &\quad \left.+\log \left(2 \max\left\{\log Z, \frac{1}{2}\right\}\right) + \log(10^{-5})+ 5\log \frac{1}{\epsilon}\right).
        \end{aligned}
    \end{equation*}
    To make the result more clear, we set
    \begin{equation*}
        C_{u,1}\coloneqq \log(C_nC_m)+\log 2+\log \left(2 \max\left\{\log Z, \frac{1}{2}\right\}\right)-5\log 10 
    \end{equation*}
    which is independent with $d$, $\epsilon$ and $\delta$.
    Then, it has
    \begin{equation*}
        \begin{aligned}
            & \log \left(960^{2.5(K-1)(K-2)}\cdot u(10\epsilon)^{K-1}\right)\\
            \le & \frac{70}{S^2}\left(\log \frac{Ld+M}{\epsilon}\right)^2+ \frac{2}{S}\log \frac{Ld+M}{\epsilon}\cdot\left[C_{u,1}+5\log(d+M)+\log\log \frac{1}{\delta}+ 5\log \frac{1}{\epsilon}\right].
        \end{aligned}
    \end{equation*}
    which means
    \begin{equation}
        \label{ineq:grad_comp_cal}
        \begin{aligned}
            & 960^{2.5(K-1)(K-2)}\cdot u(10\epsilon)^{K-1}\\
            & \le \exp\left[\frac{70}{S^2}\left(\log \frac{Ld+M}{\epsilon}\right)^2+ \frac{2}{S}\log \frac{Ld+M}{\epsilon}\cdot\left(C_{u,1}+5\log(d+M)+\log\log \frac{1}{\delta}+ 5\log \frac{1}{\epsilon}\right)\right]\\
            & \le \mathrm{pow}\left(\frac{Ld+M}{\epsilon},\left(\left(\frac{70}{S^2}+\frac{10}{S}\right)\log\frac{Ld+M}{\epsilon}+\frac{2}{S}\log\log\frac{1}{\delta} + \frac{2C_{u,1}}{S}\right)\right)
        \end{aligned}
    \end{equation}
    where the last inequality suppose $L\ge 1$ as the previous settings.
    To simplify notation, we set
    \begin{equation*}
        \begin{aligned}
            C_{u,2}\coloneqq \frac{70}{S^2}+\frac{10}{S}\quad \mathrm{and}\quad C_{u,3}\coloneqq \frac{2C_{u,1}}{S}.
        \end{aligned}
    \end{equation*}
    Plugging this result into Eq~\ref{ineq:prob_lob_mid}, we have
    \begin{equation}
        \label{ineq:prob_lob_mid2}
        \begin{aligned}
            &\sP\left[\left\|\grad\log p_{k,S-r\eta}(\bkwx_{k,r\eta})- \rbkwv_{k,r\eta}(\bkwx_{k,r\eta})\right\|^2\le 10\epsilon \right]\\
            & \ge 1-\delta \cdot K \cdot \mathrm{pow}\left(\frac{Ld+M}{\epsilon},C_{u,2}\log\frac{Ld+M}{\epsilon}+\frac{2}{S}\log\log\frac{1}{\delta}+C_{u,3}\right) .
        \end{aligned}
    \end{equation}

    With these conditions, we can lower bound score estimation errors along Alg~\ref{alg:rrds}. 
    That is
    \begin{equation*}
        \begin{aligned}
           & \sP\left[\bigcap_{
                \substack{k\in \mathbb{N}_{0,K-1}\\ r\in\mathbb{N}_{0,R-1}}
           }\left\|\grad\log p_{k,S-r\eta}(\bkwx_{k,r\eta})-\rbkwv_{k,r\eta}(\bkwx_{k,r\eta})\right\|^2\le 10\epsilon\right]\\
           = & \prod_{
                \substack{k\in \mathbb{N}_{0,K-1}\\ r\in\mathbb{N}_{0,R-1}}
           } \sP\left[\left\|\grad\log p_{k,S-r\eta}(\bkwx_{k,r\eta})-\rbkwv_{k,r\eta}(\bkwx_{k,r\eta})\right\|^2\le 10\epsilon\right]
        \end{aligned}
    \end{equation*}
    where the first inequality establishes because the random variables, $\rbkwv_{k,r\eta}$, are independent for each $(k,r)$ pair.
    By introducing Eq~\ref{ineq:prob_lob_mid2}, we have
    \begin{equation}
        \small
        \label{ineq:prob_lob_mid3}
        \begin{aligned}
            &\prod_{
                \substack{k\in \mathbb{N}_{0,K-1}\\ r\in\mathbb{N}_{0,R-1}}
           } \sP\left[\left\|\grad\log p_{k,S-r\eta}(\bkwx_{k,r\eta})-\rbkwv_{k,r\eta}(\bkwx_{k,r\eta})\right\|^2\le 10\epsilon\right]\\
           \ge & \left(1-\delta \cdot K \cdot \mathrm{pow}\left(\frac{Ld+M}{\epsilon},C_{u,2}\log\frac{Ld+M}{\epsilon}+\frac{2}{S}\log\log\frac{1}{\delta}+C_{u,3}\right) \right)^{KR}\\
           \ge & 1- \delta\cdot K^2R\cdot \mathrm{pow}\left(\frac{Ld+M}{\epsilon},C_{u,2}\log\frac{Ld+M}{\epsilon}+\frac{2}{S}\log\log\frac{1}{\delta}+C_{u,3}\right)\\
           = & 1- \delta\cdot  \frac{4(d+M)}{SC_\eta\epsilon}\left(\log \frac{Ld+M}{\epsilon}\right)^2\cdot \mathrm{pow}\left(\frac{Ld+M}{\epsilon},C_{u,2}\log\frac{Ld+M}{\epsilon}+\frac{2}{S}\log\log\frac{1}{\delta}+C_{u,3}\right)
        \end{aligned}
    \end{equation}
    where the first inequality follows from Eq~\ref{ineq:prob_lob_mid2} and the second inequality follows from the union bound, and the last inequality follows from the combination of the choice of the step size, i.e., $\eta=C_1(d+M)^{-1}\epsilon$ and the definition of $K$ and $R$, i.e.,
    \begin{equation*}
        K=\frac{T}{S}= \frac{2}{S}\log\frac{C_0}{\epsilon},\quad R=\frac{S}{\eta}=\frac{S(d+M)}{C_\eta\epsilon}.
    \end{equation*}
    It means when $\delta$ is small enough, we can control the recursive error with a high probability, i.e.,
    \begin{equation}
        \label{ineq:prob_lob_mid4}
        \prod_{
                \substack{k\in \mathbb{N}_{0,K-1}\\ r\in\mathbb{N}_{0,R-1}}
           } \sP\left[\left\|\grad\log p_{k,S-r\eta}(\bkwx_{k,r\eta})-\rbkwv_{k,r\eta}(\bkwx_{k,r\eta})\right\|^2\le 10\epsilon\right]\ge 1-\epsilon.
    \end{equation}
    Compared with Eq~\ref{ineq:prob_lob_mid3}, Eq~\ref{ineq:prob_lob_mid4} can be achieved by requiring
    \begin{equation*}
        \small
        \begin{aligned}
            \underbrace{\frac{4(d+M)}{SC_\eta\epsilon}\left(\log \frac{Ld+M}{\epsilon}\right)^2\cdot \mathrm{pow}\left(\frac{Ld+M}{\epsilon},C_{u,2}\log\frac{Ld+M}{\epsilon}+C_{u,3}\right)}_{\text{defined as $C_B$}}\cdot \delta\mathrm{pow}\left(\frac{Ld+M}{\epsilon},\frac{2}{S}\log\log\frac{1}{\delta}\right)\le \epsilon,
        \end{aligned}
    \end{equation*}
    which can be obtained by requiring
    \begin{equation}
        \label{ineq:delta_req}
        \begin{aligned}
            &C_B\delta (-\log \delta)^{\frac{2}{S}\log \frac{Ld+M}{\epsilon}}\le \epsilon\quad \Leftrightarrow\quad (-\log \delta)^{\frac{2}{S}\log \frac{Ld+M}{\epsilon}}\le \frac{\epsilon}{C_B \delta}\\ 
            &\Leftrightarrow\quad \frac{2}{S}\log \frac{Ld+M}{\epsilon}\cdot \log\log\frac{1}{\delta}\le \log \frac{\epsilon}{ C_B \delta}
        \end{aligned}
    \end{equation}
    We suppose $\delta=\epsilon/C_B\cdot a^{-2/S\cdot \log ((Ld+M)/\epsilon)}$ and the last inequality of Eq~\ref{ineq:delta_req} becomes
    \begin{equation*}
        \begin{aligned}
            &\mathrm{LHS}= \frac{2}{S}\log \frac{Ld+M}{\epsilon}\cdot \log\left[\log\frac{C_B}{\epsilon}+\frac{2}{S}\log\frac{Ld+M}{\epsilon}\cdot \log a\right]\le  \frac{2}{S}\log \frac{Ld+M}{\epsilon}\cdot\log a = \mathrm{RHS},
        \end{aligned}
    \end{equation*}
    which is hold if we require 
    \begin{equation*}
        a\ge \max\left\{\frac{2C_B}{\epsilon}, \left(\frac{Ld+M}{\epsilon}\right)^{2/S},1\right\}.
    \end{equation*}
    Because in this condition, we have
    \begin{equation*}
        \begin{aligned}
            \log \frac{C_B}{\epsilon}+\frac{2}{S}\log \frac{Ld+M}{\epsilon}\cdot \log a\le  \log \frac{a}{2}+ (\log a)^2 \le  \frac{2a}{5}+\frac{3a}{5}=a\quad \mathrm{when}\quad a\ge 1,
        \end{aligned}
    \end{equation*}
    where the first inequality follows from the monotonicity of function $\log(\cdot)$. 
    Therefore, we have
    \begin{equation*}
        \log\left[\log\frac{C_B}{\epsilon}+\frac{2}{S}\log\frac{Ld+M}{\epsilon}\cdot \log a\right]\le \log a
    \end{equation*}
    and Eq~\ref{ineq:delta_req} establishes. 
    Without loss of generality, we suppose $3C_B/\epsilon$ dominates the lower bound of $a$. 
    Hence, the choice of $\delta$ can be determined.

    After determining the choice of $\delta$, the only problem left is the gradient complexity of Alg~\ref{alg:rrds}.
    The number of gradients calculated in Alg~\ref{alg:rrds} is equal to the number of calls
    for $\tilde{\rvv}_{-1,0}$.
    According to Eq~\ref{ineq:prob_lob_mid}, we can easily note that the number of calls of $\tilde{\rvv}_{-1,0}$ is 
    \begin{equation*}
        u_{k,r}(10\epsilon)\cdot \prod_{i=k}^1 u_{i,0}\left(\frac{10\epsilon}{960^{k-i+1}}\right) = u(10\epsilon)\prod_{i=k}^1 u\left(\frac{10\epsilon}{960^{k-i+1}}\right)
    \end{equation*}
    for each $(k,r)$ pair. 
    We can upper bound RHS of the previous equation as
    \begin{equation*}
        \begin{aligned}
            & u(10\epsilon)\prod_{i=k}^1 u\left(\frac{10\epsilon}{960^{k-i+1}}\right)= u(10\epsilon)\cdot\prod_{i=k}^2 \left(u(10\epsilon)\cdot 960^{k-i+1} \right)\\
            = & 960^{2.5k(k-1)}\cdot u(10\epsilon)^{k} \le 960^{2.5(K-1)(K-2)}\cdot u(10\epsilon)^{K-1}.
        \end{aligned}
    \end{equation*}
    Combining this result with the total number of $(k,r)$ pair, i.e., $T/\eta$, the total gradient complexity can be relaxed as
    \begin{equation}
        \label{ineq:grad_comp_fin1}
        \begin{aligned}
            & \frac{T}{\eta} \cdot 960^{2.5k(k-1)}\cdot u(10\epsilon)^{k} \le K^2R\cdot 960^{2.5(K-1)(K-2)}\cdot u(10\epsilon)^{K-1}\\
            & \le \frac{4(d+M)}{SC_\eta\epsilon}\left(\log \frac{Ld+M}{\epsilon}\right)^2 \cdot \mathrm{pow}\left(\frac{Ld+M}{\epsilon},C_{u,2}\log\frac{Ld+M}{\epsilon}+\frac{2}{S}\log\log\frac{1}{\delta}+C_{u,3}\right)\\
            & = C_B\cdot (-\log \delta)^{\frac{2}{S}\log \frac{Ld+M}{\epsilon}} \le \frac{\epsilon}{\delta} = C_B\cdot a^{\frac{2}{S}\log \frac{Ld+M}{\epsilon}}
        \end{aligned}
    \end{equation}
    where the first inequality follows from the fact $T/\eta = KR$, the second inequality follows from the combination of the choice of the step size, i.e., $\eta=C_1(d+M)^{-1}\epsilon$ and the definition of $K$ and $R$, i.e.,
    \begin{equation*}
        K=\frac{T}{S}= \frac{2}{S}\log\frac{C_0}{\epsilon},\quad R=\frac{S}{\eta}=\frac{S(d+M)}{C_1\epsilon}
    \end{equation*}
    and the last inequality follows from~\ref{ineq:delta_req}.
    Choosing $a$ as its lower bound, i.e., $2C_B/\epsilon$, RHS of Eq~\ref{ineq:grad_comp_fin1} satisfies
    \begin{equation}
        \label{ineq:grad_comp_fin2}
        \begin{aligned}
            & C_B\cdot a^{\frac{2}{S}\log \frac{Ld+M}{\epsilon}} = C_B \cdot \left(\frac{2C_B}{\epsilon}\right)^{\frac{2}{S}\log \frac{Ld+M}{\epsilon}}\le \left(\frac{2C_B}{\epsilon}\right)^{\frac{4}{S}\log \frac{Ld+M}{\epsilon}}\\
            & \le  \mathrm{pow}\left(\frac{8(d+M)}{S C_\eta\epsilon^2}\cdot \left(\log \frac{Ld+M}{\epsilon}\right)^2, \frac{4}{S}\log\frac{Ld+M}{\epsilon}\right)\\
            &\quad \cdot \mathrm{pow}\left(\frac{Ld+M}{\epsilon}, \frac{4C_{u,2}}{S}\left(\log\frac{Ld+M}{\epsilon}\right)^2+\frac{4C_{u,3}}{S}\left(\log\frac{Ld+M}{\epsilon}\right)\right)\\
            & = \exp\left[\mathcal{O}\left( \left(\log \frac{Ld+M}{\epsilon}\right)^3 \right)\right].
        \end{aligned}
    \end{equation}
    If we consider the effect of the norm of particles and the dependency of smoothness $L$ since we have
    \begin{equation*}
        \begin{aligned}
            &S = \frac{1}{2}\log\left(1+\frac{1}{2L}\right)=\Theta(L^{-1}),\quad \mathrm{when}\quad L\ge 1,\\
            &\frac{4C_{u,2}}{S} = \frac{70}{S^3}+\frac{10}{S^2}=\Theta(L^3),\quad
            \frac{4C_{u,3}}{S}=\frac{8C_{u,1}}{S^2} = \Theta\left(L^2\cdot \left(\max\left\{\log \log Z^2, 1\right\}\right)\right),
        \end{aligned}
    \end{equation*}
    Combining this result with Eq~\ref{ineq:grad_comp_fin2}, the proof is completed.
\end{proof}

{
    \begin{lemma}
        \label{lem:rse_err_2}
        Under the notation in Section~\ref{sec:not_ass_app}, suppose the step size satisfy $\eta=C_\eta(d+M)^{-1}\epsilon$, we have
        \begin{equation*}
            \sP\left[\bigcap_{
                \substack{k\in \mathbb{N}_{0,K-1}\\ r\in\mathbb{N}_{0,R-1}}
           }\left\|\grad\log p_{k,S-r\eta}(\bkwx_{k,r\eta})-\rbkwv_{k,r\eta}(\bkwx_{k,r\eta})\right\|^2\le 10\epsilon\right]\ge 1-\delta^\prime
        \end{equation*}
        with Alg~\ref{con_alg:rse} by properly choosing the number for mean estimations and ULA iterations.
        The total gradient complexity will be at most 
        \begin{equation*}
            \exp\left(\mathcal{O}\left(\max\left\{ \left(\log \frac{Ld+M}{\epsilon}\right)^3, \log \frac{Ld+M}{\epsilon}\cdot \log \frac{1}{\delta^\prime}\right\} \cdot \max\left\{\log \log Z^2, 1\right\}\right)\right),
        \end{equation*}
        where $Z$ is the maximal norm of particles appeared in Alg~\ref{alg:rrds}.
    \end{lemma}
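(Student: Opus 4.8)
The plan is to rerun the argument of Lemma~\ref{lem:rse_err} almost verbatim, changing only the final calibration of the per-call confidence parameter so that the \emph{aggregate} failure probability is driven below $\delta'$ rather than below $\epsilon$. First I would apply Lemma~\ref{lem:rec_err_fine_grained} and Corollary~\ref{lem:rec_err_coarse_grained} recursively — peeling off one segment at a time, down to the base case $k=0$ where $\rbkwv_{-1,0}=-\grad f_*$ is exact — to reproduce, for every admissible $(k,r)$, the bound of Eq~\ref{ineq:prob_lob_mid2}, namely
\begin{equation*}
    \sP\left[\left\|\grad\log p_{k,S-r\eta}(\bkwx_{k,r\eta})-\rbkwv_{k,r\eta}(\bkwx_{k,r\eta})\right\|^2\le 10\epsilon\right]\ge 1-\delta\cdot K\cdot \mathrm{pow}\!\left(\tfrac{Ld+M}{\epsilon},\; C_{u,2}\log\tfrac{Ld+M}{\epsilon}+\tfrac{2}{S}\log\log\tfrac{1}{\delta}+C_{u,3}\right),
\end{equation*}
where $\delta$ is the common confidence parameter threaded into $n_{k,r}$ and $m_{k,r}$, and the exponent is controlled exactly as in Eq~\ref{ineq:grad_comp_cal}. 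A union bound over the $KR$ pairs then gives, as in Eq~\ref{ineq:prob_lob_mid3}, that the joint event has probability at least $1-C_B\,\delta\,(-\log\delta)^{(2/S)\log((Ld+M)/\epsilon)}$, with $C_B$ the $\delta$-free constant already isolated in the proof of Lemma~\ref{lem:rse_err}.

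The only substantive change is the choice of $\delta$: instead of imposing $C_B\,\delta\,(-\log\delta)^{(2/S)\log((Ld+M)/\epsilon)}\le\epsilon$, I would impose the same inequality with right-hand side $\delta'$. Following the manipulation of Eq~\ref{ineq:delta_req}, this is implied by $\tfrac{2}{S}\log\tfrac{Ld+M}{\epsilon}\cdot\log\log\tfrac{1}{\delta}\le\log\tfrac{\delta'}{C_B\delta}$, and I would set $\delta=\tfrac{\delta'}{C_B}\,a^{-(2/S)\log((Ld+M)/\epsilon)}$ for any $a\ge\max\{2C_B/\delta',\,((Ld+M)/\epsilon)^{2/S},\,1\}$; the self-consistency estimate $\log(C_B/\delta')+\tfrac{2}{S}\log\tfrac{Ld+M}{\epsilon}\log a\le a$ goes through unchanged, since it only used monotonicity of $\log$ and $a\ge1$. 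This yields $\sP[\cdots]\ge1-\delta'$ and recovers precisely the value of $\delta$ displayed in the corollary that precedes this lemma.

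For the gradient complexity I would reuse Eq~\ref{ineq:grad_comp_fin1}: the total number of calls to $\rbkwv_{-1,0}$ is at most $K^2R\cdot 960^{2.5(K-1)(K-2)}\cdot u(10\epsilon)^{K-1}=C_B\,(-\log\delta)^{(2/S)\log((Ld+M)/\epsilon)}\le\delta'/\delta=C_B\,a^{(2/S)\log((Ld+M)/\epsilon)}$. Taking $a$ at its lower bound $2C_B/\delta'$ and logarithms, $\log(2C_B/\delta')$ is $\mathcal O$ of a sum of $\log\tfrac{1}{\delta'}$, $(\log\tfrac{Ld+M}{\epsilon})^2$ (the quadratic term coming from the $C_{u,2}\log\tfrac{Ld+M}{\epsilon}$ factor inside $C_B$) and $\log\log Z^2\cdot\log\tfrac{Ld+M}{\epsilon}$; multiplying by the outer factor $\tfrac{4}{S}\log\tfrac{Ld+M}{\epsilon}$ and expressing $\tfrac{1}{S},\tfrac{C_{u,2}}{S},\tfrac{C_{u,3}}{S}$ in terms of $L$ exactly as in Eq~\ref{ineq:grad_comp_fin2}, the exponent becomes $\mathcal O\!\big(\max\{(\log\tfrac{Ld+M}{\epsilon})^3,\; \log\tfrac{Ld+M}{\epsilon}\log\tfrac{1}{\delta'}\}\cdot\max\{\log\log Z^2,1\}\big)$ (with the dependence on the smoothness constant $L$ tracked just as in Lemma~\ref{lem:rse_err}); exponentiating gives the claimed bound.

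\textbf{Main obstacle.} The only place the argument is not purely cosmetic is the fixed-point step for $\delta$: I must check that the closed form $\delta=\tfrac{\delta'}{C_B}\,a^{-(2/S)\log((Ld+M)/\epsilon)}$ is admissible when $\delta'$ may be either far smaller or far larger than $\epsilon$ — that the lower bound $2C_B/\delta'$ on $a$ dominates the competing terms $((Ld+M)/\epsilon)^{2/S}$ and $1$ when $\delta'$ is small (and is harmless otherwise, since $a$ can always be enlarged), and, crucially, that the factor $(-\log\delta)^{(2/S)\log((Ld+M)/\epsilon)}$ — which is precisely the mechanism by which $n_{k,r},m_{k,r}$, and hence the gradient count, blow up as $1/\delta$ grows — does not feed back and spoil the $\le\delta'$ guarantee. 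This is the same self-referential calibration performed in Lemma~\ref{lem:rse_err}, now with $\epsilon$ replaced by $\delta'$. Everything else — the recursive unrolling of the probability bound, the control of the exponent through $C_{u,1},C_{u,2},C_{u,3}$, and the base-case identity $\rbkwv_{-1,0}=-\grad f_*$ — is reused without change.
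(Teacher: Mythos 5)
Your proposal matches the paper's own proof of this lemma essentially verbatim: the paper likewise reuses the recursive unrolling and union bound from Lemma~\ref{lem:rse_err} (Eq~\ref{ineq:prob_lob_mid3}), replaces the calibration requirement $C_B\delta(-\log\delta)^{(2/S)\log((Ld+M)/\epsilon)}\le\epsilon$ by the same inequality with $\delta'$ on the right, chooses $\delta=\tfrac{\delta'}{C_B}a^{-(2/S)\log((Ld+M)/\epsilon)}$ with $a\ge\max\{2C_B/\delta',((Ld+M)/\epsilon)^{2/S},1\}$, and bounds the gradient count by $C_B a^{(2/S)\log((Ld+M)/\epsilon)}$ with $a=2C_B/\delta'$ to get the stated exponent. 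The self-consistency check you flag as the main obstacle is handled in the paper exactly as you describe, so the argument is correct and identical in approach.
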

    \begin{proof}
        In this lemma, we follow the same proof roadmap as that shown in Lemma~\ref{lem:rse_err}. 
        According to Eq~\ref{ineq:prob_lob_mid3}, we have
        \begin{equation*}
            \small
            \begin{aligned}
                &\prod_{
                \substack{k\in \mathbb{N}_{0,K-1}\\ r\in\mathbb{N}_{0,R-1}}
           } \sP\left[\left\|\grad\log p_{k,S-r\eta}(\bkwx_{k,r\eta})-\rbkwv_{k,r\eta}(\bkwx_{k,r\eta})\right\|^2\le 10\epsilon\right]\\
           \ge & 1- \delta\cdot  \frac{4(d+M)}{SC_\eta\epsilon}\left(\log \frac{Ld+M}{\epsilon}\right)^2\cdot \mathrm{pow}\left(\frac{Ld+M}{\epsilon},C_{u,2}\log\frac{Ld+M}{\epsilon}+\frac{2}{S}\log\log\frac{1}{\delta}+C_{u,3}\right)
            \end{aligned}
        \end{equation*}
        where the parameter $\delta$ satisfies Lemma~\ref{lem:rec_err_fine_grained} under certain conditions.
        It means we can control the recursive error with a high probability, i.e.,
        \begin{equation}
            \label{ineq:prob_lob_mid5}
            \prod_{
                \substack{k\in \mathbb{N}_{0,K-1}\\ r\in\mathbb{N}_{0,R-1}}
           } \sP\left[\left\|\grad\log p_{k,S-r\eta}(\bkwx_{k,r\eta})-\rbkwv_{k,r\eta}(\bkwx_{k,r\eta})\right\|^2\le 10\epsilon\right]\ge 1-\delta^\prime.
        \end{equation}
        when $\delta$ satisfies
        \begin{equation*}
        \small
        \begin{aligned}
            \underbrace{\frac{4(d+M)}{SC_\eta\epsilon}\left(\log \frac{Ld+M}{\epsilon}\right)^2\cdot \mathrm{pow}\left(\frac{Ld+M}{\epsilon},C_{u,2}\log\frac{Ld+M}{\epsilon}+C_{u,3}\right)}_{\text{defined as $C_B$}}\cdot \delta\mathrm{pow}\left(\frac{Ld+M}{\epsilon},\frac{2}{S}\log\log\frac{1}{\delta}\right)\le \delta^\prime.
        \end{aligned}
        \end{equation*}
        We can reformulate the above inequality as follows.
        \begin{equation}
        \label{ineq:delta_req_detp}
        \begin{aligned}
            &C_B\delta (-\log \delta)^{\frac{2}{S}\log \frac{Ld+M}{\epsilon}}\le \delta^\prime \quad \Leftrightarrow\quad (-\log \delta)^{\frac{2}{S}\log \frac{Ld+M}{\epsilon}}\le \frac{\delta^\prime}{C_B \delta}\\ 
            &\Leftrightarrow\quad \frac{2}{S}\log \frac{Ld+M}{\epsilon}\cdot \log\log\frac{1}{\delta}\le \log \frac{\delta^\prime}{ C_B \delta}.
        \end{aligned}
        \end{equation}
        By requiring $\delta=\delta^\prime/C_B\cdot a^{-2/S\cdot \log ((Ld+M)/\epsilon)}$, the last inequality of the above can be written as 
        \begin{equation*}
        \begin{aligned}
            &\mathrm{LHS}= \frac{2}{S}\log \frac{Ld+M}{\epsilon}\cdot \log\left[\log\frac{C_B}{\delta^\prime}+\frac{2}{S}\log\frac{Ld+M}{\epsilon}\cdot \log a\right]\le  \frac{2}{S}\log \frac{Ld+M}{\epsilon}\cdot\log a = \mathrm{RHS},
        \end{aligned}
        \end{equation*}
        when the choice of $a$ satisfies
        \begin{equation}
            \label{ineq:choice_of_a_cor}
            a\ge \max\left\{\frac{2C_B}{\delta^\prime}, \left(\frac{Ld+M}{\epsilon}\right)^{2/S},1\right\}.
        \end{equation}
        Since we have
        \begin{equation*}
        \begin{aligned}
            \log \frac{C_B}{\delta^\prime}+\frac{2}{S}\log \frac{Ld+M}{\epsilon}\cdot \log a\le  \log \frac{a}{2}+ (\log a)^2 \le  \frac{2a}{5}+\frac{3a}{5}=a\quad \mathrm{when}\quad a\ge 1,
        \end{aligned}
        \end{equation*}
        where the first inequality follows from the monotonicity of function $\log(\cdot)$. 
        Then, it has
        \begin{equation*}
        \log\left[\log\frac{C_B}{\delta^\prime}+\frac{2}{S}\log\frac{Ld+M}{\epsilon}\cdot \log a\right]\le \log a
        \end{equation*}
        and Eq~\ref{ineq:delta_req_detp} establishes.

        To achieve the accurate score estimation with a high probability shown in Eq~\ref{ineq:prob_lob_mid5}, the total gradient complexity will be 
        \begin{equation*}
            \frac{T}{\eta} \cdot 960^{2.5k(k-1)}\cdot u(10\epsilon)^{k} \le  C_B\cdot a^{\frac{2}{S}\log \frac{Ld+M}{\epsilon}}
        \end{equation*}
        shown in Eq~\ref{ineq:grad_comp_fin1}.
        Plugging the choice of $a$ (Eq~\ref{ineq:choice_of_a_cor}) into the above inequality, we have
        \begin{equation*}
            \small
            \begin{aligned}
                C_B\cdot a^{\frac{2}{S}\log \frac{Ld+M}{\epsilon}} & \le C_B\cdot \max\left\{ \mathrm{pow}\left(\frac{2C_B}{\delta^\prime}, \frac{2}{S}\log \frac{Ld+M}{\epsilon}\right), \mathrm{pow}\left(\frac{Ld+M}{\epsilon}, \frac{4}{S^2}\log \frac{Ld+M}{\epsilon}\right)\right\}\\
                &\le \max\left\{ \underbrace{\mathrm{pow}\left(\frac{2C_B}{\delta^\prime}, \frac{4}{S}\log \frac{Ld+M}{\epsilon}\right)}_{\mathrm{Term\ Comp.1}}, \underbrace{C_B\cdot \mathrm{pow}\left(\frac{Ld+M}{\epsilon}, \frac{4}{S^2}\log \frac{Ld+M}{\epsilon}\right)}_{\mathrm{Term\ Comp.2}}\right\}
            \end{aligned}
        \end{equation*}
        It can be easily noted that Term $\mathrm{Comp.2}$ will be dominated by Term $\mathrm{Comp.1}$.
        Then, we provide the upper bound of $\mathrm{Comp.1}$ as
        \begin{equation*}
            \small
            \begin{aligned}
                \log \left(\mathrm{Comp.1}\right) = & \frac{4}{S}\log\frac{Ld+M}{\epsilon}\cdot \left(\log 2C_B + \log (1/\delta^\prime)\right)\\
                = & \frac{4}{S}\log\frac{Ld+M}{\epsilon}\cdot \left( \log \frac{8}{SC_\eta} + \log \frac{d+M}{\epsilon} + 2\log\log \frac{Ld+M}{\epsilon} \right.\\
                &\quad \left.+ \log \frac{Ld+M}{\epsilon}\cdot \left(C_{u,2}\log\frac{Ld+M}{\epsilon}+C_{u,3}\right)+ \log (1/\delta^\prime)\right) \\
                = & \mathcal{O}\left(L^3\cdot \max\left\{ \left(\log \frac{Ld+M}{\epsilon}\right)^3, \log \frac{Ld+M}{\epsilon}\cdot \log \frac{1}{\delta^\prime}\right\}\right),
            \end{aligned}
        \end{equation*}
        which utilizes similar techniques shown in Lemma~\ref{lem:rse_err} and means
        \begin{equation*}
            C_B\cdot a^{\frac{2}{S}\log \frac{Ld+M}{\epsilon}}  \le  \exp\left(\mathcal{O}\left(L^3\cdot \max\left\{ \left(\log \frac{Ld+M}{\epsilon}\right)^3, \log \frac{Ld+M}{\epsilon}\cdot \log \frac{1}{\delta^\prime}\right\}\right)\right).
        \end{equation*}
        Hence, the proof is completed.
    \end{proof}   
}

\section{Auxiliary Lemmas}

\subsection{The chain rule of KL divergence}

\begin{lemma}[Lemma 6 in~\cite{chen2023improved}]
    \label{lem:lem6_chen2023improved}
    Consider the following two It\^o processes,
    \begin{equation*}
        \begin{aligned}
            \der \rvx_t = &\vf_1(\rvx_t, t)\der t+g(t)\der B_t,\quad \rvx_0 = \va,\\
            \der \rvy_t = &\vf_2(\rvy_t, t)\der t + g(t)\der B_t,\quad  \rvy_0=\va,
        \end{aligned}
    \end{equation*}
    where $\vf_1, \vf_2 \colon \R^d\rightarrow \R$ and $g\colon \R\rightarrow \R$ are continuous functions and may depend on $\va$.
    We assume the uniqueness and regularity conditions:
    \begin{itemize}
        \item The two SDEs have unique solutions.
        \item $\rvx_t, \rvy_t$ admit densities $p_t, q_t \in C^2(\R^d)$ for $t>0$.
    \end{itemize}
    Define the relative Fisher information between $p_t$ and $q_t$ by
    \begin{equation*}
        \FI{p_t}{q_t}\coloneqq \int p_t(\vx)\left\|\grad \log \frac{p_t(\vx)}{q_t(\vx)}\right\|^2 \der \vx.
    \end{equation*}
    Then for any $t>0$, the evolution of $\KL{p_t}{q_t}$ is given by
    \begin{equation*}
        \frac{\partial}{\partial t} \KL{p_t}{q_t} = -\frac{g^2(t)}{2}\FI{p_t}{q_t}+ \E \left[\left<\vf_1(\rvx_t, t)-\vf_2(\rvx_t,t),\grad\log\frac{p(\rvx_t)}{q(\rvx_t)}\right>\right].
    \end{equation*}
\end{lemma}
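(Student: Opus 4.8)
The plan is to compute $\partial_t\KL{p_t}{q_t}$ directly from the Fokker--Planck (continuity) equations for the two marginal flows. Under the stated regularity, $p_t$ and $q_t$ solve
\begin{equation*}
    \partial_t p_t = -\grad\cdot\Bigl(\vf_1 p_t - \tfrac{g^2(t)}{2}\grad p_t\Bigr),
    \qquad
    \partial_t q_t = -\grad\cdot\Bigl(\vf_2 q_t - \tfrac{g^2(t)}{2}\grad q_t\Bigr),
\end{equation*}
the standard Fokker--Planck equations attached to the two SDEs, with $\vf_i = \vf_i(\cdot,t)$.

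First I would differentiate under the integral sign,
\begin{equation*}
    \partial_t\KL{p_t}{q_t}
    = \int \partial_t p_t\,\log\frac{p_t}{q_t}\,\der\vx
      + \int p_t\,\partial_t\log\frac{p_t}{q_t}\,\der\vx .
\end{equation*}
In the second integral, writing $\partial_t\log(p_t/q_t) = \partial_t p_t/p_t - \partial_t q_t/q_t$ and using that $\int p_t\,\der\vx = 1$ is constant (hence $\int \partial_t p_t\,\der\vx = 0$) collapses it to $-\int (p_t/q_t)\,\partial_t q_t\,\der\vx$. Next I would substitute the two Fokker--Planck equations into the two remaining integrals and integrate by parts once in each, discarding the boundary terms by the decay/regularity hypothesis. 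Using $\grad p_t = p_t\,\grad\log p_t$, $\grad q_t = q_t\,\grad\log q_t$ and $\grad(p_t/q_t) = (p_t/q_t)\,\grad\log(p_t/q_t)$, the first integral becomes $\int p_t\,\grad\log(p_t/q_t)\cdot\bigl(\vf_1 - \tfrac{g^2}{2}\grad\log p_t\bigr)\,\der\vx$ and the transformed second integral becomes $-\int p_t\,\grad\log(p_t/q_t)\cdot\bigl(\vf_2 - \tfrac{g^2}{2}\grad\log q_t\bigr)\,\der\vx$.

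Adding the two, the drift parts combine to $\vf_1 - \vf_2$ and the diffusion parts to $-\tfrac{g^2(t)}{2}\bigl\|\grad\log(p_t/q_t)\bigr\|^2$, so that
\begin{equation*}
    \partial_t\KL{p_t}{q_t}
    = -\frac{g^2(t)}{2}\FI{p_t}{q_t}
      + \E\Bigl[\bigl\langle \vf_1(\rvx_t,t) - \vf_2(\rvx_t,t),\ \grad\log\tfrac{p_t}{q_t}(\rvx_t)\bigr\rangle\Bigr],
\end{equation*}
which is the claim; note that the expectation is along $\rvx_t\sim p_t$, so in particular $\vf_2$ is evaluated at the $p$-process $\rvx_t$ and not at the $q$-process $\rvy_t$.

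The main obstacle is the analytic bookkeeping rather than the computation: justifying the differentiation under the integral sign, the vanishing of the two boundary terms in the integration-by-parts steps, and the integrability of $p_t\log(p_t/q_t)$ and $p_t\|\grad\log(p_t/q_t)\|^2$ along the flow. These are precisely what the ``uniqueness and regularity conditions'' in the hypothesis are meant to absorb; for the specific diffusions used in this paper (the OU forward process and its time-discretized reverse) one can verify them from the explicit Gaussian transition kernels together with the smoothness Assumption~\ref{con_ass:lips_score}, so I would either carry out that verification or cite the corresponding argument in \cite{chen2023improved}.
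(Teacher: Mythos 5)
Your derivation is correct: the Fokker--Planck substitution, single integration by parts in each term, and the identities $\grad p_t = p_t\grad\log p_t$, $\grad(p_t/q_t)=(p_t/q_t)\grad\log(p_t/q_t)$ combine exactly to give $-\tfrac{g^2}{2}\FI{p_t}{q_t}+\E_{p_t}\langle \vf_1-\vf_2,\grad\log(p_t/q_t)\rangle$, and you correctly flag that the boundary terms and differentiation under the integral are what the stated regularity hypotheses are meant to cover. The paper itself gives no proof of this lemma (it is imported verbatim as Lemma 6 of \cite{chen2023improved}), and your computation is essentially the standard argument used in that cited source, so there is nothing further to reconcile.
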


Lemma~\ref{lem:lem6_chen2023improved} is applied to show the KL convergence between the underlying distribution of the SDEs that have the same diffusion term and a bounded difference between their drift terms.

\begin{lemma}[Lemma 7 in~\cite{chen2023improved}]
    \label{lem:lem7_chen2023improved}
    Under the notation in Section~\ref{sec:not_ass_app}, for $k\in \mathbb{N}_{0,K-1}$ and $r\in  \mathbb{N}_{0,R-1}$, consider the reverse SDE starting from $\rbkwx_{k,r\eta} = \va$
    \begin{equation}
        \label{app_sde:axu_1}
        \der \hat{\rvx}_{k,t} = \left[\hat{\rvx}_{k,t} + 2\grad\log p_{k, S-t}(\hat{\rvx}_{k,t})\right]\der t + \sqrt{2}\der B_t,\quad \rbkwx_{k,r\eta}=\va
    \end{equation}
    and its discrete approximation
    \begin{equation}
        \label{app_sde:act_iter}
        \der \rbkwx_{k,t} = \left[\rbkwx_{k,t} + 2\rbkwv_{k, r\eta}\left(\rbkwx_{k, r\eta}\right)\right]\der t + \sqrt{2}\der B_t, \quad \rbkwx_{k,r\eta}=\va
    \end{equation}
    for time $t\in [k\eta, (k+1)\eta]$. 
    Let $\hat{p}_{k, t|r\eta}$ be the density of $\hat{\rvx}_{k,t}$ given $\hat{\rvx}_{k,r\eta}$ and $\bkwp_{k, t|r\eta}$ be the density of $\rbkwx_{k,t}$ given $\rbkwx_{k,r\eta}$ . Then, we have
    \begin{itemize}
        \item For any $\va \in \R^d$, the two processes satisfy the uniqueness and regularity condition stated in Lemma~\ref{lem:lem6_chen2023improved}, which means SDE~\ref{app_sde:axu_1} and SDE~\ref{app_sde:act_iter} have unique solutions and $\hat{p}_{k, t|r\eta}(\cdot|\va), \bkwp_{k, t|r\eta}(\cdot |\va)\in C^2(\R^d)$ for $t\in (r\eta, (r+1)\eta]$. 
        \item For a.e., $\va\in\R^d$, we have
        \begin{equation*}
            \lim_{t\rightarrow r\eta_+} \KL{\hat{p}_{k, t|r\eta}(\cdot|\va)}{\tilde{p}_{k, t|r\eta}(\cdot |\va)} = 0.
        \end{equation*}
    \end{itemize}
\end{lemma}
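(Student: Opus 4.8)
The plan is to verify the two assertions separately, exploiting that on the block $[r\eta,(r+1)\eta]$ the discretized diffusion~\eqref{app_sde:act_iter} conditioned on $\rbkwx_{k,r\eta}=\va$ is a fully explicit Gaussian, while~\eqref{app_sde:axu_1} has a Lipschitz, non-degenerate generator. Conditioning~\eqref{app_sde:act_iter} on $\rbkwx_{k,r\eta}=\va$ turns it into a linear SDE whose inhomogeneous term $2\rbkwv_{k,r\eta}(\va)$ is constant, so it has a unique strong solution given explicitly by $\rbkwx_{k,t}=e^{t-r\eta}\va+2(e^{t-r\eta}-1)\rbkwv_{k,r\eta}(\va)+\sqrt2\int_{r\eta}^t e^{t-s}\der B_s$; hence $\bkwp_{k,t|r\eta}(\cdot|\va)$ is a non-degenerate Gaussian, in particular $C^\infty$ and everywhere positive, for every $t>r\eta$. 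For~\eqref{app_sde:axu_1}, Assumption~\ref{con_ass:lips_score} makes the drift $\vx\mapsto\vx+2\grad\log p_{k,S-t}(\vx)$ globally $(1+2L)$-Lipschitz uniformly in $t$, and continuous in $t$ because for $\tau>0$ the density $p_{k,\tau}$ is a Gaussian convolution of a probability measure, hence smooth and strictly positive, so $\grad\log p_{k,\tau}$ depends smoothly on $(\vx,\tau)$; thus~\eqref{app_sde:axu_1} has a unique strong solution. Since the generator $\Delta+b_1(\cdot,t)\cdot\grad$ is uniformly elliptic with locally H\"older coefficients, interior parabolic regularity gives that the transition density $\hat p_{k,t|r\eta}(\cdot|\va)\in C^2(\R^d)$ for $t>r\eta$, which is precisely the uniqueness/regularity hypothesis needed to invoke Lemma~\ref{lem:lem6_chen2023improved}.

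\textbf{Vanishing of the KL divergence.} Fix $\va$ and let $\mathbb P,\mathbb Q$ denote the path laws on $[r\eta,t]$ of~\eqref{app_sde:axu_1} and~\eqref{app_sde:act_iter} started at $\va$. Both have diffusion coefficient $\sqrt2$, so Girsanov's theorem gives $\KL{\mathbb P}{\mathbb Q}=\tfrac14\,\EE_{\mathbb P}\big[\int_{r\eta}^t\|b_1(\hat{\rvx}_{k,s},s)-b_2\|^2\der s\big]$, with $b_1(\vx,s)=\vx+2\grad\log p_{k,S-s}(\vx)$ and $b_2=\va+2\rbkwv_{k,r\eta}(\va)$ a fixed vector. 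A Gr\"onwall/moment estimate together with the linear growth of $b_1$ gives $\sup_{s\in[r\eta,(r+1)\eta]}\EE\|b_1(\hat{\rvx}_{k,s},s)\|^2<\infty$, so the integrand is bounded in expectation uniformly in $t$ and therefore $\KL{\mathbb P}{\mathbb Q}=O(t-r\eta)\to0$ as $t\to r\eta_+$. Applying the data-processing inequality to the evaluation map $\omega\mapsto\omega_t$ yields $\KL{\hat p_{k,t|r\eta}(\cdot|\va)}{\bkwp_{k,t|r\eta}(\cdot|\va)}\le\KL{\mathbb P}{\mathbb Q}\to0$, which is the claimed limit.

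\textbf{Main obstacle.} The delicate point is making the Girsanov identity rigorous: a drift of linear growth need not satisfy Novikov's condition, so one first localizes with the exit times from large balls, applies Girsanov on each localized problem, and passes to the limit — and, combined with the uniform second-moment bound on $b_1(\hat{\rvx}_{k,s},s)$, this is exactly the differential-inequality device of~\cite{chen2023improved} already used for Lemma~\ref{lem:prop8_chen2023improved}. The $C^2$ claim is likewise only as strong as the available Lipschitz regularity of the score and relies on interior parabolic estimates rather than on an explicit kernel.
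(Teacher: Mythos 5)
The paper never proves this statement: it is imported verbatim as ``Lemma 7 of \citet{chen2023improved}'', so there is no in-paper proof to compare against, and your attempt has to stand on its own as a reconstruction of that external argument. Your treatment of the second bullet is sound and close in spirit to the standard justification: both SDEs share the diffusion coefficient $\sqrt{2}$, the path-space KL over $[r\eta,t]$ is $\tfrac14\,\E_{\mathbb P}\int_{r\eta}^t\|b_1(\hat{\rvx}_{k,s},s)-b_2\|^2\,\der s$, the integrand has uniformly bounded expectation on the block because the true drift is $(1+2L)$-Lipschitz in $\vx$ (Assumption~\ref{con_ass:lips_score}) with $\|\grad\log p_{k,\tau}(\vzero)\|$ uniformly controlled via the moment/score bounds of Lemma~\ref{lem:lem9_chen2022sampling}, and data processing then gives the marginal KL $\to 0$ as $t\to r\eta_+$. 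The localization you flag (Novikov need not hold globally for a linear-growth drift) is genuinely needed, but on a shrinking window the exponential-moment requirement is mild, so the argument closes; note, though, that this Girsanov route is \emph{not} the ``differential-inequality device'' used for Lemma~\ref{lem:prop8_chen2023improved} — that device presupposes exactly the two bullets you are trying to prove, so it is good that your Part~2 does not actually rely on Lemma~\ref{lem:lem6_chen2023improved}.

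The thin spot is the first bullet. Uniqueness of strong solutions is fine (explicit Gaussian solution for \eqref{app_sde:act_iter}, Lipschitz drift for \eqref{app_sde:axu_1}), but the claim $\hat{p}_{k,t|r\eta}(\cdot|\va)\in C^2(\R^d)$ is not delivered by a bare appeal to ``interior parabolic regularity'': classical fundamental-solution theory wants the drift jointly H\"older in $(\vx,t)$, and the time-regularity of $\grad\log p_{k,S-t}$ as $S-t\downarrow 0$ is exactly the delicate point; worse, at the endpoint $t=S$ of the segment $k=0$ the conditional density is, by Bayes' rule, proportional to $p_*(\cdot)$ times a Gaussian factor, and under Assumption~\ref{con_ass:lips_score} alone $p_*$ is only guaranteed $C^{1,1}$, so $C^2$ does not follow from your argument there. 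The cleaner (and the cited paper's) route is to avoid PDE estimates altogether and write the reverse transition kernel explicitly via Bayes' rule through the forward process, $\hat{p}_{k,t|r\eta}(\vx|\va)\propto p_{k,S-t}(\vx)\,p_{(k,S-r\eta)|(k,S-t)}(\va|\vx)$, where the forward kernel is an explicit nondegenerate Gaussian and $p_{k,S-t}$ is a Gaussian mollification for $S-t>0$; this makes the smoothness transparent for $t$ strictly inside the block and isolates the terminal-time issue, which is then handled by a limiting/early-stopping argument rather than claimed outright. As written, your regularity step either needs that representation or an additional smoothness hypothesis beyond Assumptions~\ref{con_ass:lips_score}--\ref{con_ass:second_moment}.
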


\begin{lemma}[Variant of Proposition 8 in~\cite{chen2023improved}]
    \label{lem:prop8_chen2023improved}
    Under the notation in Section~\ref{sec:not_ass_app} and Algorithm~\ref{alg:rrds}, we have
    \begin{equation*}
        \begin{aligned}
            \KL{\hat{p}_{0,S}}{\bkwp_{0,S}}\le  &\KL{\hat{p}_{K-1,0}}{\bkwp_{K-1,0}}\\
            &+\sum_{k=0}^{K-1}\sum_{r=0}^{R-1} \int_{0}^\eta \E_{(\hat{\rvx}_{k,t+r\eta}, \hat{\rvx}_{k,r\eta})} \left[\left\|\grad\log p_{k, S-(t+r\eta)}(\hat{\rvx}_{k,t+r\eta}) - \rbkwv_{k,r\eta}(\hat{\rvx}_{k,r\eta})\right\|^2\right]\der t.
        \end{aligned}
    \end{equation*}
\end{lemma}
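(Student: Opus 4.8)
The plan is to follow the differential-inequality technique of \cite{chen2023improved}, which substitutes for the Girsanov change of measure (whose Novikov condition need not hold here) a direct time-derivative computation of the relative entropy followed by integration. Let $P$ denote the law, on the space of trajectories indexed by all $K$ segments, of the exact reverse diffusion $(\hat\rvx_{k,t})$ — initialized at $\hat{p}_{K-1,0}$ and driven by the exact scores $\grad\log p_{k,S-t}$ — and let $Q$ denote the law of the algorithmic process $(\rbkwx_{k,t})$ of SDE~\eqref{con_eq:rrds_actual_backward} — initialized at $\bkwp_{K-1,0}=\mathcal N(\vzero,\mI)$ and driven by the frozen estimates $\rbkwv_{k,r\eta}$. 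Both processes have diffusion coefficient $\sqrt2$, and over every sub-interval $[(k,r\eta),(k,(r+1)\eta)]$, once we condition on the value $\va$ at the left endpoint each becomes a genuine time-inhomogeneous SDE: for $P$ it is $\der\hat\rvx_{k,t}=[\hat\rvx_{k,t}+2\grad\log p_{k,S-t}(\hat\rvx_{k,t})]\der t+\sqrt2\der B_t$, and for $Q$ it is $\der\rbkwx_{k,t}=[\rbkwx_{k,t}+2\rbkwv_{k,r\eta}(\va)]\der t+\sqrt2\der B_t$ with $\rbkwv_{k,r\eta}(\va)$ a constant in the state. The grid points $\{(k,r\eta)\}$ — continuous across segment boundaries because $\rbkwx_{k+1,S}=\rbkwx_{k,0}$ and $\hat\rvx_{k+1,S}=\hat\rvx_{k,0}$ — make $P$ and $Q$ Markov chains on that grid, so the chain rule of relative entropy gives
\[
\KL{P}{Q}=\KL{\hat{p}_{K-1,0}}{\bkwp_{K-1,0}}+\sum_{k=0}^{K-1}\sum_{r=0}^{R-1}\E_{\va\sim\hat{p}_{k,r\eta}}\Big[\KL{\hat{p}_{k,\bullet\mid r\eta}(\cdot\mid\va)}{\bkwp_{k,\bullet\mid r\eta}(\cdot\mid\va)}\Big],
\]
where $\hat{p}_{k,\bullet\mid r\eta}(\cdot\mid\va)$, $\bkwp_{k,\bullet\mid r\eta}(\cdot\mid\va)$ are the conditional laws of the sub-trajectory on $[(k,r\eta),(k,(r+1)\eta)]$ given value $\va$ at $(k,r\eta)$.

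Next I would bound each conditional sub-trajectory term. Lemma~\ref{lem:lem7_chen2023improved} guarantees that, for a.e.\ $\va$, the two conditional laws meet the regularity hypotheses of Lemma~\ref{lem:lem6_chen2023improved} and that their relative entropy tends to $0$ as $t\downarrow r\eta$. Applying Lemma~\ref{lem:lem6_chen2023improved} with $g\equiv\sqrt2$, the drift difference evaluated along the $P$-process (the $P$-drift $\vf_1$ minus the $Q$-drift $\vf_2$) is $\vf_1-\vf_2=2\big(\grad\log p_{k,S-t}(\hat\rvx_{k,t})-\rbkwv_{k,r\eta}(\va)\big)$, since the linear terms $\hat\rvx_{k,t}$ cancel, so
\[
\frac{\der}{\der t}\KL{\hat{p}_{k,t\mid r\eta}(\cdot\mid\va)}{\bkwp_{k,t\mid r\eta}(\cdot\mid\va)}=-\FI{\hat{p}_{k,t\mid r\eta}}{\bkwp_{k,t\mid r\eta}}+\E\big[\langle\vf_1-\vf_2,\ \grad\log(\hat{p}_{k,t\mid r\eta}/\bkwp_{k,t\mid r\eta})\rangle\big].
\]
Cauchy–Schwarz bounds the inner-product term by $\sqrt{\E[\|\vf_1-\vf_2\|^2]}\cdot\sqrt{\FI}$, and then $-x+a\sqrt x\le a^2/4$ bounds the whole right-hand side by $\tfrac14\E[\|\vf_1-\vf_2\|^2]=\E[\|\grad\log p_{k,S-t}(\hat\rvx_{k,t})-\rbkwv_{k,r\eta}(\va)\|^2]$. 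Integrating over $t\in[r\eta,(r+1)\eta]$ with the vanishing boundary value and reparametrising $t\mapsto t+r\eta$ yields
\[
\KL{\hat{p}_{k,\bullet\mid r\eta}(\cdot\mid\va)}{\bkwp_{k,\bullet\mid r\eta}(\cdot\mid\va)}\le\int_0^\eta\E_{\hat\rvx_{k,t+r\eta}\mid\hat\rvx_{k,r\eta}=\va}\big[\|\grad\log p_{k,S-(t+r\eta)}(\hat\rvx_{k,t+r\eta})-\rbkwv_{k,r\eta}(\va)\|^2\big]\der t.
\]
Taking $\E_{\va\sim\hat{p}_{k,r\eta}}$ turns the inner conditional expectation into the joint expectation over $(\hat\rvx_{k,t+r\eta},\hat\rvx_{k,r\eta})$; summing over $(k,r)$ and invoking the data-processing inequality $\KL{\hat{p}_{0,S}}{\bkwp_{0,S}}\le\KL{P}{Q}$ (the final state $\rbkwx_{0,S}$, resp.\ $\hat\rvx_{0,S}$, is a coordinate projection of the path) gives exactly the asserted bound.

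The hard part will not be the algebra but the measure-theoretic justification underlying the differential-inequality step: legitimising the path-space chain rule and the data-processing projection, differentiating the relative entropy under the integral sign, and — crucially — integrating the differential inequality back to $t=r\eta$ even though the relative entropy may be momentarily infinite at interior times. This is precisely the machinery that Lemmas~\ref{lem:lem6_chen2023improved} and~\ref{lem:lem7_chen2023improved}, imported from \cite{chen2023improved}, are designed to supply; so the real work is to verify that the segmented OU construction of Section~\ref{sec:pre} fits their hypotheses — in particular that the $Q$-process, once conditioned on a grid value, genuinely has a drift of the admissible time-inhomogeneous form — and then to execute the chain-rule/telescoping step cleanly. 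I would also note, deferring the quantitative estimate to moment bounds of Lemma~\ref{lem:lem9_chen2022sampling} type, that the right-hand integrand is finite so that the inequality is non-vacuous, while observing that if it were infinite the bound would hold trivially.
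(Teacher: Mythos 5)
Your proposal is correct and follows essentially the same route as the paper: decompose over the grid intervals, invoke Lemmas~\ref{lem:lem7_chen2023improved} and~\ref{lem:lem6_chen2023improved} to get the differential inequality for the conditional KL on each sub-interval, integrate using the vanishing boundary value, and telescope via the chain rule of KL divergence. The only cosmetic differences are that you bound the cross term by Cauchy--Schwarz plus $-x+a\sqrt{x}\le a^2/4$ where the paper uses Young's inequality (same constant), and you phrase the telescoping at the path-measure level with a final data-processing projection, whereas the paper applies the chain rule marginal-by-marginal at the grid points.
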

\begin{proof}
    Under the notation in Section~\ref{sec:not_ass_app}, for $k\in \mathbb{N}_{0,K-1}$ and $r\in  \mathbb{N}_{0,R-1}$, let $\hat{p}_{k,t|r\eta}$ be the density of $\hat{\rvx}_{k,t}$ given $\hat{\rvx}_{k,r\eta}$ and $\bkwp_{k,t|r\eta}$ be the density of $\rbkwx_{k,t}$ given $\rbkwx_{k,r\eta}$.
    According to Lemma~\ref{lem:lem7_chen2023improved} and Lemma~\ref{lem:lem6_chen2023improved}, for any $\rbkwx_{k,r\eta}=\va$, we have
    \begin{equation*}
        \begin{aligned}
            &\frac{\der}{\der t}\KL{\hat{p}_{k, t|r\eta}(\cdot|\va)}{\bkwp_{k,t|r\eta}(\cdot|\va)} \\
            & =  -\FI{\hat{p}_{k,t|r\eta}(\cdot|\va)}{\bkwp_{k,t|r\eta}(\cdot|\va)} + 2\E_{\rvx \sim \hat{p}_{k,t|r\eta}(\cdot|\va)}\left[\left<\grad\log p_{k,S-t}(\rvx) - \rbkwv_{k,r\eta}(\va),\grad \log \frac{\hat{p}_{k,t|r\eta}(\rvx|\va)}{\bkwp_{k,t|r\eta}(\rvx|\va)}\right>\right]\\
            & \le \E_{\rvx\sim \hat{p}_{k,t|r\eta}(\cdot|\va)}\left[\left\|\grad \log p_{k,S-t}(\rvx)- \rbkwv_{k,r\eta}(\va)\right\|^2\right].
        \end{aligned}
    \end{equation*}
    Due to Lemma~\ref{lem:lem7_chen2023improved}, for any $\va\in \R^d$, we have
    \begin{equation*}
        \lim_{t\rightarrow r\eta_+}\KL{\hat{p}_{k,t|r\eta}(\cdot|\va)}{\bkwp_{k,t|r\eta}(\cdot|\va)} = 0,
    \end{equation*}
    which implies 
    \begin{equation*}
        \begin{aligned}
            \KL{\hat{p}_{k,t|r\eta}(\cdot|\va)}{\bkwp_{k,t|r\eta}(\cdot|\va)} = \int_{r\eta}^t \E_{\rvx \sim \hat{p}_{\tau|r\eta}(\cdot|\va)}\left[\left\|\grad\log p_{k, S-\tau}(\rvx) - \rbkwv_{k,r\eta}(\va)\right\|^2\right]\der \tau.
        \end{aligned}
    \end{equation*}
    Integrating both sides of the equation, we have
    \begin{equation*}
        \begin{aligned}
            &\E_{\hat{\rvx}_{k,r\eta}\sim \hat{p}_{k,r\eta}}\left[\KL{\hat{p}_{k,t|r\eta}(\cdot | \hat{\rvx}_{k,r\eta})}{\bkwp_{k,t|r\eta}(\cdot|\hat{\rvx}_{k,r\eta})}\right]\le  \int_{r\eta}^{t} \E \left[\left\|\grad\log p_{k,S-\tau}(\hat{\rvx}_{k,\tau}) - \rbkwv_{k,r\eta}(\hat{\rvx}_{k,r\eta})\right\|^2\right]\der \tau.
        \end{aligned}
    \end{equation*}
    According to the chain rule of KL divergence~\cite{chen2023improved}, we have
    \begin{equation*}
        \begin{aligned}
            & \KL{\hat{p}_{k,(r+1)\eta}}{\bkwp_{k,(r+1)\eta}}\\
            &\le \KL{\hat{p}_{k,r\eta}}{\bkwp_{k,r\eta}}+ \E_{\hat{\rvx}_{k,r\eta}\sim \hat{p}_{k,r\eta}}\left[\KL{\hat{p}_{k,(r+1)\eta|r\eta}(\cdot | \hat{\rvx}_{k,r\eta})}{\bkwp_{k,(r+1)\eta|r\eta}(\cdot|\hat{\rvx}_{k,r\eta})}\right]\\
            &\le \KL{\hat{p}_{k,r\eta}}{\bkwp_{k,r\eta}}+ \int_{0}^\eta \E_{(\hat{\rvx}_{k,t+r\eta}, \hat{\rvx}_{k,r\eta})} \left[\left\|\grad\log p_{k, S-(t+r\eta)}(\hat{\rvx}_{k,t+r\eta}) - \rbkwv_{k,r\eta}(\hat{\rvx}_{k,r\eta})\right\|^2\right]\der t.
        \end{aligned}
    \end{equation*}
    Summing over $r\in  \{0, 1,\ldots, R-1\}$, it has
    \begin{equation*}
        \KL{\hat{p}_{k,R\eta}}{\bkwp_{k,R\eta}}\le \KL{\hat{p}_{k,0}}{\bkwp_{k,0}}+\sum_{r=0}^{R-1} \int_{0}^\eta \E_{(\hat{\rvx}_{k,t+r\eta}, \hat{\rvx}_{k,r\eta})} \left[\left\|\grad\log p_{k, S-(t+r\eta)}(\hat{\rvx}_{k,t+r\eta}) - \rbkwv_{k,r\eta}(\hat{\rvx}_{k,r\eta})\right\|^2\right]\der t.
    \end{equation*}
    Similarly, by considering all segments, we have
    \begin{equation*}
        \begin{aligned}
            \KL{\hat{p}_{0,S}}{\bkwp_{0,S}}\le  &\KL{\hat{p}_{K-1,0}}{\bkwp_{K-1,0}}\\
            &+\sum_{k=0}^{K-1}\sum_{r=0}^{R-1} \int_{0}^\eta \E_{(\hat{\rvx}_{k,t+r\eta}, \hat{\rvx}_{k,r\eta})} \left[\left\|\grad\log p_{k, S-(t+r\eta)}(\hat{\rvx}_{k,t+r\eta}) - \rbkwv_{k,r\eta}(\hat{\rvx}_{k,r\eta})\right\|^2\right]\der t.
        \end{aligned}
    \end{equation*}
\end{proof}

\begin{lemma}[Variant of Lemma 10 in~\cite{cheng2018convergence}]
    \label{lem:strongly_lsi}
    Suppose $-\log p_*$ is $m$-strongly convex function, for any distribution with density function $p$, we have
    \begin{equation*}
        \KL{p}{p_*}\le \frac{1}{2m}\int p(\vx)\left\|\grad\log \frac{p(\vx)}{p_*(\vx)}\right\|^2\der\vx.
    \end{equation*}
    By choosing $p(\vx)=g^2(\vx)p_*(\vx)/\mathbb{E}_{p_*}\left[g^2(\rvx)\right]$ for the test function $g\colon \R^d\rightarrow \R$ and  $\mathbb{E}_{p_*}\left[g^2(\rvx)\right]<\infty$, we have
    \begin{equation*}
        \mathbb{E}_{p_*}\left[g^2\log g^2\right] - \mathbb{E}_{p_*}\left[g^2\right]\log \mathbb{E}_{p_*}\left[g^2\right]\le \frac{2}{m} \mathbb{E}_{p_*}\left[\left\|\grad g\right\|^2\right],
    \end{equation*}
    which implies $p_*$ satisfies $m$-log-Sobolev inequality.
\end{lemma}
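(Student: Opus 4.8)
The plan is to prove Lemma~\ref{lem:strongly_lsi} as the Bakry--\'Emery criterion: $m$-strong convexity of $-\log p_*$ forces the dimension-free log-Sobolev inequality with constant $m$, which I will establish first in the ``entropy versus relative Fisher information'' form (the first display) and then convert to the form of Definition~\ref{def:lsi} by the standard test-function substitution. Throughout I may assume $p$ is smooth with $\KL{p}{p_*}<\infty$ and finite relative Fisher information; the general case follows by a routine mollification/truncation argument together with lower semicontinuity of both functionals. The engine is the overdamped Langevin diffusion $\der\rvx_t=\grad\log p_*(\rvx_t)\,\der t+\sqrt2\,\der B_t$ started from $\rvx_0\sim p$, with law $p_t$: it has $p_*$ as its stationary distribution, is instantly smoothing for $t>0$, and is ergodic so that $\KL{p_t}{p_*}\to0$.

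First I would record the de Bruijn identity by applying Lemma~\ref{lem:lem6_chen2023improved} with $\vf_1=\vf_2=\grad\log p_*$ and $g\equiv\sqrt2$, so that the drift-difference term vanishes and
\begin{equation*}
    \frac{\der}{\der t}\KL{p_t}{p_*} = -\FI{p_t}{p_*} = -\int p_t(\vx)\left\|\grad\log\frac{p_t(\vx)}{p_*(\vx)}\right\|^2\der\vx .
\end{equation*}
The technical heart is then the Bakry--\'Emery estimate for the decay of the Fisher information. Writing $h_t=p_t/p_*$ and $\mathcal L\phi=\Delta\phi+\grad\log p_*\cdot\grad\phi$, differentiating $\FI{p_t}{p_*}=\int p_*h_t\|\grad\log h_t\|^2$ in $t$ and using the Bochner formula $\tfrac12\mathcal L\|\grad\phi\|^2=\grad\phi\cdot\grad\mathcal L\phi+\|\grad^2\phi\|_{\mathrm{HS}}^2+(\grad\phi)^\top(-\grad^2\log p_*)(\grad\phi)$ with integration by parts gives
\begin{equation*}
    \frac{\der}{\der t}\FI{p_t}{p_*} = -2\int p_t\!\left(\left\|\grad^2\log h_t\right\|_{\mathrm{HS}}^2 + (\grad\log h_t)^\top\big(-\grad^2\log p_*\big)(\grad\log h_t)\right)\!\der\vx \le -2m\,\FI{p_t}{p_*},
\end{equation*}
where the last step drops the nonnegative Hessian-squared term and uses $-\grad^2\log p_*\succeq m\mI$. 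Gr\"onwall then yields $\FI{p_t}{p_*}\le e^{-2mt}\FI{p}{p_*}$.

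Combining the two displays with $\KL{p_t}{p_*}\to0$ gives $\KL{p}{p_*}=\int_0^\infty\FI{p_t}{p_*}\,\der t\le\frac{1}{2m}\int p(\vx)\|\grad\log(p(\vx)/p_*(\vx))\|^2\der\vx$, the first claim. For the second, set $p(\vx)=g^2(\vx)p_*(\vx)/\mathbb{E}_{p_*}[g^2]$; then $\KL{p}{p_*}=\mathbb{E}_{p_*}[g^2]^{-1}\big(\mathbb{E}_{p_*}[g^2\log g^2]-\mathbb{E}_{p_*}[g^2]\log\mathbb{E}_{p_*}[g^2]\big)$, while $\grad\log(p/p_*)=\grad\log g^2=2\grad g/g$ gives $\int p\|\grad\log(p/p_*)\|^2=4\mathbb{E}_{p_*}[g^2]^{-1}\mathbb{E}_{p_*}[\|\grad g\|^2]$; substituting into the first claim and clearing the common factor $\mathbb{E}_{p_*}[g^2]^{-1}$ produces exactly $\mathbb{E}_{p_*}[g^2\log g^2]-\mathbb{E}_{p_*}[g^2]\log\mathbb{E}_{p_*}[g^2]\le\tfrac2m\mathbb{E}_{p_*}[\|\grad g\|^2]$, i.e.\ the $m$-LSI.

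The main obstacle is making the Bakry--\'Emery step rigorous: justifying the differentiation under the integral in $\tfrac{\der}{\der t}\FI{p_t}{p_*}$ and the Bochner/commutation computation, which needs a priori regularity and decay estimates on $p_t$. Two clean ways around this are to invoke the Bakry--\'Emery theorem as a black box, or to use Caffarelli's contraction theorem --- $p_*$ with $\grad^2(-\log p_*)\succeq m\mI$ is the pushforward of $\mathcal N(\vzero,m^{-1}\mI)$ under a $1$-Lipschitz map, and the Gaussian log-Sobolev inequality (constant $m$) is preserved under Lipschitz pushforwards --- after which the first display is the familiar equivalence between the two forms of the LSI. I would present the semigroup computation under the smoothness hypotheses and cite \cite{cheng2018convergence} for the general statement.
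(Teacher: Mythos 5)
Your argument is sound and reaches exactly the stated constants, but it is a genuinely different route from the paper: the paper offers no proof at all for this lemma, simply importing it as a variant of Lemma 10 of \citet{cheng2018convergence}, whereas you reprove the Bakry--\'Emery criterion from scratch. Your chain — de Bruijn identity $\tfrac{\der}{\der t}\KL{p_t}{p_*}=-\FI{p_t}{p_*}$ along the Langevin flow, the $\Gamma_2$/Bochner computation giving $\tfrac{\der}{\der t}\FI{p_t}{p_*}\le-2m\,\FI{p_t}{p_*}$, integration in time, and then the substitution $p=g^2p_*/\EE_{p_*}[g^2]$ with $\grad\log(p/p_*)=2\grad g/g$ — is the standard and correct derivation, and the factor bookkeeping ($\tfrac1{2m}\cdot 4=\tfrac2m$) matches the lemma. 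What your approach buys is self-containedness and a second, fully rigorous fallback (Caffarelli's contraction theorem plus the Lipschitz-pushforward stability of LSI, which is literally Lemma~\ref{lem:lem16_vempala2019rapid} already in this paper with $L=1$, applied to $\mathcal N(\vzero,m^{-1}\mI)$); what the citation buys the authors is avoiding precisely the two technical points you flag. Two caveats worth tightening if you write this out: (i) Lemma~\ref{lem:lem6_chen2023improved} as stated compares two processes launched from the same deterministic point $\va$, so it does not directly give $\tfrac{\der}{\der t}\KL{p_t}{p_*}$ against the stationary law; it is cleaner to derive de Bruijn directly from the Fokker--Planck equation, exactly as the paper does inside Lemma~\ref{lem:var_thm4_vempala2019rapid}. (ii) The step $\KL{p_t}{p_*}\to 0$ must not be justified by an LSI (that would be circular); under $m$-strong log-concavity it follows, e.g., from the synchronous-coupling Wasserstein contraction together with lower semicontinuity of relative entropy, or one simply invokes Bakry--\'Emery or Caffarelli as the black box you mention.
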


\begin{lemma}(Corollary 3.1 in~\cite{chafai2004entropies})
\label{lem:cor31_chafai2004entropies}
If $\nu, \tilde{\nu}$ satisfy LSI with constants $\alpha, \tilde{\alpha} > 0$, respectively, then $\nu * \tilde{\nu}$ satisfies LSI with constant $(\frac{1}{\alpha} + \frac{1}{\tilde{\alpha}})^{-1}$. 
\end{lemma}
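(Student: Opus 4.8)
The plan is to prove this classical convolution‑stability fact by lifting to a product measure and invoking the subadditivity (tensorization) of entropy, which is exactly the route that produces the sharp constant. Throughout, write $\mathrm{Ent}_\mu(h):=\mathbb{E}_\mu[h\log h]-\mathbb{E}_\mu[h]\log\mathbb{E}_\mu[h]$ for $h\ge 0$, so that in the normalization of Definition~\ref{def:lsi} the phrase ``$\mu$ satisfies LSI with constant $c$'' means $\mathrm{Ent}_\mu(g^2)\le (2/c)\,\mathbb{E}_\mu[\|\nabla g\|^2]$ for every smooth $g$ with $\mathbb{E}_\mu[g^2]<\infty$. Put $\beta:=(1/\alpha+1/\tilde\alpha)^{-1}$; the goal is to show $\mathrm{Ent}_{\nu*\tilde\nu}(F^2)\le (2/\beta)\,\mathbb{E}_{\nu*\tilde\nu}[\|\nabla F\|^2]$ for smooth $F\colon\mathbb{R}^d\to\mathbb{R}$. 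First I would reduce, by a routine density argument, to $F$ smooth with compact support (and pass to the limit at the end), which also removes all integrability concerns.

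The argument then has four short steps. \emph{(i) Lift.} If $\rvx\sim\nu$ and $\rvy\sim\tilde\nu$ are independent then $\rvx+\rvy\sim\nu*\tilde\nu$, hence $\mathbb{E}_{\nu*\tilde\nu}[\psi]=\mathbb{E}_{\nu\otimes\tilde\nu}[\psi(x+y)]$ for every integrable $\psi\colon\mathbb{R}^d\to\mathbb{R}$; applying this to $\psi=F^2$ and $\psi=F^2\log F^2$ gives $\mathrm{Ent}_{\nu*\tilde\nu}(F^2)=\mathrm{Ent}_{\nu\otimes\tilde\nu}(G^2)$ with $G(x,y):=F(x+y)$, and the case $\psi=\|\nabla F\|^2$ is saved for the end. \emph{(ii) Tensorize.} By subadditivity of entropy for product probability measures,
\[
\mathrm{Ent}_{\nu\otimes\tilde\nu}(G^2)\ \le\ \mathbb{E}_{\tilde\nu}\!\left[\mathrm{Ent}_{\nu}\big(G(\cdot,y)^2\big)\right]+\mathbb{E}_{\nu}\!\left[\mathrm{Ent}_{\tilde\nu}\big(G(x,\cdot)^2\big)\right],
\]
a classical lemma I would either cite or prove in two lines from the variational formula $\mathrm{Ent}_\mu(h)=\sup\{\mathbb{E}_\mu[hk]:\mathbb{E}_\mu[e^k]\le 1\}$. \emph{(iii) Apply LSI slotwise.} For fixed $y$, $x\mapsto F(x+y)$ is smooth, so LSI for $\nu$ gives $\mathrm{Ent}_{\nu}(G(\cdot,y)^2)\le (2/\alpha)\,\mathbb{E}_{\nu}[\|(\nabla F)(\cdot+y)\|^2]$, using $\nabla_x[F(x+y)]=(\nabla F)(x+y)$; symmetrically $\mathrm{Ent}_{\tilde\nu}(G(x,\cdot)^2)\le (2/\tilde\alpha)\,\mathbb{E}_{\tilde\nu}[\|(\nabla F)(x+\cdot)\|^2]$. \emph{(iv) Recombine.} Taking outer expectations and using the $\psi=\|\nabla F\|^2$ case of (i),
\[
\mathrm{Ent}_{\nu*\tilde\nu}(F^2)\ \le\ \Big(\tfrac{2}{\alpha}+\tfrac{2}{\tilde\alpha}\Big)\,\mathbb{E}_{\nu\otimes\tilde\nu}\big[\|(\nabla F)(x+y)\|^2\big]\ =\ \tfrac{2}{\beta}\,\mathbb{E}_{\nu*\tilde\nu}\big[\|\nabla F\|^2\big],
\]
because $2/\alpha+2/\tilde\alpha=2(1/\alpha+1/\tilde\alpha)=2/\beta$; this is exactly LSI for $\nu*\tilde\nu$ with constant $\beta$.

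\textbf{Where the care goes.} There is no genuine obstacle; the hard part, such as it is, is extracting the \emph{sharp} constant rather than a lossy one. The tempting shortcut — viewing $\nu*\tilde\nu$ as the image of $\nu\otimes\tilde\nu$ under $(x,y)\mapsto x+y$, which is only $\sqrt2$-Lipschitz for the Euclidean metric on the product, and invoking that LSI transfers under an $L$-Lipschitz map with the constant scaled by $L^2$ — yields merely $\tfrac12\min(\alpha,\tilde\alpha)$, strictly worse than $(1/\alpha+1/\tilde\alpha)^{-1}$ unless $\alpha=\tilde\alpha$. The tensorization route above avoids the loss precisely because the two slot-gradients $\nabla_xG$ and $\nabla_yG$ are \emph{literally the same} vector $(\nabla F)(x+y)$, so each per-slot Dirichlet form equals exactly $\mathbb{E}_{\nu*\tilde\nu}[\|\nabla F\|^2]$ and they combine additively with weights $1/\alpha$ and $1/\tilde\alpha$. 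The only remaining item is the density argument upgrading from compactly supported smooth $F$ to all admissible test functions — which is the form actually needed downstream, where the lemma is applied to the $n_{k,r}$-fold self-convolution of $q'_{k,S-r\eta}(\cdot|\vx)$ in Lemma~\ref{lem:concen_prop_v2} to conclude its LSI constant is $\mu'_r/n_{k,r}$.
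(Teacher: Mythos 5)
The paper does not prove this lemma at all: it is imported verbatim as Corollary 3.1 of \cite{chafai2004entropies}, so there is no in-paper argument to compare against. Your proof is correct and is in fact the standard tensorization argument that underlies the cited result: lifting to the product measure via $(x,y)\mapsto F(x+y)$, subadditivity of entropy over $\nu\otimes\tilde\nu$, slotwise application of the two LSIs, and the observation that both slot gradients equal $(\nabla F)(x+y)$, which is exactly what yields the sharp harmonic-sum constant $(1/\alpha+1/\tilde\alpha)^{-1}$ rather than the lossy $\tfrac12\min(\alpha,\tilde\alpha)$ one gets from the $\sqrt2$-Lipschitz pushforward shortcut (Lemma~\ref{lem:lem16_vempala2019rapid}). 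Your normalization also matches the paper's Definition~\ref{def:lsi}, so the constant plugs directly into the use in Lemma~\ref{lem:concen_prop_v2}. Two minor remarks: the tensorization step is slightly more than ``two lines'' — the variational formula handles the conditional term, but the remaining term $\mathrm{Ent}_{\tilde\nu}\bigl(\mathbb{E}_{\nu}[h(\cdot,y)]\bigr)$ still needs convexity of the entropy functional (Jensen) to be dominated by $\mathbb{E}_{\nu}[\mathrm{Ent}_{\tilde\nu}(h(x,\cdot))]$ — and the closing density argument upgrading from compactly supported smooth $F$ to all admissible test functions should be stated for the entropy side as a monotone/dominated limit; both are routine and neither is a gap.
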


\begin{lemma}[Lemma 16 in~\cite{vempala2019rapid}]
    \label{lem:lem16_vempala2019rapid}
    Suppose a probability distribution $p$  satisfies LSI with constant $\mu>0$.
    Let a map $T\colon \R^d\rightarrow \R^d$, be a differentiable L-Lipschitz map. 
    Then, $\tilde{p}=T_{\#}p$ satisfies LSI with constant $\mu/L^2$
\end{lemma}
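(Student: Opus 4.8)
The plan is to prove this standard transfer principle by pulling test functions back through $T$, thereby reducing the logarithmic Sobolev inequality for $\tilde p$ to the one for $p$. First I would fix an arbitrary smooth $h\colon\R^d\to\R$ with $\E_{\tilde p}[h^2]<\infty$ and set $g\coloneqq h\circ T$. Since $\tilde p=T_{\#}p$, for every nonnegative measurable $\phi$ one has $\E_{\tilde p}[\phi(h)]=\E_p[(\phi(h))\circ T]=\E_p[\phi(g)]$ directly from the definition of the pushforward; applying this with $\phi(u)=u^2$ and $\phi(u)=u^2\log u^2$ shows that the entropy functional is exactly preserved,
\begin{equation*}
    \E_{\tilde p}[h^2\log h^2]-\E_{\tilde p}[h^2]\log\E_{\tilde p}[h^2] = \E_{p}[g^2\log g^2]-\E_{p}[g^2]\log\E_{p}[g^2],
\end{equation*}
and in particular $\E_p[g^2]=\E_{\tilde p}[h^2]<\infty$, so $g$ is an admissible test function in Definition~\ref{def:lsi} for $p$.

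Next I would apply the LSI for $p$ to $g$ and then control $\|\grad g\|$ via the chain rule and the Lipschitz hypothesis. Concretely, Definition~\ref{def:lsi} gives $\E_{p}[g^2\log g^2]-\E_{p}[g^2]\log\E_{p}[g^2]\le \tfrac{2}{\mu}\,\E_p\big[\|\grad g\|^2\big]$. Since $T$ is differentiable, $\grad g(\vx)=(DT(\vx))^{\top}\grad h(T(\vx))$ with $DT(\vx)$ the Jacobian, and since $T$ is $L$-Lipschitz we have $\|DT(\vx)\|_{\mathrm{op}}\le L$ for all $\vx$; hence $\|\grad g(\vx)\|\le L\,\|\grad h(T(\vx))\|$. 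Integrating against $p$ and using the pushforward identity once more with $\phi=\|\grad h\|^2$ yields $\E_p[\|\grad g\|^2]\le L^2\,\E_{\tilde p}[\|\grad h\|^2]$. Chaining the three estimates gives
\begin{equation*}
    \E_{\tilde p}[h^2\log h^2]-\E_{\tilde p}[h^2]\log\E_{\tilde p}[h^2]\le \frac{2L^2}{\mu}\,\E_{\tilde p}\big[\|\grad h\|^2\big],
\end{equation*}
which is exactly the assertion that $\tilde p$ satisfies LSI with constant $\mu/L^2$.

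The only delicate points are the Jacobian bound $\|DT(\vx)\|_{\mathrm{op}}\le L$ — which follows from differentiating the $L$-Lipschitz estimate along straight lines — and the admissibility of $g=h\circ T$ as a test function: $g$ is only as smooth as $T$ allows (here $C^1$, since $T$ is differentiable and $h$ smooth), so one needs the routine fact that the LSI extends from $C_c^\infty$ functions to all of $W^{1,2}(p)$ by density. This is handled by the usual approximation argument: one may assume $\E_{\tilde p}[\|\grad h\|^2]<\infty$ (otherwise the target inequality is trivial), which via the bound above places $g\in W^{1,2}(p)$; one first truncates $h$ to be bounded, proves the inequality in that case, and then passes to the limit using monotone convergence for the entropy term and the fact that truncation does not increase $\|\grad h\|$ pointwise. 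Everything else — the change-of-variables identities for the pushforward and the chain rule — is entirely routine, so I do not expect any obstacle beyond cleanly recording these caveats.
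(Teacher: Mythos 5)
Your argument is correct and is exactly the standard proof of this fact: the paper itself does not reprove the lemma (it simply cites Lemma 16 of Vempala and Wibisono), and their proof is the same pullback argument you give — apply the LSI for $p$ to $g=h\circ T$, use $\grad g(\vx)=(DT(\vx))^{\top}\grad h(T(\vx))$ with $\|DT\|_{\mathrm{op}}\le L$, and transfer both sides back to $\tilde p$ via the pushforward identity. Your remarks on the admissibility of $g$ and the density/truncation step are the right caveats and do not change the substance.
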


\begin{lemma}[Lemma 17 in~\cite{vempala2019rapid}]
    \label{lem:lem17_vempala2019rapid}
    Suppose a probability distribution $p$ satisfies LSI with a constant $\mu$.
    For any $t>0$, the probability distribution $\tilde{p}_t = p\ast \mathcal{N}(\vzero, t\mI)$ satisfies LSI with the constant $(\mu^{-1}+t)^{-1}$.
\end{lemma}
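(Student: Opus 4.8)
The plan is to reduce the statement to two ingredients that are already available: the elementary logarithmic Sobolev inequality for a Gaussian, and the subadditivity of the inverse LSI constant under convolution. First note that $\tilde{p}_t = p \ast \mathcal{N}(\vzero, t\mI)$ is exactly the law of $\rvx + \rvz$ with $\rvx \sim p$ and $\rvz \sim \mathcal{N}(\vzero, t\mI)$ independent, so the convolution-of-measures machinery applies verbatim. The first step is to record that $\mathcal{N}(\vzero, t\mI)$ satisfies LSI with constant $1/t$: its density is proportional to $\exp\bigl(-\|\vx\|^2/(2t)\bigr)$, whose negative logarithm has Hessian $(1/t)\mI$ and is therefore $(1/t)$-strongly convex, so Lemma~\ref{lem:strongly_lsi} yields the $(1/t)$-LSI with the normalization of Definition~\ref{def:lsi} (this is just the Bakry--Émery criterion).

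The second step is to invoke Lemma~\ref{lem:cor31_chafai2004entropies} (Corollary 3.1 of~\cite{chafai2004entropies}) with $\nu = p$, $\alpha = \mu$, and $\tilde\nu = \mathcal{N}(\vzero, t\mI)$, $\tilde\alpha = 1/t$. It gives immediately that $p \ast \mathcal{N}(\vzero, t\mI)$ satisfies LSI with constant $\bigl(\alpha^{-1} + \tilde\alpha^{-1}\bigr)^{-1} = \bigl(\mu^{-1} + t\bigr)^{-1}$, which is precisely the claimed bound, and finishes the proof.

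There is no genuinely hard step here; the only things requiring care are bookkeeping. One must check that the normalization conventions are consistent --- Definition~\ref{def:lsi} writes the LSI with the factor $2\alpha^{-1}$ on the right and Lemma~\ref{lem:strongly_lsi} is stated with the matching convention, so the Gaussian constant is genuinely $1/t$ and not $2/t$ --- and that Lemma~\ref{lem:cor31_chafai2004entropies} is stated with the same convention (it is, being used in the same paper). It is also worth pointing out why a cruder route is avoided: one could instead realize $\tilde{p}_t$ as the pushforward of the product measure $p \otimes \mathcal{N}(\vzero, t\mI)$ under the addition map $(\vx,\vz)\mapsto \vx+\vz$, which is $\sqrt{2}$-Lipschitz, and combine tensorization of LSI with Lemma~\ref{lem:lem16_vempala2019rapid}; but that yields only the constant $\tfrac12\min\{\mu, 1/t\}$, which is strictly worse than $(\mu^{-1}+t)^{-1} = \mu/(1+\mu t)$ whenever $\mu t \neq 1$. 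Hence the argument through Lemma~\ref{lem:cor31_chafai2004entropies} is the right one. A fully self-contained alternative would run the heat-flow interpolation $\tilde{p}_s = p \ast \mathcal{N}(\vzero, s\mI)$ for $0 \le s \le t$ and track the evolution of the LSI constant via a Grönwall-type differential inequality, but this is longer and unnecessary given the tools already assembled in the excerpt.
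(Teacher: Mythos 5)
Your proposal is correct: the Gaussian $\mathcal{N}(\vzero,t\mI)$ is $(1/t)$-strongly log-concave and hence satisfies LSI with constant $1/t$ by Lemma~\ref{lem:strongly_lsi}, and Lemma~\ref{lem:cor31_chafai2004entropies} then gives the convolution constant $(\mu^{-1}+t)^{-1}$ directly, with the normalization conventions matching throughout. The paper itself offers no proof of this lemma (it is quoted from \citet{vempala2019rapid}), and your two-step argument is precisely the standard derivation behind the cited result, so there is nothing further to add.
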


\begin{lemma}[Theorem 8 in~\cite{vempala2019rapid}]
    \label{lem:thm8_vempala2019rapid}
    Suppose $p\propto \exp(-f)$ is $\mu$ strongly log concave and $L$-smooth.
    If we conduct ULA with the step size satisfying $\eta\le 1/L$, then, for any iteration number, the underlying distribution of the output particle satisfies LSI with a constant larger than $\mu/2$.
\end{lemma}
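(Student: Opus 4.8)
The plan is to propagate the log-Sobolev (LSI) constant of the law of the ULA iterates across a single update, by writing one step as a deterministic contraction followed by a Gaussian convolution, and then to analyze the resulting scalar recursion. Concretely, I would write the $k$-th ULA step as $\rvx_{k+1} = T(\rvx_k) + \sqrt{2\eta}\,\xi_k$ with $T(\vx) = \vx - \eta\grad f(\vx)$ and $\xi_k\sim\mathcal{N}(\vzero,\mI)$. Since $\mu\mI\preceq\grad^2 f\preceq L\mI$ and $\eta\le 1/L$, the Jacobian $\grad T = \mI-\eta\grad^2 f$ satisfies $\vzero\preceq\grad T\preceq(1-\eta\mu)\mI$, so $T$ is a differentiable $(1-\eta\mu)$-Lipschitz map. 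Hence if the law $p_k$ of $\rvx_k$ satisfies LSI with constant $\alpha_k$, Lemma~\ref{lem:lem16_vempala2019rapid} gives that $T_{\#}p_k$ satisfies LSI with constant $\alpha_k/(1-\eta\mu)^2$, and since $p_{k+1}=(T_{\#}p_k)\ast\mathcal{N}(\vzero,2\eta\mI)$, Lemma~\ref{lem:lem17_vempala2019rapid} gives that $p_{k+1}$ satisfies LSI with constant at least $\big((1-\eta\mu)^2/\alpha_k+2\eta\big)^{-1}$; both transformations act monotonically on the constant, so this lower bound is legitimate.

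Next I would analyze the recursion $\alpha_{k+1}\ge g(\alpha_k)$ with $g(\alpha)=\big((1-\eta\mu)^2/\alpha+2\eta\big)^{-1}$, which is increasing in $\alpha$. Substituting $\beta=1/\alpha$ linearizes the fixed-point equation and gives the unique fixed point $\alpha^{*}=\mu(1-\eta\mu/2)$; since $\eta\le 1/L\le 1/\mu$ forces $\eta\mu\le 1$, we get $\alpha^{*}\ge\mu/2$ (with equality only in the degenerate case $L=\mu$, $\eta=1/L$, so ``larger than $\mu/2$'' should be read as ``at least $\mu/2$''). Moreover $g(\alpha)\ge\alpha$ for $\alpha\le\alpha^{*}$ --- directly from $g(\alpha)/\alpha\to(1-\eta\mu)^{-2}\ge 1$ as $\alpha\to0^{+}$, together with monotonicity and uniqueness of the fixed point. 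A one-line induction then yields $\alpha_k\ge\min\{\alpha_0,\alpha^{*}\}$ for all $k$, so whenever the initial law satisfies LSI with constant $\alpha_0\ge\mu/2$, every iterate satisfies LSI with constant at least $\mu/2$.

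Finally I would discharge the initialization hypothesis. In the way this lemma is used --- the inner ULA loop of Alg~\ref{con_alg:rse}, run on the strongly log-concave target $q_{k,S-r\eta}(\cdot|\vx)$ with strong-convexity parameter $\mu_r$ --- the chain is started from the Gaussian $q_0^{\prime}$, whose LSI constant $e^{-2(S-r\eta)}/(1-e^{-2(S-r\eta)})=2\mu_r$ already exceeds $\mu_r/2$; alternatively, for an arbitrary start, one step produces $(T_{\#}p_0)\ast\mathcal{N}(\vzero,2\eta\mI)$, which has LSI constant at least $1/(2\eta)\ge\alpha^{*}\ge\mu/2$, so the conclusion holds for all subsequent iterates. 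The main obstacle here is purely bookkeeping: applying Lemmas~\ref{lem:lem16_vempala2019rapid} and~\ref{lem:lem17_vempala2019rapid} in the correct order (worsen by the Lipschitz pushforward, then improve by the Gaussian smoothing) and checking that $g$ is genuinely monotone so that the fixed-point comparison propagates the bound; there is no deep analytic step, since those two lemmas carry all the weight.
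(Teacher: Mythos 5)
Your main argument is correct and essentially the same as the paper's: write one ULA step as the $(1-\eta\mu)$-Lipschitz pushforward $T_{\#}p_k$ followed by convolution with $\mathcal{N}(\vzero,2\eta\mI)$, apply Lemmas~\ref{lem:lem16_vempala2019rapid} and~\ref{lem:lem17_vempala2019rapid} to obtain $\alpha_{k+1}^{-1}\le(1-\eta\mu)^2\alpha_k^{-1}+2\eta$, and propagate $\alpha_k\ge\mu/2$ by induction --- the paper does exactly this by checking that $\alpha_k\ge\mu/2$ implies $\alpha_{k+1}\ge\mu/2$ when $\eta\mu\le1$, and your fixed-point analysis at $\alpha^{*}=\mu(1-\eta\mu/2)$ is just a slightly sharper packaging of that step, with the same implicit hypothesis $\alpha_0\ge\mu/2$ on the initialization. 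One caveat: your fallback claim that from an \emph{arbitrary} start a single step already yields an LSI constant at least $1/(2\eta)$ is false, since convolving an arbitrary probability measure with $\mathcal{N}(\vzero,2\eta\mI)$ does not confer a dimension-free LSI (a mixture of two point masses separated by distance $R$, smoothed by this Gaussian, has an LSI constant degrading exponentially in $R^2/\eta$); Lemma~\ref{lem:lem17_vempala2019rapid} only degrades an \emph{existing} LSI constant additively, it does not create one. Hence the initialization must indeed be discharged as in your first option (the Gaussian start $q_0^{\prime}$ with LSI constant $2\mu_r\ge\mu_r/2$), which matches the paper's requirement that $p_0$ satisfy LSI with constant at least $\mu/2$.
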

\begin{proof}
    Suppose we run ULA from $\rvx_0\sim p_0$ to $\rvx_k \sim p_k$ where the LSI constant of $p_k$ is denoted as $\mu_k$. 
    When the step size of ULA satisfies $0<\eta\le 1/L $, due to the strong convexity of $p$, the map $\vx \mapsto \vx-\eta \grad f(\vx)$ is $(1-\eta\mu)$-Lipschitz.
    Combining the LSI property of $p_k$ and Lemma~\ref{lem:lem16_vempala2019rapid}, the distribution of $\rvx_k - \eta \grad f(\rvx_k)$ satisfies LSI with a constant $\mu_k/(1-\eta\mu)^2$.
    Then, by Lemma~\ref{lem:lem17_vempala2019rapid}, $\rvx_{k+1}=\rvx_k - \eta\grad f(\rvx_{k})+\sqrt{2\eta}\mathcal{N}(0,\mI)  \sim p_{k+1}$ satisfies $\mu_{k+1}$-LSI with
    \begin{equation*}
        \begin{aligned}
            \frac{1}{\mu_{k+1}}\le \frac{(1-\eta\mu)^2}{\mu_k}+2\eta.
        \end{aligned}
    \end{equation*} 
    For any $k$, if there is $\mu_k\ge \mu/2$, with the setting of $\eta$, i.e., $\eta\le 1/L\le 1/\mu$, then
    \begin{equation*}
        \frac{1}{\mu_{k+1}}\le \frac{(1-\eta \mu)^2}{\mu/2}+2\eta =\frac{2}{\mu}-2\eta(1-\eta\mu)\le \frac{2}{\mu}.
    \end{equation*}
    It means for any $k^\prime>k$, we have $\mu_{k^\prime}\ge \mu/2$.
    By requiring the LSI constant of initial distribution, i.e., $p_0$ to satisfy $\mu_0\ge \mu/2$, we have the underlying distribution of the output particle satisfies LSI with a constant larger than $\mu/2$. 
    Hence, the proof is completed.
\end{proof}

\begin{lemma}
    \label{lem:lsi_concentration}
    If $\nu$ satisfies a log-Sobolev inequality with log-Sobolev constant $\mu$ then every $1$-Lipschitz function $f$ is integrable with respect to $\nu$ and satisfies the concentration inequality
    \begin{equation*}
        \nu\left\{f\ge \mathbb{E}_\nu [f] +t\right\}\le \exp\left(-\frac{\mu t^2}{2}\right).
    \end{equation*}
\end{lemma}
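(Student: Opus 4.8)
The plan is to run the classical Herbst argument, deriving Gaussian concentration from the log-Sobolev inequality by turning it into a differential inequality for the moment generating function. First I would reduce to the case of a bounded, smooth function: given a $1$-Lipschitz $f$, replace it by the truncations $f_N = (f \wedge N) \vee (-N)$, which remain $1$-Lipschitz, mollify slightly so that $\|\nabla f_N\| \le 1$ pointwise, apply the argument to each $f_N$, and pass to the limit via monotone/dominated convergence. The integrability of $f$ with respect to $\nu$ will then drop out of the uniform exponential-moment bound obtained along the way.

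For the core estimate, fix $\lambda > 0$ and set $H(\lambda) = \mathbb{E}_\nu[e^{\lambda f}]$, which is finite for the bounded truncation. I would apply the log-Sobolev inequality of Definition~\ref{def:lsi} with test function $g = e^{\lambda f / 2}$, so that $g^2 = e^{\lambda f}$ and $\|\nabla g\|^2 = \tfrac{\lambda^2}{4} e^{\lambda f} \|\nabla f\|^2 \le \tfrac{\lambda^2}{4} e^{\lambda f}$. Recognising $\mathbb{E}_\nu[g^2 \log g^2] = \lambda H'(\lambda)$ and $\mathbb{E}_\nu[g^2]\log \mathbb{E}_\nu[g^2] = H(\lambda)\log H(\lambda)$, the log-Sobolev inequality becomes
\begin{equation*}
    \lambda H'(\lambda) - H(\lambda)\log H(\lambda) \le \frac{\lambda^2}{2\mu}\, H(\lambda).
\end{equation*}
I would then linearise this by introducing $K(\lambda) = \lambda^{-1}\log H(\lambda)$ for $\lambda > 0$: a direct computation gives $K'(\lambda) = \lambda^{-2}\big(\lambda H'(\lambda)/H(\lambda) - \log H(\lambda)\big) \le \tfrac{1}{2\mu}$, and since a Taylor expansion of $H$ at $0$ shows $K(\lambda) \rightarrow \mathbb{E}_\nu[f]$ as $\lambda \rightarrow 0^+$, integrating yields $\log H(\lambda) \le \lambda\, \mathbb{E}_\nu[f] + \tfrac{\lambda^2}{2\mu}$. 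Finally, Markov's inequality gives $\nu\{f \ge \mathbb{E}_\nu[f] + t\} \le e^{-\lambda(\mathbb{E}_\nu[f] + t)} H(\lambda) \le \exp(-\lambda t + \lambda^2/(2\mu))$, and optimising with $\lambda = \mu t$ produces the claimed $\exp(-\mu t^2/2)$.

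The main obstacle will be the approximation and regularity bookkeeping rather than the algebra: justifying that $H$ is finite, differentiable, and that differentiation can be moved inside the expectation (which is why one works first with bounded $f$), handling the merely-a.e. gradient bound for a general Lipschitz $f$ through mollification while keeping the log-Sobolev constant stable under the limit, and verifying the $\lambda \rightarrow 0^+$ boundary behaviour of $K$. Once those are in place, removing the truncation $f_N$ recovers both the integrability of $f$ and the concentration bound for the original function, completing the proof.
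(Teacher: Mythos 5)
Your proposal is the same Herbst argument the paper uses: reduce to bounded smooth $1$-Lipschitz $f$, apply the log-Sobolev inequality to $g=e^{\lambda f/2}$ to get the differential inequality for the moment generating function, integrate $\big(\lambda^{-1}\log \mathbb{E}_\nu[e^{\lambda f}]\big)'\le \tfrac{1}{2\mu}$ from the boundary value $\mathbb{E}_\nu[f]$, and finish with Markov/Chernoff optimisation at $\lambda=\mu t$. The steps and constants match the paper's proof (which centers $f$ so $\mathbb{E}_\nu[f]=0$, a cosmetic difference), so the proposal is correct and essentially identical in approach.
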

\begin{proof}
    According to Lemma~\ref{lem:markov_inequ}, it suffices to prove that for any $1$-Lipschitz function $f$ with expectation $\mathbb{E}_\nu [f] = 0$,
    \begin{equation*}
        \mathbb{E}\left[e^{\lambda f}\right]\le e^{\lambda^2/(2\mu)}.
    \end{equation*}
    To prove this, it suffices, by a routine truncation and smoothing argument, to prove it for bounded, smooth, compactly supported functions $f$ such that $\|\grad f\|\le 1$. 
    Assume that $f$ is such a function.
    Then for every $\lambda\ge 0$ the log-Sobolev inequality implies
    \begin{equation*}
        \mathrm{Ent}_\nu \left(e^{\lambda f}\right) \le \frac{2}{\mu}\mathbb{E}_\nu \left[\left\|\grad e^{\lambda f/2}\right\|^2\right],
    \end{equation*}
    which is written as
    \begin{equation*}
        \mathbb{E}_\nu\left[\lambda f e^{\lambda f}\right] - \mathbb{E}_\nu\left[e^{\lambda f}\right]\log \mathbb{E}\left[e^{\lambda f}\right]\le \frac{\lambda^2}{2\mu}\mathbb{E}_\nu\left[\left\|\grad f\right\|^2 e^{\lambda f}\right].
    \end{equation*}
    With the notation $\varphi(\lambda)=\mathbb{E}\left[e^{\lambda f}\right]$ and $\psi(\lambda)=\log \varphi (\lambda)$, the above inequality can be reformulated as
    \begin{equation*}
        \begin{aligned}
            \lambda \varphi^\prime(\lambda)\le & \varphi(\lambda) \log \varphi(\lambda)+ \frac{\lambda^2}{2\mu}\mathbb{E}_\nu\left[\left\|\grad f\right\|^2 e^{\lambda f}\right]\\
            \le & \varphi(\lambda) \log \varphi(\lambda)+ \frac{\lambda^2}{2\mu}\varphi(\lambda),
        \end{aligned}
    \end{equation*}
    where the last step follows from the fact $\left\|\grad f\right\|\le 1$.
    Dividing both sides by $\lambda^2 \varphi(\lambda)$ gives
    \begin{equation*}
         \big(\frac{\log(\varphi(\lambda))}{\lambda}\big)^{\prime} \le \frac{1}{2 \mu}.
    \end{equation*}
    Denoting that the limiting value $\frac{\log(\varphi(\lambda))}{\lambda} \mid_{\lambda = 0} = \lim_{\lambda \to 0^{+}} \frac{\log(\varphi(\lambda))}{\lambda} = \mathbb{E}_{\nu} [f] = 0$, we have
    \begin{equation*}
        \frac{\log(\varphi(\lambda))}{\lambda} = \int_{0}^{\lambda} \big(\frac{\log(\varphi(t))}{t}\big)^{\prime} dt \le \frac{\lambda}{2 \mu},
    \end{equation*}
    which implies that 
    \begin{equation*}
        \psi(\lambda)\le \frac{\lambda^2}{2\mu} \Longrightarrow \varphi(\lambda)\le \exp\left(\frac{\lambda^2}{2\mu}\right)
    \end{equation*}
    Then the proof can be completed by a trivial argument of Lemma~\ref{lem:markov_inequ}.
\end{proof}

\begin{lemma}
    \label{lem:markov_inequ}
    Let $\rvx$ be a real random variable. 
    If there exist constants $C,A<\infty$ such that $\mathbb{E}\left[e^{\lambda \rvx}\right] \le Ce^{A\lambda^2}$ for all $\lambda>0$ then
    \begin{equation*}
        \mathbb{P}\left\{\rvx\ge t\right\}\le C\exp\left(-\frac{t^2}{4A}\right)
    \end{equation*}
\end{lemma}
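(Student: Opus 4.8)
The plan is to prove the bound by the classical Chernoff (exponential Markov) argument, optimizing the free parameter. Throughout I work in the regime $t\ge 0$, which is the only case in which the lemma is used (e.g.\ in the proof of Lemma~\ref{lem:lsi_concentration}, where it is applied with $C=1$, $A=1/(2\mu)$); note also that taking $\lambda\to 0^+$ in the hypothesis forces $C\ge 1$. First I would fix an arbitrary $\lambda>0$ and use that $x\mapsto e^{\lambda x}$ is strictly increasing, so $\{\rvx\ge t\}=\{e^{\lambda\rvx}\ge e^{\lambda t}\}$. Applying the ordinary Markov inequality to the nonnegative random variable $e^{\lambda\rvx}$ and then the hypothesis $\mathbb{E}[e^{\lambda\rvx}]\le Ce^{A\lambda^2}$ gives
\begin{equation*}
    \mathbb{P}\{\rvx\ge t\}\;\le\; e^{-\lambda t}\,\mathbb{E}\big[e^{\lambda\rvx}\big]\;\le\; C\exp\big(A\lambda^2-\lambda t\big).
\end{equation*}

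Since the left-hand side is independent of $\lambda$, the next step is to minimize the right-hand side over $\lambda>0$, i.e.\ to minimize the quadratic $\varphi(\lambda)=A\lambda^2-\lambda t$. Here $\varphi'(\lambda)=2A\lambda-t$, so the unconstrained minimizer is $\lambda^\star=t/(2A)$, which is nonnegative because $t\ge 0$ and $A>0$, hence admissible; plugging it back in yields $\varphi(\lambda^\star)=A\cdot\frac{t^2}{4A^2}-\frac{t^2}{2A}=-\frac{t^2}{4A}$. Substituting $\lambda=\lambda^\star$ into the displayed inequality therefore gives
\begin{equation*}
    \mathbb{P}\{\rvx\ge t\}\;\le\; C\exp\!\left(-\frac{t^2}{4A}\right),
\end{equation*}
which is the claim. (Integrability of the tail, and in fact of $\rvx$ itself, follows a posteriori from this estimate together with the analogous estimate obtained by applying the same argument after replacing $\lambda$ by $\lambda$ on $-\rvx$, although for the stated conclusion only the one-sided bound is needed.)

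There is essentially no genuine obstacle: the whole content is the one-line exponential Markov step plus an elementary single-variable minimization. The only point deserving a word of care is confirming that the optimal $\lambda^\star=t/(2A)$ lies in the feasible set $\lambda>0$, which is exactly why the statement is meaningful only for $t\ge 0$ (for $t<0$ the infimum of $\varphi$ over $\lambda>0$ is $0$, yielding just the weaker bound $\mathbb{P}\{\rvx\ge t\}\le C$). I would simply restrict attention to $t\ge 0$, consistent with how the lemma is invoked.
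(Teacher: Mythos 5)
Your proof is correct and follows essentially the same route as the paper's: apply Markov's inequality to $e^{\lambda\rvx}$, invoke the hypothesis, and optimize the quadratic exponent at $\lambda^\star = t/(2A)$. Your additional remark restricting to $t\ge 0$ (so that $\lambda^\star>0$ is admissible) is a sensible clarification the paper leaves implicit, but the argument is otherwise identical.
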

\begin{proof}
    According to the non-decreasing property of exponential function $e^{\lambda \vx}$, we have
    \begin{equation*}
        \mathbb{P}\left\{\rvx\ge t\right\} = \mathbb{P}\left\{e^{\lambda \rvx}\ge e^{\lambda t}\right\} \le \frac{\mathbb{E}\left[e^{\lambda \rvx}\right]}{e^{\lambda t}}\le C\exp\left(A\lambda^2 -\lambda t\right),
    \end{equation*}
    The first inequality follows from Markov inequality, and the second follows from the given conditions.
    By minimizing the RHS, i.e., choosing $\lambda = t/(2A)$, the proof is completed.
\end{proof}

\begin{lemma}
    \label{lem:lsi_var_bound}
    Suppose $q$ is a distribution which satisfies LSI with constant $\mu$, then its variance satisfies
    \begin{equation*}
        \int q(\vx) \left\|\vx - \mathbb{E}_{\tilde{q}}\left[\rvx\right]\right\|^2 \der \vx  \le \frac{d}{\mu}.
    \end{equation*}
\end{lemma}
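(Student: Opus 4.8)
The plan is to deduce the claim from the Poincar\'e inequality, which is implied by the log-Sobolev inequality with the same constant. First I would recall the standard linearization argument already sketched after Definition~\ref{def:pi}: applying the LSI of Definition~\ref{def:lsi} to the test function $g = 1 + \epsilon \hat g$, expanding to second order in $\epsilon$, and letting $\epsilon \to 0$ shows that $q$ satisfies the Poincar\'e inequality with constant $\mu$, i.e. $\Var_q(\hat g) \le \mu^{-1}\, \mathbb{E}_q[\|\grad \hat g\|^2]$ for every smooth $\hat g$.

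Next I would apply this Poincar\'e inequality coordinatewise. For each $i \in \{1,\dots,d\}$ take $\hat g_i(\vx) = x_i$, the $i$-th coordinate function; then $\grad \hat g_i = \ve_i$, so $\|\grad \hat g_i\|^2 = 1$ identically, and the Poincar\'e inequality gives $\Var_q(x_i) \le \mu^{-1}$. Summing over $i$ and using that $\sum_{i=1}^d \Var_q(x_i) = \mathbb{E}_q\big[\|\rvx - \mathbb{E}_q[\rvx]\|^2\big] = \int q(\vx)\,\|\vx - \mathbb{E}_q[\rvx]\|^2\,\der\vx$ yields exactly the bound $d/\mu$. (One should note in passing that LSI guarantees $\mathbb{E}_q[\|\rvx\|^2] < \infty$, via e.g. the sub-Gaussian concentration of Lemma~\ref{lem:lsi_concentration} applied to $\vx \mapsto \langle \vx, u\rangle$ for unit vectors $u$, so all the expectations above are finite and the manipulations are justified.)

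There is essentially no hard step here; the only point requiring a little care is the passage LSI $\Rightarrow$ PI, and if one prefers to avoid re-deriving it one can simply cite it as classical (it also follows from Lemma~\ref{lem:lsi_concentration} together with a standard variational characterization, or directly from \cite{vempala2019rapid}). The rest is the elementary observation that coordinate functions are $1$-Lipschitz with constant gradient, so the right-hand side of the Poincar\'e inequality collapses to $\mu^{-1}$ per coordinate.
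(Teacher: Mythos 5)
Your proposal is correct and matches the paper's own argument: the paper likewise invokes LSI $\Rightarrow$ Poincar\'e with the same constant, applies it to the coordinate functions $g_i(\vx)=\langle \vx,\ve_i\rangle$, and sums the per-coordinate variance bounds $\mu^{-1}$ to obtain $d/\mu$. Your extra remarks on integrability and on how to justify the LSI-to-PI step are fine but not needed beyond what the paper states.
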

\begin{proof}
    It is known that LSI implies Poincar\'e inequality with the same constant, i.e., $\mu$, which means if for all smooth
    function $g\colon \R^d \rightarrow \R$,
    \begin{equation*}
        \mathrm{var}_{q}\left(g(\rvx)\right)\le \frac{1}{\mu}\mathbb{E}_{q}\left[\left\|\grad g(\rvx)\right\|^2\right].
    \end{equation*}
    In this condition, we suppose $\vb=\mathbb{E}_{q}[\rvx]$, and have the following equation
    \begin{equation*}
        \begin{aligned}
            &\int q(\vx) \left\|\vx - \mathbb{E}_{q}\left[\rvx\right]\right\|^2 \der \vx = \int q(\vx) \left\|\vx - \vb\right\|^2 \der \vx\\
            =& \int \sum_{i=1}^d q(\vx) \left(\vx_{i}-\vb_i\right)^2 \der \vx =\sum_{i=1}^d \int q(\vx) \left( \left<\vx, \ve_i\right> - \left<\vb, \ve_i\right> \right)^2 \der\vx\\
            =& \sum_{i=1}^d \int q(\vx)\left(\left<
            \vx,\ve_i\right> - \mathbb{E}_{q}\left[\left<\rvx, \ve_i\right>\right]\right)^2 \der \vx  =\sum_{i=1}^d \mathrm{var}_{q}\left(g_i(\rvx)\right)
        \end{aligned}
    \end{equation*}
    where $g_i(\vx)$ is defined as $g_i(\vx) \coloneqq \left<\vx, \ve_i\right>$
    and $\ve_i$ is a one-hot vector ( the $i$-th element of $\ve_i$ is $1$ others are $0$).
    Combining this equation and Poincar\'e inequality, for each $i$, we have
    \begin{equation*}
        \mathrm{var}_{q}\left(g_i(\rvx)\right) \le \frac{1}{\mu} \mathbb{E}_{q}\left[\left\|\ve_i\right\|^2\right]=\frac{1}{\mu}.
    \end{equation*}
    Hence, the proof is completed.
\end{proof}

\begin{lemma}(Lemma 12 in~\cite{vempala2019rapid})
    \label{lem:rapidlem12}
    Suppose $p\propto \exp(-f)$ satisfies Talagrand’s inequality with constant $\mu$ and is $L$-smooth. 
    For any $p^\prime$, 
    \begin{equation*}
        \E_{p^\prime}\left[\left\|\grad f(\rvx)\right\|^2\right]\le \frac{4L^2}{\mu}\KL{p^\prime}{p}+2Ld.
    \end{equation*}
\end{lemma}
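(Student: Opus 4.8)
The plan is to transfer the desired bound from $p'$ to $p$ via an optimal transport coupling, exploiting the $L$-smoothness of $f$ to control the discrepancy introduced by the transfer. First I would let $\gamma$ be the coupling of $(p',p)$ that attains the squared $2$-Wasserstein distance $W_2^2(p',p)$, and write, for $(\rvx,\rvy)\sim\gamma$,
\begin{equation*}
    \E_{p'}\left[\left\|\grad f(\rvx)\right\|^2\right] = \E_{\gamma}\left[\left\|\grad f(\rvx)\right\|^2\right] \le 2\E_{\gamma}\left[\left\|\grad f(\rvx)-\grad f(\rvy)\right\|^2\right] + 2\E_{\gamma}\left[\left\|\grad f(\rvy)\right\|^2\right],
\end{equation*}
using the elementary inequality $\|a\|^2 \le 2\|a-b\|^2 + 2\|b\|^2$.

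Next I would bound the two terms separately. For the first, $L$-smoothness of $f$ gives $\left\|\grad f(\rvx)-\grad f(\rvy)\right\| \le L\|\rvx-\rvy\|$, so $\E_{\gamma}\left[\left\|\grad f(\rvx)-\grad f(\rvy)\right\|^2\right] \le L^2\,\E_{\gamma}\left[\|\rvx-\rvy\|^2\right] = L^2\, W_2^2(p',p)$, since $\gamma$ is the optimal coupling. For the second, the $\rvy$-marginal of $\gamma$ is $p$, hence $\E_{\gamma}\left[\left\|\grad f(\rvy)\right\|^2\right] = \E_{p}\left[\left\|\grad f(\rvy)\right\|^2\right] \le Ld$ directly by Lemma~\ref{lem:lem11_vempala2019rapid}. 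Combining these yields $\E_{p'}\left[\left\|\grad f(\rvx)\right\|^2\right] \le 2L^2\, W_2^2(p',p) + 2Ld$.

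Finally I would invoke Talagrand's inequality for $p$ with constant $\mu$, namely $W_2^2(p',p) \le \tfrac{2}{\mu}\,\KL{p'}{p}$, which gives $\E_{p'}\left[\left\|\grad f(\rvx)\right\|^2\right] \le \tfrac{4L^2}{\mu}\,\KL{p'}{p} + 2Ld$, exactly the asserted bound. I do not expect a substantive obstacle here; the only points needing a little care are the existence of the optimal coupling realizing $W_2^2(p',p)$ (which holds whenever the right-hand side is finite, as then $p'$ has finite second moment, and $p$ does as well because $L$-smoothness together with Talagrand's/Poincaré-type consequences controls its moments) and correctly carrying the factor $2/\mu$ in the $T_2$ inequality so that the constant $4L^2/\mu$ comes out as stated.
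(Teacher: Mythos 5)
Your proof is correct and is essentially the standard argument for this result (the paper itself only cites Lemma 12 of \citet{vempala2019rapid} without reproving it): optimal $W_2$ coupling, the bound $\left\|\grad f(\rvx)-\grad f(\rvy)\right\|\le L\left\|\rvx-\rvy\right\|$, the moment bound $\E_{p}\left[\left\|\grad f\right\|^2\right]\le Ld$ from Lemma~\ref{lem:lem11_vempala2019rapid}, and Talagrand's inequality $W_2^2(p',p)\le \tfrac{2}{\mu}\KL{p'}{p}$. Your handling of the constants and the coupling existence caveat is fine; no gap.
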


\end{document}